\long\def\acks#1{\vskip 0.3in\noindent{\large\bf Acknowledgments}
\noindent #1}
\renewenvironment{abstract}
{\centerline{\large\bf Abstract}\vspace{0.7ex}%
  \bgroup\leftskip 20pt\rightskip 20pt\small\noindent\ignorespaces}%
{\par\egroup\vskip 0.25ex}
\newenvironment{keywords}
{\bgroup\leftskip 20pt\rightskip 20pt \small\noindent{\bf Keywords:} }%
{\par\egroup\vskip 0.25ex}
\newcommand{\BlackBox}{\rule{1.5ex}{1.5ex}}  
\newenvironment{proof}{\par\noindent{\bf Proof\ }}{\hfill\BlackBox\\[2mm]}
\newtheorem{theorem}{Theorem}
\newtheorem{lemma}[theorem]{Lemma} 
\newtheorem{remark}[theorem]{Remark}
\newtheorem{corollary}[theorem]{Corollary}
\newtheorem{definition}[theorem]{Definition}
\title{Accelerated SGD for Non-Strongly-Convex Least Squares }
\author{
  Aditya Varre\\
  TML Lab, EPFL\\
  \href{mailto:aditya.varre@epfl.ch}{aditya.varre@epfl.ch}
  \and
  Nicolas Flammarion \\
  TML Lab, EPFL\\
  \href{mailto:nicolas.flammarion@epfl.ch}{nicolas.flammarion@epfl.ch}
}
\begin{document}
\maketitle
\begin{abstract}%
We consider stochastic approximation for the least squares regression problem in the non-strongly convex setting. 
We present the first practical algorithm that achieves the optimal prediction error rates in terms of dependence on the noise of the problem, as $O(d/t)$ while accelerating the forgetting of the initial conditions to $O(d/t^2)$.
Our new algorithm is based on a simple modification of the accelerated gradient descent. 
We provide convergence results for both the averaged and the last iterate of the algorithm.
In order to describe the tightness of these new bounds, we present a matching lower bound in the noiseless setting  and thus show the optimality of our algorithm.
\end{abstract}

\begin{keywords}%
momentum, acceleration, least squares, stochastic gradients, non-strongly convex
\end{keywords}





\section{Introduction}

When it comes to large scale machine learning, the stochastic gradient descent (SGD) of \cite{robbinmunro} is the practitioners' algorithm of choice.
Both its practical efficiency and its theoretical performance make it the driving force of modern machine learning~\citep{BotBou08}.
On a practical level, its updates are cheap to compute thanks to stochastic gradients. 
On a theoretical level, it  achieves the optimal rate of convergence with  statistically-optimal asymptotic variance for convex problems. 

However, the recent successes of deep neural networks brought a new paradigm to the classical learning setting~\citep{ma2018power}.
In many applications, the variance of gradient noise is not the limiting factor in the optimization anymore; rather it is the distance separating the initialization of the algorithm and the problem solution. 
Unfortunately, the bias of the stochastic gradient descent, which characterizes how fast the initial conditions are “forgotten”, is suboptimal.
In this respect, fast gradient methods (including momentum~\citep{POLYAK19641} or accelerated methods~\citep{nesterov1983method}) are optimal, but have the drawback of being sensitive to noise~\citep{Asp08,devolder2014first}. 
 
This naturally raises the question of whether we can accelerate the bias convergence while still relying on computationally cheap  gradient estimates.
This question has been partially answered for the elementary problem of least squares regression in a seminal line of research \citep{dieuleveutFB17,jain2018accelerating}. 
Theoretically their methods enjoy the best of both worlds---they converge at the fast rate of accelerated  methods while being robust to noise in the gradient. 
However their investigations are still inconclusive. 
On the one hand, \citet{jain2018accelerating}  assume  the least squares problem to be strongly convex, an assumption which is rarely satisfied in practice but which enables to efficiently stabilise the algorithm. 
%
On the other hand, \citet{dieuleveutFB17} makes a simplifying assumption on the gradient oracle they consider and their results do not apply to the cheaply-computed stochastic gradient used in practice. 
Therefore, even for this simple quadratic problem which is one of the main primitive of machine learning, the question is still open.  
 
In this work, we propose a novel algorithm which accelerates the convergence of the bias term while maintaining the optimal variance for non-strongly convex least squares regression. 
Our algorithm only requires access to the stream of observations and is easily implementable. 
%
It rests on a simple modification of the Nesterov accelerated gradient descent. Following the linear coupling view of \citet{AllenOrecchia2017}, acceleration can be obtained by coupling gradient descent and another update with aggressive stepsize.
Consequently one simply has to scale down the stepsize in the aggressive update to make it robust to the gradient noise. 
With this modification, the average of the iterates converges at rate  $O(\frac{d\| \xt{0} - \xt{*}\|^2}{t^2} + \frac{\sigma^2d}{t})$ after $t$ iterations, where   $\xt{0},\xt{*}\in \R^d$ are the starting point and the problem solution, and $\sigma^2$ is the noise variance of the linear regression model. In practice, the last iterate is often favored. We show for this latter a convergence of $O(\frac{d\| \xt{0} - \xt{*}\|^2}{t^2} + \sigma^2)$ which is relevant in applications where $\sigma$ is small. We also investigate the extra dimensional factor compared to the truly accelerated rate. This slowdown comes from the step-size reduction and is shown to be inevitable. 

\paragraph{Contributions.} In this paper, we make the following contributions:
\begin{itemize}[topsep=0.5em]\setlength\itemsep{0.5em}
    \item In Section~\ref{sec:setup}, we propose a novel stochastic accelerated algorithm AcSGD which rests on a simple modification of the Nesterov accelerated algorithm: scaling down one of its step size makes it provably robust to noise in the gradient. 
    \item In Section~\ref{sec:convav}, we show that the weighted average of the iterates of AcSGD  converges at rate $O(\frac{d}{t^2} + \frac{\sigma^2d}{t})$, thus attaining the optimal rate for the variance and accelerating the bias term.
    \item In Section~\ref{sec:last-iterate}, we show that the final iterate of AcSGD achieves a convergence rate $O(\frac{d}{t^2} + \sigma^2)$. In particular for noiseless problems, the final iterate converges to the solution at the accelerated rate $O(\frac{d}{t^2})$.
    \item In Section~\ref{sec:lower-bound}, we show that the dimension dependency in the accelerated rate  is necessary for certain distributions and therefore the rates we obtain are optimal.
    \item The algorithm is simple to implement and practically efficient as we illustrate with simulations on synthetic examples in Section~\ref{section:exp}.
\end{itemize}

  \subsection{Related Work} 
   Our work lies at the intersection of two classical themes - noise stability of accelerated gradient methods and  stochastic approximation for least squares. 
    
\paragraph{Accelerated methods and their noise stability.}
Fast gradient methods  refer to first order algorithms which converge at a faster rate than the classical gradient descent---the most famous among them being the accelerated gradient descent of \citet{nesterov1983method}.
First initiated by \citet{nemirovskij1983problem},
these methods are inspired by algorithms dedicated to the optimization of quadratic functions, i.e.,  the Heavy ball algorithm~\citep{POLYAK19641} and the conjugate gradient~\citep{conjugate52}.
For smooth convex problems, these algorithms accelerate the  convergence rate of gradient descent from $O(1/t)$ to $O(1/t^2)$, a rate which is optimal among first-order techniques.



These algorithms are however sensitive to noise in the gradients as shown for Heavy ball~\citep{polyakbook}, conjugate gradient~\citep{greenbaum}, accelerated gradient descent~\citep{Asp08, devolder2014first} and momentum  gradient descent~\citep{yuan16}.
%
%
%
%
%
%
Positive results for accelerated gradient descent were nevertheless obtained when the gradients are perturbed with zero-mean finite variance random noise~\citep{lancomposite12,hu2009,xiao2010}.  Convergence rates 
$O(\frac{L\|\xt{0}-\xt{*}\|^2}{t^2}+\frac{\sigma \|\xt{0}-\xt{*}\| }{\sqrt{t}})$
were proved  for $L$-smooth convex functions with minimum $\xt{*}$,  starting point $\xt{0}$ and when the variance of the noisy gradient is bounded by $\sigma^2$. Accelerated rates for strongly convex problems were also derived~\citep{ghadimi2012optimal,ghadimi2013optimal}. For the stochastic Heavy ball, almost sure convergence has been proved~\citep{gadat2018stochastic,sebbouh2021almost} but without improvement over gradient descent. 

%
%
%
%
%

\paragraph{Stochastic Approximation for Least Squares.}
Stochastic approximation dates back to \citet{robbinmunro} and their seminal work on SGD which has then spurred a surge of research.
In the convex regime, a complete complexity theory has been derived, with matching upper and lower bounds on the convergence rates~\citep{NemJudLan08,bach2011non,nemirovskij1983problem,AgaBarRavWai12}.
For smooth problems, averaging techniques~\citep{Rup88,Pol90} which consist in replacing the iterates by their average, have had an important theoretical impact. 
Indeed,
\citet{polyak1992acceleration} observed that averaging the SGD iterates along the optimization path provably reduces the  impact of gradient noise and makes the estimation rates statistically optimal.
The least squares regression problem has been given particular attention~\citep{bach2013non,DieBac15,jain201parallelizing,flammarion2017stochastic,zou2021benign}. 
\citet{bach2013non} showed that averaged SGD achieves the non-asymptotic rate of $O(1/t)$ even in the non-strongly convex case.  
For this problem, the performance of the algorithms can be decomposed as the sum of a bias term, characteristic of the initial-condition forgetting, and a variance term, characteristic of the effect of the noise in the linear statistical model.  
While averaged SGD  obtains the statistically optimal variance term $O(\sigma^2 d/  t)$~\citep{tsybakov2003optimal}, its bias term  converges at a suboptimal rate $O(1/t)$. 
    
\paragraph{Accelerated Stochastic Methods for least squares.}
Acceleration and stochastic approximation have been reconciled in the setting of least-squares regression by \citet{flammarion2015averaging,dieuleveutFB17,jain2018accelerating}.
Assuming an additive bounded-variance noise oracle, \citet{dieuleveutFB17} designed an algorithm simultaneously achieving optimal prediction error rates, both in terms of forgetting of initial conditions and noise dependence. However this oracle requires the knowledge of the covariance of the features and their algorithm is therefore not applicable in practice. 
\citet{jain2018accelerating}, relaxed this latter condition and proposed an algorithm using the regular SGD oracle which obtains an accelerated linear rate for strongly convex objectives. However the strong convexity assumption is often too restrictive for machine learning problems where the variables are in large dimension and highly correlated. Thus the strong convexity constant is often insignificant and bounds derived using this assumption are vacuous. 
We finally note that in the offline setting when multiple passes over the data are possible, accelerated version of variance reduced algorithms have been developed~\citep{frostig15,allen2017katyusha}.
%
In the same setting, \citet{paquette2021dynamics} studied the convergence of stochastic momentum 
algorithm and  derived asymptotic accelerated rates with a dimension dependent scaling of learning rates similar to ours. 
The focus of the offline setting is however different and no generalization results are given.

%

    
\section{Setup: Stochastic Nesterov acceleration for Least squares}
\label{sec:setup}

We consider the classical problem of least squares regression in the finite dimensional Euclidean space $\mathbb{R}^{\dimn}$.  We observe a stream of samples $\left(a_n,b_n\right) \in \left( \mathbb{R}^{\dimn} , \mathbb{R} \right) $, for $n \geq 1$, independent and identically sampled from an unknown distribution $\rho$, such that $\E \|a_n\|^2$ and $\E[b_n^2]$ are finite.   The objective is to  minimize the population risk 
    \begin{align*}
    \mathcal{R}(x) = \frac{1}{2}\E_\rho \left(\langle x,a\rangle - b\right)^2,
   \ \ \ \text{ where } (a,b)\sim\rho.
   \end{align*}



\paragraph{Covariance.} We denote by $\cov \defeq  \Ex{a \otimes a}$, the covariance matrix which is also the Hessian of the function $\mathcal{R}$. Without loss of generality, we assume that $\cov$ is invertible (by reducing $\mathbb{R}^{\dimn}$ to a minimal subspace where all $(a_n)_{n \geq 1}$ lie almost surely). The function $\mathcal{R}$ admits then a unique global minimum, we denote by $\xt{*} $, i.e. $\xt{*} = \argmin_{x \in \mathbb{R}^d } \risk{x}$.
Even if this assumption implies that the eigenvalues of $\cov$ are strictly positive,  they can still be arbitrarily small. In addition, we do not assume any knowledge of lower bound on the smallest eigenvalue.  The smoothness constant of risk $\mathcal{R}$, say  $L$, is the largest eigenvalue of $\cov$.     
    
    
We make the following assumptions on the joint distribution of $(a_n,b_n)$ which are standard in the analysis of stochastic algorithms for the least squares problem. 
    \begin{asshort} [Fourth Moment] \label{asmp:fourth-moment}
        There exists a finite constant $R$ such that 
    \begin{align}
        \E{ \left[ \nor{a}^2  ~ a \otimes a \right] } \preccurlyeq	 R^2 \cov.
    \end{align}
    \end{asshort}
        \begin{asshort} [Noise Level] \label{asmp:noise-covariance}
        There exists a finite constant $\sigma$ such that 
    \begin{align}
        \Ex{ \left(b - \scal{\xt{*}}{a}\right)^2 a \otimes a } \preccurlyeq \sigma^2 \cov.
    \end{align}
    \end{asshort}
    \begin{asshort} [Statistical Condition Number] \label{asmp:stat-condition-number}
    There exists a finite constant $\ki$ such that 
    \begin{align}
    \E{ \left[ \nor{a}^2_{\cov^{-1}} ~ a \otimes a \right] } \preccurlyeq	 \ki \cov.
    \end{align}
    \end{asshort}
    
    \paragraph{Discussion of assumptions.} 
    Assumptions \ref{asmp:fourth-moment} and \ref{asmp:noise-covariance} on the fourth moment and the noise level are classical to the analysis of stochastic gradient methods in least squares setting  \citep{bach2013non,jain201parallelizing}.  Assumption~\ref{asmp:fourth-moment} holds if the features are bounded, i.e., $\nor{a}^2 \leq R^2$, $\rho_{a}$ almost surely. It also holds, more generally, for features with infinite support such as sub-Gaussian features.   Assumption~\ref{asmp:noise-covariance}  states that the covariance of the gradient at optimum $\xt{*}$ is bounded by $ \sigma^2 \cov $. In the case of homoscedastic/well-specified model i.e. $ b = \scal{\xt{*}}{a} + \epsilon $  where $\epsilon$ is independent of $a$, the above assumption holds with $ \sigma^2 = \Ex{\epsilon^2} $.
    
The statistical condition number defined in Assumption~\ref{asmp:stat-condition-number} is  specific to acceleration of SGD. It was introduced by \citet{jain2018accelerating} in the context of acceleration for strongly convex least squares. It was also used by \citet{even2021continuized} for the analysis of continuized Nesterov acceleration on non-strongly convex least squares. The statistical condition number is always larger than the dimension, i.e., $\ki \geq d$. For  sub-Gaussian distribution, $\ki$ is $O(d)$. However, for one-hot basis distribution, i.e., $a = e_i$ with probability $p_i$, it is equal to $\ki = p_{\min}^{-1}$ and thus can be arbitrarily large.


\paragraph{Nesterov Acceleration.}

We consider the following algorithm (AcSGD) which starts with the  initial values $\xt{0} \in \mathbb{R}^{\dimn} , \zt{0} = \xt{0}$  and update for $ t \geq 0 $
\begin{subequations} \label{alg:nesterov}
\eqals{ \yt{t+1} &= \xt{t} -  \beta \nabla_{t}\risk{ \xt{t} } \label{eq:yt}, \\ \zt{t+1} &= \zt{t} - \alpha (t+1) \nabla_{t}\risk{ \xt{t} }  \label{eq:nest_zt} ,\\ ( t+2 )\xt{t+1} &= (t+1)\yt{t+1} + \zt{t+1}, \label{eq:xt} }
\end{subequations}
with step sizes $\alpha,\beta>0$ and where $\nabla_{t}\risk{ \xt{t} }$ is an unbiased estimate of the gradient of $\nabla \mathcal R(\xt{t})$.

This algorithm is similar to the standard three-sequences formulation of the Nesterov accelerated gradient descent~\citep{Nesterov2005} but with two different learning rates $ \alpha$ and $\beta $ in the gradient steps \eqref{eq:yt}, and \eqref{eq:nest_zt}. As noted by \citep{flammarion2015averaging}, this formulation captures various algorithms. With exact gradients, for different $\alpha, \beta$  for e.g. when $\alpha = 0$, we recover averaged gradient descent while with $\beta = 0$ we recover a version of the Heavy ball algorithm.

We especially consider the weighted averages of the iterates defined after  $T$ iterations by 
\eqal{ \label{eq:estimator} \avgxt{T} \defeq  \frac{\sum_{t=0}^{T} (t+1) \xt{t} }{\sum_{t=0}^{T} (t+1) }. }
In contrast to the classical average considered by \citet{polyak1992acceleration}, \eqref{eq:estimator} uses  weighted average which gives more importance to the last iterates and is therefore related to tail-averaging.
%

    

\paragraph{Stochastic Oracles.}

Let $(a_t,b_t) \in \left( \mathbb{R}^d, \mathbb{R} \right)$ be the sample at iteration $t$, we consider the stochastic gradient of $\mathcal R$ at $\xt{t}$ 
\begin{align} \label{eq:SGD-oracle} 
    \nabla_{t}\risk{\xt{t}} =  a_t \left( \scal{a_t}{\xt{t}} - {b_t} \right). 
\end{align}
Note that this is a true stochastic gradient oracle unlike \citet{dieuleveutFB17}, where a simpler oracle which assumes the knowledge of the covariance $\cov$ is considered. As explained in App.~\ref{subsec:prelim}, this oracle combines an additive noise independent of the iterate $\xt{t}$ and a multiplicative noise which scales with $\xt{t}$. Dealing with the multiplicative part of the oracle is the main challenge of our analysis.

\section{Convergence of the Averaged Iterates} \label{sec:convav}

In this section, we present our main result on the decay rate of the excess error of our estimate. We extend the results of \cite{dieuleveutFB17} to the general stochastic gradient oracle in the following theorem. 

\begin{theorem}
\label{thm:main}
    Consider Algorithm~\ref{alg:nesterov} under  Assumptions~\ref{asmp:fourth-moment},~\ref{asmp:noise-covariance},~\ref{asmp:stat-condition-number} and step sizes satisfying $ ( \alpha + 2 \beta ) R^2 \leqslant 1$ and    $\alpha \leqslant  \frac{\beta}{2 \ki}$. In expectation, the excess risk of estimator $\avgxt{T}$ after $T$ iterations  is bounded as
    \begin{align*}
        \Ex{\risk{\avgxt{T}}} - \risk{\xt{*}} &\leqslant \min\left\lbrace \frac{12}{\alpha T^2}, \frac{48}{\beta T} \right\rbrace \nor{\xt{0} - \xt{*}}^2 +  \frac{ 72 ~ \sigma^2 d  }{T}.
    \end{align*} 
\end{theorem}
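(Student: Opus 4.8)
\emph{Step 1: Bias--variance decomposition.} Writing $\xi_t \defeq b_t - \scal{a_t}{\xt{*}}$, the oracle \eqref{eq:SGD-oracle} reads $\nabla_t\risk{\xt{t}} = a_t a_t^\top(\xt{t} - \xt{*}) - a_t\xi_t$, which is affine in $\xt{t}$, so the whole recursion \eqref{alg:nesterov} is affine. I would therefore split $\xt{t} = \xt{t}^{\mathrm{b}} + (\xt{t}^{\mathrm{v}} - \xt{*})$, where the \emph{bias} process $(\xt{t}^{\mathrm{b}},\yt{t}^{\mathrm{b}},\zt{t}^{\mathrm{b}})$ runs \eqref{alg:nesterov} from $\xt{0}$ with $\xi_t\equiv 0$ and the \emph{variance} process $(\xt{t}^{\mathrm{v}},\yt{t}^{\mathrm{v}},\zt{t}^{\mathrm{v}})$ runs from $\xt{0}=\xt{*}$ driven by the true $\xi_t$ (both keep the same realizations $a_t$, hence both still carry the multiplicative noise $a_ta_t^\top-\cov$). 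By linearity $\avgxt{T} - \xt{*} = (\avgxt{T}^{\mathrm{b}} - \xt{*}) + (\avgxt{T}^{\mathrm{v}} - \xt{*})$, so with $\risk{x}-\risk{\xt{*}} = \tfrac12\nor{x-\xt{*}}_{\cov}^2$ and $\nor{u+v}^2\leqslant 2\nor{u}^2+2\nor{v}^2$ it suffices to bound $\Ex{\risk{\avgxt{T}^{\mathrm{b}}}}-\risk{\xt{*}}$ by $\min\{12/(\alpha T^2),48/(\beta T)\}\,\nor{\xt{0}-\xt{*}}^2$ and $\Ex{\risk{\avgxt{T}^{\mathrm{v}}}}-\risk{\xt{*}}$ by $O(\sigma^2 d/T)$ (the factor $2$ being absorbed into the final numerical constants).

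\emph{Step 2: the bias term.} On the bias process I would use the Nesterov potential $\mathcal{L}_t \defeq t(t+1)\big(\risk{\yt{t}^{\mathrm{b}}}-\risk{\xt{*}}\big) + \tfrac{1}{2\alpha}\Ex{\nor{\zt{t}^{\mathrm{b}}-\xt{*}}^2}$. Expanding $\risk{\yt{t+1}^{\mathrm{b}}}$ along \eqref{eq:yt} (exact, $\risk{}$ being quadratic), expanding $\nor{\zt{t+1}^{\mathrm{b}}-\xt{*}}^2$ along \eqref{eq:nest_zt}, and using the coupling \eqref{eq:xt} with the quadratic identity $\scal{\nabla\risk{x}}{x-\xt{*}} = 2(\risk{x}-\risk{\xt{*}})$, the deterministic terms telescope exactly as in accelerated gradient descent; the stochastic corrections are controlled by Assumption~\ref{asmp:fourth-moment}, which yields $\Ex{\nor{\nabla_t\risk{\xt{t}^{\mathrm{b}}}}^2\mid\mathcal{F}_t}\leqslant R^2\nor{\xt{t}^{\mathrm{b}}-\xt{*}}_{\cov}^2$. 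The correction from the aggressive step \eqref{eq:nest_zt} is of order $\alpha (t+1)^2 R^2(\risk{\xt{t}^{\mathrm{b}}}-\risk{\xt{*}})$, i.e. of the same order as the potential itself, so a purely scalar bound does not close; following \citet{jain2018accelerating} I would instead track the full second-moment matrix $\Phi_t$ of $(\yt{t}^{\mathrm{b}}-\xt{*},\,\zt{t}^{\mathrm{b}}-\xt{*})$, which obeys a linear recursion $\Phi_{t+1}=\mathcal{T}_t(\Phi_t)$. Assumption~\ref{asmp:stat-condition-number} is exactly what makes $\mathcal{T}_t$ stable in the $\cov$- and $\cov^{-1}$-weighted norms once $\alpha\leqslant\beta/(2\ki)$ (the step-size reduction tames the multiplicative noise amplified by the growing factor $t+1$), while $(\alpha+2\beta)R^2\leqslant1$ absorbs the self-interaction terms. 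This gives $\sum_{t=0}^{T}(t+1)\,\Ex{\risk{\xt{t}^{\mathrm{b}}}-\risk{\xt{*}}}\lesssim \tfrac{1}{\alpha}\nor{\xt{0}-\xt{*}}^2$, and since $\avgxt{T}^{\mathrm{b}}$ is the $(t+1)$-weighted average with $\sum_{t=0}^T(t+1)=\tfrac{(T+1)(T+2)}{2}$, convexity of $\risk{}$ gives $\Ex{\risk{\avgxt{T}^{\mathrm{b}}}}-\risk{\xt{*}}\leqslant \tfrac{12}{\alpha T^2}\nor{\xt{0}-\xt{*}}^2$; bounding the weights in the same master inequality crudely by $t+1\leqslant T+1$ yields the companion $\tfrac{48}{\beta T}\nor{\xt{0}-\xt{*}}^2$, and one keeps the minimum.

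\emph{Step 3: the variance term.} For the variance process I would likewise track the second-moment matrices of $(\yt{t}^{\mathrm{v}},\zt{t}^{\mathrm{v}})$; at step $t$ they receive an extra additive input of covariance $\preccurlyeq\alpha^2(t+1)^2\sigma^2\cov$ into the $z$-block by Assumption~\ref{asmp:noise-covariance}. The $z$-iterate's variance then grows, but --- as for averaged SGD on least squares \citep{bach2013non} --- averaging undoes this: summing the recursion, the aggressive-step contributions telescope against $\cov$, and the $\cov$-weighted functional of $\avgxt{T}^{\mathrm{v}}$ is governed by $\tfrac{1}{T}\,\mathrm{tr}\!\big(\cov^{-1}\cdot\sigma^2\cov\big)=\tfrac{\sigma^2 d}{T}$, with boundedness of all second moments entering the computation again supplied by $\alpha\leqslant\beta/(2\ki)$ and $(\alpha+2\beta)R^2\leqslant1$. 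Combining the two contributions with the constants from Step~1 gives the stated bound.

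\emph{Main obstacle.} The crux, and the only place where we must go beyond \citet{dieuleveutFB17}, is the multiplicative noise $a_ta_t^\top-\cov$ in the aggressive update \eqref{eq:nest_zt}: its effect is amplified by the time-growing step size $\alpha(t+1)$, so a naive scalar Lyapunov argument diverges, and stability has to be recovered at the level of the matrix recursions through the step-size reduction $\alpha\leqslant\beta/(2\ki)$, the statistical condition number $\ki$ quantifying precisely how much reduction is required.
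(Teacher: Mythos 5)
Your Step~1 (bias--variance split after time rescaling) and your Step~3 sketch match the paper's strategy, and you correctly identify the central obstacle: the multiplicative noise $a_ta_t^\top-\cov$ is amplified by the time-growing step size $\alpha(t+1)$, so a scalar Lyapunov bound does not close. The gap is in Step~2, which is the heart of the theorem. Your proposed fix---pass to the second-moment matrix $\Phi_t$ of $(\yt{t}^{\mathrm b}-\xt{*},\zt{t}^{\mathrm b}-\xt{*})$ and argue that $\mathcal T_t$ is ``stable in the $\cov$- and $\cov^{-1}$-weighted norms once $\alpha\leqslant\beta/(2\ki)$''---is precisely the Lyapunov programme of \citet{jain2018accelerating}, and the paper explains in App.~\ref{app:our-technique-for-bias} why it does not work here. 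Any argument that controls the $z$-block via a $\cov^{-1}$-weighted norm starts from $\nor{\zt{0}-\xt{*}}^2_{\cov^{-1}}=\nor{\xt{0}-\xt{*}}^2_{\cov^{-1}}$, which in the non-strongly-convex regime can be arbitrarily larger than $\nor{\xt{0}-\xt{*}}^2$; this is exactly what happens in \citet{even2021continuized}, who obtain $\Ex{\nor{\xt{t}-\xt{*}}^2}\lesssim\nor{\xt{0}-\xt{*}}^2_{\cov^{-1}}/t^2$ and cannot upgrade it to an excess-risk bound in $\nor{\xt{0}-\xt{*}}^2$. So the claimed inequality $\sum_{t=0}^{T}(t+1)\,\Ex{\risk{\xt{t}^{\mathrm b}}-\risk{\xt{*}}}\lesssim\nor{\xt{0}-\xt{*}}^2/\alpha$, which would finish the proof, does not follow from the argument you sketch: the $\cov^{-1}$-dependence is not removed by Assumption~\ref{asmp:stat-condition-number} but is structural to the Jain-type potential.

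The paper's actual resolution is operator-level and avoids $\cov^{-1}$ entirely. Writing the bias recursion in covariance form $\C{t+1}=\opT\mycirc\C{t}$ with $\opT=\opTil+\opM$ (deterministic Nesterov part $\opTil$, multiplicative-noise part $\opM$), Lemma~\ref{lem:bias-opT-opTil} proves the cumulative inequality
\begin{align*}
\Bigl(\Id-(\Id-\opTil)^{-1}\mycirc\opM\Bigr)\mycirc\sum_{t=0}^{T}\C{t}\ \preccurlyeq\ \Bigl(\sum_{t=0}^{T}\opTil^{t}\Bigr)\mycirc\C{0},
\end{align*}
relating the cumulative second moment of the stochastic iterates to that of the exact-gradient iterates. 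The step that closes the argument is then to test both sides against the fixed matrix $\coff=\HZero$ defined in \eqref{eq:noise_and_coeff}: under $(\alpha+2\beta)R^2\leqslant1$ and $\alpha\leqslant\beta/(2\ki)$, Lemma~\ref{lem:opM:opTil:transpose} shows $\opM^{\top}\mycirc(\Id-\opTil^{\top})^{-1}\mycirc\coff\preccurlyeq\tfrac{2}{3}\coff$, i.e.\ $\coff$ is an almost-eigenvector with eigenvalue at most $2/3$, so the left-hand side dominates $\tfrac13\sum_t\scal{\coff}{\C{t}}=\tfrac{2}{3}\sum_t(t+1)^2(\Ex{\risk{\xt{t}^{\mathrm b}}}-\risk{\xt{*}})$. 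The right-hand side is the cumulative deterministic-Nesterov excess, controlled term-by-term by $\min\{1/\alpha,\,8(t+1)/\beta\}\nor{\xt{0}-\xt{*}}^2$ (Lemma~\ref{lem:nesterov-potential-H}). At no point does $\cov^{-1}$ enter the $\coff$-pairing, which is the mechanism missing from your sketch and the reason a new argument, rather than a transplant of \citet{jain2018accelerating}, was required.
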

The constants in the bounds are partially artifacts of the proof technique. The proof can be found in App.~\ref{app:proof:theorem:main}. In order to give a clear picture of how the rate depends on the constants $R^2,\ki$, we give a corollary below for a specific choice of step-sizes.  
\begin{corollary}
\label{cor:main}
Under the same conditions as Theorem~\ref{thm:main} and with the step sizes $\beta = \frac{1}{3R^2}$, and $\alpha = \frac{1}{ 6 \ki R^2}$. In expectation, the excess error of estimator $\avgxt{T}$ after $T$ iterations  is bounded as
    \begin{align*}
        \Ex{\risk{\avgxt{T}}} - \risk{\xt{*}} &\leqslant \min\left\lbrace \frac{72 \ki R^2}{T^2}, \frac{144 R^2}{T} \right\rbrace \nor{\xt{0} - \xt{*}}^2 +  \frac{ 72 ~ \sigma^2 d  }{T}.
    \end{align*} 
\end{corollary}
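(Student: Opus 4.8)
The plan is to obtain Corollary~\ref{cor:main} as an immediate specialization of Theorem~\ref{thm:main}. The only real content is to verify that the proposed step sizes $\beta = \frac{1}{3R^2}$ and $\alpha = \frac{1}{6\ki R^2}$ satisfy the two admissibility conditions required by the theorem, and then to substitute them into the general bound; there is no new analytic ingredient beyond Theorem~\ref{thm:main} itself.

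First I would check the constraint $(\alpha + 2\beta)R^2 \leqslant 1$. Substituting the chosen values gives $(\alpha + 2\beta)R^2 = \frac{1}{6\ki} + \frac{2}{3}$, and since $\ki \geqslant d \geqslant 1$ (as recalled in the discussion following Assumption~\ref{asmp:stat-condition-number}), the first summand is at most $\frac{1}{6}$, so the total is at most $\frac{5}{6} < 1$. Next I would check $\alpha \leqslant \frac{\beta}{2\ki}$: here $\frac{\beta}{2\ki} = \frac{1}{6\ki R^2} = \alpha$, so the condition holds with equality. Hence both hypotheses of Theorem~\ref{thm:main} are met, and the theorem applies verbatim.

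Finally I would plug the step sizes into the bound of Theorem~\ref{thm:main}: one computes $\frac{12}{\alpha T^2} = \frac{72\,\ki R^2}{T^2}$ and $\frac{48}{\beta T} = \frac{144\,R^2}{T}$, while the variance term $\frac{72\,\sigma^2 d}{T}$ is left unchanged. Taking the minimum of the two bias expressions and adding the variance term yields exactly the claimed inequality. The only thing to be careful about is the bookkeeping of constants and the observation that $\alpha$ has been chosen as large as the second constraint permits, which is what makes the accelerated $1/T^2$ term as small as possible for the given $\beta$; beyond that, the argument is purely a substitution.
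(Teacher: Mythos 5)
Your proposal is correct and is exactly the intended argument: Corollary~\ref{cor:main} follows by verifying that $\beta=\frac{1}{3R^2}$, $\alpha=\frac{1}{6\ki R^2}$ satisfy the two step-size conditions of Theorem~\ref{thm:main} (using $\ki\geq 1$ for the first and equality for the second) and then substituting into the bound, giving $12/(\alpha T^2)=72\ki R^2/T^2$ and $48/(\beta T)=144R^2/T$. The paper does not spell this out beyond the statement, and there is no alternative route to compare against.
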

\noindent We make the following comments on Theorem~\ref{alg:nesterov} and Corollary~\ref{cor:main}
 \paragraph{Optimality of the convergence rate.}
   The convergence rate is composed of two terms: (a) a bias term which describes how fast initial conditions are forgotten and corresponds to the noiseless problem ($\sigma=0$). (b) A variance term which indicates the effect of the noise in the statistical model, independently of the starting point. It corresponds to the problem where the initialization is the solution $\xt{*}$.
   
     The algorithm recovers the fast rate of $O\left( \nicefrac{1}{\left(\alpha T^2\right)} \right) $ of accelerated gradient descent for the bias term. This is the optimal convergence rate for minimizing  quadratic functions with a first-order method.
     However to make the algorithm robust to the stochastic-gradient noise, the learning rate $\alpha$ has to be scaled with regards to the statistical condition number $\ki$.  
     For $ T \leq \ki $, the bias of the algorithm decays as that of averaged SGD, i.e, the second component of the bias governs the rate. 
     However the acceleration comes in for $ T \geq \ki $ and we observe an accelerated rate of $O\left( \nicefrac{\ki}{T^2} \right)$ afterward. 
   This $\ki$-dependence is the consequence of using computationally cheap rank-one stochastic gradients $ a_i  \scal{a_i}{\xt{t}-\xt{*}}$ instead of the full-rank update $\cov (\xt{t}-\xt{*})  $ as in gradient descent.
     The tightness of the rate with respect to $\ki$ and consequently on the dimension $d$ is of particular importance and is discussed in Section~\ref{sec:lower-bound}.

     The algorithm also recovers the optimal rate  $O\left( \nicefrac{\sigma^2 d}{T} \right)$ for the variance term~\citep{tsybakov2003optimal}. Hence it retains the optimal rate of variance error while improving the rate of bias error over standard SGD.  
     
     
\paragraph{Stochastic Gradients and Error Accumulation.}
When true gradients are replaced by stochastic gradients, algorithms accumulate the noisy gradient induced errors as they progress.
In order to still converge, the algorithms need to be modified to adapt accordingly. 
In the case of linear regression, when comparing SGD with GD, the error accumulation due to the multiplicative noise is controlled by scaling the step size from $O(\nicefrac{1}{L})$ to $O(\nicefrac{1}{R^2})$. The error due to the additive noise is controlled by averaging the iterates.
%
%
In the case of accelerated gradient descent, the scaling of the step sizes becomes intuitive if we consider the \emph{linear coupling} interpretation of  \citet{AllenOrecchia2017}.
%
%
In this view of Algorithm~\ref{alg:nesterov}, a  gradient step (on $\yt{t}$) and an \textit{aggressive} gradient step (on $\zt{t}$) are elegantly coupled to achieve acceleration.
The aggressive step is more sensitive to noise since it is of scale $O(t)$. 
Therefore, the step-size $\alpha$ needs to be appropriately scaled down to control the error accumulation of the $\zt{t}$-gradient step. %
Strikingly, this scaling is proportional to the  statistical condition number and therefore to the dimension of the features.

\paragraph{Comparison with \citet{jain2018accelerating}.} Note that as both algorithms have the optimal rate for the variance, we only compare the rate for the bias error here. 
The accelerated stochastic algorithm for strongly convex objectives of \citet{jain2018accelerating} converges with linear rate $O\left(poly(\mu^{-1}) \cdot e^{-{\left({t}/{\sqrt{\mu^{-1} \ki}}\right)}}\right)$ where $\mu$ is the smallest eigenvalue of $\cov$.
We note that (a) this rate is vacuous for finite time horizon  smaller that $\sqrt{\ki/\mu}$ and (b) the algorithm requires the knowledge of the constant $\mu$ which is unknown in practice.
In comparison, our algorithm converges at rate $O(\ki/t^2)$ for any arbitrarily small $\mu$ and therefore is faster for reasonable finite horizon.
Hence, assuming  $\cov$ invertible does not make the problem strongly convex, emphasizing the relevance of the non-strongly convex  setting for least squares problems.
%
%
%
%
The algorithm of \citet{jain2018accelerating} can also be coupled with an appropriate regularization \citep{allen2016optimal} to be directly used on non-strongly convex functions.
The resulting algorithm achieves a target error $\epsilon$ in $O(\sqrt{\ki/\epsilon}\log{\epsilon^{-1}})$ iterations. In comparison, our algorithm requires $O(\sqrt{\ki/\epsilon})$ iterations.
Besides the additional logarithmic factor,  algorithms with the aforementioned regularization are not truly online, since the target accuracy has to be known and the total number of iterations set in advance.
In contrast, our algorithm shows that acceleration can be made robust to stochastic gradients without additional regularization or strong-convexity assumption.

\paragraph{Finite sum minimization of regularized ERM.} 
We investigate here the competitiveness of our method when compared to direct minimization of the regularized ERM objective\footnote{Generalization is not guaranteed without regularization~\citep{GyoKohKrzWal06}.}. 
The ERM problem can be efficiently minimized using variance reduced algorithms~\citep{riesvrg,schmidt2017} and in particular their accelerated variants~\citep{frostig15,allen2017katyusha}.
To achieve a target error of $\epsilon$, these methods required $O\left( \nicefrac{\sigma^2 d}{\epsilon} + \nicefrac{\nor{\xt{*}} \sqrt{L ~ \sigma^2 d}}{\epsilon} \right)$  basic vector computations. 
The number of vector computation of our algorithm is comparatively  $O\left( \nicefrac{\sigma^2 d}{\epsilon} + \nicefrac{\nor{\xt{*}} \sqrt{R^2 d} }{\sqrt{\epsilon}} \right)$, taking $\ki = O(d)$ for simplicity.
Therefore, our method needs fewer computations  for small target errors $ \epsilon \leq \frac{L \sigma^2}{R^2}$.
In addition, accelerated SVRG needs a $O(dn)$ memory, where $n$ is number of samples in ERM while our single pass method only uses a $O(d)$ space.



%

\paragraph{Mini-Batch Accelerated SGD.} 
We also consider the mini-batch stochastic gradient oracle which queries the gradient oracle several times and returns the average of these stochastic gradients given by the observations $( a_{t,i}, b_{t,i} )_{i \leq b}$  :
\begin{align}\label{eq:minibatch}
    \nabla_{t}\risk{\xt{t}} = \frac{1}{b} \sum_{i=1}^{b} a_{t,i} \left( \scal{a_{t,i}}{\xt{t}} - {b_{t,i}} \right). 
\end{align}
Mini-batching enables to reduce the variance of the gradient estimate and to parallelize the computations. 
When we implement Algorithm~\ref{alg:nesterov} with the mini-batch stochastic gradient oracle defined in \eqref{eq:minibatch}, Theorem~\ref{thm:main} becomes valid for learning rates satisfying
 $ ( \alpha + 2 \beta ) R^2 \leqslant b$, $\alpha \leq \frac{b \beta}{2 \ki} $ and
 $ \alpha,\beta \leq \frac{1}{L} $. 
 For batch size $b \lesssim \nicefrac{R^2}{L} $, the rate of convergence becomes $ O\left( \min\left\lbrace\frac{\ki}{b^2 t^2}, \frac{1}{b t} \right\rbrace + \frac{\sigma^2 d}{b t} \right) $.
 Even if it does not improve the overall sample complexity,  using mini-batch is interesting from a practical point of view:
the algorithm can be used with larger step size ($\alpha$ scales with $b^2$), which speeds up the accelerated phase. Indeed the algorithm is  accelerated only after $ \nicefrac{\ki}{b}$ iterations.
For larger batch sizes $b \geq \nicefrac{R^2}{L}$, $\beta$ cannot be scaled with $b$ due to  the condition $\beta \leq 1/L$. The learning rate $\alpha$ can nevertheless be scaled linearly with $b$, if $b \leq \ki$ .
Thus, increasing the batch size still leads to fast rate for Algorithm~\ref{alg:nesterov}, in accordance with the findings of \citet{cotter11} for accelerated gradient methods.
This behavior is in contrast to SGD---where the linear speedup is lost for batch size larger than a certain threshold~\citep{jain201parallelizing}. 
Finally, we note that when the batch size is $O(\ki)$, the performance of the algorithm matches the one of Nesterov accelerated gradient descent. 
This fact is consistent with the observation of \citet{hsu2012random} that the empirical covariance of $\ki$ samples is spectrally close to $\cov$. 


\section{Last Iterate}
\label{sec:last-iterate}

In this section, we study the dynamics of the last iterate of Algorithm~\ref{alg:nesterov}. 
The latter is often preferred to the averaged iterate in practice. 
%
%
In general, the noise in the gradient prevents the last iterate to converge. 
When used with constant step sizes,  only a convergence in a $O(\sigma^2)$-neighborhood of the solution can be obtained. 
Therefore variance reduction techniques (including averaging and decaying step sizes) are required.
However in the case of noiseless model, i.e., $b=\langle a, \xt{*}\rangle$ $\rho$-almost surely, last iterate convergence is possible. 
In such cases, the algorithms are inherently robust to the noise in the stochastic gradients.
This setting is particularly relevant to the training of over-parameterized models in the interpolation setting~\citep{varre2021iterate}.
%
%
%
%
%

When studying the behavior of the last iterate, we need to make an additional 4-th order assumption on the distribution of the features.
\begin{asshort} [Uniform Kurtosis] \label{asmp:uniform-kurtosis}
     There exists a finite constant $\kurt$ such that for any positive semidefinite  matrix $M$
\begin{align}
    \Ex{ \scal{a}{Ma} ~ a \otimes a } \preccurlyeq \kurt \tr{\left(M \cov \right)} \cov.
\end{align}
\end{asshort}
The above assumption holds for the Gaussian distribution with $\kappa = 3$ and is also satisfied when $\cov^{-\frac{1}{2}}a$ has sub-Gaussian tails~\citep{zou2021benign}.
Therefore Assumption~\ref{asmp:uniform-kurtosis} is not too restrictive and  is often made when analysing SGD for least squares~\citep{dieuleveutFB17,flammarion2017stochastic}.
It is nevertheless stronger than Assumption~\ref{asmp:fourth-moment}. 
For the one-hot-basis distribution, it only holds for $\kappa = 1/p_{\min}$ which can be arbitrarily large.  
It also directly implies Assumption~\ref{asmp:stat-condition-number} with a statistical condition number satisfying $\ki\leq \kurt d$. 
Yet, the previous inequality is not tight as the example of the one-hot-basis distribution shows. 

Under this assumption, we extend the previous results of \citet{flammarion2015averaging} to the general stochastic gradient oracle.
\begin{theorem} \label{thm:last-iterate}
Consider Algorithm~\ref{alg:nesterov} under Assumptions ~\ref{asmp:noise-covariance}, \ref{asmp:uniform-kurtosis} and  step sizes satisfying $ \kurt  ( \alpha + 2 \beta ) \tr{\cov} \leq 1 ,  \alpha \leq  \frac{\beta}{2 \kurt d } $. 
    In expectation, the excess risk of the last iterate $\xt{t}$ after $t$ iterations is bounded as  
    \begin{align*}
          \excessRisk{\xt{t}} &\leqslant\min\left\lbrace \frac{3}{\alpha t^2} , \frac{24 }{\beta t} \right\rbrace \nor{\xt{0} - \xt{*}}^2 + 2 \left( \left(\alpha + 2 \beta \right) \tr{H} + \frac{2 \alpha d }{\beta}  \right) \sigma^2 
    \end{align*}
\end{theorem}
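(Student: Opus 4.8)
The plan is to linearise the recursion and then run a Lyapunov argument separately on the bias and the variance, extending the exact-gradient three-sequence analysis of \citet{flammarion2015averaging} to the multiplicative-noise oracle~\eqref{eq:SGD-oracle}. Writing $\eta_t := \xt{t}-\xt{*}$ and $\phi_t := \zt{t}-\xt{*}$ and using $\nabla\risk{\xt{*}}=0$, the oracle decomposes as $\nabla_t\risk{\xt{t}} = (a_t\otimes a_t)\,\eta_t - \xi_t$, where $\xi_t := a_t(b_t-\scal{a_t}{\xt{*}})$ satisfies $\Ex{\xi_t\mid\mathcal{F}_{t-1}}=0$ and, by Assumption~\ref{asmp:noise-covariance}, $\Ex{\xi_t\otimes\xi_t}\preccurlyeq\sigma^2\cov$. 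Substituting into~\eqref{eq:yt}--\eqref{eq:xt} makes $(\eta_t,\phi_t)$ obey an affine recursion driven by the i.i.d. rank-one matrices $a_t\otimes a_t$ and the additive noise $\xi_t$; I would split $\eta_t=\eta_t^{\mathrm{b}}+\eta_t^{\mathrm{v}}$ into the noiseless process started at $\xt{0}-\xt{*}$ and the noise-driven process started at $0$, and bound $\excessRisk{\xt{t}}=\tfrac12\nor{\eta_t}_{\cov}^2$ by twice the sum of the two corresponding excess risks, the cross term being controlled as in \citet{dieuleveutFB17}.

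For the bias part I would track a potential of the form $\mathcal{V}_t = A_t\bigl(\risk{\xt{t}}-\risk{\xt{*}}\bigr)+\tfrac1{2\alpha}\nor{\zt{t}-\xt{*}}^2$ with $A_t$ of order $t^2$, expand $\Ex{\mathcal{V}_{t+1}\mid\mathcal{F}_t}$ using~\eqref{eq:yt}--\eqref{eq:xt}, and establish the supermartingale property $\Ex{\mathcal{V}_{t+1}\mid\mathcal{F}_t}\le\mathcal{V}_t$ for the noiseless process. The hypotheses of Theorem~\ref{thm:last-iterate} are exactly the analogues of those of Theorem~\ref{thm:main} with $R^2$ replaced by $\kurt\tr{\cov}$ and $\ki$ by $\kurt d$ (using $\ki\le\kurt d$ and $L\le\tr{\cov}\le\kurt\tr{\cov}$): the condition $\kurt(\alpha+2\beta)\tr{\cov}\le1$ plays the role of the smoothness bound $\beta\le 1/L$ and yields the usual ``descent'' decrease of the gradient step~\eqref{eq:yt}, while $\alpha\le\beta/(2\kurt d)$ is what makes the extra second-moment term created by replacing $\cov$ with $a_t\otimes a_t$ in the aggressive step~\eqref{eq:nest_zt}---which Assumption~\ref{asmp:uniform-kurtosis} controls through $\Ex{(a_t\otimes a_t)\eta_t^{\mathrm{b}}\otimes(a_t\otimes a_t)\eta_t^{\mathrm{b}}}=\Ex{\scal{a_t}{\eta_t^{\mathrm{b}}}^2\,a_t\otimes a_t}\preccurlyeq\kurt\nor{\eta_t^{\mathrm{b}}}_{\cov}^2\,\cov$---absorbed by the negative terms. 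Telescoping gives the $O\!\bigl(\tfrac1{\alpha t^2}\bigr)\nor{\xt{0}-\xt{*}}^2$ bound; rerunning the same computation with the slower schedule $A_t$ of order $t$, which only needs $\beta\lesssim1/L$, gives the $O\!\bigl(\tfrac1{\beta t}\bigr)\nor{\xt{0}-\xt{*}}^2$ bound, and taking the smaller of the two yields the stated minimum.

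For the variance part, the noise $\xi_s$ enters~\eqref{eq:nest_zt} with weight $\alpha(s+1)$ and thus accumulates mainly in the $\zt{}$-block. Since $\xt{t}$ depends on $\zt{t}$ with weight of order $\tfrac1{t+1}$ in~\eqref{eq:xt}, it feeds the term $\tfrac1{t+1}(a_t\otimes a_t)(\zt{t}-\xt{*})$ into the gradient, which~\eqref{eq:nest_zt} then multiplies by $\alpha(t+1)$, so the $\zt{}$-block effectively undergoes a $(I-\alpha\cov)$ contraction per step; consequently its covariance stays of order $\alpha\,\sigma^2\,t^2\,\cov$ rather than growing cubically, and the same $\tfrac1{t+1}$ weight damps it by a factor of order $\tfrac1{t^2}$ when passing back to $\xt{t}$. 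Tracking these second moments through the affine recursion and summing over $s\le t$, the total variance contribution to $\Ex{\excessRisk{\xt{t}}}$ is the $t$-independent quantity $2\bigl((\alpha+2\beta)\tr{\cov}+\tfrac{2\alpha d}{\beta}\bigr)\sigma^2$, which is $O(\sigma^2)$ under the step-size conditions---the first summand being the direct $O(\beta\sigma^2)$-type contribution of the noise to~\eqref{eq:yt}, the second the one surviving from the aggressive step.

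The main obstacle is the control of the multiplicative noise in the potential inequality of the bias step: the cross and quadratic terms produced by substituting $a_t\otimes a_t$ for $\cov$ in~\eqref{eq:nest_zt} carry a factor of order $t^2$, and forcing their conditional expectation to be non-positive is precisely what dictates the dimension-dependent restriction $\alpha\le\beta/(2\kurt d)$ and requires the uniform-kurtosis Assumption~\ref{asmp:uniform-kurtosis} rather than the weaker fourth-moment Assumption~\ref{asmp:fourth-moment} that suffices for the averaged iterate. A secondary difficulty is showing, as sketched above, that the accumulated additive noise does not grow with $t$, which relies on the effective $(I-\alpha\cov)$ damping of the $\zt{}$-block together with the $\tfrac1{t+1}$ coupling in~\eqref{eq:xt}.
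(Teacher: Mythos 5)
Your overall structure matches the paper's: time-rescale, bias--variance split, and a $t$-independent variance contribution of order $\sigma^2$. The variance sketch is in the right spirit --- the paper proves $\Ex{\thetv{t}\otimes\thetv{t}}\preccurlyeq t^2\sigma^2(\Id-\opT)^{-1}\mycirc\noise$ (Lemma~\ref{lem:covariance-t}), which paired against $\coff$ yields the stated $2\bigl((\alpha+2\beta)\tr{\cov}+2\alpha d/\beta\bigr)\sigma^2$. The gap is in the bias part.

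You propose a one-step supermartingale with a potential of the form $\mathcal{V}_t = A_t(\risk{\xt{t}}-\risk{\xt{*}})+\tfrac1{2\alpha}\nor{\zt{t}-\xt{*}}^2$ with $A_t\sim t^2$, and assert that the second-moment term created by the aggressive step is ``absorbed by the negative terms.'' It is not, and this is precisely the obstacle the paper singles out in App.~\ref{app:our-technique-for-bias}. Set $\eta_t=\xt{t}-\xt{*}$ and compute $\Ex{\mathcal{V}_{t+1}\mid\filt{t}}-\mathcal{V}_t$ for the bias process: on the positive side there is $\tfrac{\alpha(t+1)^2}{2}\Ex{\scal{a_t}{\eta_t}^2\nor{a_t}^2\mid\filt{t}}$, which under Assumption~\ref{asmp:uniform-kurtosis} is of order $\alpha(t+1)^2\kurt\tr{\cov}\,(\risk{\xt{t}}-\risk{\xt{*}})$; on the negative side the only available terms are $-(t+1)(t+2)\beta\nor{\nabla\risk{\xt{t}}}^2$ and $-\tfrac{t(t+1)}{2}\nor{\xt{t}-\yt{t}}^2_{\cov}$, exactly the ones driving the exact-gradient Lemma~\ref{lem:nesterov-potential-H}. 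Neither lower-bounds $\risk{\xt{t}}-\risk{\xt{*}}$ here: $\nor{\nabla\risk{\xt{t}}}^2=\nor{\cov\eta_t}^2$ can be arbitrarily small relative to $\nor{\eta_t}^2_{\cov}$ once $\eta_t$ lives in a small-eigenvalue direction (the setting is deliberately non-strongly convex), and $\nor{\xt{t}-\yt{t}}^2_{\cov}$ carries no information about $\eta_t$. Scaling $\alpha$ down by $\kurt d$ shrinks the bad term but cannot make the one-step change non-positive, and the $\yt{}$-step carries its own kurtosis residual which the descent lemma does not cancel. So the per-step supermartingale claim is false as stated.

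The paper's resolution (Lemma~\ref{lem:final:iterate:app}) is not a per-step Lyapunov argument but a cumulative one: it writes the bias covariance as the exact-gradient term plus a convolution $\sum_{k<T}\opTil^{T-k-1}\mycirc\opM\mycirc\C{k}$, uses Assumption~\ref{asmp:uniform-kurtosis} via Lemma~\ref{lem:kurtosis-ut} to turn $\opM\mycirc\C{k}$ into $\kurt\,\ft{\thetb{k}}\,\noise$, and obtains the discrete Volterra inequality $\ft{\thetb{T}}\le\min\{1/\alpha,\,8(T+1)/\beta\}\nor{\xt{0}-\xt{*}}^2+\sum_{k<T}\gt{\cov}{T-k-1}\ft{\thetb{k}}$. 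The decisive fact is that the total mass of the kernel is $\kurt\scal{\coff}{(\Id-\opTil)^{-1}\mycirc\noise}\le 2/3$ (Lemma~\ref{eq:lem:optil}) --- this is where $\alpha\le\beta/(2\kurt d)$ actually bites --- and an induction on $T$ then closes the bound. The multiplicative noise injected at step $k$ is therefore not cancelled at step $k$ but amortized over the whole future trajectory with a geometric factor below one. That amortization is the idea missing from your proposal, and replacing it by per-step absorption is where the argument breaks.
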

Let us make some comments on the convergence of last iterate. The proof can be found in App.~\ref{app:proof:theorem:lastiterate}. 
 \begin{itemize}[leftmargin=*,topsep=0.5em]\setlength\itemsep{0.5em}
 \item  When the step-sizes are set to $\beta = \nicefrac{1}{(3 \kurt \tr{\cov})}$ and $\alpha = \nicefrac{1}{(6 d\kurt^2 \tr{\cov}})$ the upper bound  on the excess risk becomes  $ \min\left\lbrace \frac{18 \kurt^2 d \tr{H}}{t^2} , \frac{144 \kurt \tr{\cov} }{ t} \right\rbrace \nor{\xt{0} - \xt{*}}^2 + \frac{4}{\kappa} \sigma^2$.
 \item  For constant step sizes, the excess error of the last iterate does not go to zero in the presence of noise in the model. At infinity, it converges to a neighbourhood of $O(\sigma^2)$  and the constant scales with the learning rate. This neighbourhood shrinks as the step size decreases, as long as the step size of the aggressive step $ \alpha $ should decrease at a faster rate compared to $\beta$. In comparison, Nesterov accelerated gradient descent ($\alpha=\beta$) is diverging. 
\item For noiseless least squares where $\sigma=0$, we get an accelerated rate $O(\kurt d/t^2)$, which has to be compared to the $O(1/t)$-rate of SGD.  
\item 
Following \citet{berthier2020tight}, a similar result can be obtained on the minimum of the excess risk $\min\limits_{0 \leq k \leq t}\excessRisk{\xt{k}}$ by only assuming the less stringent Assumption~\ref{asmp:fourth-moment}. 
\end{itemize}

\section{Lowerbound and open questions}\label{sec:lower-bound}
In this section, we address the tightness of our result with respect to the statistical condition number $\ki$. 
In particular we study the impact of the distribution generating the stream of inputs. 
We start by defining the class of stochastic first-order algorithms for least-squares we consider. 
\begin{definition}[Stochastic First Order Algorithm for Least Squares]
\label{def:sfo}
    Given an initial point $\xt{0}$, and a distribution $\rho$, a stochastic first order algorithm generates a sequence of iterates $\xt{k}$ such that 
    \begin{align}\label{eq:classalgo}
    \xt{k} \in \xt{0} + \mathit{span}\left\lbrace  \nabla_{0} f(\xt{0}), \nabla_{1} f(\xt{1}), \cdots, \nabla_{k-1} f(\xt{k-1})  \right\rbrace \ \ \text{ for } k\geq 1,
    \end{align}
    where $\nabla_{i} f$ are the stochastic gradients at the iteration $i$ defined in \eqref{eq:SGD-oracle}. 
\end{definition}
This definition extends the definition of first order algorithms considered by \citet{Nes04} to the stochastic setting. 
This class of algorithm defined is fairly general and includes SGD and  Algorithm.~\ref{alg:nesterov}. 
By definition of the stochastic oracle, the condition~\ref{eq:classalgo} is equivalent to $ \xt{k} - \xt{0}$ belonging to the linear span of the features $\{a_1,\cdots, a_k\}$.
It is therefore not possible to control the excess error for iterations $t = O(d)$ since the optimum is then likely to be in the span of more than $d$ features.
%
However it is still possible to lowerbound the excess error in the initial stage of the process. 
This is the object of the following lemma which provides a lower bound for noiseless problems.
\begin{lemma}
\label{lem:lower-bound}
   For all starting point $\xt{0}$, there exists a distribution  $\rho$ over $ \mathbb{R}^{d}\times  \mathbb{R} $ satisfying Assumption~\ref{asmp:fourth-moment} with  $R^2 = 1$, Assumption~\ref{asmp:noise-covariance} with $\sigma=0$,  Assumption~\ref{asmp:stat-condition-number} with $\ki=d$ and an optimum $\xtp{*}$ verifying $\nor{\xt{*}^{\prime} - \xt{0}}^2 = 1$, such that  the expected excess risk of any stochastic first order algorithm is lower bounded as 
       \begin{align*}
        \excessRisk{\xt{\lfloor d/2 \rfloor}}  = \Omega\left(\frac{1}{d} \right).
    \end{align*} 
\end{lemma}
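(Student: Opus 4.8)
The plan is to use the one-hot-basis distribution, which, as noted just after Assumption~\ref{asmp:stat-condition-number}, is exactly the family that saturates the bound $\ki \geq d$. Concretely, fix the starting point $\xt{0}$ and let $a = e_i$ (the $i$-th canonical basis vector of $\mathbb{R}^{\dimn}$) with probability $1/d$ for each $i \in \{1,\dots,d\}$. Set $\xtp{*} = \xt{0} + \frac{1}{\sqrt d}\mathbf 1$, so that $\nor{\xtp{*} - \xt{0}}^2 = 1$, and take noiseless labels $b = \scal{\xtp{*}}{a}$, i.e. $b = (\xtp{*})_i$ when $a = e_i$. Then $\xtp{*}$ is the unique minimizer of $\risk{\cdot}$.

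The first step is to check the three assumptions with the advertised constants. Here $\cov = \Ex{a\otimes a} = \frac1d I_d$, which is invertible. Since $\nor a^2 = 1$ almost surely, $\Ex{\nor a^2\, a\otimes a} = \cov$, so Assumption~\ref{asmp:fourth-moment} holds with $R^2 = 1$. The model is noiseless by construction, so $\Ex{(b-\scal{\xtp{*}}{a})^2\, a\otimes a} = 0 \preccurlyeq 0\cdot\cov$ and Assumption~\ref{asmp:noise-covariance} holds with $\sigma = 0$. Finally $\nor{e_i}_{\cov^{-1}}^2 = e_i^\top(d I_d)e_i = d$, hence $\Ex{\nor a_{\cov^{-1}}^2\, a\otimes a} = d\,\cov$, which is Assumption~\ref{asmp:stat-condition-number} with $\ki = d$.

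Next I would exploit the rank-one structure of the oracle. With $a_t = e_{i_t}$, the stochastic gradient \eqref{eq:SGD-oracle} at $\xt t$ equals $e_{i_t}\big((\xt t)_{i_t} - (\xtp{*})_{i_t}\big)$, supported on the single coordinate $i_t$. Therefore, by \eqref{eq:classalgo}, after $k$ iterations $\xt k - \xt 0$ lies in $\mathit{span}\{e_{i_1},\dots,e_{i_k}\}$, a subspace of dimension at most $k$; in particular at least $d-k$ coordinates of $\xt k$ still equal those of $\xt 0$. Since the excess risk is $\risk x - \risk{\xtp{*}} = \tfrac12 (x-\xtp{*})^\top\cov(x-\xtp{*}) = \tfrac{1}{2d}\nor{x-\xtp{*}}^2$, and each coordinate $i$ on which $\xt k$ agrees with $\xt 0$ contributes exactly $((\xt 0)_i - (\xtp{*})_i)^2 = 1/d$ to $\nor{\xt k - \xtp{*}}^2$, taking $k = \lfloor d/2\rfloor$ gives $\nor{\xt k - \xtp{*}}^2 \geq (d-k)/d \geq 1/2$, hence $\risk{\xt k} - \risk{\xtp{*}} \geq \tfrac{1}{4d}$. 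This holds for every realization of the sampled indices $i_1,\dots,i_k$ (and of any internal randomness of the algorithm), so it holds in expectation, yielding the claimed $\Omega(1/d)$ lower bound.

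There is no real obstacle here; the two points requiring a little care are (i) choosing $\xtp{*}-\xt 0$ spread out enough that no set of $\lfloor d/2\rfloor$ coordinates carries more than a constant fraction of $\nor{\xtp{*}-\xt 0}^2$ — the all-ones direction is the natural pick and makes the per-coordinate contribution uniform; and (ii) verifying that the three assumptions hold simultaneously with the stated constants, the binding constraint being that the one-hot distribution is precisely the extremal case for the statistical condition number, $\ki = d$.
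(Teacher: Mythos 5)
Your proof is correct and follows essentially the same route as the paper's: the same one-hot-basis distribution with equal probabilities, the same optimum $\xtp{*} = \xt{0} + \tfrac{1}{\sqrt d}\mathbf 1$, the same verification of the three assumptions, and the same span argument showing that after $\lfloor d/2\rfloor$ iterations at least $d/2$ coordinates of $\xt{\lfloor d/2\rfloor}$ still agree with $\xt 0$, giving excess risk at least $\tfrac{1}{4d}$. Your remark that the bound holds pointwise over every realization (rather than only in expectation) is a slightly cleaner way to close the argument than the paper's, but the substance is identical.
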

Check App.~\ref{app:proof:lemma:lower} for the proof of the lemma. The excess risk  cannot be decreased by more than a factor $d$ in less than $d$ iterations.  Fully accelerated rates $O(R^2 \nor{\xt{*}^{\prime} - \xt{0}}^2/t^2)$ are thus proscribed. Indeed, they correspond to a decrease $O(1/d^2)$ for the above problem, contradicting the lower bound.
Hence, accelerated rates should be scaled with a factor of dimension $d$.
The rate $O(d R^2 \nor{\xt{*}^{\prime} - \xt{0}}^2/t^2)$ of Theorem~\ref{thm:main} is therefore optimal at the beginning of the optimization process. 
On the other side, the SGD algorithm achieves a rate of $O(R^2 \nor{\xt{*}^{\prime} - \xt{0}}^2/t)$  on noiseless linear regression.
For the regression problem described in Lemma~\ref{lem:lower-bound}, this rate  is $O(1/d)$ and also optimal. 
%
%
%
%

The proof of the lemma follow the lines of \citet{jain2018accelerating} and considers the one-hot basis distribution. It is worth noting that the covariance matrix of the worst-case distribution can be fixed beforehand, 
i.e., for any covariance matrix, there exists a matching distribution such that direct acceleration is impossible (see details in Lemma~\ref{lem:no-acceleration}).
Therefore the lower bound does not rely on the construction of a particular Hessian, in contrast to the proof of \citet{Nes04} for the deterministic setting.
%
%
However, the proof strongly leverages the orthogonality of the features output by the oracle.  
It is still an open question to study similar complexity result for more general, e.g., Gaussian, features.



 
A different approach is to consider constraints on the computational resources used by the algorithm. \citet{dagan19b,sharan19} investigate this question from the angle of memory constraint and derive memory/samples tradeoffs for the problem of regression with Gaussian design.  
%
Although their results do no have direct implications on the convergence rate of gradient based methods, we observe some interesting phenomena when increasing the memory resource of the algorithms. 
The stochastic gradient oracle $\left(\scal{a_i}{\xt{t}} - b_i  \right) a_i$ uses a memory $O(d)$. 
If we increase the available memory to $O(d^2)$ and consider instead the running average 
$\frac{1}{t+1}\sum_{i=0}^{t} \left( \scal{a_i}{\xt{t}} - b_i  \right) a_i$
as the gradient estimate, 
 $\alpha$ no longer needs to be scaled with $d$ and we
 empirically observe  $O(1/t^2)$ convergence (see Figure~\ref{fig:space-vs-rate} in App.~\ref{app:proof:lemma:lower}).
 This empirical finding suggests that algorithms using a subquadratic amount of memory may provably converge slower than algorithms without memory constraints. Investigating such speed/memory tradeoff is outside of the scope of this paper, but is a promising direction for further research.




\section{Proof technique}

For the least squares problem, the analysis of stochastic gradient methods is well studied and techniques have been thoroughly refined. Our analysis follows the common underlying scheme. First, the iterates are rescaled to obtain a time invariant linear system. Second, the estimation error is decomposed as the sum of a bias and variance error term which are studied separately. Finally, the rate is obtained using the bias-variance decomposition. 
However there are  significant gaps yet to be filled for this particular problem. The first of many is that the \textit{existing Lyapunov techniques} for either strongly convex functions or classical SGD are not applicable (see App.~\ref{app:our-technique-for-bias}, for more details). The study of the variance error comes with a different set of challenges.
%


\paragraph{Time Rescaling.} Using the approach of \cite{flammarion2015averaging}, we first  reformulate the algorithm using the following scaled iterates \eqals{ \ut{t} := (t+1) ( \xt{t}- \xt{*} )  \qquad \vt{t} := t ( \yt{t}- \xt{*} )  \qquad \wt{t} := \zt{t} - \xt{*} \label{eq:ut_vt_zt}.}  
Using such time rescaling, we can write  Algorithm~\ref{alg:nesterov} with stochastic gradient oracle as a time-independent linear recursion (with random coefficients depending only on the observations)
\begin{align} 
    \thet{t+1} =\J_{t}  \thet{t} + \epsi{t+1},  
\end{align}
where $\thet{t} \defeq \matvz{t}$, $\J_{t} \defeq 
\begin{bmatrix} \I - \beta  a_t a_t^\top & \I - \beta  a_t a_t^\top \\ - \alpha a_t a_t^\top & \I - \alpha a_t a_t^\top \end{bmatrix} $ and $\epsi{t+1} \defeq \left(t+1\right) (b_t - \scal{\xt{*}}{a_t}) \begin{bmatrix}  \beta a_t \\ \alpha a_t    \end{bmatrix} $.  
%
%
The expected excess risk of the averaged iterate $\avgxt{T}$ can then be simply written as 
\begin{align*}
   \excessRisk{\avgxt{T}} - \excessRisk{\xt{*}}  = \frac{1}{2}  \left( \textstyle \sum \limits_{t=1}^{T+1} t \right)^{-2}  \scal{\HZero}{\Ex{\athet{T} \otimes \athet{T}}}, 
\end{align*}
where we define $\athet{T} = \sum_{t=0}^{T} \thet{t}$ the sum of the rescaled iterates.
All that remains to do is to upper-bound the covariance  $\Ex{\athet{T} \otimes \athet{T}}$. 
It now becomes clear why we consider the averaging scheme in \eqref{eq:estimator} instead of the classical average of \citet{polyak1992acceleration}: it integrates  well with our time re-scaling and makes the analysis simpler.

\paragraph{Bias-Variance Decomposition.}
To upper bound the covariance of our estimator $\athet{t}$ we form two independent sub-problems: 
%
\begin{itemize}[leftmargin=*]
        \item \textbf{Bias recursion:}  the least squares problem is assumed to be noiseless, i.e, $\epsi{t} = 0$ for all $t\geq0$. It amounts to the studying the following recursion
        \eqal{ \thetb{t+1} =  \J_t ~ \thetb{t} \text{ started from } \thetb{0} =  \thet{0}. \label{eq:rec-bias}  }
        \item \textbf{Variance recursion:}  the recursion starts at the optimum ($\xt{*}$) and the noise $\epsi{t}$ drive the dynamics. It is equivalent to the following recursion
        \eqal{ \thetv{t+1} =  \J_{t} \thetv{t} + \epsi{t+1} \text{ started from } \thetv{0} =  \bm{0}. \label{eq:rec-var}}
    \end{itemize}
The bias-variance decomposition (see Lemma~\ref{app:lemma:bias:variance} in App.~\ref{app:operator}) consists of upperbounding the covariance of the iterates as 
 \begin{align} \label{eq:bias-variance}
            \E\left[ \athet{T} \otimes \athet{T} \right] \preccurlyeq 2 \left( \E\left[ \athetb{T} \otimes \athetb{T} \right] + \E\left[ \athetv{T} \otimes \athetv{T} \right] \right),
        \end{align}
where $\athetb{T} \defeq \sum_{t=0}^{T} \thetb{T}$ and $\athetv{T} \defeq \sum_{t=0}^{T} \thetv{T}$. The bias error and the variance error can then be studied separately. 

The bias error is directly given by the following lemma which controls the finite sum of the excess bias risk. In the proof of the lemma,  we relate the sum of the expected covariances of the iterates  Algorithm~\ref{alg:nesterov} with stochastic gradients  to  the sum of the covariance of iterates of Algorithm~\ref{alg:nesterov} with exact gradients.  For detailed proof, see Lemma~\ref{lem:bias_sum_covariance}.  
\begin{lemma}[Potential for Bias]
Under Assumptions  \ref{asmp:fourth-moment},\ref{asmp:stat-condition-number} and the step-sizes satisfying the conditions of Theorem~\ref{thm:main}. For $T \geq 0$,
    \begin{align*}
        \sum_{t=0}^{T} \scal{\HZero}{\Ex{ \thetb{t} \otimes \thetb{t} }}  \leq \min\left\lbrace \frac{3(T+1)}{\alpha} , \frac{12(T+1)(T+2)}{\beta} \right\rbrace \nor{\xt{0} - \xt{*}}^2.
    \end{align*}
\end{lemma}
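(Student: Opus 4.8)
The plan is to compare the stochastic bias recursion $\thetb{t+1} = \J_t \thetb{t}$ with the deterministic recursion obtained by replacing $a_t a_t^\top$ with its expectation $\cov$, i.e., $\bar\J \defeq \begin{bmatrix} \I - \beta\cov & \I - \beta\cov \\ -\alpha\cov & \I - \alpha\cov\end{bmatrix}$, and to control the gap via the assumptions. Writing $\Sigma_t \defeq \Ex{\thetb{t}\otimes\thetb{t}}$, taking the (conditional) expectation of $\thetb{t+1}\otimes\thetb{t+1}$ gives a linear operator recursion $\Sigma_{t+1} = \mathcal{T}(\Sigma_t)$ where $\mathcal{T} = \mathcal{T}_{\mathrm{det}} + \mathcal{T}_{\mathrm{noise}}$, with $\mathcal{T}_{\mathrm{det}}$ the deterministic map $S \mapsto \bar\J S \bar\J^\top$ and $\mathcal{T}_{\mathrm{noise}}$ the extra (positive-semidefinite-valued) terms coming from $\Ex{(a_ta_t^\top) M (a_ta_t^\top)} - \cov M \cov$. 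The two step-size conditions $(\alpha+2\beta)R^2 \le 1$ and $\alpha \le \beta/(2\ki)$ are exactly what is needed: Assumption~\ref{asmp:fourth-moment} lets the $\beta$-block of $\mathcal{T}_{\mathrm{noise}}$ be absorbed, while Assumption~\ref{asmp:stat-condition-number} is what tames the $\alpha(t+1)$-scale block in the $\zt{t}$ update — this is the multiplicative-noise control that forces $\alpha \lesssim \beta/\ki$.

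The key quantity to track is a potential of the form $\mathcal{V}_t \defeq \scal{\HZero}{\Sigma_t}$ together with an auxiliary term, say $\scal{\mathrm{something\ like}\ \cov^{-1}\ \mathrm{or}\ \I}{\Sigma_t}$ on the $\wt{t}$-block, chosen so that (i) it dominates $\scal{\HZero}{\Sigma_t}$ and (ii) it satisfies an approximate one-step contraction/telescoping inequality. Concretely, I would establish an inequality of the shape
\begin{align*}
  \sum_{t=0}^{T}\scal{\HZero}{\Sigma_t} + (\text{boundary potential at } T+1) \;\le\; (\text{initial potential}),
\end{align*}
where the initial potential evaluates to $\tfrac{\text{const}}{\alpha}\nor{\xt{0}-\xt{*}}^2$ (the $\zt{0}=\xt{0}$ initialization contributes through $\wt{0} = \xt{0}-\xt{*}$ and $\ut{0} = \xt{0}-\xt{*}$). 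This is the route to the $3(T+1)/\alpha$ bound. The second bound $12(T+1)(T+2)/\beta$ is the "slow" regime: it should follow from a cruder estimate that only uses the $\beta$-gradient step as a plain averaged-SGD-type descent — essentially bounding $\scal{\HZero}{\Sigma_t} \lesssim \tfrac{1}{\beta t}\nor{\xt{0}-\xt{*}}^2$ termwise (à la \citet{bach2013non}) and summing, which produces the $(T+1)(T+2)$ factor. The final statement takes the minimum of the two.

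The main obstacle I anticipate is designing the correct Lyapunov/potential function for the accelerated stochastic recursion — the excerpt itself flags that "existing Lyapunov techniques for either strongly convex functions or classical SGD are not applicable." In the deterministic Nesterov analysis the potential is $\tfrac{1}{\alpha}\nor{\zt{t}-\xt{*}}^2 + (t+1)t\,(\risk{\yt{t}} - \risk{\xt{*}})$; here the stochastic cross-terms between the $\ut{t}$, $\vt{t}$, $\wt{t}$ blocks must be handled at the level of the covariance operator, and the estimate $\Ex{(a a^\top) M (a a^\top)} \preccurlyeq \ki\,\scal{\cov}{M}\,\cov$-type bounds from Assumption~\ref{asmp:stat-condition-number} must be inserted in precisely the places where the $\alpha(t+1)$ scaling would otherwise blow up, with the slack $\alpha\le\beta/(2\ki)$ giving a constant-factor margin. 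The rest — expanding $\J_t$, taking expectations, telescoping — is bookkeeping, and I would defer it to the detailed proof of Lemma~\ref{lem:bias_sum_covariance}.
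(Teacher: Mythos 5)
Your decomposition of the one-step covariance map into a deterministic part and a noise part---$\opT = \opTil + \opM$ in the paper's notation---is the right starting point, and your sense of where Assumptions~\ref{asmp:fourth-moment} and~\ref{asmp:stat-condition-number} intervene is also correct. The gap is in what you do next. You plan to run a one-step Lyapunov/telescoping argument on the \emph{stochastic} covariance, i.e.\ to design a potential $\mathcal{V}_t$ dominating $\scal{\HZero}{\C{t}}$ with $\mathcal{V}_{t+1} + \scal{\HZero}{\C{t}} \le \mathcal{V}_t$, and you defer the telescope as bookkeeping. But the paper explicitly flags (App.~\ref{app:our-technique-for-bias}) that no such potential is available here: the $\cov^{-1}$-weighted term on the $\wt{t}$-block---the choice of \citet{jain2018accelerating} and \citet{even2021continuized}, which you float---has initial value $\nor{\xt{0}-\xt{*}}^2_{\cov^{-1}}$, which cannot be bounded by $\nor{\xt{0}-\xt{*}}^2$ in the non-strongly-convex case; and the plain Euclidean $\wt{t}$-term (which is what closes the \emph{deterministic} Nesterov telescope) does not absorb the extra $\opM$-contribution when the multiplicative noise is present. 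So the telescope you defer is precisely the step that does not close.

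The paper's argument does not telescope the stochastic covariance at all. It unrolls $\C{t}=\opT\mycirc\C{t-1}$ into $\C{t}=\opTil^{t}\mycirc\C{0}+\sum_{k<t}\opTil^{t-1-k}\mycirc\opM\mycirc\C{k}$, sums over $t\le T$, and bounds the inner geometric tail by the resolvent $(\Id-\opTil)^{-1}$ to obtain the operator inequality of Lemma~\ref{lem:bias-opT-opTil},
\begin{align*}
\left( \Id - \left( \Id - \opTil \right)^{-1} \mycirc \opM \right) \mycirc \sum_{t=0}^{T} \C{t} \preccurlyeq \left( \sum_{t=0}^{T} \opTil^{t} \right) \mycirc \C{0}.
\end{align*}
Testing against $\HZero$ and passing to the transposed operators, everything reduces to the \emph{almost-eigenvector} bound $\opM^\top\mycirc(\Id-\opTil^\top)^{-1}\mycirc\HZero \preccurlyeq \tfrac{2}{3}\HZero$ (Lemma~\ref{lem:opM:opTil:transpose}); this inequality is exactly where $(\alpha+2\beta)R^2\le 1$ and $\alpha\le\beta/(2\ki)$ are consumed, and it yields $\sum_{t=0}^{T}\scal{\HZero}{\C{t}} \le 3\sum_{t=0}^{T}\scal{\HZero}{\opTil^t\mycirc\C{0}}$---a global comparison of the stochastic sum with three times the \emph{exact-gradient} sum. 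Only after this reduction is the classical Nesterov potential invoked, and then only on the deterministic trajectory $\opTil^t\mycirc\C{0}$ (Lemma~\ref{lem:nesterov-potential-H}), where the Euclidean $\wt{t}$-term works and gives both $\scal{\HZero}{\opTil^t\mycirc\C{0}}\le\tfrac{1}{\alpha}\nor{\xt{0}-\xt{*}}^2$ and $\le\tfrac{8(t+1)}{\beta}\nor{\xt{0}-\xt{*}}^2$; summing produces the two branches of the minimum. The missing idea in your proposal is to replace the per-step telescope on the stochastic covariance by this global comparison of sums, mediated by the resolvent and the almost-eigenvector inequality.
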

%
%
In order to bound the variance error, we first carefully expand the covariance  of $\athetv{t}$ and relate it to the  covariances of the $\thetv{t}$ (see Lemma~\ref{lem:var:cov-athet} in App.~\ref{app:variance}). We then control each of these covariances using the following lemma which shows that they are of order $O(t^2)$. See Lemma~\ref{lem:covariance-t} in App.~\ref{app:variance}, Lemma~\ref{eq:lem:inv:opt} in App.~\ref{app:inverse-of-operators} for proof. 
\begin{lemma} \label{lem:var-t-main}
   For any $t \geq 0$ and step-sizes satisfying condition of Theorem~\ref{thm:main}, the covariance is characterized by
    \begin{align*}
        \Ex{ \thetv{t} \otimes \thetv{t} } \preccurlyeq t^2 \sigma^2  \begin{bmatrix}
            2 \alpha  ( \beta \cov )^{-1}  +  (2\beta -3 \alpha) \I &  \alpha \beta^{-1}  (2\beta  - \alpha) \I   \\
            \alpha \beta^{-1}  (2\beta  - \alpha) \I  &  2  \alpha^2 \beta^{-1} \I 
          \end{bmatrix}.
    \end{align*}
\end{lemma}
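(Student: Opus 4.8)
The plan is to track the covariance matrix $S_t := \Ex{\thetv{t}\otimes\thetv{t}}$ through the recursion $\thetv{t+1} = \J_t\thetv{t} + \epsi{t+1}$. Since $\thetv{0}=\bm 0$ and the noise $\epsi{t+1}$ is, conditionally on the past, zero-mean and independent of $\thetv{t}$, we get the clean recursion
\[
S_{t+1} = \Ex{\J_t S_t \J_t^\top} + \Ex{\epsi{t+1}\otimes\epsi{t+1}}.
\]
The second term is $(t+1)^2$ times $\Ex{(b_t-\scal{\xt{*}}{a_t})^2 \begin{bmatrix}\beta^2 a_ta_t^\top & \alpha\beta\, a_ta_t^\top\\ \alpha\beta\, a_ta_t^\top & \alpha^2 a_ta_t^\top\end{bmatrix}}$, which by Assumption~\ref{asmp:noise-covariance} is bounded above (in the PSD order) by $(t+1)^2\sigma^2\begin{bmatrix}\beta^2\cov & \alpha\beta\cov\\ \alpha\beta\cov & \alpha^2\cov\end{bmatrix}$. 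For the first term I would introduce the linear operator $\mathcal T$ acting on symmetric $2d\times 2d$ matrices by $\mathcal T(S) = \Ex{\J_t S \J_t^\top}$ (the $\J_t$ are i.i.d., so $\mathcal T$ is time-independent), so that $S_{t+1} = \mathcal T(S_t) + (t+1)^2 \Sigma_{\mathrm{noise}}$ with $\Sigma_{\mathrm{noise}}\preccurlyeq \sigma^2\begin{bmatrix}\beta^2\cov & \alpha\beta\cov\\ \alpha\beta\cov & \alpha^2\cov\end{bmatrix}$.

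The core of the argument is to exhibit an explicit PSD matrix $B$ — namely $B = \sigma^2\begin{bmatrix} 2\alpha(\beta\cov)^{-1} + (2\beta-3\alpha)\I & \alpha\beta^{-1}(2\beta-\alpha)\I\\ \alpha\beta^{-1}(2\beta-\alpha)\I & 2\alpha^2\beta^{-1}\I\end{bmatrix}$ — that serves as a "one-step Lyapunov certificate" in the sense that $\mathcal T(B) + \Sigma_{\mathrm{noise}} \preccurlyeq B$ (possibly after using the step-size constraints $(\alpha+2\beta)R^2\le 1$ and $\alpha\le\beta/(2\ki)$ to absorb the error terms). Granting such a $B$, an induction on $t$ closes the proof immediately: if $S_t \preccurlyeq t^2 B$, then $S_{t+1} = \mathcal T(S_t) + (t+1)^2\Sigma_{\mathrm{noise}} \preccurlyeq t^2\mathcal T(B) + (t+1)^2\Sigma_{\mathrm{noise}} \preccurlyeq (t+1)^2(\mathcal T(B)+\Sigma_{\mathrm{noise}}) \preccurlyeq (t+1)^2 B$, using monotonicity of $\mathcal T$ (it maps PSD to PSD) and $t^2\le (t+1)^2$. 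The base case $S_0 = 0 \preccurlyeq 0$ is trivial.

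Verifying $\mathcal T(B) + \Sigma_{\mathrm{noise}}\preccurlyeq B$ is where the real work lies, and I expect it to be the main obstacle. Expanding $\mathcal T(B) = \Ex{\J_t B \J_t^\top}$ with $\J_t = \begin{bmatrix}\I-\beta a_ta_t^\top & \I-\beta a_ta_t^\top\\ -\alpha a_ta_t^\top & \I-\alpha a_ta_t^\top\end{bmatrix}$ produces, block by block, terms that are linear, quadratic, and quartic in $a_ta_t^\top$; the quartic terms $\Ex{\|a\|^2 a\otimes a}$ are controlled by Assumption~\ref{asmp:fourth-moment} ($\preccurlyeq R^2\cov$) and the mixed terms involving $\cov^{-1}$ are exactly what Assumption~\ref{asmp:stat-condition-number} ($\Ex{\|a\|^2_{\cov^{-1}} a\otimes a}\preccurlyeq\ki\cov$) is designed to handle — this is why the statistical condition number appears, and why the off-diagonal/diagonal structure of $B$ with the $(\beta\cov)^{-1}$ entry is chosen as it is. The inequality should reduce, after diagonalizing in the eigenbasis of $\cov$ and writing everything per-eigenvalue $\lambda$, to a finite collection of scalar inequalities in $\lambda$, $\alpha$, $\beta$, $R^2$, $\ki$; the step-size conditions guarantee each holds. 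This decomposition — first expanding $\mathcal T(B)$, then bounding its pieces via the moment assumptions, and finally checking the resulting $2\times 2$ (per-eigenvalue) PSD inequality — is precisely the content deferred to Lemma~\ref{lem:covariance-t} in App.~\ref{app:variance} and Lemma~\ref{eq:lem:inv:opt} in App.~\ref{app:inverse-of-operators}, and I would organize the proof to invoke those two lemmas for the certificate inequality and keep the main-text argument to the short induction above.
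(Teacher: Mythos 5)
Your high-level skeleton is sound and is in fact a cleaner organization than the paper's: the paper proves this lemma (see Lemma~\ref{lem:covariance-t} in App.~\ref{app:variance}) by \emph{unrolling} the recursion,
\begin{align*}
\Ex{\thetv{t}\otimes\thetv{t}} \;=\; \sum_{k=0}^{t-1}\opT^{k}\mycirc\Ex{\epsi{t-k}\otimes\epsi{t-k}}
\;\preccurlyeq\; t^2\sigma^2\sum_{k=0}^{\infty}\opT^{k}\mycirc\noise \;=\; t^2\sigma^2\,(\Id-\opT)^{-1}\mycirc\noise,
\end{align*}
and then bounding $\pinf := (\Id-\opT)^{-1}\mycirc\noise$ via Lemma~\ref{eq:lem:inv:opt} and Lemma~\ref{eq:lem:optil}; your induction replaces the finite sum $\sum_{k\le t-1}$ with a direct two-line argument, which buys a slightly shorter proof. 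You also correctly set up the recursion $S_{t+1}=\mathcal T(S_t)+(t+1)^2\Sigma_{\mathrm{noise}}$ and the bound $\Sigma_{\mathrm{noise}}\preccurlyeq\sigma^2\noise$ from Assumption~\ref{asmp:noise-covariance}.

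However, there is a genuine gap in the choice of Lyapunov certificate. You claim that the explicit matrix $B$ appearing in the statement satisfies $\mathcal T(B)+\Sigma_{\mathrm{noise}}\preccurlyeq B$, and that this is "precisely the content of" Lemmas~\ref{lem:covariance-t} and~\ref{eq:lem:inv:opt}. It is not: those lemmas prove $\pinf\preccurlyeq B/\sigma^2$, and that inequality does \emph{not} imply the one-step contraction for $B$. Writing $\Delta := B - \sigma^2\pinf\succcurlyeq 0$ and using that $\sigma^2\pinf$ is the exact fixed point of $X\mapsto\opT\mycirc X+\sigma^2\noise$, your certificate inequality $\opT\mycirc B+\sigma^2\noise\preccurlyeq B$ is equivalent to $\opT\mycirc\Delta\preccurlyeq\Delta$. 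But $\opT$ is positive (preserves PSD) without being a contraction in the Loewner order; there is no reason this should hold for the particular $\Delta$ at hand, and it is not what the appendix lemmas establish. (The converse implication, certificate $\Rightarrow$ $B\succcurlyeq\sigma^2\pinf$, is the easy direction.) The fix is immediate and restores the logical structure: run the induction with $B^*:=\sigma^2\pinf$, for which the one-step inequality holds with equality since $B^*=\opT\mycirc B^*+\sigma^2\noise$; this yields $S_t\preccurlyeq t^2 B^*$, and only then invoke Lemmas~\ref{eq:lem:inv:opt} and~\ref{eq:lem:optil} to bound $B^*\preccurlyeq B$. That rearrangement makes your induction correct and recovers the statement.
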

The lemma is proved by studying  $\Ex{ \thetv{t} \otimes \thetv{t} }/t^2$ in the limit of $t \to \infty$.

\paragraph{Last iterate convergence.}
The proof for the last iterate follows the same lines and still uses the bias variance decomposition. 
The main challenge is to bound the bias error. Following~\citet{varre2021iterate}, we show a closed recursion where the  excess risk at time $T$ can be related to the excess risk of the previous iterates through a discrete Volterra integral as stated in the following lemma. 
\begin{lemma} [Final Iterate Risk] \label{lem:final:iterate:main:paper}
Under Assumption \ref{asmp:uniform-kurtosis} and the step-sizes satisfying $\alpha\leq \beta\leq 1/L $. For $T \geq 0$, the last iterate excess error can be determined by the following discrete Volterra integral
    \begin{align*}
    \ft{\thetb{T}} &\leqslant \min\left\lbrace \frac{1}{\alpha} , \frac{8 (t+1)}{\beta} \right\rbrace \nor{\xt{0} - \xt{*}}^2 + \sum_{t=0}^{T-1} \sum_{i=1}^{d} \gt{\cov}{t-k} \ft{\thetb{t}}, 
    \end{align*}
where $\ft{\thetb{t}} \defeq \scal{\HZero}{\Ex{\thetb{t} \otimes \thetb{t}}}$ and the  kernel
$
        \gt{\cov}{t} 
$
is defined in \eqref{eq:kern} in App.~\ref{sec:biaslastiterate}.
\end{lemma}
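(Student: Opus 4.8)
The plan is to pass from the random recursion \eqref{eq:rec-bias} to a deterministic linear recursion on covariances, peel off the multiplicative part of the noise, and unroll it as a discrete convolution; the kernel $\gt{\cov}{\cdot}$ then emerges from diagonalizing the exact-gradient operator in the eigenbasis of $\cov$. Concretely, since the $\J_t$ in \eqref{eq:rec-bias} are i.i.d.\ and $\thetb{t}$ depends only on the samples seen before step $t$ (hence is independent of $\J_t$), the deterministic matrices $S_t := \Ex{\thetb{t}\otimes\thetb{t}}$ satisfy $S_{t+1} = \mathcal T(S_t)$ with $\mathcal T(S) := \Ex{\J S \J^\top}$, $\J$ distributed as $\J_t$. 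I would split $\mathcal T = \widetilde{\mathcal T} + \mathcal M$, where $\widetilde{\mathcal T}(S) := \widetilde\J S \widetilde\J^\top$ is the covariance operator of Algorithm~\ref{alg:nesterov} run with \emph{exact} gradients (so $\widetilde\J := \Ex{\J}$ replaces every $a_t a_t^\top$ by $\cov$), and $\mathcal M(S) := \Ex{(\J - \widetilde\J) S (\J - \widetilde\J)^\top}$ is the multiplicative-noise part, the cross terms dropping because $\Ex{\J - \widetilde\J} = 0$ and $S$ is deterministic. Unrolling,
\begin{align*}
    S_T \;=\; \widetilde{\mathcal T}^{\,T} S_0 \;+\; \sum_{k=0}^{T-1} \widetilde{\mathcal T}^{\,T-1-k}\!\bigl(\mathcal M(S_k)\bigr),
\end{align*}
and pairing with $\HZero$ gives $\ft{\thetb{T}} = \scal{\HZero}{\widetilde{\mathcal T}^{\,T}S_0} + \sum_{k=0}^{T-1}\scal{\HZero}{\widetilde{\mathcal T}^{\,T-1-k}(\mathcal M(S_k))}$.

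\emph{Deterministic term.} The quantity $\scal{\HZero}{\widetilde{\mathcal T}^{\,T}S_0}$ is exactly the time-rescaled excess risk of Algorithm~\ref{alg:nesterov} with exact gradients started from $\xt{0}$, and after simultaneous diagonalization in the eigenbasis of $\cov$ it decouples into one $2\times 2$ linear iteration per eigenvalue $\lambda\in(0,L]$. I would bound it by a potential/Lyapunov argument producing two regimes: a bound of order $\tfrac1\alpha\nor{\xt{0}-\xt{*}}^2$ valid for all $T$ (using $\alpha\le\beta\le 1/L$), which is the $(T+1)^2$-rescaling of the standard $O(1/(\alpha T^2))$ accelerated rate, and a bound of order $\tfrac{T+1}{\beta}\nor{\xt{0}-\xt{*}}^2$ coming from the gradient-descent behaviour of the $\yt{}$-sequence; their minimum is the first term of the claim.

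\emph{Feedback term (the crux).} Here I would invoke Assumption~\ref{asmp:uniform-kurtosis}. Writing $\mathcal M(S)$ block by block, each block has the form $c\,\Ex{(a a^\top)\,B\,(a a^\top)} = c\,\Ex{\scal{a}{Ba}\,a\otimes a}$ with $c\in\{\beta^2,\alpha\beta,\alpha^2\}$ and $B$ the matching block of $S$, so uniform kurtosis yields $c\,\Ex{\scal{a}{Ba}\,a\otimes a}\preccurlyeq c\kurt\,\tr\!\left(B\cov\right)\cov$. Combining the blocks and using $\alpha\le\beta\le1/L$, one gets $\mathcal M(S_k)\preccurlyeq c'\,\ft{\thetb{k}}\,\Gamma$ for a universal constant $c'$ and a fixed positive semidefinite matrix $\Gamma$ that is block-diagonalizable in the eigenbasis of $\cov$ — the essential point being that the relevant traces $\tr(B\cov)$ are controlled by the excess-risk functional $\ft{\thetb{k}} = \scal{\HZero}{S_k}$ itself. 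Then $\scal{\HZero}{\widetilde{\mathcal T}^{\,T-1-k}\Gamma}$ is evaluated by diagonalizing $\widetilde{\mathcal T}$ in the eigenbasis of $\cov$: it becomes a sum over the $d$ eigenvalues of $\cov$, each summand a scalar function of $T-1-k$ solving the corresponding $2\times 2$ exact-gradient iteration — this is exactly the kernel $\gt{\cov}{\cdot}$ of \eqref{eq:kern} together with the $\sum_{i=1}^{d}$. Substituting back yields the discrete Volterra inequality.

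\emph{Main obstacle.} The delicate point is the feedback step: showing that $\mathcal M(S_k)$ is dominated by exactly $\ft{\thetb{k}}$ times a fixed matrix, rather than by a larger functional of the $2\dimn\times 2\dimn$ covariance $S_k$ with its four coupled blocks, since the only handle on the off-diagonal/feedback contributions is the kurtosis bound. Bounding the two-dimensional per-eigenvalue iterations — both for the deterministic term and for the kernel — is routine but lengthy, and the kernel must be kept in a form amenable to later solving the Volterra recursion.
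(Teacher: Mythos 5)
Your proposal follows the paper's route closely: split $\opT = \opTil + \opM$, unroll the covariance recursion, pair with $\HZero$, bound the deterministic term by a Nesterov Lyapunov argument, and use uniform kurtosis to close the feedback term. The place where it actually breaks is the step you yourself flag as the crux — and your sketch of that step is wrong in a way that matters.

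You describe $\opM(S)$ block by block as ``$c\,\Ex{(aa^\top)B(aa^\top)}$ with $B$ the matching block of $S$'', so that the $(i,j)$ block of $\opM(S)$ would depend on the $(i,j)$ block of $S$. This is not what $\opM$ does. Writing $\Ha = \cov - aa^\top$, the multiplicative noise matrix factors as
\begin{align*}
\J - \A \;=\; \begin{bmatrix}\beta \Ha & \beta \Ha \\ \alpha \Ha & \alpha \Ha\end{bmatrix} \;=\; \begin{bmatrix}\beta \\ \alpha \end{bmatrix}\begin{bmatrix}1 & 1\end{bmatrix} \otimes_{k} \Ha ,
\end{align*}
a tensor of a rank-one $2\times 2$ block with $\Ha$. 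Consequently every block of $(\J-\A)\,S\,(\J-\A)^\top$ involves the \emph{same} inner matrix, namely $P+Q+R+S$ (this is Lemma~\ref{lem:pqrs}):
\begin{align*}
\opM \mycirc \begin{bmatrix}P & Q \\ R & S\end{bmatrix} = \begin{bmatrix}\beta^2 & \alpha\beta \\ \alpha\beta & \alpha^2\end{bmatrix} \otimes_{k} \Ex{\Ha\,(P+Q+R+S)\,\Ha}.
\end{align*}
This collapse is precisely what makes your concern go away: with $S_k = \Ex{\thetb{k}\otimes\thetb{k}}$ one has $P+Q+R+S = \Ex{\ut{k}\ut{k}^\top}$, and $\tr\!\bigl((P+Q+R+S)\cov\bigr) = \scal{\HZero}{S_k} = \ft{\thetb{k}}$, so the kurtosis bound yields $\opM(S_k)\preccurlyeq \kurt\,\ft{\thetb{k}}\,\noise$ with the \emph{single} scalar $\ft{\thetb{k}}$ in front (Lemma~\ref{lem:kurtosis-ut}). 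Under your ``matching block'' version you would instead see $\tr(P\cov) = \Ex{\nor{\vt{k}}^2_{\cov}}$ and $\tr(S\cov) = \Ex{\nor{\wt{k}}^2_{\cov}}$, neither of which is dominated by $\ft{\thetb{k}} = \Ex{\nor{\vt{k}+\wt{k}}^2_{\cov}}$ — there can be cancellation in $\ut = \vt+\wt$ — and the off-diagonal traces $\tr(Q\cov),\tr(R\cov)$ are not even sign-definite. So the Volterra inequality with a fixed kernel in $\ft{\thetb{\cdot}}$ would not close. The fix is exactly the rank-one factorization above; once you have it, pairing the unrolled recursion with $\coff = \HZero$ and computing $\scal{\coff}{\opTil^{t}\mycirc\noise}$ per eigenvalue of $\cov$ gives the kernel $\gt{\cov}{t}=\kurt\scal{\coff}{\opTil^t\mycirc\noise}$ and the stated inequality.

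Two smaller remarks. First, the deterministic term $\scal{\HZero}{\opTil^T(\thet{0}\otimes\thet{0})}$ is indeed handled by a two-regime Lyapunov bound (this is Lemma~\ref{lem:nesterov-potential-H}), and your description there is right. Second, the unrolled identity you write is an equality on covariances (Lemma~\ref{lem:rec:last:iterate}); the only inequality enters via the kurtosis bound and the replacement of $\Ex{\Ha(\cdot)\Ha}$ by $\Ex{aa^\top(\cdot)aa^\top}$ (Property~\ref{prop:operator-M-upperbound}), which is why the lemma's conclusion is an inequality rather than an equality.
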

We recognize here a new bias-variance decomposition. The decrease of the function value $\ft{\thetb{t}}$ is controlled by the sum of a term characterizing how fast the initial conditions are forgotten, and a term characterizing how the gradient noise reverberates through the iterates. The final result is then obtained by a simple induction. For the proof, check Lemma~\ref{lem:final:iterate:app} in App.~\ref{sec:biaslastiterate} .

\section{Experiments} \label{section:exp}

In this section, we illustrate our theoretical findings of Theorems~\ref{thm:main},~\ref{thm:last-iterate} on synthetic data.
For $d=50$, we consider Gaussian distributed inputs $a_n$ with a random covariance $\cov$ whose eigenvalues scales as $1/i^4$, for $1\leq i \leq d$ and optimum $\xt{*}$ which projects equally on the eigenvectors of the covariance. 
The outputs $b_t$ are generated through $b_t = \langle a_t, \xt{*}\rangle +\varepsilon_t$, where $\varepsilon_t\sim\mathcal N(0,\sigma^2)$.
 The step-sizes are chosen as $\beta = \nicefrac{1}{3\tr{\cov}} , \alpha = \nicefrac{1}{(3d~\tr{\cov})}$  for our algorithm; and  $\gamma = \nicefrac{1}{3 \tr{\cov}} $ for SGD. The parameters of ASGD are chosen following \citet{jain2018accelerating}.
All results are averaged over 10 repetitions.
\paragraph{Last Iterate.} 
The left plot in Figure~\ref{fig:synthetic} corresponds to the convergence of the excess risk of the last iterate on a synthetic noiseless regression, i.e., $\sigma = 0$. 
We compare Algorithm~\ref{alg:nesterov} (AcSGD) with the algorithm of \citet{jain2018accelerating} (ASGD) and SGD. 
Note that for the first $O(d)$ iterations, our algorithm matches the performance of SGD. For $ t > O(d)$, the acceleration starts and we observe a rate $O(d/t^2)$.
Finally,  strong convexity takes effect only after a large number of iterations. ASGD decays with a linear rate thereafter. 

\paragraph{Averaging.} The right plot in Figure~\ref{fig:synthetic} corresponds to the performance of the averaged iterate on a noisy least squares problem with $\sigma = 0.02$.
%
%
We compare our AcSGD with averaging defined in \eqref{eq:estimator} (AvAcSGD), ASGD with tail averaging (tail-ASGD) and SGD (AvgSGD) with Polyak-Ruppert averaging.
For $O(d)$ iterations, our algorithm matches the rate of SGD with averaging, then exhibits an accelerated rate of $O(d/t^2)$. Finally, it decays with the optimal asymptotic rate $\sigma^2 d/t$.
\begin{figure}[ht]
\centering
\begin{minipage}[c]{.40\linewidth}
\includegraphics[width=\linewidth]{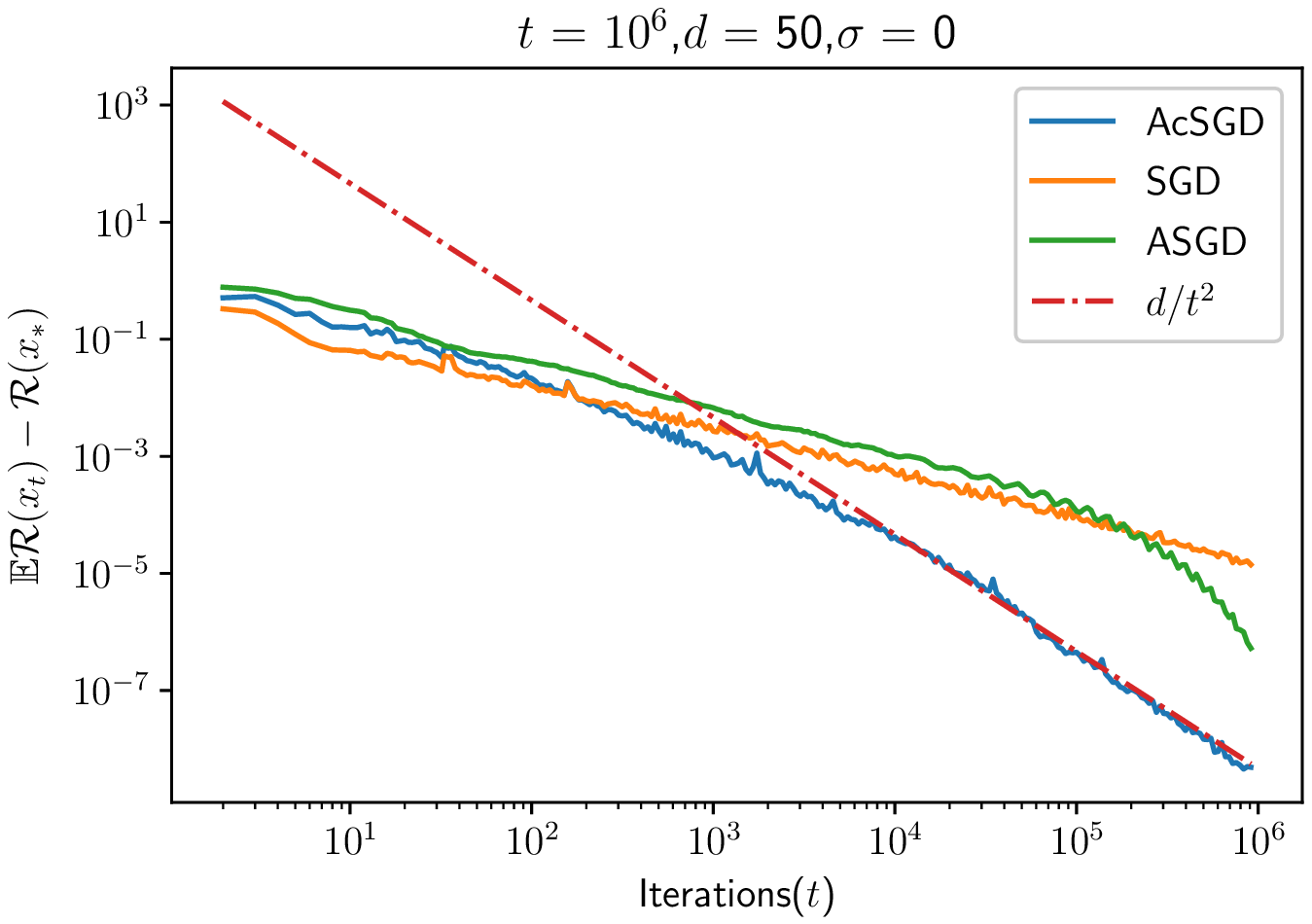}
   \end{minipage}
   \begin{minipage}[c]{.40\linewidth}
\includegraphics[width=\linewidth]{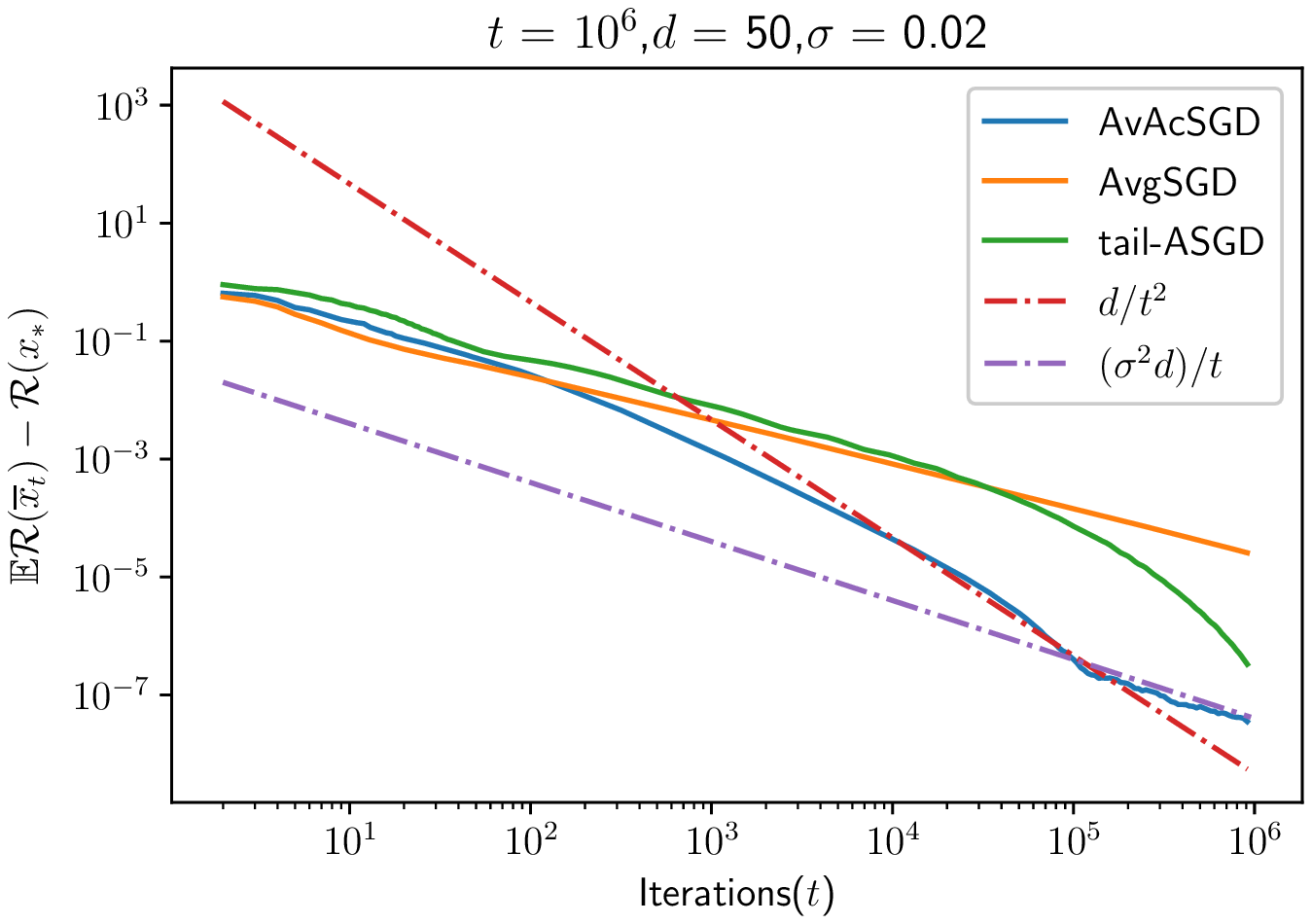}
   \end{minipage}
  \caption{Least-squares regression. Left: Last iterate convergence on a noiseless problem.  The plot exhibits a rate $O(d/t^2)$ given by Thm.~\ref{thm:last-iterate}. Right: Averaged iterate convergence on a noisy problem. The plot first exhibits a rate $O(d/t^2)$ and then the optimal rate $O(\sigma^2 d/t)$ as predicted by Thm.~\ref{thm:main}.}
     \label{fig:synthetic}
\end{figure}
\section{Conclusion}

In this paper, we show that stochastic accelerated gradient descent can be made robust to gradient noise in the case of least-squares regression. 
Our new algorithm, based on a simple step-size modification of the celebrated Nesterov accelerated gradient is the first stochastic algorithm which provably accelerates the convergence of the bias while maintaining the optimal convergence of the variance for non-strongly-convex least-squares.
There are a number of further direction worth pursuing. 
Our current analysis is limited to quadratic functions defined in a Euclidean space. 
An extension of our analysis to all smooth or self-concordant like functions would broaden the applicability of our algorithm.
Finally an extension to Hilbert spaces and 
kernel-based least-squares regression with estimation rates under the usual non-parametric capacity and source conditions would be an interesting development of this work. 
\acks{ The authors thank Loucas Pillaud-Vivien and  Keivan Rezaei for valuable discussions.}

\bibliographystyle{plainnat}
\bibliography{acceleration}

\appendix

\section{Further Setup and Preliminaries}
\label{sec:app:setup}

\paragraph{Organization.}The appendix is organized as follows,
\begin{itemize}
    \item In Section~\ref{sec:app:setup}, we extend the setup of the problem in App.~\ref{subsec:prelim} and introduce operators in App.~\ref{app:operator}  to study the recursions of covariance of the estimator. In the later subsection, we give the proof for bias variance decomposition.
    \item In Section \ref{app:main-proofs}, we give the proofs for Theorem~\ref{thm:main}, Theorem~\ref{thm:last-iterate} and Lemma~\ref{lem:lower-bound}.
    \item In Section \ref{app:bias-variance-recursions}, we study the  recursions on expected covariance of the bias and the variance processes. 
    \item In Section \ref{app:inverse-of-operators}, we investigate the properties of the operators. In particular, we are interested in inverting few operators. 
    \item In Section \ref{app:technical-lemmas}, we study the summations of geometric series of a particular $2 \times 2$ matrix by considering its eigendecomposition. 
\end{itemize}

\subsection{Preliminaries} \label{subsec:prelim}

\paragraph{Notations.} We denote the stream of i.i.d samples by $\left( a_i, b_i \right)_{i \geq 1}$.  We use $\otimes$ to denote the tensor product and $\otimes_{k}$ to denote the Kronecker product.  Let $\filt{t}$ denote the filtration generated by the samples $ \left \lbrace \left(a_i,b_i \right)_{i=1}^{t} \right \rbrace $.   

\paragraph{Additive and Multiplicative Noise.}

Define  for $ t \geq 1 $,  \eqal{\eta_t = b_t - \scal{\xt{*}}{a_t} \label{eq:eta},} since $\xt{*}$ is the optimum, from the first order optimality of $\xt{*}$, 
\begin{align} \label{eq:eta_a}
    \Ex{ \eta_{t} a_t } = 0.
\end{align}
In context of least squares, SGD oracle can be written as follows. Let $(a_t,b_t)$ be the sample at iteration $t$,  the gradient at $\xt{t}$ is 
\begin{align*}
    \nabla_{t} \risk{\xt{t}}  &= a_t \left( \scal{a_t}{\xt{t}} - {b_t} \right), \\
     &= a_t \left( \scal{a_t}{\xt{t}} - \left( \scal{a_t}{\xt{*}} + \eta_t \right) \right) = a_t a_t^{\top} \left(\xt{t} -\xt{*}\right) -  \eta_t a_t.
\end{align*} 
From this, the stochastic gradient can be written as 
\begin{align} \label{eq:SGD-oracle-eta}
    \nabla_{t} \risk{\xt{t}} = a_t a_t^{\top} \left(\xt{t} -\xt{*}\right) -  \eta_t a_t.
\end{align}
As the exact gradient will be $\cov\left( \xt{t} - \xt{*} \right)$ the noise in the oracle is
\begin{align*}
    \left( \cov\left( \xt{t} - \xt{*} \right) \right) - \left( a_t a_t^{\top} \left(\xt{t} -\xt{*}\right) -  \eta_t a_t \right) = ( \cov - a_t a_t^\top ) \left( \xt{t} - \xt{*} \right) + \eta_t a_t.
\end{align*}

Note that the zero mean noise $ ( \cov - a_t a_t^\top ) $ is multiplicative in nature and hence called multiplicative noise. The zero mean noise $\eta_t a_t$ is called additive noise. In the work of \citet{dieuleveutFB17}, stochastic gradients of form $ \cov\left( \xt{t} - \xt{*} \right) + \epsilon $ for some bounded-variance random variable $\epsilon$ are considered. Hence, the results holds only in the case of stochastic oracles with additive noise.

\subsection{Recursion after Rescaling} \label{app:rescaling}

Using \eqref{eq:SGD-oracle-eta}, we can write Algorithm~\ref{alg:nesterov} as follows, \begin{align}
     \yt{t+1} &= \xt{t} - \beta a_t a_t^{\top} ( \xt{t} - \xt{*} ) + \beta \eta_t a_t, \label{eq:yt:asgd} \\ 
    \zt{t+1} &= \zt{t} - \alpha ( t+1) a_t a_t^{\top} (\xt{t} -\xt{*} ) + \alpha ( t+1)  \eta_t a_t, \label{eq:zt:asgd} \\ 
    (t+2)\xt{t+1} &= (t+1)\yt{t+1} + \zt{t+1}. \label{eq:xt:asgd} 
\end{align} 
Recalling the time rescaling of the iterates 
\eqals{ \ut{t} = (t+1) ( \xt{t}- \xt{*} ), \label{eq:ut} \\  \vt{t} = t ( \yt{t}- \xt{*} ), \label{eq:vt} \\ \wt{t} = \zt{t} - \xt{*} \label{eq:zt}.  }  
Now we rewrite the recursion using these rescaled iterates.
Multiplying \eqref{eq:yt:asgd} by $t+1$, and using \eqref{eq:ut} and \eqref{eq:vt}, we get, 
\begin{align*}
\vt{t+1} &= \ut{t} - \beta a_t a_t^{\top} \ut{t} + \beta \eta_t a_t (t+1),  \\
\textrm{ using \eqref{eq:zt:asgd} and \eqref{eq:ut}, } \qquad
 \wt{t+1} &= \wt{t} - \alpha a_t a_t^{\top} (\ut{t}) + \alpha ( t+1)  \eta_t a_t,  \\
  \textrm{from \eqref{eq:xt:asgd},} \qquad \ut{t} &= \vt{t} + \wt{t}, \\ 
  \vt{t+1} &= ( \I - \beta a_t a_t^{\top} ) \left( \vt{t} + \wt{t}   \right) + \beta \eta_t a_t (t+1),  \\
  \wt{t+1} &= \wt{t} - \alpha a_t a_t^{\top} (\vt{t} + \wt{t}) + \alpha ( t+1)  \eta_t a_t.
\end{align*}
Writing these updates compactly in form of a matrix recursion gives, 
\begin{align*}
    \matvz{t+1} = \begin{bmatrix} \I - \beta  a_t a_t^\top & \I - \beta  a_t a_t^\top \\ - \alpha a_t a_t^\top & \I - \alpha a_t a_t^\top \end{bmatrix}
        \matvz{t} + \left(t+1\right) \eta_t  \begin{bmatrix} \beta a_t \\ \alpha a_t    \end{bmatrix}. 
\end{align*}
The above recursion can be written as follows, \begin{align} \label{eq:app-matrix-rec}
    \thet{t+1} = \J_{t} \thet{t} + \epsi{t+1},
\end{align}
where we defined  $\thet{t} \defeq \matvz{t}$, the random matrix  $\J_{t} \defeq 
\begin{bmatrix} \I - \beta  a_t a_t^\top & \I - \beta  a_t a_t^\top \\ - \alpha a_t a_t^\top & \I - \alpha a_t a_t^\top \end{bmatrix} $ and the random noise vector $\epsi{t+1} \defeq \left(t+1\right) \eta_t \begin{bmatrix}  \beta a_t \\ \alpha a_t    \end{bmatrix} $.

\paragraph{Excess Risk of the estimator.} The excess risk of any estimate $\bm{x}$ can be written as 
\begin{align}
    \label{eq:excessRisk}
    \risk{\bm{x}} - \risk{\bm{\xt{*}}} = \frac{1}{2} \scal{\bm{x} - \xt{*}}{\cov \left( \bm{x} - \xt{*} \right) }.
\end{align}
Our estimator is defined in \eqref{eq:estimator} as a time-weighted averaged. 
Recall,
\begin{align*}
    \avgxt{T} &= \frac{\sum_{t=0}^{T} (t+1) \xt{t} }{\sum_{t=0}^{T} (t+1) }, \\
    \avgxt{T} - \xt{*} &= \frac{\sum_{t=0}^{T} (t+1) \left( \xt{t} - \xt{*} \right) }{\sum_{t=0}^{T} (t+1) } = \frac{\sum_{t=0}^{T} \ut{t} }{\sum_{t=0}^{T} (t+1) }. 
\end{align*}
Using the above formulation of excess risk, we have 
\begin{align*}
    \left( \sum_{t=0}^{T} (t+1) \right)^{2} \cdot \left(\risk{\avgxt{T}} - \risk{\bm{\xt{*}}} \right) = \frac{1}{2} \scal{\sum_{t=0}^{T} \ut{t}}{\cov \left( \sum_{t=0}^{T} \ut{t} \right) }.
\end{align*}
We relate this to the covariance of $\athet{T}$ in the following way, 
\begin{align*}
    \athet{T} &= \sum_{t=0}^{T} \thet{t} = \sum_{t=0}^{T} \matvz{t}  = \begin{bmatrix}
        \sum\limits_{t=0}^{T} \vt{t} \\
        \sum\limits_{t=0}^{T} \wt{t}
    \end{bmatrix}.
\end{align*}
From \eqref{eq:xt:asgd} we have the fact that $ \ut{t} = \vt{t} + \wt{t} $, for $ t \geq 1$. Using this, 
\begin{align*}
    \sum\limits_{t=0}^{T} \ut{t} = \sum\limits_{t=0}^{T} \vt{t} + \sum\limits_{t=0}^{T} \wt{t}.
\end{align*} 
From the above formulations, with some simple algebra we get, 
\begin{align*}
    \scal{\sum_{t=0}^{T} \ut{t}}{\cov \left( \sum_{t=0}^{T} \ut{t} \right) } = \scal{\HZero}{\athet{T} \otimes \athet{T}}.
\end{align*}
Hence, by taking expectation, the excess risk can be related to the covariance of  $\athet{T}$ as, 
\begin{align} \label{eq:excessrisk_in_thet}
    \Ex{\risk{\avgxt{T}}} - \risk{\bm{\xt{*}}} = \frac{1}{2}  \left( \sum_{t=0}^{T} (t+1) \right)^{-2} \scal{\HZero}{\Ex{\athet{T} \otimes \athet{T}}}.
\end{align}

\paragraph{Step sizes.} We use the following conditions on the step sizes $\alpha,\beta$
\begin{gather} \label{eq:asgd-step-size}
        ( \alpha + 2 \beta ) R^2 \leqslant 1 ,\quad \alpha \leqslant  \frac{\beta}{2 \ki}.
    \end{gather}
These conditions are a direct result of our analysis. 


\paragraph{Eigen Decomposition of $\cov$.} Since the covariance is positive definite, the eigendecomposition of $\cov$ is given as follows, 
\begin{align} \label{eq:eigen-def-covariance}
    \cov \defeq \sum_{i=1}^{d} \lambda_i e_i e_i^{\top},
\end{align}
where $\lambda_i > 0$'s are the eigenvalues and $e_i$'s are orthonormal eigenvectors.

\subsection{Operators} \label{app:operator}
As seen above,  the excess risk in expectation can be related to the expected covariance of the  $\athet{T}$ ,i.e., $\E\left[ \athet{T} \otimes \athet{T} \right]$. In order to aid the analysis of the covariance, we introduce different operators. 
The expected value of $\J_{t}$, denoted by  $\A = \Ex{\J_{t}}$ 
is given by \eqal{ \A &\defeq  \begin{bmatrix} \I - \beta \cov & \I - \beta  \cov \\ - \alpha \cov & \I - \alpha \cov \end{bmatrix}.  } 
If the feature $a$ is sampled according to the marginal distribution of $\rho$ ,i.e., $(a,b) \sim \rho$, define the random matrices $\mathcal{J}$ as follows  
\eqals{ 
    \mathcal{J} &\defeq \begin{bmatrix} \I - \beta  a a^\top & \I - \beta  a a^\top \\ - \alpha a a^\top & \I - \alpha a a^\top \end{bmatrix}.
    }
Note that $\J_{t}$ from~\eqref{eq:app-matrix-rec} and $\J$ are identically distributed. 
\begin{definition} \label{def:operator}
    For any PSD matrix $\Theta$, the operators $ \opT , \opTil , \opM $ are defined as follows \begin{enumerate}[label=(\alph*)]
        \item $ \begin{aligned}[t]
            \opT\mycirc\Theta &\defeq \E\left[ \J \Theta \J^\top \right]   \label{op:T}
        \end{aligned} $
        \item $ \begin{aligned}[t]
            \opTil \mycirc\Theta &\defeq \A \Theta \A^{\top} \label{op:Tilde}
        \end{aligned} $
        \item $ \begin{aligned}[t]
            \opM \mycirc\Theta &\defeq \E\left[  \left(\J - \A\right) \Theta \left(\J - \A\right)^\top \right]  \label{op:M}
        \end{aligned} $
    \end{enumerate}
\end{definition}
We proceed to show a few properties of these operators.
\begin{lemma} \label{lem:operator}
    For the operators  $ \opT , \opTil , \opM $, the following properties holds.
    \begin{enumerate}[label=(\alph*)]
        \item $ \opT , \opTil , \opM $ are symmetric and positive 
        \item $ \opT  =  \opTil  +  \opM $
    \end{enumerate}
    The operator $\mathcal{O}$ is defined as positive if for any PSD matrix $\Theta$, $\mathcal{O}\mycirc\Theta$ is also PSD.
\end{lemma}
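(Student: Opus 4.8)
Both parts are structural facts that I would establish by unfolding the definitions and exchanging expectation with the (bi)linear matrix operations. The only hypothesis actually needed is that all the expectations involved are well defined, which holds because $\Ex{\nor{a}^2\,a\otimes a}$ is finite under Assumption~\ref{asmp:fourth-moment}; in particular $\A=\Ex{\J}$ exists and $\opT,\opM$ are well defined on matrices with finite entries.

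For part (a) I would argue pointwise in the randomness of the feature $a$. Fixing a symmetric $\Theta$, for every realisation of $a$ the matrix $\J\Theta\J^\top$ has transpose $\J\Theta^\top\J^\top=\J\Theta\J^\top$ and is therefore symmetric, and an expectation of symmetric matrices is symmetric; the identical observation applies to the deterministic expression $\A\Theta\A^\top$ and to $(\J-\A)\Theta(\J-\A)^\top$, which yields symmetry of $\opT$, $\opTil$, $\opM$. For positivity, I would take $\Theta\succcurlyeq 0$, factor $\Theta=\Theta^{1/2}\Theta^{1/2}$, and note that for every $u\in\R^{2d}$,
\[
u^\top\J\Theta\J^\top u=\nor{\Theta^{1/2}\J^\top u}^2\geq 0 ,
\]
so $\J\Theta\J^\top\succcurlyeq 0$ for every realisation of $a$; since $u^\top\Ex{\J\Theta\J^\top}u=\Ex{u^\top\J\Theta\J^\top u}\geq 0$, taking expectations preserves positive semidefiniteness, hence $\opT\mycirc\Theta\succcurlyeq 0$. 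The same computation with $\A$ in place of $\J$, and with $\J-\A$ in place of $\J$, gives $\opTil\mycirc\Theta\succcurlyeq 0$ and $\opM\mycirc\Theta\succcurlyeq 0$.

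For part (b) I would expand the quadratic form defining $\opM$ and use $\Ex{\J}=\A$ together with linearity of expectation to cancel the cross terms:
\begin{align*}
\opM\mycirc\Theta
&=\Ex{\J\Theta\J^\top}-\Ex{\J}\,\Theta\,\A^\top-\A\,\Theta\,\Ex{\J}^\top+\A\Theta\A^\top\\
&=\opT\mycirc\Theta-\A\Theta\A^\top=\opT\mycirc\Theta-\opTil\mycirc\Theta ,
\end{align*}
and since $\Theta$ is an arbitrary PSD matrix this gives $\opT=\opTil+\opM$. This is exactly the operator-level analogue of the scalar identity $\mathrm{Var}(X)=\Ex{X^2}-(\Ex X)^2$, and it is the seed of the bias--variance decomposition used later.

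I do not anticipate any genuine obstacle: the lemma is elementary book-keeping. The only point deserving a little care is the reading of the terminology, namely that ``positive'' is meant in the sense of cone preservation defined in the statement and ``symmetric'' in the sense of mapping symmetric matrices to symmetric matrices, rather than self-adjointness with respect to the Frobenius inner product (the latter would in fact fail here, since neither $\J$ nor $\A$ is itself a symmetric matrix).
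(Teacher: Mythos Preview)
Your proposal is correct and follows essentially the same approach as the paper: positivity via $u^\top \J\Theta\J^\top u=(\J^\top u)^\top\Theta(\J^\top u)\geq 0$, and part~(b) by expanding the quadratic form and using $\Ex{\J}=\A$ to cancel the cross terms. You are in fact a bit more thorough than the paper, which skips the symmetry argument entirely; your closing remark about ``symmetric'' meaning cone-preserving rather than self-adjoint is apt, since the paper later introduces $\opT^\top\neq\opT$ explicitly.
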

\begin{proof}
    For any vector $\nu$, consider the following scalar product, 
    \begin{align*}
        \scal{\nu}{\J \Theta \J^\top \nu} = \scal{\left(\J^\top \nu\right)~}{~\Theta~\left(\J^\top \nu\right)}.
    \end{align*} 
    This quantity is non-negative as $\Theta$ is a PSD.  Hence $\opT$ is positive. Similarly the other two operators are also positive. For the second statement,
    \begin{align*}
        \E\left[  \left(\J - \A\right) \Theta \left(\J - \A\right)^\top \right] = \Ex{ \J \Theta \J^{\top}  } - \Ex{ \J \Theta \A^\top   } - \Ex{ \A \Theta \J^{\top} } + \Ex{ \A \Theta \A^{\top} }.
    \end{align*}
    Using $ \A = \Ex{\J} $, 
    \begin{align*}
        \E\left[  \left(\J - \A\right) \Theta \left(\J - \A\right)^\top \right] &= \Ex{ \J \Theta \J^{\top}  } - \Ex{ \A \Theta \A^{\top} }.
    \end{align*} 
This completes the proof of the lemma. 
\end{proof} 
\begin{remark} \label{eq:transpose_operator}
    For any PSD matrix $\Theta$ and any operators $\opT, \opM, \opTil$, the transpose is defined as following,
      \begin{enumerate}[label=(\alph*)]
          \item $ \begin{aligned}[t]
              \opT^{\top} \mycirc\Theta &\defeq \E\left[ \J^\top \Theta \J \right].   \label{op:T:top}
          \end{aligned} $
          \item $ \begin{aligned}[t]
              \opTil^{\top} \mycirc\Theta &\defeq \A^{\top} \Theta \A. \label{op:Tilde:top}
          \end{aligned} $
          \item $ \begin{aligned}[t]
              \opM^{\top} \mycirc\Theta &\defeq \E\left[  \left(\J - \A\right)^\top \Theta \left(\J - \A\right) \right].  \label{op:M:top}
          \end{aligned} $
      \end{enumerate}
  \end{remark} 
Having introduced operators, we present a lemma which is central to the analysis, we give a almost eigenvector and eigenvalue of the operators. We call it an almost eigenvector as only an upperbound holds in this case.   
    \begin{lemma} \label{lemma:alnost_eigen}
        For step sizes satisfying Condition~\ref{eq:asgd-step-size}, the following properties hold on the inverse and eigen values of operators $ \opT , \opTil , \opM $
\begin{enumerate}[label = (\alph*)]
    \item For stepsizes $0 < \alpha , \beta < \frac{1}{L}$, $( 1 - \opTil)^{-1}$ exists. 
    \item $\noise$ is an almost eigen vector of   $ \opM \mycirc ( 1 - \opTil)^{-1}  $ with an eigen value less than 1 and $ ( 1 - \opT)^{-1} \mycirc \noise $ exists, 
    \begin{align*} 
        \opM \mycirc ( 1 - \opTil)^{-1} \mycirc \noise  &\preccurlyeq \frac{2}{3} \noise.
    \end{align*}
    \item $\coff$ is an almost eigen vector of   $ \left( \opM^{\top} \mycirc  \left( \Id - \opTil^{\top} \right)^{-1} \right)  $ with an eigen value less than 1,
        \begin{align*}
            \left( \opM^{\top} \mycirc  \left( \Id - \opTil^{\top} \right)^{-1} \right) \mycirc \coff \preccurlyeq \frac{2}{3} \coff.
        \end{align*} 
\end{enumerate}
    where 
    \begin{align}\label{eq:noise_and_coeff}
        \coff \defeq \HZero \qquad \noise \defeq \begin{bmatrix} \beta^2 \cov & \alpha\beta \cov \\ \alpha \beta \cov & \alpha^2 \cov \end{bmatrix}.
    \end{align}
    \end{lemma}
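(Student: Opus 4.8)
The plan is to exploit the block structure of $\A$ and $\J$ in the eigenbasis of $\cov$ and reduce everything to two‑dimensional linear algebra. Write $\cov=\sum_{i=1}^{d}\lambda_i e_i e_i^\top$; for each $i$ the subspace spanned by the two copies of $e_i$ is $\A$‑invariant, and on it $\A$ acts as $A_\lambda=\begin{bmatrix}1-\beta\lambda&1-\beta\lambda\\-\alpha\lambda&1-\alpha\lambda\end{bmatrix}$ with $\lambda=\lambda_i$, so $\det A_\lambda=1-\beta\lambda$ and $\mathrm{tr}\,A_\lambda=2-(\alpha+\beta)\lambda$. For part (a): since $\opTil\mycirc\Theta=\A\Theta\A^\top$, the map $\Id-\opTil$ is invertible as soon as $\rho(\A)<1$, which reduces to each $A_\lambda$; for $0<\alpha,\beta<1/L$ and $0<\lambda\le L$ the characteristic polynomial $p_\lambda(t)=t^2-(2-(\alpha+\beta)\lambda)t+(1-\beta\lambda)$ has $p_\lambda(1)=\alpha\lambda>0$, $p_\lambda(-1)=4-(\alpha+2\beta)\lambda>0$ and vertex $1-\tfrac12(\alpha+\beta)\lambda\in(-1,1)$, so both roots of $p_\lambda$ lie strictly inside the unit disk (either complex with modulus $\sqrt{1-\beta\lambda}<1$, or real in $(-1,1)$); hence $(\Id-\opTil)^{-1}=\sum_{k\ge0}\opTil^k$ converges, and $\opTil^\top$ has the same spectrum so $(\Id-\opTil^\top)^{-1}$ also exists.

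For parts (b) and (c) the first step is a reduction. With $D=\cov-a a^\top$ and $\J-\A=\begin{bmatrix}\beta D&\beta D\\\alpha D&\alpha D\end{bmatrix}$, a short computation gives for any PSD $\Theta$ that $\opM\mycirc\Theta=\begin{bmatrix}\beta^2&\alpha\beta\\\alpha\beta&\alpha^2\end{bmatrix}\otimes_k\Ex{D\,\Sigma(\Theta)\,D}$ and $\opM^\top\mycirc\Theta=\begin{bmatrix}1&1\\1&1\end{bmatrix}\otimes_k\Ex{D\,\widetilde\Sigma(\Theta)\,D}$, where $\Sigma(\Theta)=\Theta_{11}+\Theta_{12}+\Theta_{21}+\Theta_{22}$ and $\widetilde\Sigma(\Theta)=\beta^2\Theta_{11}+\alpha\beta(\Theta_{12}+\Theta_{21})+\alpha^2\Theta_{22}$. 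Since $\noise=\begin{bmatrix}\beta^2&\alpha\beta\\\alpha\beta&\alpha^2\end{bmatrix}\otimes_k\cov$, $\coff=\begin{bmatrix}1&1\\1&1\end{bmatrix}\otimes_k\cov$, and $vv^\top\otimes_k X\preccurlyeq vv^\top\otimes_k Y\iff X\preccurlyeq Y$ for fixed $v\ne0$, claims (b) and (c) become $\Ex{D\,\Sigma\big((\Id-\opTil)^{-1}\mycirc\noise\big)D}\preccurlyeq\tfrac23\cov$ and $\Ex{D\,\widetilde\Sigma\big((\Id-\opTil^\top)^{-1}\mycirc\coff\big)D}\preccurlyeq\tfrac23\cov$. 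Both arguments of $\Sigma,\widetilde\Sigma$ are PSD, and for PSD $M$ one has $\Ex{D M D}=\Ex{\scal{a}{Ma}\,a\otimes a}-\cov M\cov\preccurlyeq\Ex{\scal{a}{Ma}\,a\otimes a}$, so it suffices to bound these $M$ by $p\,\Id+q\,\cov^{-1}$ with $pR^2+q\ki\le\tfrac23$, because Assumptions~\ref{asmp:fourth-moment} and~\ref{asmp:stat-condition-number} then yield $\Ex{\scal{a}{Ma}\,a\otimes a}\preccurlyeq(pR^2+q\ki)\cov$. The remaining claim of (b), existence of $(\Id-\opT)^{-1}\mycirc\noise$, follows from (a): $\mathcal N:=\opM\mycirc(\Id-\opTil)^{-1}$ is positive, so $\mathcal N\mycirc\noise\preccurlyeq\tfrac23\noise$ iterates to $\mathcal N^{j}\mycirc\noise\preccurlyeq(\tfrac23)^j\noise$, whence $(\Id-\opT)^{-1}\mycirc\noise=(\Id-\opTil)^{-1}\mycirc\sum_{j\ge0}\mathcal N^{j}\mycirc\noise\preccurlyeq 3\,(\Id-\opTil)^{-1}\mycirc\noise<\infty$.

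It then remains to prove the scalar bound, and this is where I expect the real work. In the $\cov$‑eigenbasis both $M$'s are diagonal, and their $\lambda$‑eigenvalue equals $\lambda S(\lambda)$ with $S(\lambda)=\sum_{k\ge0}g_k^2$, $g_k=\mathbf{1}^\top A_\lambda^k b$, $\mathbf{1}=(1,1)^\top$, $b=(\beta,\alpha)^\top$ — and the analogous quantity for (c) is $\lambda\sum_k\big(b^\top(A_\lambda^\top)^k\mathbf{1}\big)^2=\lambda S(\lambda)$, so the two parts share one estimate. Here $g_k$ satisfies $g_{k+2}=(2-(\alpha+\beta)\lambda)g_{k+1}-(1-\beta\lambda)g_k$ with $g_0=\alpha+\beta$ and $g_1=\alpha+(\alpha+\beta)(1-(\alpha+\beta)\lambda)$, and $S(\lambda)<\infty$ by part (a). Solving this recursion in closed form (distinguishing the complex‑root regime $\lambda<4\alpha/(\alpha+\beta)^2$ from the real‑root one), or equivalently exhibiting a quadratic certificate $E_k=E(g_k,g_{k+1})$ with $E_k-E_{k+1}\ge g_k^2$, yields a uniform bound $\lambda S(\lambda)\le p+q/\lambda$ on $\mathrm{spec}(\cov)$ with $p=O(\beta)$ and $q=O(\alpha/\beta)$, and more precisely with explicit constants for which $(\alpha+2\beta)R^2\le1$ (hence $\beta R^2\le\tfrac12$) and $\alpha\le\beta/(2\ki)$ (hence $q\ki=O(\alpha\ki/\beta)=O(1)$) give $pR^2+q\ki\le\tfrac23$. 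The main obstacle is precisely this last step: getting $S(\lambda)$ with sharp enough constants while correctly handling the non‑normality of $A_\lambda$ as $\lambda\to0$, where its two eigenvalues collide at $1$ and the naive estimate $\|A_\lambda^k b\|\lesssim\rho(A_\lambda)^k$ is too lossy; it is the specific rank‑one shape of $\noise$ (and $\coff$) that keeps the coefficient of the $\cov^{-1}$ term proportional to $\alpha/\beta$, so that the constraint $\alpha\le\beta/(2\ki)$ exactly absorbs the resulting $\ki$‑dependence.
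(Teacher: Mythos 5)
Your proposal reproduces the paper's architecture faithfully: parts (a)--(c) are all attacked by diagonalizing $\A$ blockwise in the $\cov$-eigenbasis so that $(\Id-\opTil)^{-1}$ becomes a geometric series of $2\times2$ blocks, then using the Kronecker structure of $\opM$ (and $\opM^\top$) to reduce the operator inequality to a matrix inequality $\E\!\left[aa^\top M aa^\top\right]\preccurlyeq (pR^2+q\ki)\cov$ for $M\preccurlyeq p\,\I+q\,\cov^{-1}$ via Assumptions~\ref{asmp:fourth-moment} and \ref{asmp:stat-condition-number}; the step-size constraints then make $pR^2+q\ki\le\tfrac23$. Your verification of $\rho(A_\lambda)<1$ in part (a) is correct (both $p_\lambda(\pm1)>0$ and vertex in $(-1,1)$, or modulus $\sqrt{1-\beta\lambda}$ in the complex case), and your bootstrap of $(\Id-\opT)^{-1}\mycirc\noise$ from the $\tfrac23$-contraction is exactly the argument of Lemma~\ref{eq:lem:inv:opt}. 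The observation that parts (b) and (c) collapse to the \emph{same} scalar $\lambda S(\lambda)=\lambda\sum_{k\ge0}\bigl(\mathbf 1^\top A_\lambda^k b\bigr)^2$ — because $\Sigma$ of $(\Id-\opTil)^{-1}\mycirc\noise$ and $\widetilde\Sigma$ of $(\Id-\opTil^\top)^{-1}\mycirc\coff$ both extract the $\lambda$-eigenvalue of the same quadratic form — is a genuine streamlining: the paper computes the two series separately (Lemma~\ref{eq:lem:optil} / Lemma~\ref{lem:opm-optil-inv} for (b), Lemma~\ref{lem:opM:opTil:transpose} for (c)), and you collapse them into one.

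The gap is exactly the one you flag yourself: the scalar estimate $\lambda S(\lambda)\le p+q/\lambda$ with $p=\tfrac{\alpha+2\beta}{3}$, $q=\tfrac{2\alpha}{3\beta}$ is asserted but not derived, and the whole lemma rests on these precise coefficients (so that $pR^2\le\tfrac13$ and $q\ki\le\tfrac13$ follow from $(\alpha+2\beta)R^2\le1$ and $\alpha\le\tfrac{\beta}{2\ki}$). Saying the recursion "can be solved in closed form" or that a "quadratic certificate $E_k$ with $E_k-E_{k+1}\ge g_k^2$" exists does not produce the constant $\tfrac23$; the paper carries out the closed form in Lemmas~\ref{lem:A-infinite-series} and \ref{lem:series-rplus-rminus}, where, with $a=\alpha\lambda,\,b=\beta\lambda$, one gets
\[
\lambda S(\lambda)=\frac{2a}{b\,(4-(a+2b))}+\frac{a+2b}{4-(a+2b)},
\]
and it is the denominator $4-(a+2b)\ge3$ (from $(\alpha+2\beta)\lambda\le(\alpha+2\beta)R^2\le1$, noting $L\le R^2$) that yields the factor $\tfrac13$. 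Without this — or an equally explicit Lyapunov certificate with the right $E_0$ — your proof does not establish the stated bound. You are right that the non-normality of $A_\lambda$ as $\lambda\to0$ is the delicate point; note also that the series remains well-defined at $\lambda\to0$ because $\mathbf 1^\top A_\lambda^k b$ contains a factor of $\lambda$ after index $0$, which is what gives the $\alpha/(\beta\lambda)$ term in $\lambda S(\lambda)$ a finite coefficient $q=\tfrac{2\alpha}{3\beta}$ rather than a divergence.
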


\begin{proof}
    From the diagonalization of the covariance $\cov$ from~\eqref{eq:eigen-def-covariance}, note that $\A$ can be diagonalized as follows 
    \begin{align*}
        \A = \sum_{i=1}^{d} \begin{bmatrix}
            1 - \beta \lambda_i & 1 - \beta \lambda_i \\
            - \alpha \lambda_i & 1 - \alpha \lambda_i 
        \end{bmatrix} \otimes_{k} e_ie_i^{\top}.
    \end{align*}
    From Property~\ref{pro:eigen-decomposition-A}, $0 < \alpha , \beta < \frac{1}{L}$ the absolute value of $\A$ eigen values will be less than 1 and for any PSD matrix $\Theta$ the inverse can be defined by the sum of geometric series as follows, 
    \begin{align*}
        ( 1 - \opTil)^{-1} \mycirc \Theta = \sum_{t \geq 0} \A^{t} \Theta \left(\A^{\top}\right)^{t}.
    \end{align*}    
    
To compute $( 1 - \opTil)^{-1} \mycirc \noise$, although the calculations are a bit extensive, the underlying scheme remains the same. After formulating the inverse as a sum of geometric series, we use the diagonalization of the $\A$ and $\noise$ to compute the geometric series.  In the last part to compute $\opM \mycirc ( 1 - \opTil)^{-1}$, we use Property~\ref{prop:operator-M-upperbound}  and  Assumptions~\ref{asmp:fourth-moment},~\ref{asmp:stat-condition-number} on distribution $\rho$ along with the conditions on the step-sizes~\eqref{eq:asgd-step-size} to get the final bounds.
   The remaining parts can be proven using  Lemmas~\ref{eq:lem:inv:opt},~\ref{lem:opm-optil-inv},~\ref{lem:opM:opTil:transpose}. 
\end{proof}
\paragraph{Bias-Variance Decomposition.} Recall the bias recursion \eqref{eq:rec-bias}, the variance recursion \eqref{eq:rec-var}. 
\begin{align*}
    \thetb{t+1} &=  \J_t \thetb{t} \text{ started from } \thetb{0} =  \thet{0}, \\
    \thetv{t+1} &=  \J_{t} \thetv{t} + \epsi{t+1} \text{ started from } \thetv{0} =  \bm{0}.
\end{align*}
Now we  prove a bias-variance decomposition lemma. Similar lemmas have been derived in the works of \citet{bach2013non,jain2018accelerating}. Following the proof in these works, we re-derive it here for the sake of completeness. 
\begin{lemma} \label{app:lemma:bias:variance}
    For $T \geq 0$, the expected covariance of $\athet{T}$ can be bounded as follows, 
    \begin{align} \label{app:eq:bias-variance}
        \E\left[ \athet{T} \otimes \athet{T} \right] \preccurlyeq 2 \left( \E\left[ \athetb{T} \otimes \athetb{T} \right] + \E\left[ \athetv{T} \otimes \athetv{T} \right] \right).
    \end{align}
\end{lemma}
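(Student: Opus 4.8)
The plan is to exploit that the three recursions for $\thet{t}$, $\thetb{t}$ and $\thetv{t}$ are driven by the \emph{same} random operators $\J_{t}$ and the \emph{same} noise vectors $\epsi{t+1}$, differing only in their initialization. First I would establish, by induction on $t$, the pathwise identity $\thet{t} = \thetb{t} + \thetv{t}$ for every $t \geq 0$. The base case holds since $\thetb{0}+\thetv{0} = \thet{0} + \bm{0} = \thet{0}$, and assuming $\thet{t} = \thetb{t}+\thetv{t}$ we get
\[
\thet{t+1} = \J_{t}\thet{t} + \epsi{t+1} = \bigl(\J_{t}\thetb{t}\bigr) + \bigl(\J_{t}\thetv{t} + \epsi{t+1}\bigr) = \thetb{t+1} + \thetv{t+1}.
\]
Summing this identity over $t = 0,\dots,T$ immediately gives $\athet{T} = \athetb{T} + \athetv{T}$.

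Next I would use the elementary fact that $(\theta_{1}-\theta_{2})\otimes(\theta_{1}-\theta_{2}) \succcurlyeq 0$ for any vectors $\theta_{1},\theta_{2}$, which after expanding and rearranging yields $(\theta_{1}+\theta_{2})\otimes(\theta_{1}+\theta_{2}) \preccurlyeq 2\bigl(\theta_{1}\otimes\theta_{1} + \theta_{2}\otimes\theta_{2}\bigr)$. Applying this with $\theta_{1} = \athetb{T}$ and $\theta_{2} = \athetv{T}$ and then taking expectations---which preserves the positive semidefinite order---gives exactly
\[
\E\bigl[\athet{T}\otimes\athet{T}\bigr] \preccurlyeq 2\Bigl(\E\bigl[\athetb{T}\otimes\athetb{T}\bigr] + \E\bigl[\athetv{T}\otimes\athetv{T}\bigr]\Bigr),
\]
which is the claimed bound.

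There is no genuine obstacle in this argument; the only subtlety worth flagging is why the statement carries a factor $2$ rather than being an identity. Expanding $\athet{T}\otimes\athet{T}$ produces cross terms $\E[\athetb{T}\otimes\athetv{T}] + \E[\athetv{T}\otimes\athetb{T}]$, and in the classical well-specified setting one argues these vanish using a zero-mean property of the noise. Here we only have $\E[\eta_{t} a_{t}] = 0$ rather than $\E[\eta_{t}\mid a_{t}] = 0$, and $\epsi{t+1}$ is correlated with $\J_{t}$ through $a_{t}$, so the cross terms need not be zero; the symmetric bound $2(\cdot + \cdot)$ avoids having to control them, which is also how the analogous lemma is handled in \citet{bach2013non,jain2018accelerating}.
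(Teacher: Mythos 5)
Your proof is correct and follows the same route as the paper's own: an induction establishing the pathwise identity $\thet{t}=\thetb{t}+\thetv{t}$, summation giving $\athet{T}=\athetb{T}+\athetv{T}$, and then the elementary PSD inequality $(\theta_1+\theta_2)\otimes(\theta_1+\theta_2)\preccurlyeq 2\left(\theta_1\otimes\theta_1+\theta_2\otimes\theta_2\right)$ followed by taking expectations (which the paper refers to as the Cauchy--Schwarz step). Your closing remark on why the cross terms need not vanish under Assumption~\ref{asmp:noise-covariance} is a fair observation but plays no role in either argument, since both sidestep it with the factor-$2$ bound.
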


\begin{proof}
    In the first part, using induction we prove that  $ \thet{t} = \thetb{t} + \thetv{t} $. Note that the hypothesis holds at $k=0$ because $ \thetb{0} =  \thet{0} , \thetv{0} =  \bm{0} $. Assume that $ \thet{t} = \thetb{t} + \thetv{t} $ holds for $ 0 \leq t \leq k-1$. We prove that hypothesis also holds for $k$. From the recursion on $\thet{t}$ in \eqref{eq:app-matrix-rec}, we get, 
    \begin{align*}
        \thet{k} &= \J_{k-1}  \thet{k-1} + \epsi{k},  \\
        \thet{k} &= \J_{k-1} \left( \thetb{k-1} + \thetv{k-1} \right) + \epsi{k},\quad \text{from induction hypothesis,} \\
        &= \J_{k-1} \thetb{k-1} + \J_{k-1} \thetv{k-1} + \epsi{k}.
    \end{align*}
    Form the recursion of bias and variance \eqref{eq:rec-bias}, \eqref{eq:rec-var}. We show that $\thet{k} = \thetb{k} + \thetv{k}$. From induction, this is true for all $k \geq 0$. Summing these equalities from $k = 0, \ldots ,T$, we get, 
    \begin{align*}
        \athet{T} = \athetb{T} + \athetv{T}.
    \end{align*}
    Using the Cauchy Schwarz inequality and then taking expectation, we get the statement of the lemma. 
\end{proof}

\paragraph{Recursions on Covariance.} In the following lemma, we show how the recursions on the expected covariance of the bias and variance processes are governed by the operators defined above. 
\begin{lemma} \label{lem:covariance:recursion} For $t \geq 0$, the recursion on the covariance satisfies 
    \begin{align*}
        \E \left[ \thetb{t+1} \otimes \thetb{t+1} \right] &= \opT \mycirc \E \left[ \thetb{t} \otimes \thetb{t} \right], \\
    \E \left[ \thetv{t+1} \otimes \thetv{t+1} \right] &= \opT \mycirc \E \left[ \thetv{t} \otimes \thetv{t} \right] + \Ex{ \epsi{t+1} \otimes \epsi{t+1} }.
    \end{align*}    
\end{lemma}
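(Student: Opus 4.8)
The plan is to unfold the two recursions $\thetb{t+1}=\J_t\thetb{t}$ and $\thetv{t+1}=\J_t\thetv{t}+\epsi{t+1}$ one step at a time, form the outer product, and take expectations, carefully splitting off the cross terms using the filtration $\filt{t}$. The key structural fact I would invoke is that $\J_t$ and $\epsi{t+1}$ depend only on the fresh sample $(a_t,b_t)$, which is independent of $\filt{t}$, whereas $\thetb{t},\thetv{t}$ are $\filt{t}$-measurable; hence conditioning on $\filt{t}$ lets me pull $\thetb{t}\otimes\thetb{t}$ (resp.\ $\thetv{t}\otimes\thetv{t}$) out and replace $\E[\J_t(\cdot)\J_t^\top\mid\filt{t}]$ by the operator $\opT$ from Definition~\ref{def:operator}(a), since $\J_t$ and $\J$ are identically distributed.

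First I would treat the bias recursion, which is the easier one: writing $\thetb{t+1}\otimes\thetb{t+1}=\J_t(\thetb{t}\otimes\thetb{t})\J_t^\top$, taking $\E[\cdot\mid\filt{t}]$ gives $\opT\mycirc(\thetb{t}\otimes\thetb{t})$ because $\J_t\perp\filt{t}$, and then the tower property yields $\E[\thetb{t+1}\otimes\thetb{t+1}]=\opT\mycirc\E[\thetb{t}\otimes\thetb{t}]$ by linearity of $\opT$. Second, for the variance recursion I would expand
\[
\thetv{t+1}\otimes\thetv{t+1}=\J_t\thetv{t}\otimes\J_t\thetv{t}+\J_t\thetv{t}\otimes\epsi{t+1}+\epsi{t+1}\otimes\J_t\thetv{t}+\epsi{t+1}\otimes\epsi{t+1},
\]
condition on $\filt{t}$, and argue that the two cross terms vanish in expectation. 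This is where the one genuine point of care lies: the cross term is $\E[\J_t\thetv{t}(\epsi{t+1})^\top\mid\filt{t}]=\E[\J_t(\,\cdot\,)(t+1)\eta_t a_t^\top\,(\,\cdot\,)\mid\filt{t}]\thetv{t}$ up to the block structure, and since $\thetv{t}$ is $\filt{t}$-measurable it suffices that $\E[\J_t\otimes\epsi{t+1}\mid\filt{t}]$, which is a fixed linear map applied to $\thetv{t}$, has the property that the relevant factor is $\E[(\text{something linear in }a_ta_t^\top)\,\eta_t a_t]$; this is zero because $\eta_t a_t$ appears to an odd power — more precisely each entry is of the form $\E[\eta_t\,a_t\,a_t^\top\,(\cdot)\,a_t]$ or $\E[\eta_t a_t(\cdot)]$, and $\E[\eta_t a_t\mid\filt{t-1}]=0$ by \eqref{eq:eta_a} while $\eta_t,a_t$ are independent of $\thetv{t}$; one has to check that no term pairs $\eta_t$ with an even number of $a_t$'s in a way that survives, which it does not since $\epsi{t+1}$ is linear in $\eta_t a_t$ and $\J_t$ is independent of nothing that would reintroduce $\eta_t$. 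After the cross terms drop, the first term gives $\opT\mycirc\E[\thetv{t}\otimes\thetv{t}]$ exactly as in the bias case, and the last term gives $\E[\epsi{t+1}\otimes\epsi{t+1}]$ directly, completing the claimed identity.

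The main obstacle — such as it is — is purely bookkeeping: verifying that the cross terms vanish requires unpacking the $2\times 2$ block form of $\J_t$ and of $\epsi{t+1}=(t+1)\eta_t[\beta a_t;\alpha a_t]$, and confirming that every resulting scalar expectation contains the factor $\E[\eta_t a_t \mid \filt{t}]=0$ (equivalently $\Ex{\eta_t a_t}=0$ from \eqref{eq:eta_a}, using independence of $(a_t,b_t)$ from $\filt{t}$ and of $\thetv{t}$). Everything else is immediate from the definition of $\opT$ and the independence of $\J_t$ from $\filt{t}$. I would keep the write-up short, doing the bias case in one line and the variance case by displaying the four-term expansion and noting the two middle terms are zero in conditional expectation.
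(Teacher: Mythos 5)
Your overall structure (expand the outer product, show the cross terms vanish, recognize $\opT$) matches the paper, and the bias case is fine. The problem is in how you kill the cross terms in the variance recursion. You claim that every scalar expectation arising from $\E\bigl[\J_t\thetv{t}\,\epsi{t+1}^\top\bigr]$ reduces to a multiple of $\E[\eta_t a_t]=0$, but this is false. Unpacking the block of $\J_t$ that contains $a_ta_t^\top$ against the $a_t$ in $\epsi{t+1}$ produces terms whose $(i,j)$ entries are
\[
\sum_k \E\bigl[(\thetv{t})_k\bigr]\,\E\bigl[\eta_t\,(a_t)_i\,(a_t)_k\,(a_t)_j\bigr],
\]
after using independence of $\thetv{t}$ from $(a_t,b_t)$. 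The factor $\E[\eta_t\,(a_t)_i(a_t)_k(a_t)_j]$ is a noise-times-third-moment quantity which is \emph{not} zero under the paper's assumptions; $\E[\eta a]=0$ says nothing about it, and $\E[\eta\mid a]=0$ is not assumed (the model need not be well-specified). So the only way this sum vanishes is through the prefactor $\E[(\thetv{t})_k]=0$.

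That is exactly the ingredient you are missing: you must first prove $\E[\thetv{t}]=\bm{0}$ for all $t$, which follows by a trivial induction from $\thetv{0}=\bm{0}$, $\E[\epsi{t+1}]=0$ (a direct consequence of $\E[\eta_t a_t]=0$), and $\E[\thetv{t+1}]=\A\,\E[\thetv{t}]+\E[\epsi{t+1}]$. This is how the paper handles it, and once $\E[\thetv{t}]=0$ is in hand the cross terms vanish cleanly by conditioning on $(a_t,b_t)$: $\E[\J_t\thetv{t}\,\epsi{t+1}^\top]=\E[\J_t\,\E[\thetv{t}]\,\epsi{t+1}^\top]=0$, with no appeal to higher moments of $(\eta_t,a_t)$. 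Without this step your proof does not go through; with it, the argument becomes both correct and substantially shorter than your entry-by-entry bookkeeping.

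Two minor points of hygiene: the conditioning should be on $\filt{t-1}$ (with respect to which $\thetv{t}$ is measurable and $(a_t,b_t)$ is independent), not on $\filt{t}$; and $\J_t$ and $\epsi{t+1}$ are \emph{not} independent of each other, so you cannot split $\E[\J_t\,\cdot\,\epsi{t+1}^\top]$ into a product of expectations -- the only independence being used is between the pair $(\J_t,\epsi{t+1})$ and $\thetv{t}$.
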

\begin{proof}
From the recursion of the bias process~\eqref{eq:rec-bias}, 
\begin{align*}
    \thetb{t+1} =  \J_{t} \thetb{t}. 
\end{align*}
Now the expectation of covariance is 
\begin{align*}
    \Ex{\thetb{t+1} \otimes \thetb{t+1}} = \Ex{ \J_{t} \left[ \thetb{t} \otimes \thetb{t} \right] \J_{t}^{\top} }.
\end{align*}
Note that $\J_{t}$ is independent of $\thetb{t}$. Hence using the definition of operator $\opT$ completes the proof of the first part. Now, from the recursion of the variance process~\eqref{eq:rec-var}, 
\begin{align*}
    \thetv{t+1} =  \J_{t} \thetv{t} + \epsi{t+1}.
\end{align*}
As we know that $\thetv{0} = \bm{0}$ and for $t \geq 1$, $\Ex{\epsi{t}} = 0$ from \eqref{eq:eta_a}. As $\J_{t}$ is independent of $\thetv{t}$ , we get $\Ex{\thetv{t+1}} = \A \Ex{\thetv{t}}$. Combining these we have for $t \geq 0$, $ \Ex{\thetv{t}} = \bm{0}$. Now, the expectation of the covariance is 
\begin{align*}
    \thetv{t+1} \otimes \thetv{t+1} &=  \left(\J_{t}\thetv{t} + \epsi{t+1} \right) \otimes \left( \J_{t}\thetv{t} + \epsi{t+1} \right), \\
    \Ex{\thetv{t+1} \otimes \thetv{t+1}} &=  \Ex{ \left( \J_{t}\thetv{t} + \epsi{t+1} \right) \otimes \left( \J_{t}\thetv{t} + \epsi{t+1} \right) }.
\end{align*}
Using the fact that $\J_{t}, \epsi{t+1}$ are independent of $\thetv{t}$ and $\Ex{\thetv{t}} = 0$,
\begin{align*}
    \E \left[ \thetv{t+1} \otimes \thetv{t+1} \right] &=  \E \left[ \J_{t} ~ \left[ \thetv{t}  \otimes \thetv{t} \right] ~ \J_{t}^{\top} \right]   + \Ex{ \epsi{t+1} \otimes \epsi{t+1} }.
\end{align*}
Note that $\J_{t}$ is independent of $\thetv{t}$. Hence using the definition of operator $\opT$, 
\begin{align} \label{eq:var:covariance-recursion}
    \E \left[ \thetv{t+1} \otimes \thetv{t+1} \right] &= \opT \mycirc \E \left[ \thetv{t} \otimes \thetv{t} \right] + \Ex{ \epsi{t+1} \otimes \epsi{t+1} }.
\end{align}
\end{proof} 

\subsection{Mini-Batching}
In this subsection, we discuss how we can use the same proof techniques for the mini-batch stochastic gradient oracles. Recall the mini-batch oracle  for some batch size $b$ with samples $(a_{t,i}, b_{t,i}) \sim \rho$, for $1 \leq i \leq b$,
\begin{align}
    \nabla_{t}\risk{\xt{t}} = \frac{1}{b} \sum_{i=1}^{b} a_{t,i} \left( \scal{a_{t,i}}{\xt{t}} - {b_{t,i}} \right). 
\end{align}
Following the approach in \ref{app:rescaling}, we get the time rescaled recursion with 
\begin{align*}
    \thet{t+1} =  \J_{t}^{mb} \thet{t} + \epsi{t+1}^{mb}.
\end{align*}
where \begin{align*}
    \J_{t}^{mb} &= \frac{1}{b} \sum_{i=1}^{b} \J_{t,i}  \quad , \quad  \J_{t,i} = \begin{bmatrix} \I - \beta  a_{t,i} a_{t,i}^\top & \I - \beta a_{t,i} a_{t,i}^\top \\ - \alpha a_{t,i} a_{t,i}^\top & \I - \alpha a_{t,i} a_{t,i}^\top \end{bmatrix},  \\
    \epsi{t+1}^{mb} &= \frac{1}{b} \sum_{i=1}^{b} \epsi{t,i}  \quad , \quad  \epsi{t,i} = ( b_{t,i}- \scal{a_{t,i}}{\xt{*}} ) \begin{bmatrix}  \beta a_{t,i} \\ \alpha a_{t,i}    \end{bmatrix}.
\end{align*}  
Note that $\J_{t,i}$'s are independent and identically distributed to $\J$ with $\Ex{\J_{t,i}} = \A$. Hence, by linearity of expectation $\Ex{\J_t^{mb}} = \A $.  Now we can define the operators specific to mini-batch oracles. Note that $\opTil$ stays the same. 
\begin{align}
    \opTmb \mycirc \Theta = \E\left[ \J_{t}^{mb} \Theta (\J_{t}^{mb})^\top \right] \qquad \opMmb \mycirc \Theta = \Ex{ \left( \J_{t}^{mb} - \A\right) \Theta \left(\J_{t}^{mb} - \A\right)^\top }. 
\end{align}
Using the fact that $\J_{t}^{mb} - \A = \frac{1}{b} \sum_{i=1}^{b} \left( \J_{t,i} - \A \right)$ and $\J_{t,i} - \A$'s are zero mean i.i.d random matrices, it is evident that  $\opMmb \mycirc \Theta = \frac{1}{b} \opM \mycirc \Theta $. Hence, 
\begin{align*}
    \opMmb = \frac{1}{b} \opM.
\end{align*}
Using this fact we give a version of Lemma~\ref{lemma:alnost_eigen} for mini-batch with different step size constraints. Define $\opMmb^\top$ along the same line as $\opM^\top$.

    \begin{lemma} \label{lemma:alnost_eigen_mini_batch}
        For step sizes satisfying $0 < \alpha , \beta < \frac{1}{L}$ and  $( \alpha + 2 \beta )R^2 \leq b, \alpha \leq \frac{\beta b}{2\ki} $ , the following properties hold on the inverse and eigen values of operators $ \opTil , \opMmb $
\begin{enumerate}[label = \alph*]
    \item $\noise$ is an almost eigen vector of   $ \opMmb \mycirc ( 1 - \opTil)^{-1}  $ with an eigen value less than 1,
    \begin{align*} 
        \opMmb \mycirc ( 1 - \opTil)^{-1} \mycirc \noise  &\preccurlyeq \frac{2}{3} \noise.
    \end{align*}
    \item $\coff$ is an almost eigen vector of   $ \left( \opMmb^{\top} \mycirc  \left( \Id - \opTil^{\top} \right)^{-1} \right)  $ with an eigen value less than 1,
        \begin{align*}
            \left( \opMmb^{\top} \mycirc  \left( \Id - \opTil^{\top} \right)^{-1} \right) \mycirc \coff \preccurlyeq \frac{2}{3} \coff.
        \end{align*} 
\end{enumerate}
    \end{lemma}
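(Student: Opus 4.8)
The plan is to derive this statement directly from Lemma~\ref{lemma:alnost_eigen} by exploiting the identity $\opMmb = \tfrac1b\opM$ established above, together with the fact that $\opTil$ — and hence $(1-\opTil)^{-1}$ — is left unchanged by mini-batching. First I would observe that the hypothesis $0<\alpha,\beta<\tfrac1L$ is exactly what is needed for $(1-\opTil)^{-1}$ to exist (Lemma~\ref{lemma:alnost_eigen}, part (a)), and that neither of the two ``almost-eigenvectors'' $\noise$ and $\coff$ depends on $b$.

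The core observation is that the proof of Lemma~\ref{lemma:alnost_eigen}, parts (b) and (c) — carried out through Lemmas~\ref{eq:lem:inv:opt},~\ref{lem:opm-optil-inv},~\ref{lem:opM:opTil:transpose} — does not invoke the step-size conditions $(\alpha+2\beta)R^2\le 1$ and $\alpha\le\tfrac{\beta}{2\ki}$ until its very last step. What it establishes beforehand is an estimate of the shape
\begin{align*}
\opM\mycirc(1-\opTil)^{-1}\mycirc\noise \ \preccurlyeq\ h\!\left((\alpha+2\beta)R^2,\ \tfrac{\ki\alpha}{\beta}\right)\noise,
\end{align*}
where the distribution enters only through $R^2$ and $\ki$ via Assumptions~\ref{asmp:fourth-moment} and~\ref{asmp:stat-condition-number}, the function $h$ is non-decreasing in each argument, vanishes at the origin (so that $h(bx,by)\le b\,h(x,y)$ for $b\ge 1$), and satisfies $h(1,\tfrac12)\le\tfrac23$ — this last inequality being precisely what makes Lemma~\ref{lemma:alnost_eigen}(b) true under Condition~\eqref{eq:asgd-step-size}. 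Multiplying through by $\tfrac1b$, using $\opMmb=\tfrac1b\opM$ and the relaxed conditions $(\alpha+2\beta)R^2\le b$, $\tfrac{\ki\alpha}{\beta}\le\tfrac b2$, gives
\begin{align*}
\opMmb\mycirc(1-\opTil)^{-1}\mycirc\noise \ =\ \tfrac1b\,\opM\mycirc(1-\opTil)^{-1}\mycirc\noise \ \preccurlyeq\ \tfrac1b\,h\!\left(b,\tfrac b2\right)\noise \ \preccurlyeq\ h\!\left(1,\tfrac12\right)\noise \ \preccurlyeq\ \tfrac23\,\noise,
\end{align*}
which is part (a). Part (b) follows identically, replacing $\opM,\noise$ by $\opM^\top,\coff$ and using $\opMmb^\top=\tfrac1b\opM^\top$.

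Equivalently, one may view the whole argument as a pure rescaling: replacing $\opM$ by $\tfrac1b\opM$ amounts to replacing the distribution constants $(R^2,\ki)$ by $(R^2/b,\ki/b)$ throughout the bounds, and the mini-batch step-size conditions are exactly Condition~\eqref{eq:asgd-step-size} written for these rescaled constants, so Lemma~\ref{lemma:alnost_eigen}(b),(c) transfer verbatim. I expect the only genuine work to be verifying the claimed structure of the intermediate estimate: namely that the bounds produced by Lemmas~\ref{eq:lem:inv:opt},~\ref{lem:opm-optil-inv},~\ref{lem:opM:opTil:transpose} are truly affine with zero constant term in $(\alpha+2\beta)R^2$ and $\ki\alpha/\beta$, with no hidden super-linear use of $(\alpha+2\beta)R^2\le 1$. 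This is the natural form, since $\opM$ is the multiplicative-noise operator and is therefore ``second order'' in the step sizes, vanishing as $\alpha,\beta\to 0$, but it is the step that should be checked carefully.
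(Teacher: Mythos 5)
Your strategy is exactly the paper's own: the proof given in the appendix is a terse pointer to $\opMmb=\tfrac1b\opM$ and to Lemmas~\ref{lem:opm-optil-inv},~\ref{lem:opM:opTil:transpose}, and you have correctly isolated the single step that still needs to be checked. When you carry out that check, however, it does not go through under the stated hypotheses.

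The intermediate bounds are \emph{not} cleanly proportional to $(\alpha+2\beta)R^2$ and $\ki\alpha/\beta$: there is a second, hidden use of the step-size condition, precisely of the ``super-linear'' kind you were worried about. In Lemmas~\ref{eq:lem:optil} and~\ref{lem:opM:opTil:transpose} the eigenwise geometric sums carry a prefactor $\left(4-(\alpha+2\beta)\lambda_i\right)^{-1}$, and this is controlled by $\tfrac13$ only because $(\alpha+2\beta)\lambda_i\le(\alpha+2\beta)L\le(\alpha+2\beta)R^2\le 1$ under Condition~\eqref{eq:asgd-step-size}. It is this $\tfrac13$, not only the final plug-in, that produces the $\tfrac23$ in Lemma~\ref{lemma:alnost_eigen}. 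Under the mini-batch hypotheses one only knows $\alpha,\beta<1/L$, which gives $(\alpha+2\beta)L<3$ and a prefactor of at most $1$ rather than $\tfrac13$; the rescaled bound is then $\opMmb\mycirc(\Id-\opTil)^{-1}\mycirc\noise\preccurlyeq \tfrac1b\left(\tfrac{2\alpha\ki}{\beta}+(\alpha+2\beta)R^2\right)\noise\preccurlyeq 2\noise$, which is not $\preccurlyeq\tfrac23\noise$ and, since it exceeds $\noise$, would break the geometric-series arguments that consume this lemma. The rescaling is genuinely clean only when $(\alpha+2\beta)L\le 1$ is available in its own right --- which does hold in the regime $b\lesssim R^2/L$ emphasized in the discussion, because there $(\alpha+2\beta)R^2\le b$ already forces $(\alpha+2\beta)L\le 1$ --- but it does not follow from $\alpha,\beta<1/L$ alone for large $b$. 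So either $(\alpha+2\beta)L\le 1$ (equivalently $b\le R^2/L$) must be added to the hypotheses, or the constant $\tfrac23$ must be weakened; as written, the scaling argument you (and the paper) outline does not establish the lemma for all admissible $b$.
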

    \begin{proof}
    
    Note that $0 < \alpha , \beta < \frac{1}{L}$ is necessary for $( 1 - \opTil)^{-1}$ to exist. The rescaling of the other condition on step-size is due to the fact that $\opMmb = \frac{1}{b} \opM$. Following the Lemmas~\ref{lem:opm-optil-inv},~\ref{lem:opM:opTil:transpose} with this new operator will give the required condition on the step-sizes.
    \end{proof}
    
    For Theorem~\ref{thm:main} with mini-batch oracles, we can follow the original proof of Theorem~\ref{thm:main} with stochastic oracle with this new Lemma~\ref{lemma:alnost_eigen_mini_batch} for the operator $\opMmb$. 

\section{Proof of the main results} \label{app:main-proofs}
\subsection{Proof of Theorem~\ref{thm:main}} \label{app:proof:theorem:main}
The proof involves three parts. In the first part, we consider the bias recursion and bound the excess risk in the bias process. In the second, we bound the excess error in the variance process. In the last part, we use the bias-variance decomposition and the relation between covariance of $\thet{T}$ and excess error of $\avgxt{T}$ from \eqref{eq:excessrisk_in_thet}.  

\paragraph{Bias Error.}
For the bias part, we show a relation between the finite sum of covariance of the iterates in case of bias process~\eqref{eq:rec-bias} with stochastic gradients and the finite sum of covariance of iterates  of bias process with exact gradients in Lemma~\ref{lem:bias-opT-opTil}. Using the fact that $\coff$ is almost a eigenvector of $\opM^{\top} \mycirc  \left( \Id - \opTil^{\top} \right)$ (see Lemma~\ref{lemma:alnost_eigen}) and using the Nesterov Lyapunov techniques to control the sum of the covariance of the iterates of bias process with exact gradients  (Lemma~\ref{lem:nesterov-potential-H}),  we get the sum of excess risk of the bias iterates  
(see Lemma~\ref{lem:bias_sum_covariance}). From here we have, 
    \begin{align*}
        \sum_{t=0}^{T} \scal{\HZero}{\Ex{ \thetb{t} \otimes \thetb{t} }}  \leq \min\left\lbrace \frac{3(T+1)}{\alpha} , \frac{12(T+1)(T+2)}{\beta} \right\rbrace \nor{\xt{0} - \xt{*}}^2.
    \end{align*}
     We use the property that $\scal{\HZero}{ \bm{\theta} \otimes  \bm{\theta} }$ is convex in $\bm{\theta}$. As  $\athetb{T} \defeq \sum_{t=0}^{T} \thetb{T}$, applying Jensens inequality,
    \begin{align*}
         \scal{\HZero}{ \athetb{T} \otimes  \athetb{T} } &\leq \left( T+1 \right)  \sum_{t=1}^{T} \scal{\HZero}{ \thetb{t} \otimes  \thetb{t} }, \\
        &\leq \min\left\lbrace \frac{3(T+1)^2}{\alpha} , \frac{12(T+1)^2(T+2)}{\beta} \right\rbrace \nor{\xt{0} - \xt{*}}^2.
    \end{align*}
    \paragraph{Variance Error} We expand the expected covariance of $\athetv{T}$ in Lemma~\ref{lem:var:cov-athet} such that the coefficients of $\thetv{t} \otimes \thetv{t}$ in the formulation are positive and any upper bound on the covariance of iterates $\thetv{t}$, for $t \leq T$  would give an upper bound on the expected covariance of $\athetv{T}$. Then we bound the limiting covariance of the iterates, i.e., $\Ex{\thetv{t} \otimes \thetv{t}}/t^2$ in Lemma~\ref{lem:covariance-t}. The fact that $\noise$ is almost eigen vector of $ \opM  \mycirc \left( 1 - \opTil \right)^{-1} $ is used here. Using this upper bound in the above formulation of covariance of $\athetv{T}$ to give the bound in Lemma~\ref{lem:var-main}. From here, we have
    \begin{align*}
        \scal{\HZero}{\E \left[ \athetv{T} \otimes \athetv{T} \right]} &\leq 18 \left( \sigma^2 d \right) T^3.
    \end{align*}
    Now using the bias-variance decomposition, we get, 
    \begin{align*}
        \scal{\HZero}{\Ex{\athet{T} \otimes \athet{T}}} \leq 2 \scal{\HZero}{\Ex{\athetb{T} \otimes \athetb {T}}} + 2 \scal{\HZero}{\Ex{\athetv{T} \otimes \athetv{T}}}.
    \end{align*}
     and using the formulation of excess risk of $\avgxt{T}$ with  covariance of $\athet{T}$ from \eqref{eq:excessrisk_in_thet}, 
     \begin{align*}
    \Ex{\risk{\avgxt{T}}} - \risk{\bm{\xt{*}}} = \frac{1}{2}  \left( \sum_{t=0}^{T} (t+1) \right)^{-2} \scal{\HZero}{\Ex{\athet{T} \otimes \athet{T}}}.
     \end{align*}
     Combining these will prove Theorem~\ref{thm:main}.

\subsection{Proof of Theorem~\ref{thm:last-iterate}}\label{app:proof:theorem:lastiterate}
For the last iterate too, we employ the bias-variance decomposition. First the variance part, we use Lemma~\ref{lem:var-t-main}. Note that if Assumption~\ref{asmp:uniform-kurtosis} holds with constant $\kurt$ then Assumption~\ref{asmp:fourth-moment} holds with $R^2 = \kurt \tr{\cov}$. and Assumption~\ref{asmp:stat-condition-number} holds with $\ki = \kurt d$. Hence this satisfies the condition on step size required for  Lemma~\ref{lem:var-t-main}.

\begin{align*}
    \Ex{ \thetv{t} \otimes \thetv{t} } \preccurlyeq t^2 \sigma^2  \begin{bmatrix}
        2 \alpha  ( \beta \cov )^{-1}  +  (2\beta -3 \alpha) \I &  \alpha \beta^{-1}  (2\beta  - \alpha) \I   \\
        \alpha \beta^{-1}  (2\beta  - \alpha) \I  &  2  \alpha^2 \beta^{-1} \I 
      \end{bmatrix}, \\
     \scal{\HZero}{\Ex{ \thetv{t} \otimes \thetv{t} }} \leq 2 t^2 \sigma^2 \left( (\alpha + 2 \beta ) \tr{\cov} + \frac{2\alpha \dimn}{\beta} \right).
\end{align*}
For the bias, we require uniform kurtosis ,i.e., Assumption~\ref{asmp:uniform-kurtosis}. Under this assumption, one can related the variance due to stochastic oracle to the excess risk of the iterate (see Lemma~\ref{lem:kurtosis-ut}). Using this we give a closed recursion for the excess risk of the last iterate as a discrete Volterra integral of  risk of the previous iterates in Lemma~\ref{lem:last-iterate=main}. Using simple induction to bound this (note that scaling on step-sizes will be used here) will give, 
    \begin{align*}
       \scal{\HZero}{\thetb{t} \otimes \thetb{t}} &\leqslant \min\left\lbrace \frac{3}{\alpha} , \frac{24 (t+1)}{\beta} \right\rbrace \nor{\xt{0} - \xt{*}}^2
    \end{align*}
Using the bias-variance decomposition along with the fact that 
\begin{align*}
    \scal{\HZero}{\Ex{\thetb{t} \otimes \thetb{t}}} = 2 (t+1)^2 \cdot \left( \excessRisk{\xt{t}} - \risk{\xt{*}} \right)
\end{align*}
This proves Theorem~\ref{thm:last-iterate}.

\subsection{Lower Bound} \label{app:proof:lemma:lower}


\begin{lemma}
   For all starting point $\xt{0}$, there exists a distribution  $\rho$ over $ \mathbb{R}^{d}\times  \mathbb{R} $ satisfying Assumption~\ref{asmp:fourth-moment} with  $R^2 = 1$, Assumption~\ref{asmp:noise-covariance} with $\sigma=0$,  Assumption~\ref{asmp:stat-condition-number} with $\ki=d$ and optimum $\xtp{*}$ verifying $\nor{\xt{*}^{\prime} - \xt{0}}^2 = 1$, such that  the expected excess risk of any stochastic first order algorithm is lower bounded as 
       \begin{align*}
        \excessRisk{\xt{\lfloor d/2 \rfloor}}  = \Omega\left(\frac{1}{d} \right).
    \end{align*} 
\end{lemma}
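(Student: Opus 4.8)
The plan is to instantiate the one-hot basis construction of \citet{jain2018accelerating} in its most symmetric form. I would fix the marginal of $\rho$ on the features to be uniform over the canonical basis: $a = e_i$ with probability $1/d$ for each $i \in \{1,\dots,d\}$, where $(e_i)$ is the canonical basis of $\R^d$. Then $\cov = \Ex{a\otimes a} = \tfrac1d \I$, which is invertible, so the setup of Section~\ref{sec:setup} applies. I would verify the three assumptions by direct computation: since $\nor{e_i}^2 = 1$ we get $\Ex{\nor{a}^2\, a\otimes a} = \cov$, so Assumption~\ref{asmp:fourth-moment} holds with $R^2 = 1$; since $\nor{e_i}^2_{\cov^{-1}} = e_i^\top(d\I)e_i = d$ we get $\Ex{\nor{a}^2_{\cov^{-1}}\, a\otimes a} = d\,\cov$, so Assumption~\ref{asmp:stat-condition-number} holds with $\ki = d$. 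For the labels I take the noiseless model $b = \scal{a}{\xt{*}^{\prime}}$ with $\xt{*}^{\prime} \defeq \xt{0} + \tfrac{1}{\sqrt d}\,\mathbf 1$, $\mathbf 1 \defeq \sum_{i=1}^d e_i$; then $\nor{\xt{*}^{\prime} - \xt{0}}^2 = 1$, and Assumption~\ref{asmp:noise-covariance} holds with $\sigma = 0$ because $b - \scal{\xt{*}^{\prime}}{a}$ vanishes almost surely.

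The heart of the argument is a coordinate-support restriction. If coordinate $i_t$ is sampled at iteration $t$, i.e.\ $a_t = e_{i_t}$, then the stochastic gradient \eqref{eq:SGD-oracle} at $\xt{t}$ is
\begin{align*}
\nabla_t f(\xt{t}) = a_t\left(\scal{a_t}{\xt{t}} - b_t\right) = \left(\xt{t} - \xt{*}^{\prime}\right)_{i_t}\, e_{i_t},
\end{align*}
a scalar multiple of $e_{i_t}$. Hence, by Definition~\ref{def:sfo} and a straightforward induction on $k$, $\xt{k} - \xt{0}$ lies in the coordinate subspace $\mathrm{span}\{e_j : j \in S_k\}$, where $S_k \defeq \{i_0,\dots,i_{k-1}\}$ is the (random) set of coordinates queried in the first $k$ iterations and $|S_k| \le k$ surely (repeated queries only shrink $S_k$).

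Taking $k = \lfloor d/2\rfloor$, at least $d - \lfloor d/2\rfloor \ge d/2$ coordinates lie outside $S_k$. For any such $j \notin S_k$ we have $(\xt{k} - \xt{0})_j = 0$, hence $(\xt{k} - \xt{*}^{\prime})_j = -(\xt{*}^{\prime} - \xt{0})_j = -1/\sqrt d$, so
\begin{align*}
\nor{\xt{k} - \xt{*}^{\prime}}^2 \;\ge\; \sum_{j \notin S_k} \left(\xt{k} - \xt{*}^{\prime}\right)_j^2 \;=\; \frac{|S_k^c|}{d} \;\ge\; \frac12 .
\end{align*}
Since $\cov = \tfrac1d\I$, using \eqref{eq:excessRisk} the excess risk obeys $\excessRisk{\xt{k}} = \tfrac12\scal{\xt{k} - \xt{*}^{\prime}}{\cov(\xt{k}-\xt{*}^{\prime})} = \tfrac{1}{2d}\nor{\xt{k}-\xt{*}^{\prime}}^2 \ge \tfrac{1}{4d}$. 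This holds deterministically, hence a fortiori in expectation, which is $\Omega(1/d)$.

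There is no real technical obstacle here — only bookkeeping. The one point to state with care is the induction behind the support restriction (that the class of Definition~\ref{def:sfo} cannot leave the queried coordinate subspace), together with the observation that the per-coordinate deficit $1/d$ summed over the $\ge d/2$ untouched coordinates is exactly what produces the $1/d$ floor, thereby ruling out a genuinely accelerated $O(1/t^2)$ rate (which at $t=\lfloor d/2\rfloor$ would be $O(1/d^2)$) and matching the $O(d/t^2)$ bias rate of Theorem~\ref{thm:main}. If one also wants the strengthening mentioned afterwards (a prescribed covariance, Lemma~\ref{lem:no-acceleration}), the same computation goes through with $a = \sqrt{d\lambda_i}\,u_i$ with probability $1/d$ for the eigenpairs $(\lambda_i,u_i)$ of the target covariance, since the gradient then remains a multiple of $u_i$.
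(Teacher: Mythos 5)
Your proof is correct and follows essentially the same route as the paper: the uniform one-hot basis distribution with $\cov=\tfrac1d\I$, the optimum $\xt{*}^{\prime}=\xt{0}+\tfrac{1}{\sqrt d}\sum_i e_i$, the support-restriction induction showing $\xt{k}-\xt{0}$ lies in the span of the queried coordinates, and the per-coordinate deficit $1/d$ over the $\geq d/2$ untouched directions yielding the $\tfrac{1}{4d}$ floor. The only cosmetic difference is that you observe the bound holds deterministically before passing to expectation, whereas the paper bounds the expectation directly, and your side remark on prescribing the covariance reaches the same conclusion as Lemma~\ref{lem:no-acceleration} via a rescaled one-hot marginal rather than varying the sampling probabilities.
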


\begin{proof}
Let $\left( e_i \right)_{i=1}^{d}$ be a set of orthonormal basis. Define the following 
\begin{itemize}[leftmargin=*]
    \item The optimum \begin{align*}
    \xtp{*} \defeq \xt{0} + \frac{1}{\sqrt{d}} \sum_{i=1}^{d} e_i. 
\end{align*}
It can be easily verified that $\nor{\xt{*}^{\prime} - \xt{0}}^2 = 1$.
    \item  The feature distribution $\rho$ where each $e_i$ is sampled with a probability $1/d$. In this case the Hessian $\cov^{\prime} = \left( d \I \right)^{-1} $. Note that for this distribution $R^2 = 1$. The excess risk at any $x$ is as follows
    \begin{align*}
        \risk{x} &= \frac{1}{2} \left( \bm{x} - \xtp{*} \right)^\top \cov^{\prime} \left( \bm{x} - \xtp{*} \right), \\
        &= \frac{1}{2d} \sum_{i=1}^{d} \left( \scal{\bm{x} - \xtp{*} }{e_i} \right)^2 =   \frac{1}{2d} \sum_{i=1}^{d} \left( \scal{\bm{x} - \xt{0} }{e_i} + \scal{ \xt{0} - \xtp{*} }{e_i} \right)^2, \\
        &=  \frac{1}{2d} \sum_{i=1}^{d} \left( \scal{\bm{x} - \xt{0} }{e_i} + \scal{ \xt{0} - \xtp{*} }{e_i} \right)^2, \\
        &= \frac{1}{2d} \sum_{i=1}^{d} \left( \scal{\bm{x} - \xt{0} }{e_i} - \frac{1}{\sqrt{d}}\right)^2, \qquad \text{using construction of } \xtp{*}. 
    \end{align*}
    \item For $n \geq 1$, $b_n = \scal{a_n}{\xt{*}}$ where $a_n \sim \rho$. Hence Assumption~\ref{asmp:noise-covariance} holds with $\sigma = 0$. From the construction it can be seen that  $\ki = d$.
\end{itemize}
Consider any stochastic first order algorithm $\mathcal{S}$ for $t$ iterations. Lets say $a_1 = e_{i_1}, \cdots, a_t = e_{i_t} $ be the inputs from the stream till time $t$.  From Definition~\ref{def:sfo}, the estimator $\xt{t}$ after $t$ iterations 
    \begin{align*}
    \xt{t} \in \xt{0} + \mathit{span}\left\lbrace  \nabla_{0} f(\xt{0}), \nabla_{1} f(\xt{1}), \cdots, \nabla_{k-1} f(\xt{k-1})   \right\rbrace. 
    \end{align*}
    Note that for above defined noiseless linear regression the stochastic gradient at time $k$ is $\nabla_{k} f(\xt{k}) = e_{i_k} \scal{e_{i_k}}{\xt{k} - \xtp{*}} $. Using the fact that it is always along the direction of $ e_{i_k} $. 
    \begin{align*}
    e \defeq \xt{d/2} - \xt{0} \in \mathit{span}\left\lbrace  e_{i_1}, \cdots, e_{i_{d/2}}   \right\rbrace. 
    \end{align*}
    Plugging this in the above expression for excess risk, we get, 
    \begin{align*}
        \E \risk{\xt{t}} &=  \frac{1}{2d} \sum_{i=1}^{d}  \E \left( \scal{e}{e_i} - \frac{1}{\sqrt{d}} \right)^2. 
    \end{align*}
    From the construction of $\rho^{\prime}$, $e$ is in the span of $d/2$ orthogonal features. Hence, the remaining $d/2$ directions contribute to the excess error. In technical terms, let $\mathcal{P}$ be the set $\left\lbrace  e_{i_1}, \ldots, e_{i_{d/2}}   \right\rbrace$. Note that $|\mathcal{P}| = d/2$. Then, 
    \begin{align*}
        \E \risk{\xt{t}} &=  \frac{1}{2d} \sum_{i=1}^{d}  \E \left( \scal{e}{e_i} - \frac{1}{\sqrt{d}} \right)^2 , \\
        &\geq \frac{1}{2d} \sum_{e \not\in \mathcal{P}} \frac{1}{d} = \frac{d - |\mathcal{P}|}{2d^2} = \frac{1}{4d} .
    \end{align*}
\end{proof}

\begin{lemma}
\label{lem:no-acceleration}
    For any initial point $\xt{0}$ and Hessian $ \cov^{\prime} $  with $\tr{\left(\cov^{\prime}\right)}=1$, there exists a distribution $\rho^{\prime}$ which prevent acceleration.
\end{lemma}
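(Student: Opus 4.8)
The plan is to reuse the argument of Lemma~\ref{lem:lower-bound}, but to rescale the one-hot basis distribution so that its second-moment matrix is exactly $\cov^{\prime}$. Write the eigendecomposition $\cov^{\prime}=\sum_{i=1}^{d}\lambda_i e_i e_i^{\top}$ with $\lambda_i>0$ and, by hypothesis, $\sum_{i=1}^{d}\lambda_i=\tr{\cov^{\prime}}=1$. I would define $\rho^{\prime}$ by drawing $a=\sqrt{d\lambda_i}\,e_i$ with probability $1/d$, take noiseless labels $b=\scal{a}{\xtp{*}}$, and set the optimum $\xtp{*}\defeq\xt{0}+\tfrac{1}{\sqrt d}\sum_{i=1}^{d}e_i$. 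Then $\Ex{a\otimes a}=\sum_{i=1}^{d}\tfrac1d(d\lambda_i)\,e_ie_i^{\top}=\cov^{\prime}$, so $\rho^{\prime}$ has the prescribed Hessian, and $\nor{\xtp{*}-\xt{0}}^2=1$; moreover Assumption~\ref{asmp:noise-covariance} holds with $\sigma=0$ and, since $\nor{a}_{(\cov^{\prime})^{-1}}^2=d\lambda_i/\lambda_i=d$ on every realisation, Assumption~\ref{asmp:stat-condition-number} holds with $\ki=d$ (and Assumption~\ref{asmp:fourth-moment} with $R^2=d\lambda_{\max}$).

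Next I would run the span argument exactly as in Lemma~\ref{lem:lower-bound}. Each observed feature $a_k$ is proportional to some $e_{i_k}$ with $i_k$ drawn uniformly and independently from $\{1,\dots,d\}$, so by Definition~\ref{def:sfo} every iterate of a stochastic first-order algorithm satisfies $\xt{t}-\xt{0}\in\mathit{span}\{e_i:i\in S_t\}$, where $S_t\defeq\{i_1,\dots,i_t\}$ and $|S_t|\leq t$. Hence $\scal{\xt{t}-\xt{0}}{e_i}=0$ and $\scal{\xt{t}-\xtp{*}}{e_i}=-1/\sqrt d$ for $i\notin S_t$, and since $\risk{x}-\risk{\xtp{*}}=\tfrac12\sum_{i=1}^{d}\lambda_i\scal{x-\xtp{*}}{e_i}^2$, this gives, deterministically in the stream, $\risk{\xt{t}}-\risk{\xtp{*}}\geq\tfrac{1}{2d}\sum_{i\notin S_t}\lambda_i$. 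Taking expectations over the stream and using $\Pr[i\notin S_t]=(1-1/d)^t$,
\begin{align*}
    \Ex{\risk{\xt{t}}}-\risk{\xtp{*}}\;\geq\;\frac{1}{2d}\sum_{i=1}^{d}\lambda_i(1-1/d)^t\;=\;\frac{(1-1/d)^t}{2d}.
\end{align*}
For $t=\lfloor d/2\rfloor$ and $d\geq2$ one has $(1-1/d)^t\geq(1-1/d)^{d/2}\geq\tfrac12$ (because $(1-1/d)^d\geq\tfrac14$), so $\excessRisk{\xt{\lfloor d/2\rfloor}}=\Omega(1/d)$.

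Finally I would translate this into the impossibility of acceleration: the smoothness constant of $\risk{}$ is $L=\lambda_{\max}(\cov^{\prime})\leq\tr{\cov^{\prime}}=1$, so any directly accelerated guarantee of the form $O(L\nor{\xtp{*}-\xt{0}}^2/t^2)$ would force $\excessRisk{\xt{\lfloor d/2\rfloor}}=O(1/d^2)$, incompatible with the $\Omega(1/d)$ bound as $d\to\infty$; hence on $\rho^{\prime}$ no stochastic first-order algorithm can accelerate, which is the claim. I expect the one delicate point — and the reason for the $\sqrt{d\lambda_i}$ rescaling rather than the naive choice $a=e_i$ with probability $\lambda_i$ — to be this: under the naive choice a peaked spectrum would sample the small-eigenvalue directions too rarely while those directions contribute negligibly to the risk, so no lower bound would survive; sampling every eigendirection with the common probability $1/d$ is precisely what makes the argument independent of the shape of $\cov^{\prime}$.
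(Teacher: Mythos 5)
Your proof is correct, but it takes a genuinely different route from the paper's, and your intuition about why the paper ``must'' use your rescaled construction is mistaken.

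The paper's actual proof does use what you call the ``naive choice'': $a = e_i$ with probability $p_i = \lambda_i$. With that distribution the span argument gives $\Ex{\risk{\xt{t}}}\geq \tfrac12\sum_i p_i (1-p_i)^t\scal{\xt{0}-\xtp{*}}{e_i}^2$, and the paper then observes that the right-hand side is exactly the error of \emph{exact} gradient descent with step size $1$ after $t/2$ iterations, concluding that no stochastic first-order method can improve on GD and hence cannot accelerate. So the naive choice does not break; it simply produces a spectrum-dependent lower bound that is argued to match a non-accelerated comparator rather than a universal $\Omega(1/d)$ bound. Your assessment that ``no lower bound would survive'' for a peaked spectrum is therefore not accurate as a description of the paper's strategy — it is only accurate if the target is a spectrum-free $\Omega(1/d)$ bound, which is what \emph{you} chose to prove.

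Your route — rescale to $a=\sqrt{d\lambda_i}\,e_i$ with uniform probability $1/d$ so that every eigendirection is equally likely — is a legitimate and arguably cleaner alternative. It buys you a uniform $\Omega(1/d)$ lower bound at $t=\lfloor d/2\rfloor$ regardless of the shape of $\cov^\prime$, together with a clean contradiction against any $O(L\nor{\xtp{*}-\xt{0}}^2/t^2)$ guarantee (your constant checks are correct: $\nor{a}^2_{(\cov^\prime)^{-1}}=d$, so $\ki=d$, and $(1-1/d)^{d/2}\geq 1/2$ for $d\geq2$). The trade-off is that your $R^2=d\lambda_{\max}$ can be much larger than $\tr{\cov^\prime}=1$, whereas the paper's distribution keeps $R^2=1$; this is harmless here because the lemma is about a lower bound, but it is worth noting since the lower bound is meant to be compared against Theorem~\ref{thm:main}, which scales with $R^2$. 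The paper's argument, by contrast, is somewhat informal at the last step (asserting that matching GD's rate rules out acceleration without explicitly pinning down GD's rate on the constructed instance), so your version is the more self-contained of the two.
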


\begin{proof} Let $$ \cov^{\prime} = \sum_{i=1}^{d} p_i e_i e_i^{\top}. $$ The excess risk on any noise less problem with $\xtp{*}$ as optimum and $\cov^{\prime}$ as Hessian can be written as,
  \begin{align*}
        \risk{\bm{x}} &= \frac{1}{2} \left( \bm{x} - \xtp{*} \right)^\top \cov^{\prime} \left( \bm{x} - \xtp{*} \right), \\
        &= \frac{1}{2} \sum_{i=1}^{d} p_i \left( \scal{\bm{x} - \xtp{*} }{e_i} \right)^2 =   \frac{1}{2d} \sum_{i=1}^{d} \left( \scal{\bm{x} - \xt{0} }{e_i} + \scal{ \xt{0} - \xtp{*} }{e_i} \right)^2, \\
        &=  \frac{1}{2} \sum_{i=1}^{d} p_i \left( \scal{\bm{x} - \xt{0} }{e_i} + \scal{ \xt{0} - \xtp{*} }{e_i} \right)^2. \\
    \end{align*}
Let $\rho^{\prime}$ be the one hot basis distribution where $e_i$ is sampled with probability $ p_i $. Consider any stochastic first order algorithm $\mathcal{S}$ for $t$ iterations. Lets say $e_{i_1}, \cdots, e_{i_t} $ be the inputs from the stream till time $t$.  From Definition~\ref{def:sfo}, the estimator $\xt{t}$ after $t$ iterations 
    \begin{align*}
    \xt{t} \in \xt{0} + \mathit{span}\left\lbrace  \nabla_{0} f(\xt{0}), \nabla_{1} f(\xt{1}), \cdots, \nabla_{k-1} f(\xt{k-1})   \right\rbrace. 
    \end{align*}
    Note that for noiseless regression the stochastic gradient at time $k$ is $\nabla_{k} f(\xt{k}) = e_{i_k} ( \scal{e_{i_k}}{\xt{k} - \xt{*}}  $. Using the fact that it is always along the direction of $ e_{i_k} $. 
    \begin{align*}
    e \defeq \xt{t} - \xt{0} \in \mathit{span}\left\lbrace  e_{i_1}, \cdots, e_{i_t}   \right\rbrace. 
    \end{align*}
    Plugging this in the above expression for excess risk, we get, 
      \begin{align*}
        \E \risk{\xt{t}} &=  \frac{1}{2} \sum_{i=1}^{d} p_i \E \left( \scal{e}{e_i} + \scal{ \xt{0} - \xtp{*} }{e_i} \right)^2. 
    \end{align*}
If none of the $e_{i_k}$'s , for $k \leq t$ are $e_i$ then  $\scal{e}{e_i} = 0$. This event occurs with a probability $\left( 1 - p_i\right)^{t}$. Hence, with probability $\left( 1 - p_i\right)^{t}$, $\scal{e}{e_i} = 0$. Taking this into consideration, 
 \begin{align*}
     \E \left( \scal{e}{e_i} + \scal{ \xt{0} - \xtp{*} }{e_i} \right)^2  \geq \left( 1 - p_i\right)^{t} \scal{ \xt{0} - \xtp{*} }{e_i}^2
 \end{align*}
 Hence,
 \begin{align*}
     \E \risk{\xt{t}} &\geq  \frac{1}{2} \sum_{i=1}^{d} p_i \left( 1 - p_i\right)^{t} \scal{ \xt{0} - \xtp{*} }{e_i}^2
 \end{align*}
 Noting that the right hand side corresponds to the performance of gradient descent with step size $1$ after $t/2$ iterations. In conclusion, performance of gradient descent is better than any stochastic algorithm. Hence, direct acceleration with this oracle defined by $\rho^{\prime}$ is not possible. 
 \end{proof}

 \paragraph{Space Complexity.} 
 \begin{figure}[tbh]
    \centering
    \begin{minipage}[c]{.5\linewidth}
    \includegraphics[width=\linewidth]{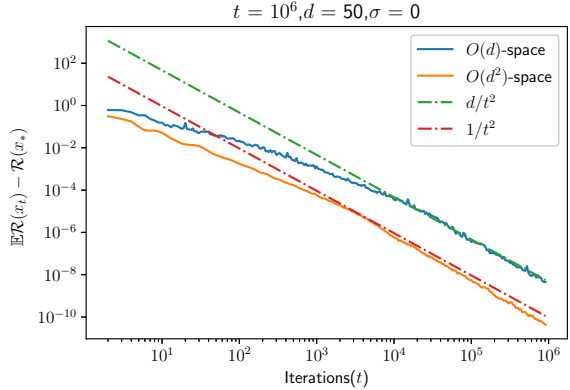}
       \end{minipage}
      \caption{Least-squares regression. Comparison of Alg.~\ref{alg:nesterov} with space constrains. Note that the version with $O(d^2)$-space decrease at rate $1/t^2$ where Alg.~\ref{alg:nesterov} with $O(d)$ decays at rate $d/t^2$. }
         \label{fig:space-vs-rate}
\end{figure}

In Figure~\ref{fig:space-vs-rate}, we demonstrate that the with additional space the speed of the decay can be improved. Note that the version with $O(d^2)$-space decrease at rate $1/t^2$ where Algorithm~\ref{alg:nesterov} with $O(d)$ decays at rate $d/t^2$. The set up for this experiment is same as the setup of the plot on Last Iterate described in Section~\ref{section:exp}. The $O(d)-curve$ corresponds to the Algorithm~\ref{alg:nesterov} with SGD oracle \eqref{eq:SGD-oracle} with step sizes $\alpha = \nicefrac{1}{3~d\tr{\cov}}, \beta = \nicefrac{1}{3\tr{\cov}}$ where $\cov$ is the covariance of Gaussian data. The $O(d^2)-curve$ corresponds to the Algorithm~\ref{alg:nesterov} with running average SGD oracle in Section~\ref{sec:lower-bound} with step sizes $\alpha = \nicefrac{1}{3\tr{\cov}}, \beta = \nicefrac{1}{3\tr{\cov}}$.

\section{Bias and Variance} \label{app:bias-variance-recursions}

In this section, we investigate the recursions of expected covariance of the bias and variance process. 
\subsection{Our technique for Bias} \label{app:our-technique-for-bias}
The work of \citet{jain2018accelerating} introduces a novel Lyapunov function $ c_1 \Ex{\nor{\yt{t} - \xt{*}}^{2}} + c_2 \Ex{\nor{\zt{t} - \xt{*}}^2_{\cov^{-1}}} $, for some constants $c_1,c_2$ for the analysis of bias error to show accelerated SGD rates for strongly convex least squares. Using similar Lyapunov function on the non-strongly convex version of Nesterov acceleration algorithm, in \citet{even2021continuized},  a rate of convergence for $\Ex{\nor{\xt{t} - \xt{*}}^2}$ ,i.e, $$ \Ex{\nor{\xt{t} - \xt{*}}^2}  \lesssim \frac{\nor{\xt{0} - \xt{*}}^2_{\cov^{-1}}}{t^2}. $$ is shown. Even with this result, it is still unclear how to relate excess error $\Ex{\nor{\xt{t} - \xt{*}}^2_{\cov}}$ and distance of initialization $\nor{\xt{0} - \xt{*}}^2$. Note that for non-strongly convex functions $ \nor{\xt{0} - \xt{*}}^2_{\cov^{-1}} $ can be arbitrarily large in comparison to $\nor{\xt{0} - \xt{*}}^2$. As there is an absence of direct Lyapunov techniques for bias error, it is needed to introduce a new method.

In recent times, many works \citep{zou2021benign,varre2021iterate} have studied the sharp characterization of bias process in SGD to understand the performance of SGD for over-parameterized least squares. In \cite{zou2021benign}, it is shown that  sum of covariance i.e. $\sum_{i=0}^{t} \Ex{ \thetb{i} \otimes \thetb{i} }$ of SGD at the limit $t \to \infty$ is used to give sharp bounds for bias excess risk. Even this approach cannot be used to in our case for two reasons (a) the limit in the case of our accelerated methods still depends on $ \nor{\xt{0} - \xt{*}}^2_{\cov^{-1}} $ which can be arbitrarily large (b) this requires more restricting uniformly bounded kurtosis assumption. In our approach we give sharp estimates for finite sum of covariance and relate them to the sum of covariance for the  Algorithm~\ref{alg:nesterov} with exact gradients (see Lemma~\ref{lem:bias-opT-opTil}). This method gives us the  bounds on the excess risk of bias part. Also, note that our approach does not require the assumption of bounded uniform kurtosis and works with standard fourth moment Assumption~\ref{asmp:fourth-moment}. 

\subsection{Bias}\label{app:bias}
Recalling the recursion \eqref{eq:rec-bias}, we have \eqal{ \thetb{t+1} =  \J_{t} ~ \thetb{t} .  } For all $t \geq 0$, we use the following notation for the sake of brevity  $$  \C{t} \defeq \E\left[ \thetb{t} \otimes \thetb {t} \right]. $$ 

\begin{lemma} \label{lem:rec:last:iterate} For $ t \geq 0 $, the covariance of the bias iterates is determined by the recursion,
    \begin{align*}
        \C{t+1} &= \opTil^{~\left(t+1\right)} \mycirc \C{0} + \sum_{k = 0}^{t} \opTil^{k} \mycirc \opM \mycirc \C{t-k}
    \end{align*}
    
\end{lemma}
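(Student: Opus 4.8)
The plan is to unroll the one-step covariance recursion established in Lemma~\ref{lem:covariance:recursion} using the operator identity $\opT = \opTil + \opM$ from Lemma~\ref{lem:operator}, and conclude by a short induction. First, recall from Lemma~\ref{lem:covariance:recursion} that the bias covariances obey $\C{t+1} = \opT \mycirc \C{t}$ for every $t \geq 0$, and from Lemma~\ref{lem:operator} that $\opT = \opTil + \opM$. Since $\opTil$ and $\opM$ are linear operators on symmetric matrices (immediate from Definition~\ref{def:operator}), this yields the basic one-step decomposition
\[
    \C{t+1} = \opTil \mycirc \C{t} + \opM \mycirc \C{t}.
\]

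Second, I would unroll this identity, peeling off one factor of $\opTil$ at a time. Concretely, I would prove by induction on $j \in \{0, 1, \dots, t+1\}$ the partial-unrolling formula
\[
    \C{t+1} = \opTil^{\,j} \mycirc \C{t+1-j} + \sum_{k=0}^{j-1} \opTil^{\,k} \mycirc \opM \mycirc \C{t-k}.
\]
The case $j = 0$ is trivial. For the step $j \to j+1$, apply the one-step decomposition to $\C{t+1-j}$, namely $\C{t+1-j} = \opTil \mycirc \C{t-j} + \opM \mycirc \C{t-j}$, and then push $\opTil^{\,j}$ through by linearity to get $\opTil^{\,j} \mycirc \C{t+1-j} = \opTil^{\,j+1} \mycirc \C{t-j} + \opTil^{\,j} \mycirc \opM \mycirc \C{t-j}$; the second term extends the sum to the index $k = j$, which is exactly the formula at $j+1$. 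Taking $j = t+1$ and using $\C{t+1-(t+1)} = \C{0}$ gives the statement of the lemma.

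There is no real obstacle here: this is essentially a bookkeeping lemma, a Duhamel/variation-of-constants unrolling of a linear recursion. The only points requiring any care are (i) setting up the induction on the number of unrollings $j$ rather than directly on $t$, so that the telescoping stays transparent, and (ii) invoking linearity of $\opTil$ and $\opM$ to commute $\opTil^{\,j}$ with the one-step split — both routine given the definitions already in place. As a sanity check, the base case $t=0$ of the final formula reads $\C{1} = \opTil \mycirc \C{0} + \opM \mycirc \C{0} = \opT \mycirc \C{0}$, which matches Lemma~\ref{lem:covariance:recursion}.
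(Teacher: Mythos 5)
Your proof is correct and follows essentially the same path as the paper: invoke $\C{t+1} = \opT \mycirc \C{t}$ from Lemma~\ref{lem:covariance:recursion}, split $\opT = \opTil + \opM$ via Lemma~\ref{lem:operator}, and unroll. The only difference is that you make the recursive unrolling rigorous with an explicit induction on the number of peeled-off $\opTil$ factors, whereas the paper simply writes ``Expanding this recursively we get\dots''; this is a welcome tightening but not a different argument.
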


\begin{proof}
From the recursion on the bias covariance of Lemma~\ref{lem:covariance:recursion}, we have \begin{align*}
        \Ex{ \thetb{t+1} \otimes \thetb{t+1} } &= \opT \mycirc \Ex{ \thetb{t} \otimes \thetb{t} }, \\
        \C{t+1} &= \opT \mycirc \C{t} = \left( \opTil + \opM \right) \mycirc \C{t}, \quad \text{from Lemma~\ref{lem:operator}}, \\
        &= \opTil \mycirc \C{t} + \opM \mycirc \C{t}, \\
        &= \opTil \mycirc \left( \opTil \mycirc \C{t-1} + \opM \mycirc \C{t-1} \right) + \opM \mycirc \C{t} = \opTil^2 \mycirc \C{t-1} + \opTil \mycirc \opM \mycirc \C{t-1} + \opM \mycirc \C{t}.
    \end{align*}
Expanding this recursively we get the following expression
\begin{align*}
    \C{t+1} &= \opTil^{~\left(t+1\right)} \mycirc \C{0} + \sum_{k = 0}^{t} \opTil^{k} \mycirc \opM \mycirc \C{t-k}.
\end{align*}
\end{proof}

\begin{lemma} \label{lem:bias-opT-opTil}
  For $ T \geq 0$, 
    \begin{align}
        \left( \Id - \left( \Id - \opTil \right)^{-1} \mycirc \opM \right) \mycirc \sum_{t=0}^{T} \C{t} &\preccurlyeq \left( \sum_{t=0}^{T} \opTil^{~\left(t\right)} \right) \mycirc \C{0}.
    \end{align}
\end{lemma}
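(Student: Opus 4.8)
# Proof Proposal for Lemma~\ref{lem:bias-opT-opTil}

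The plan is to sum the exact recursion for $\C{t+1}$ from Lemma~\ref{lem:rec:last:iterate} over $t$, then recognize the resulting double sum as a convolution that can be rewritten in terms of $\sum_{t=0}^{T}\C{t}$ itself. From Lemma~\ref{lem:rec:last:iterate} we have, for each $t\geq 0$,
\begin{align*}
\C{t+1} = \opTil^{(t+1)}\mycirc\C{0} + \sum_{k=0}^{t}\opTil^{k}\mycirc\opM\mycirc\C{t-k}.
\end{align*}
First I would sum both sides over $t=0,\dots,T-1$ and add $\C{0}$ to both sides so that the left side becomes $\sum_{t=0}^{T}\C{t}$. The first term on the right then contributes $\sum_{t=0}^{T}\opTil^{(t)}\mycirc\C{0}$, which is exactly the right-hand side of the claimed inequality. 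So it remains to handle the double sum $\sum_{t=0}^{T-1}\sum_{k=0}^{t}\opTil^{k}\mycirc\opM\mycirc\C{t-k}$.

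The key step is to swap the order of summation in this convolution. Writing $j=t-k$, the double sum becomes $\sum_{j=0}^{T-1}\big(\sum_{k=0}^{T-1-j}\opTil^{k}\big)\mycirc\opM\mycirc\C{j}$. Since all operators involved ($\opTil$, $\opM$, and their compositions) are positive (Lemma~\ref{lem:operator}), and $\C{j}$ is PSD, I can upper bound the partial geometric sum $\sum_{k=0}^{T-1-j}\opTil^{k}$ by the full Neumann series $(\Id-\opTil)^{-1} = \sum_{k\geq0}\opTil^{k}$, which exists under the step-size conditions of Theorem~\ref{thm:main} (specifically $\alpha,\beta<1/L$, giving Lemma~\ref{lemma:alnost_eigen}(a)). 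This yields
\begin{align*}
\sum_{t=0}^{T-1}\sum_{k=0}^{t}\opTil^{k}\mycirc\opM\mycirc\C{t-k} \preccurlyeq \sum_{j=0}^{T-1}(\Id-\opTil)^{-1}\mycirc\opM\mycirc\C{j} \preccurlyeq (\Id-\opTil)^{-1}\mycirc\opM\mycirc\sum_{t=0}^{T}\C{t},
\end{align*}
where the last step just adds the nonnegative term $(\Id-\opTil)^{-1}\mycirc\opM\mycirc\C{T}$. Rearranging $\sum_{t=0}^{T}\C{t} \preccurlyeq \sum_{t=0}^{T}\opTil^{(t)}\mycirc\C{0} + (\Id-\opTil)^{-1}\mycirc\opM\mycirc\sum_{t=0}^{T}\C{t}$ gives the lemma.

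The main obstacle is making the operator manipulations rigorous: one must be careful that $(\Id-\opTil)^{-1}$ is a well-defined positive operator (so that bounding partial sums by the full series preserves the PSD order), that $\opM$ is positive so it maps the inequality $\sum\opTil^{k}\mycirc\opM\mycirc\C{j}\preccurlyeq(\Id-\opTil)^{-1}\mycirc\opM\mycirc\C{j}$ correctly, and that composition of positive operators with a fixed PSD argument respects Loewner order. These are exactly the properties collected in Lemma~\ref{lem:operator} and Lemma~\ref{lemma:alnost_eigen}(a), so the argument goes through; the only real care needed is bookkeeping the index shift in the convolution and confirming every discarded term is PSD.
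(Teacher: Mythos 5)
Your proposal is correct and follows essentially the same route as the paper: sum the recursion from Lemma~\ref{lem:rec:last:iterate}, exchange the order of summation in the convolution, bound the truncated Neumann series by the full $(\Id-\opTil)^{-1}$ using positivity of the operators and PSD-ness of $\opM\mycirc\C{j}$, and discard a PSD remainder before rearranging. The only cosmetic difference is that you sum $\C{t+1}$ for $t=0,\dots,T-1$ and add $\C{0}$ rather than summing $\C{t}$ for $t=0,\dots,T$ directly, which is an equivalent bookkeeping choice.
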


\begin{proof} 
From Lemma~\ref{lem:rec:last:iterate}, 
\begin{align*}
    \C{t} &= \opTil^{~\left(t\right)} \mycirc \C{0} + \sum_{k = 0}^{t-1} \opTil^{t-k-1} \mycirc \opM \mycirc \C{k}.
\end{align*}
Consider the following summation, 
\begin{align*}
    \sum_{t=0}^{T} \C{t} &= \sum_{t} \left[ \opTil^{~\left(t\right)} \mycirc \C{0} + \sum_{k = 0}^{t-1} \opTil^{t-1-k} \mycirc \opM \mycirc \C{k} \right], \\
     &= \left( \sum_{t=0}^{T} \opTil^{~\left(t\right)} \right)  \mycirc \C{0} +   \sum_{t=0}^{T} \sum_{k = 0}^{t-1} \opTil^{t-1-k} \mycirc \opM \mycirc \C{k}.
\end{align*}
Exchanging the summations for the second part,
\begin{align*}
\sum_{t=0}^{T} \C{t} &= \left( \sum_{t=0}^{T} \opTil^{~\left(t\right)} \right)  \mycirc \C{0} + \sum_{k=0}^{T-1} \sum_{t=k+1}^{T} \opTil^{t-1-k} \mycirc \opM \mycirc \C{k},   \\
&=  \left( \sum_{t=0}^{T} \opTil^{~\left(t\right)} \right)  \mycirc \C{0} + \sum_{k=0}^{T-1} \left[ \sum_{t=0}^{T-k-1} \opTil^{t}  \right] \mycirc \opM \mycirc \C{k}.
\end{align*}
Note that $\C{k}$ is PSD, for $k \geq 0$ and $\opM$ is positive. Hence, $\opM \mycirc \C{k}$ is PSD.  Since $\opTil \succcurlyeq 0$ (to be precise $ \opTil \mycirc  \opM \mycirc \C{k} \succcurlyeq 0 $, but we drop this for simplicity of writing), we can say the following things, 
\begin{align*}
    \forall t \geq 0, ~~ \opTil^{~t} &\succcurlyeq 0, \\
    \textrm{for any } t{'} \geq 0, ~~ \sum_{t>t{'}} \opTil^{~t} &\succcurlyeq 0, \\
    \textrm{Hence, for any } t{'} \geq 0, ~~ \sum_{t \geq 0} \opTil^{~t} &\succcurlyeq \sum_{t=0}^{t{'}} \opTil^{~t}.
\end{align*}
\begin{align*}
    \sum_{t=0}^{T} \C{t} &\preccurlyeq  \left( \sum_{t=0}^{T} \opTil^{~\left(t\right)} \right)  \mycirc \C{0} + \sum_{k=0}^{T-1} \left[ \sum_{t \geq 0} \opTil^{~t} \right] \mycirc \opM \mycirc \C{k}.
\end{align*}
Here we use the fact that $$ \sum_{t \geq 0} \opTil^{~t} = \left( \Id - \opTil \right)^{-1}. \ $$
\begin{align*}
    \sum_{t=0}^{T} \C{t} &\preccurlyeq  \left( \sum_{t=0}^{T} \opTil^{~\left(t\right)} \right)  \mycirc \C{0} + \sum_{k=0}^{T-1} \left( \Id - \opTil \right)^{-1} \mycirc \opM \mycirc \C{k}, \\
    &\preccurlyeq \left( \sum_{t=0}^{T} \opTil^{~\left(t\right)} \right)  \mycirc \C{0} +  \left( \Id - \opTil \right)^{-1} \mycirc \opM \mycirc \sum_{k=0}^{T-1} \C{k}.
\end{align*}
Using the fact that $ \C{T} \succcurlyeq 0 $, we have $$ \sum_{k=0}^{T-1} \C{k} \leq \sum_{k=0}^{T} \C{k}   $$
Hence,
\begin{align*}
    \sum_{t=0}^{T} \C{t} &\preccurlyeq \left( \sum_{t=0}^{T} \opTil^{~\left(t\right)} \right)  \mycirc \C{0} +  \left( \Id - \opTil \right)^{-1} \mycirc \opM \mycirc \sum_{k=0}^{T} \C{k}. \\
\end{align*}
From this we can prove the lemma 
\begin{align*}
    \left( \Id - \left( \Id - \opTil \right)^{-1} \mycirc \opM \right) \mycirc \sum_{t=0}^{T} \C{t} &\preccurlyeq \left( \sum_{t=0}^{T} \opTil^{~\left(t\right)} \right) \mycirc \C{0}. \\
\end{align*}
\end{proof}

\begin{lemma} \label{lem:bias_sum_covariance}
    With the stepsizes satisfying $( \alpha + 2 \beta ) R^2 \leq 1 ,  \alpha \leq  \frac{\beta}{2 \ki}$ the sum of covariance can be bounded by  
    \begin{align*}
        \sum_{t=0}^{T} \scal{\HZero}{\Ex{ \thetb{t} \otimes \thetb{t} }}  \leq \min\left\lbrace \frac{3(T+1)}{\alpha} , \frac{12(T+1)(T+2)}{\beta} \right\rbrace \nor{\xt{0} - \xt{*}}^2.
    \end{align*}
\end{lemma}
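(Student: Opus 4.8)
The plan is to pair the operator inequality of Lemma~\ref{lem:bias-opT-opTil} with the matrix $\coff=\HZero$, use the almost--eigenvector bound of Lemma~\ref{lemma:alnost_eigen}(c) to absorb the term carrying the multiplicative--noise operator $\opM$, and then recognize what is left as a weighted sum of excess risks of the \emph{exact--gradient} Nesterov iterates, which is controlled by the deterministic Lyapunov analysis (Lemma~\ref{lem:nesterov-potential-H}). Throughout, write $\C{t}\defeq\E[\thetb{t}\otimes\thetb{t}]\succcurlyeq0$.

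First I would invoke Lemma~\ref{lem:bias-opT-opTil}, which under the step--size conditions gives
\begin{align*}
\Bigl(\Id-(\Id-\opTil)^{-1}\mycirc\opM\Bigr)\mycirc\sum_{t=0}^{T}\C{t}\ \preccurlyeq\ \Bigl(\sum_{t=0}^{T}\opTil^{t}\Bigr)\mycirc\C{0}.
\end{align*}
Taking the trace inner product with the PSD matrix $\coff=\HZero$ and using the defining adjoint property $\scal{\mathcal O\mycirc A}{B}=\scal{A}{\mathcal O^{\top}\mycirc B}$ together with $\bigl((\Id-\opTil)^{-1}\mycirc\opM\bigr)^{\top}=\opM^{\top}\mycirc(\Id-\opTil^{\top})^{-1}$, the $\opM$--term becomes $\scal{\opM^{\top}\mycirc(\Id-\opTil^{\top})^{-1}\mycirc\coff}{\sum_t\C{t}}$; since $\sum_t\C{t}\succcurlyeq0$ and Lemma~\ref{lemma:alnost_eigen}(c) gives $\opM^{\top}\mycirc(\Id-\opTil^{\top})^{-1}\mycirc\coff\preccurlyeq\tfrac23\coff$, this term is at most $\tfrac23\scal{\HZero}{\sum_t\C{t}}$. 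Rearranging,
\begin{align*}
\Bigl\langle\HZero,\ \sum_{t=0}^{T}\C{t}\Bigr\rangle\ \le\ 3\sum_{t=0}^{T}\bigl\langle\HZero,\ \opTil^{t}\mycirc\C{0}\bigr\rangle.
\end{align*}

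It then remains to bound the right--hand side. Since $\opTil\mycirc\Theta=\A\Theta\A^{\top}$ and $\thetb{0}=\thet{0}=\matvz{0}$ with $\vt{0}=0$, $\wt{0}=\xt{0}-\xt{*}$, we have $\opTil^{t}\mycirc\C{0}=(\A^{t}\thet{0})\otimes(\A^{t}\thet{0})$, where $\A^{t}\thet{0}$ is precisely the rescaled iterate of Algorithm~\ref{alg:nesterov} run with \emph{exact} gradients from $\xt{0}$; by the same identity as in \eqref{eq:excessrisk_in_thet} this yields $\scal{\HZero}{\opTil^{t}\mycirc\C{0}}=2(t+1)^{2}\bigl(\risk{\xt{t}^{\mathrm{det}}}-\risk{\xt{*}}\bigr)$. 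The deterministic Nesterov potential analysis for the two--step--size scheme \eqref{eq:yt}--\eqref{eq:xt} (Lemma~\ref{lem:nesterov-potential-H}) supplies two estimates: the accelerated one $(t+1)^{2}\bigl(\risk{\xt{t}^{\mathrm{det}}}-\risk{\xt{*}}\bigr)\lesssim\nor{\xt{0}-\xt{*}}^{2}/\alpha$ and the gradient--descent one $\risk{\xt{t}^{\mathrm{det}}}-\risk{\xt{*}}\lesssim\nor{\xt{0}-\xt{*}}^{2}/(\beta(t+1))$. Summing the first over $t=0,\dots,T$ produces the $O((T+1)/\alpha)$ bound, while summing $2(t+1)^{2}\cdot\frac{1}{\beta(t+1)}=\frac{2(t+1)}{\beta}$ produces the $O((T+1)(T+2)/\beta)$ bound; multiplying by the factor $3$ and tracking the explicit constants of Lemma~\ref{lem:nesterov-potential-H} gives the stated $\min\{3(T+1)/\alpha,\ 12(T+1)(T+2)/\beta\}\nor{\xt{0}-\xt{*}}^{2}$.

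The main obstacle is twofold. The delicate part of the operator manipulation is tracking adjoints so that the almost--eigenvector inequality of Lemma~\ref{lemma:alnost_eigen}(c) transfers correctly under the pairing (this is exactly where PSD--ness of $\sum_t\C{t}$ and the precise definitions in Remark~\ref{eq:transpose_operator} are used, and where the conditions $(\alpha+2\beta)R^{2}\le1$, $\alpha\le\beta/(2\ki)$ enter). The other, genuinely analytic, difficulty is establishing the two--regime deterministic bound on $\sum_{t=0}^{T}(t+1)^{2}(\risk{\xt{t}^{\mathrm{det}}}-\risk{\xt{*}})$ with explicit constants, which requires selecting the right Lyapunov function for the variant with two distinct learning rates; this is the content of Lemma~\ref{lem:nesterov-potential-H}, and everything else above is bookkeeping.
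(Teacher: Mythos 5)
Your proposal is correct and follows essentially the same route as the paper's proof: invoke Lemma~\ref{lem:bias-opT-opTil}, pair with $\HZero$ via the adjoints of Remark~\ref{eq:transpose_operator}, absorb the $\opM$--term with the almost--eigenvector bound of Lemma~\ref{lemma:alnost_eigen}(c) to gain the factor $3$, and then sum the exact--gradient Nesterov bound from Lemma~\ref{lem:nesterov-potential-H} over $t=0,\dots,T$. Your re-interpretation of $\scal{\HZero}{\opTil^{t}\mycirc\C{0}}$ as $2(t+1)^2$ times the deterministic excess risk is a valid (and clarifying) restatement of the same identity the paper uses implicitly.
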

\begin{proof}
From Lemma~\ref{lem:bias-opT-opTil},
    \begin{align*}
        \left( \Id - \left( \Id - \opTil \right)^{-1} \mycirc \opM \right) \mycirc \sum_{t=0}^{T} \C{t} &\preccurlyeq \left( \sum_{t=0}^{T} \opTil^{~\left(t\right)} \right) \mycirc \C{0}, \\
        \scal{\HZero}{ \left( \Id - \left( \Id - \opTil \right)^{-1} \mycirc \opM \right) \mycirc \sum_{t=0}^{T} \C{t}} &\leq \scal{\HZero}{\left( \sum_{t=0}^{T} \opTil^{~\left(t\right)} \right) \mycirc \C{0}}.
    \end{align*}
Using the definition of transpose of operators in Remark~\ref{eq:transpose_operator}, we get
\begin{align}
    \scal{\HZero}{ \left( \Id - \left( \Id - \opTil \right)^{-1} \mycirc \opM \right) \mycirc \sum_{t=0}^{T} \C{t}} = \scal{ \left(\Id -   \opM^{\top} \mycirc  \left( \Id - \opTil^{\top} \right)^{-1} \right) \mycirc \coff}{\sum_{t=0}^{T} \C{t}}
\end{align}
As the condition on the stepsize is satisfied, we can use the fact that $\coff$ is almost eigen vector of $ \opM^{\top} \mycirc  \left( \Id - \opTil^{\top}  \right)^{-1}  $ from Lemmas~\ref{lemma:alnost_eigen_mini_batch},~\ref{lem:opM:opTil:transpose},
\begin{align*}
    \opM^{\top} \mycirc  \left( \Id - \opTil^{\top} \right)^{-1} \mycirc \HZero &\preccurlyeq \frac{2}{3} \HZero \\
    \left( \Id - \opM^{\top} \mycirc  \left( \Id - \opTil^{\top} \right)^{-1}  \right)\mycirc \HZero &\succcurlyeq \frac{1}{3} \HZero.
\end{align*}
Combining them we get,
\begin{align*}
    \frac{1}{3} \scal{\HZero}{\sum_{t=0}^{T} \C{t}}  &\leq  \scal{\HZero}{\left( \sum_{t=0}^{T} \opTil^{~\left(t\right)} \right) \mycirc \C{0}}, \\
    \sum_{t=0}^{T} \scal{\HZero}{ \C{t}}  &\leq 3  \sum_{t=0}^{T} \scal{\HZero}{\left( \opTil^{~\left(t\right)} \right) \mycirc \C{0}}. 
\end{align*} 
Note that $\thetb{0} = \thet{0}$ gives $\C{0} = \thet{0} \otimes \thet{0} $ . From Lemma~\ref{lem:nesterov-potential-H}, for $ 0 \leqslant t \leqslant T$,  
    \begin{align*}
        \scal{\HZero}{\opTil^{~\left(t\right)} \mycirc \ \left[ \thet{0} \otimes \thet{0} \right] } \leq \min\left\lbrace \frac{1}{\alpha} , \frac{8 (t+1)}{\beta} \right\rbrace \nor{\xt{0} - \xt{*}}^2.
    \end{align*}
    Summing this for $0 \leq t \leq T $ proves the lemma.
\end{proof}

\subsection{Bias Last Iterate}\label{sec:biaslastiterate}

\begin{lemma} [Final Iterate Risk] \label{lem:final:iterate:app}
Under Assumptions \ref{asmp:uniform-kurtosis} and the step-sizes satisfying $\alpha\leq \beta\leq 1/L $. For $T \geq 0$, the last iterate excess error can be determined by the following discrete Volterra integral
    \begin{align*}
    \ft{\thetb{T}} &\leqslant \min\left\lbrace \frac{1}{\alpha} , \frac{8 (T+1)}{\beta} \right\rbrace \nor{\xt{0} - \xt{*}}^2 + \sum_{t=0}^{T-1}  \gt{\cov}{t-k-1} \ft{\thetb{t}}, 
    \end{align*}
where $\ft{\thetb{t}} \defeq \scal{\HZero}{\Ex{\thetb{t} \otimes \thetb{t}}}$ and the  kernel
\begin{align} \label{eq:kern}
            \gt{\cov}{t} \defeq  \kappa \scal{\coff}{ \opTil^{t} \mycirc \noise}.
\end{align}
\end{lemma}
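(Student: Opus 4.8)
The plan is to unroll the exact covariance recursion for the bias iterates and then separate the contribution of the initial condition from the contribution of the multiplicative noise injected by the stochastic oracle. Write $\C{t}\defeq\Ex{\thetb{t}\otimes\thetb{t}}$, so that $\ft{\thetb{t}}=\scal{\HZero}{\C{t}}$. I would start from the expansion already proved in Lemma~\ref{lem:rec:last:iterate},
\[
  \C{T} \;=\; \opTil^{T}\mycirc\C{0} \;+\; \sum_{k=0}^{T-1} \opTil^{k}\mycirc\opM\mycirc\C{T-1-k},
\]
and pair it with $\HZero$ using linearity to get $\ft{\thetb{T}} = \scal{\HZero}{\opTil^{T}\mycirc\C{0}} + \sum_{k=0}^{T-1}\scal{\HZero}{\opTil^{k}\mycirc\opM\mycirc\C{T-1-k}}$. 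Since $\thetb{0}=\thet{0}$, the first term equals $\scal{\HZero}{\opTil^{T}\mycirc(\thet{0}\otimes\thet{0})}$, which I would bound by the Nesterov potential estimate of Lemma~\ref{lem:nesterov-potential-H} (valid under $\alpha\le\beta\le1/L$) to produce the leading term $\min\{1/\alpha,\,8(T+1)/\beta\}\nor{\xt{0}-\xt{*}}^2$.

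The crux is the noise term, and here I would establish the operator inequality $\opM\mycirc\C{s}\preccurlyeq\kurt\,\ft{\thetb{s}}\,\noise$ (this is essentially the content of Lemma~\ref{lem:kurtosis-ut}). The computation rests on the rank-one coupling structure $\J-\A = \begin{bmatrix}\beta\I\\\alpha\I\end{bmatrix}(\cov-aa^\top)\begin{bmatrix}\I&\I\end{bmatrix}$: for any PSD $\Theta$ with block-sum $S\defeq\Theta_{11}+\Theta_{12}+\Theta_{21}+\Theta_{22}$ one gets $(\J-\A)\Theta(\J-\A)^\top = \begin{bmatrix}\beta\I\\\alpha\I\end{bmatrix}(\cov-aa^\top)S(\cov-aa^\top)\begin{bmatrix}\beta\I&\alpha\I\end{bmatrix}$, and taking expectation over the fresh sample gives $\Ex{(\cov-aa^\top)S(\cov-aa^\top)} = \Ex{\scal{a}{Sa}\,a\otimes a}-\cov S\cov\preccurlyeq\kurt\tr{(S\cov)}\cov$ by Assumption~\ref{asmp:uniform-kurtosis} (discarding $-\cov S\cov\preccurlyeq0$), which assembles back into $\kurt\tr{(S\cov)}\noise$. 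Specializing to $\Theta=\C{s}$ and using $\ut{s}=\vt{s}+\wt{s}$ from \eqref{eq:xt:asgd}, the block-sum is exactly $S=\Ex{\ut{s}\otimes\ut{s}}$, hence $\tr{(S\cov)}=\Ex{\scal{\ut{s}}{\cov\ut{s}}}=\scal{\HZero}{\C{s}}=\ft{\thetb{s}}$; this identification is what makes the recursion close.

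Finally, since $\opTil^{k}$ is a positive operator and $\HZero$ is PSD, $X\preccurlyeq Y$ implies $\scal{\HZero}{\opTil^{k}\mycirc X}\le\scal{\HZero}{\opTil^{k}\mycirc Y}$; applying this to $\opM\mycirc\C{T-1-k}\preccurlyeq\kurt\ft{\thetb{T-1-k}}\noise$ yields $\scal{\HZero}{\opTil^{k}\mycirc\opM\mycirc\C{T-1-k}}\le\kurt\ft{\thetb{T-1-k}}\scal{\HZero}{\opTil^{k}\mycirc\noise} = \gt{\cov}{k}\,\ft{\thetb{T-1-k}}$, recalling $\coff=\HZero$ and the kernel definition $\gt{\cov}{k}=\kurt\scal{\coff}{\opTil^{k}\mycirc\noise}$. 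Summing over $k$ and reindexing $t=T-1-k$ turns the noise term into $\sum_{t=0}^{T-1}\gt{\cov}{T-1-t}\ft{\thetb{t}}$, which combined with the initial-condition bound is the claimed discrete Volterra inequality. I expect the main obstacle to be precisely the operator estimate on $\opM\mycirc\C{s}$: one must exploit the coupling structure of $\J-\A$, invoke the uniform-kurtosis assumption to control the fourth-moment term $\Ex{\scal{a}{Sa}\,a\otimes a}$ (the weaker Assumption~\ref{asmp:fourth-moment} does not suffice, since $S$ ranges over arbitrary PSD matrices here), and recognize that the relevant trace is the previous-step bias risk $\ft{\thetb{s}}$ — the step that makes the whole scheme self-referential and amenable to induction.
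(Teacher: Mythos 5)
Your proposal matches the paper's own proof essentially step for step: both start from the unrolled covariance recursion of Lemma~\ref{lem:rec:last:iterate}, both control the initial-condition term $\scal{\HZero}{\opTil^T\mycirc(\thet{0}\otimes\thet{0})}$ via the Nesterov potential bound of Lemma~\ref{lem:nesterov-potential-H}, and both close the recursion by the operator inequality $\opM\mycirc\C{s}\preccurlyeq\kurt\,\ft{\thetb{s}}\,\noise$, which is exactly Lemma~\ref{lem:kurtosis-ut} (you re-derive it rather than cite it, but the rank-one factorization $\J-\A=\left[\begin{smallmatrix}\beta\I\\\alpha\I\end{smallmatrix}\right](\cov-aa^\top)\left[\begin{smallmatrix}\I&\I\end{smallmatrix}\right]$, the drop of $-\cov S\cov\preccurlyeq 0$, the kurtosis bound on $\Ex{\scal{a}{Sa}\,aa^\top}$, and the identification $\tr(S\cov)=\ft{\thetb{s}}$ via $\ut{s}=\vt{s}+\wt{s}$ are precisely the paper's argument). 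The only cosmetic difference is that you pair with $\HZero=\coff$ immediately and push the semidefinite bound through $\opTil^k$ afterward, whereas the paper first states the matrix inequality for $\Ex{\thetb{t}\otimes\thetb{t}}$ and then takes the scalar product with $\coff$; these are the same estimate in a different order.
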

\begin{proof}
            Invoking Lemma~\ref{lem:rec:last:iterate} and $\thetb{0} = \thet{0}$ gives \begin{align*}
            \Ex{ \thetb{t} \otimes \thetb{t} } &= \opTil^{~\left(t\right)} \mycirc \left[ \thet{0} \otimes \thet{0} \right] + \sum_{k = 0}^{t-1} \opTil^{t-k-1} \mycirc \opM \mycirc \Ex{ \thetb{k} \otimes \thetb{k} }.
        \end{align*}
    Using Lemma~\ref{lem:kurtosis-ut}, note $\coff$ is defined at \eqref{eq:noise_and_coeff}
        \begin{align*}
            \opM \mycirc \Ex{ \thetb{k} \otimes \thetb{k} } &\preccurlyeq \kurt \scal{\coff}{\Ex{\thetb{k} \otimes \thetb{k}}} \noise.
        \end{align*}
    Using this and fact that $\opT$ is positive and $\noise$ is PSD, 
        \begin{align*}
            \Ex{ \thetb{t} \otimes \thetb{t} } &\preccurlyeq \opTil^{~\left(t\right)} \mycirc \left[ \thet{0} \otimes \thet{0} \right] + \sum_{k = 0}^{t-1} \opTil^{t-k-1} \mycirc \kurt \scal{\coff}{\Ex{\thetb{k} \otimes \thetb{k}}} \noise.
        \end{align*}
    Taking the scalar product with $\coff$ on both sides, gives us
           \begin{align*}
            \scal{\coff}{\Ex{ \thetb{t} \otimes \thetb{t} }} \leq \scal{\coff}{\opTil^{~\left(t\right)} \mycirc \left[ \thet{0} \otimes \thet{0} \right]} + \sum_{k = 0}^{t-1} \scal{\coff}{ \opTil^{t-k-1} \mycirc \noise} \kurt \scal{\coff}{\Ex{\thetb{k} \otimes \thetb{k}}}.
        \end{align*}
    From Lemma~\ref{lem:nesterov-potential-H}, we get,
        \begin{align*}
            \scal{\coff}{\Ex{ \thetb{t} \otimes \thetb{t} }} \leq \min\left\lbrace \frac{1}{\alpha} , \frac{8 (t+1)}{\beta} \right\rbrace \nor{\xt{0} - \xt{*}}^2   + \sum_{k = 0}^{t-1} \scal{\coff}{ \opTil^{t-k-1} \mycirc \noise} \kurt \scal{\coff}{\Ex{\thetb{k} \otimes \thetb{k}}}.
        \end{align*}
    The definition of $\ft{\thetb{t}}$ proves the lemma.
\end{proof}


\begin{lemma} \label{lem:last-iterate=main}    With
        $( \alpha + 2 \beta )  \leq \frac{1}{\kurt \tr{\cov} } ,  \alpha \leq  \frac{\beta}{2 \kurt d }$, 
     after $t$ iterations of Algorithm \ref{alg:nesterov} the bias excess error, 
    \begin{align*}
       \scal{\HZero}{\thetb{t} \otimes \thetb{t}} &\leqslant \min\left\lbrace \frac{3}{\alpha} , \frac{24 (t+1)}{\beta} \right\rbrace \nor{\xt{0} - \xt{*}}^2.
    \end{align*}
\end{lemma}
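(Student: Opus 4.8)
The plan is to run a single induction on $t$ on the discrete Volterra inequality of Lemma~\ref{lem:final:iterate:app}, exploiting that its forcing term is monotone in $t$ and that its kernel has small total mass. Write $\ft{\thetb{t}} = \scal{\HZero}{\Ex{\thetb{t}\otimes\thetb{t}}}$ and set $b_t \defeq \min\{\tfrac{1}{\alpha},\tfrac{8(t+1)}{\beta}\}\nor{\xt{0}-\xt{*}}^2$; the target is $\ft{\thetb{t}}\leq 3\,b_t$, which is exactly the claimed bound since $\min\{\tfrac{3}{\alpha},\tfrac{24(t+1)}{\beta}\} = 3\min\{\tfrac1\alpha,\tfrac{8(t+1)}{\beta}\}$. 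First I would check that the hypotheses here imply those of Lemma~\ref{lem:final:iterate:app}: since $\kurt\geq1$ and $d\geq1$, $\alpha\leq\tfrac{\beta}{2\kurt d}\leq\tfrac\beta2<\beta$, and $\kurt(\alpha+2\beta)\tr{\cov}\leq1$ together with $\tr{\cov}\geq L$ forces $\beta<\tfrac1{2L}<\tfrac1L$, so $\alpha\leq\beta\leq\tfrac1L$.

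Two elementary observations set up the induction. First, $b_t$ is non-decreasing in $t$, being the minimum of a constant and an increasing affine function, so $b_{T-1-s}\leq b_T$ for $0\leq s\leq T-1$. Second, the kernel $\gt{\cov}{s} = \kurt\scal{\coff}{\opTil^{s}\mycirc\noise}$ is nonnegative for every $s\geq0$, since $\opTil^{s}\mycirc\noise = \A^{s}\noise(\A^{\top})^{s}$ is a congruence of the PSD matrix $\noise$ and $\coff=\HZero$ is PSD. Hence $\sum_{s=0}^{T-1}\gt{\cov}{s}\leq\sum_{s\geq0}\gt{\cov}{s} = \kurt\scal{\coff}{(1-\opTil)^{-1}\mycirc\noise}$, the geometric series converging because $\opTil$ has spectral radius below $1$ when $0<\alpha,\beta<1/L$ (Lemma~\ref{lemma:alnost_eigen}(a)).

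The heart of the argument is the bound $\kurt\scal{\coff}{(1-\opTil)^{-1}\mycirc\noise}\leq\tfrac23$, and this is where the two step-size conditions enter. I would evaluate $(1-\opTil)^{-1}\mycirc\noise$ explicitly by block-diagonalising over the eigenspaces of $\cov$ and summing the resulting $2\times2$ geometric series (the content of Lemma~\ref{eq:lem:inv:opt}); pairing the outcome with $\coff=\HZero$ produces a quantity controlled by $\tfrac{\alpha d}{\beta}$ and $(\alpha+2\beta)\tr{\cov}$, which the conditions $\kurt(\alpha+2\beta)\tr{\cov}\leq1$ and $\alpha\leq\tfrac{\beta}{2\kurt d}$ drive below $\tfrac23$. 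Equivalently, this is precisely the inequality that, combined with the kurtosis bound $\opM\mycirc\Theta\preccurlyeq\kurt\scal{\coff}{\Theta}\,\noise$ of Lemma~\ref{lem:kurtosis-ut}, underlies the almost-eigenvector estimate $\opM\mycirc(1-\opTil)^{-1}\mycirc\noise\preccurlyeq\tfrac23\,\noise$ of Lemma~\ref{lemma:alnost_eigen} with the kurtosis-implied constants $R^2=\kurt\tr{\cov}$, $\ki=\kurt d$. I expect this operator-inversion step, together with the step-size bookkeeping, to be the main obstacle; the rest is routine.

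It then remains to close the induction. For $T=0$, Lemma~\ref{lem:final:iterate:app} gives $\ft{\thetb{0}}\leq b_0\leq3\,b_0$ (the Volterra sum is empty). Assuming $\ft{\thetb{t}}\leq3\,b_t$ for all $t<T$, Lemma~\ref{lem:final:iterate:app} with the two observations above and the kernel-mass bound gives
\begin{align*}
\ft{\thetb{T}} \;\leq\; b_T + \sum_{s=0}^{T-1}\gt{\cov}{s}\,\ft{\thetb{T-1-s}} \;\leq\; b_T + 3\,b_T\sum_{s=0}^{T-1}\gt{\cov}{s} \;\leq\; b_T\Bigl(1+3\cdot\tfrac23\Bigr) \;=\; 3\,b_T ,
\end{align*}
so the induction goes through. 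Finally $3\,b_T = \min\{\tfrac3\alpha,\tfrac{24(T+1)}{\beta}\}\nor{\xt{0}-\xt{*}}^2$ and $\scal{\HZero}{\thetb{T}\otimes\thetb{T}} = \ft{\thetb{T}}$ (understood in expectation), which is the stated conclusion.
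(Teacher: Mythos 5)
Your proposal is correct and mirrors the paper's own argument: induction on $t$ via the Volterra inequality of Lemma~\ref{lem:final:iterate:app}, dominating the finite kernel sum by the full geometric series $(1-\opTil)^{-1}\mycirc\noise$, and using Lemma~\ref{eq:lem:optil} with the two step-size conditions to show the kernel mass $\kurt\scal{\coff}{(1-\opTil)^{-1}\mycirc\noise}$ is at most $\tfrac{2}{3}$, which closes the induction with constant $3$. Your explicit use of the monotonicity of $b_t$ to replace $\ft{\thetb{T-1-s}}$ by $3b_T$ makes the induction step a touch cleaner than the paper's phrasing with a fixed constant $C$, but the underlying mechanism is identical.
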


\begin{proof}
   From Lemma \ref{lem:final:iterate:app},
           \begin{align*}
            \scal{\coff}{\Ex{ \thetb{t} \otimes \thetb{t} }} \leq \min\left\lbrace \frac{1}{\alpha} , \frac{8 (t+1)}{\beta} \right\rbrace \nor{\xt{0} - \xt{*}}^2   + \sum_{k = 0}^{t-1} \scal{\coff}{ \opTil^{t-k-1} \mycirc \noise} \kurt \scal{\coff}{\Ex{\thetb{k} \otimes \thetb{k}}}.
        \end{align*}
    Now we will use induction to show that $ \scal{\coff}{\Ex{\thetb{k} \otimes \thetb{k}}}  $ is bounded. 
    \paragraph{Induction Hypothesis} There exists a constant $C$, for all $ 0 \leqslant k \leqslant t-1 $ , $ \scal{\coff}{\Ex{\thetb{k} \otimes \thetb{k}}} \leq C $. Using this, \begin{align*}
           \scal{\coff}{\Ex{ \thetb{t} \otimes \thetb{t} }} &\leq \min\left\lbrace \frac{1}{\alpha} , \frac{8 (t+1)}{\beta} \right\rbrace \nor{\xt{0} - \xt{*}}^2   + \sum_{k = 0}^{t-1} \scal{\coff}{ \opTil^{t-k-1} \mycirc \noise} \kurt C, \\
            &=  \min\left\lbrace \frac{1}{\alpha} , \frac{8 (t+1)}{\beta} \right\rbrace \nor{\xt{0} - \xt{*}}^2   +  \scal{\coff}{ \sum_{k = 0}^{t-1} \opTil^{t-k-1} \mycirc \noise} \kurt C. 
        \end{align*}
        As $\opT$ is positive and $ \coff, \noise $ is PSD, 
        \begin{align*}
            \left( \sum_{k = 0}^{t-1} \opTil^{k} \right) \mycirc \noise &\preccurlyeq \left( \sum_{k = 0}^{\infty} \opTil^{k} \right) \mycirc \noise = \left( 1 - \opTil \right)^{\scriptscriptstyle -1} \mycirc \noise
        \end{align*}
        Using this upperbound, 
         \begin{align} \label{eq:last:iterate:p1}
            \scal{\coff}{\Ex{ \thetb{t} \otimes \thetb{t} }} &\leqslant \min\left\lbrace \frac{1}{\alpha} , \frac{8 (t+1)}{\beta} \right\rbrace \nor{\xt{0} - \xt{*}}^2 + \kurt C \scal{ \coff }{ \left( 1 - \opTil \right)^{\scriptscriptstyle -1} \mycirc \noise  }.
        \end{align}
        As the step sizes are chosen accordingly, using Lemma~\ref{eq:lem:optil},
        \begin{align*}
            \kurt C \scal{\coff}{ \left( 1 - \opTil \right)^{\scriptscriptstyle -1} \mycirc \noise  } \leq \frac{2C}{3}.
        \end{align*}
        Substituting these back in \eqref{eq:last:iterate:p1}, 
        \begin{align*} 
            \scal{\coff}{\Ex{ \thetb{t} \otimes \thetb{t} }}  &\leqslant \min\left\lbrace \frac{1}{\alpha} , \frac{8 (t+1)}{\beta} \right\rbrace \nor{\xt{0} - \xt{*}}^2 + \frac{2C}{3}.
        \end{align*}
    If we choose $C$ such that,     
        \begin{align*}
            C &= \min\left\lbrace \frac{3}{\alpha} , \frac{24 (t+1)}{\beta} \right\rbrace \nor{\xt{0} - \xt{*}}^2,  \\
            \frac{2C}{3} &= \min\left\lbrace \frac{2}{\alpha} , \frac{16 (t+1)}{\beta} \right\rbrace \nor{\xt{0} - \xt{*}}^2, \\
            \scal{\coff}{\Ex{ \thetb{t} \otimes \thetb{t} }} &\leqslant \min\left\lbrace \frac{3}{\alpha} , \frac{24 (t+1)}{\beta} \right\rbrace \nor{\xt{0} - \xt{*}}^2  = C .
        \end{align*}
    Hence we have shown that $\scal{\coff}{\Ex{ \thetb{t} \otimes \thetb{t} }} \leq C$. From induction we can say that for all $T > 0$, 
        \begin{align*}
            \scal{\coff}{\Ex{ \thetb{t} \otimes \thetb{t} }} &\leqslant C \quad
         \text{where}  \quad 
            C =\min\left\lbrace \frac{3}{\alpha} , \frac{24 (t+1)}{\beta} \right\rbrace \nor{\xt{0} - \xt{*}}^2 \nor{\xt{0} - \xt{*}}^2. 
        \end{align*}    
    \end{proof}

\subsection{Variance}\label{app:variance}

We start by extending the the definition of the random matrix $\J_{t}$, 
\begin{definition} \label{eq:def-jt}
For every $ 0 \leq i \leq j $, define the random linear operator $ \mathcal{J}\left( i , j \right) $ as follows 
\begin{align}
    \Jt{j}{i}= \prod_{k=i}^{j-1} \J_{k} \quad and \quad \Jt{i}{i} = \Id.
\end{align}
\end{definition}
Recalling the variance subproblem~\eqref{eq:rec-var} and using the above definition, 
\begin{align} \label{eq:variance_recursion} \thetv{0} = \begin{bmatrix}
    \bm{0} \\ \bm{0}
\end{bmatrix},  \qquad \thetv{t} &=  \Jt{t}{t-1} \thetv{t-1} + \epsi{t}.  \end{align}
Using this recursion for $t-1$ and expanding it, we will get the following
\begin{align*} \thetv{t} &=  \Jt{t}{t-1} \left(\Jt{t-1}{t-2} \thetv{t-2} + \epsi{t-1}  \right) + \epsi{t}.  \end{align*}
Using the definition of $\Jt{i}{j}$ Def.~\ref{eq:def-jt},
\begin{align*} \thetv{t} &=  \Jt{t}{t-2} \thetv{t-2} + \Jt{t}{t-1} \epsi{t-1} + \epsi{t}.  \end{align*}
Expanding it further for any $ 0 \leq i \leq t$, we have the following expression
\begin{align} \label{eq:theta-i} \thetv{t} &=  \Jt{t}{i} \thetv{i} +  \sum_{k=i+1}^{t} \Jt{t}{k} \epsi{k}.  \end{align}

\begin{lemma} \label{lem:var:cov-athet}
    With the recursion defined by~\eqref{eq:variance_recursion} and the expected covariance of the $\athetv{t}$ 
    \begin{align*}
        \E \left[ \athetv{T} \otimes \athetv{T} \right] &= \sum_{i=1}^{T} \left( \sum_{j \geq i}^{T} \A^{j-i} \right) \left\lbrace  \opM  \mycirc \Ex{ \thetv{i-1} \otimes \thetv{i-1} } + \Ex{ \epsi{i} \otimes \epsi{i} }  \right\rbrace \left( \sum_{j \geq i}^{T} \A^{j-i} \right)^{\top} .
    \end{align*}
    
\end{lemma}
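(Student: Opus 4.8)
The plan is to diagonalise the variance recursion \eqref{eq:variance_recursion} around its mean dynamics and then exploit the resulting martingale structure. First I would introduce, for $t\geq 1$, the \emph{innovation}
\begin{align*}
    \xi_t \defeq \left( \J_{t-1} - \A \right)\thetv{t-1} + \epsi{t},
\end{align*}
so that the recursion $\thetv{t} = \J_{t-1}\thetv{t-1} + \epsi{t}$ of \eqref{eq:variance_recursion} rewrites as $\thetv{t} = \A\,\thetv{t-1} + \xi_t$ with $\thetv{0}=\bm{0}$. Since $\thetv{t-1}$ is $\filt{t-1}$-measurable whereas $\J_{t-1}$ and $\epsi{t}$ are independent of $\filt{t-1}$ with $\Ex{\J_{t-1}}=\A$ and $\Ex{\epsi{t}}=\bm{0}$ (the latter by \eqref{eq:eta_a}), the sequence $(\xi_t)$ is a martingale difference adapted to $(\filt{t})$, i.e. $\E[\xi_t\mid\filt{t-1}]=\bm{0}$; in particular $\E[\thetv{t}]=\bm{0}$ for all $t$. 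Unrolling $\thetv{t}=\A\,\thetv{t-1}+\xi_t$ yields the closed form $\thetv{t}=\sum_{i=1}^{t}\A^{\,t-i}\,\xi_i$.

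Summing over $t$ and swapping the order of summation then gives
\begin{align*}
    \athetv{T} \;=\; \sum_{t=0}^{T}\thetv{t} \;=\; \sum_{t=1}^{T}\sum_{i=1}^{t}\A^{\,t-i}\,\xi_i \;=\; \sum_{i=1}^{T}\Bigl(\sum_{j=i}^{T}\A^{\,j-i}\Bigr)\xi_i .
\end{align*}
Taking the expected outer product of this identity with itself and using that $(\xi_i)$ is a martingale difference sequence kills every off-diagonal term: for $i<i'$ one has $\E[\xi_i\otimes\xi_{i'}]=\bm{0}$ since $\xi_i$ is $\filt{i'-1}$-measurable and $\E[\xi_{i'}\mid\filt{i'-1}]=\bm{0}$, and symmetrically for $i>i'$. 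Hence only the diagonal survives:
\begin{align*}
    \E\!\left[\athetv{T}\otimes\athetv{T}\right] \;=\; \sum_{i=1}^{T}\Bigl(\sum_{j=i}^{T}\A^{\,j-i}\Bigr)\,\E\!\left[\xi_i\otimes\xi_i\right]\,\Bigl(\sum_{j=i}^{T}\A^{\,j-i}\Bigr)^{\!\top}.
\end{align*}

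Finally I would identify the per-step innovation covariance $\E[\xi_i\otimes\xi_i]$. Taking the expected outer product of $\thetv{i}=\A\,\thetv{i-1}+\xi_i$ with itself and using $\E[\xi_i\mid\filt{i-1}]=\bm{0}$ gives $\E[\thetv{i}\otimes\thetv{i}]=\opTil\mycirc\E[\thetv{i-1}\otimes\thetv{i-1}]+\E[\xi_i\otimes\xi_i]$; on the other hand Lemma~\ref{lem:covariance:recursion} states $\E[\thetv{i}\otimes\thetv{i}]=\opT\mycirc\E[\thetv{i-1}\otimes\thetv{i-1}]+\E[\epsi{i}\otimes\epsi{i}]$. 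Subtracting these two expressions and invoking $\opT=\opTil+\opM$ (Lemma~\ref{lem:operator}) yields
\begin{align*}
    \E\!\left[\xi_i\otimes\xi_i\right] \;=\; \opM\mycirc\E\!\left[\thetv{i-1}\otimes\thetv{i-1}\right] + \E\!\left[\epsi{i}\otimes\epsi{i}\right],
\end{align*}
which, substituted into the diagonal sum above, is exactly the claimed identity. The only delicate point is absorbed into this last step: the cross term $\E[(\J_{i-1}-\A)\thetv{i-1}\otimes\epsi{i}]$ together with its transpose must vanish (equivalently, the multiplicative and additive components of the oracle are uncorrelated given $\filt{i-1}$), but this is precisely the cancellation already established in the proof of Lemma~\ref{lem:covariance:recursion}, so no new argument is needed. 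The remaining effort is purely the bookkeeping of the double sum; the substantive idea is the martingale-difference decomposition of $\thetv{t}$.
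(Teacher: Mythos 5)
Your proof is correct, and it takes a genuinely different route from the paper's.

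The paper's proof expands $\athetv{T}\otimes\athetv{T}$ directly into diagonal terms $\thetv{i}\otimes\thetv{i}$ and cross terms $\thetv{j}\otimes\thetv{i}$ with $j>i$; the cross terms are rewritten via $\E[\thetv{j}\otimes\thetv{i}]=\A^{j-i}\E[\thetv{i}\otimes\thetv{i}]$, and then the algebraic identity $\phi\otimes\phi + S\phi\otimes\phi + \phi\otimes\phi S^\top = (\I+S)\phi\otimes\phi(\I+S)^\top - S\phi\otimes\phi S^\top$ is invoked to re-express the result as a difference of two sums. A change of variable and an application of Lemma~\ref{lem:covariance:recursion} then produce the stated formula, in which each summand is manifestly PSD. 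The paper emphasizes explicitly why this reshuffling is necessary: $\A$ is not a positive operator, so an upper bound on each $\E[\thetv{i}\otimes\thetv{i}]$ does not directly transfer to a bound on $\E[\athetv{T}\otimes\athetv{T}]$.

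You instead recenter the recursion around its mean dynamics, define the innovation $\xi_t=(\J_{t-1}-\A)\thetv{t-1}+\epsi{t}$, and observe that $(\xi_t)$ is a martingale-difference sequence. Writing $\athetv{T}=\sum_{i=1}^T\bigl(\sum_{j=i}^T\A^{j-i}\bigr)\xi_i$ makes the orthogonality of the $\xi_i$ kill all cross terms at once, and the identity $\E[\xi_i\otimes\xi_i]=\opM\mycirc\E[\thetv{i-1}\otimes\thetv{i-1}]+\E[\epsi{i}\otimes\epsi{i}]$ follows cleanly by subtracting your conditional-covariance recursion from that of Lemma~\ref{lem:covariance:recursion} and using $\opT=\opTil+\opM$. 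This is essentially the Doob/Wold-type decomposition, and it has the advantage of making the positivity of each summand manifest from the start rather than emerging from an algebraic trick. What the paper's route buys in exchange is that it never needs the filtration machinery at all -- it works purely with unconditional expectations, independence, and $\E[\thetv{t}]=\bm{0}$.

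One small cleanup worth doing: the paper's $\filt{t}$ is generated by $(a_i,b_i)_{i=1}^{t}$ while the algorithm consumes the sample indexed $t$ at iteration $t$, so $\J_{t-1}$ and $\epsi{t}$ are functions of the sample at iteration $t-1$, not independent of $\filt{t-1}$ as literally defined. The argument is sound once you condition on the $\sigma$-algebra generated by the samples at iterations $0,\dots,t-2$ (equivalently, the one making $\thetv{t-1}$ measurable and the iteration-$(t-1)$ sample fresh); this is an indexing convention issue inherited from the paper, not a gap in your reasoning.
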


\begin{proof} Recall that
\begin{align*}
    \athetv{T} = \sum_{t=0}^{T} \thetv{t}. 
\end{align*}
Considering the covariance of $\athetv{T}$, 
\begin{align*}
    \athetv{T} \otimes \athetv{T} &= \left(\sum_{i=0}^{T} \thetv{i}\right) \otimes \left(\sum_{j=0}^{T} \thetv{j}\right), \\
    &=  \sum_i \left( \thetv{i} \otimes \thetv{i} + \sum_{j > i} \left( \thetv{j} \otimes \thetv{i} + \thetv{i} \otimes \thetv{j} \right) \right).
\end{align*}
Taking expectation and using the linearity of expectation we get,
\begin{align*}
    \E \left[ \athetv{T} \otimes \athetv{T} \right] &= \sum_{i=1}^{T} \left[ \E \left[ \thetv{i} \otimes \thetv{i} \right] + \sum_{j > i}^{T} \left( \Ex{\thetv{j} \otimes \thetv{i}} + \Ex{ \thetv{i} \otimes \thetv{j}  } \right) \right].
\end{align*} 
Note that from~\eqref{eq:theta-i}, we can write $\thetv{j} \otimes \thetv{i}$ for $ j > i $ as follows
\begin{align*}
    \thetv{j} \otimes \thetv{i} =  \Jt{t}{i} \thetv{i} \otimes \thetv{i} +  \sum_{k=i+1}^{j} \Jt{t}{k} \epsi{k} \otimes \thetv{i}.
\end{align*}
Now taking the expectation,
\begin{align*}
    \Ex{\thetv{j} \otimes \thetv{i} } =  \Ex{ \Jt{t}{i} \thetv{i} \otimes \thetv{i} } +  \sum_{k=i+1}^{j} \Ex{ \Jt{t}{k} \epsi{k} \otimes \thetv{i} }. 
\end{align*}
For all $k > i$, $\Jt{t}{k}, \epsi{k}$ is independent of $\thetv{i}$, $\Jt{t}{k},\epsi{k}$ are also independent from their definition. We  have $\Ex{\Jt{t}{k}} = \A^{t-k}$, and $\Ex{\epsi{k}} = 0$. Using these,
\begin{align*}
    \Ex{\thetv{j} \otimes \thetv{i} } = \A^{j-i} \Ex{ \thetv{i} \otimes \thetv{i} }.
\end{align*}
With the same reasoning, for $ j \geq i$,
\begin{align*}
    \Ex{\thetv{i} \otimes \thetv{j} } =  \Ex{ \thetv{i} \otimes \thetv{i} } \left(\A^{\top}\right)^{j-i}.
\end{align*}
Substituting the above here gives,
\begin{align*}
  \E \left[ \athetv{T} \otimes \athetv{T} \right] &= \sum_{i=1}^{T} \left\lbrace \E \left[ \thetv{i} \otimes \thetv{i} \right] + \sum_{j > i}^{T} \left( \Ex{\thetv{j} \otimes \thetv{i}} + \Ex{ \thetv{i} \otimes \thetv{j}  } \right) \right\rbrace, \\
    &= \sum_{i=1}^{T} \left\lbrace \E \left[ \thetv{i} \otimes \thetv{i} \right] + \left( \sum_{j > i}^{T} \A^{j-i} \right) \Ex{ \thetv{i} \otimes \thetv{i} } + \Ex{ \thetv{i} \otimes \thetv{i} } \left( \sum_{j > i}^{T} \A^{j-i} \right)^{\top} \right \rbrace .
\end{align*}
Note that from here, a upper bound on $\Ex{ \thetv{i} \otimes \thetv{i} }$ doesnot translate to an upperbound on the $\E \left[ \athetv{T} \otimes \athetv{T} \right]$ as the matrix $\A$ is not positive unlike the case of SGD.  Using the following identity for any two matrices $S$ and a vector $\phi$, $$ \phi \otimes \phi + S \cdot \phi \otimes \phi + \phi \otimes \phi \cdot S^\top = \left( I + S\right) \cdot \phi \otimes \phi \cdot \left( I + S\right)^{\top} - S \cdot \phi \otimes \phi \cdot S^{\top}. $$

\begin{align*}
    \E \left[ \athetv{T} \otimes \athetv{T} \right] &= \sum_{i=1}^{T} \left( \Id  + \sum_{j > i}^{T} \A^{j-i} \right) \E \left[ \thetv{i} \otimes \thetv{i} \right] \left( \Id  + \sum_{j > i}^{T} \A^{j-i} \right)^{\top} \\ & \hspace*{4.5cm} - \sum_{i=1}^{T} \left( \sum_{j > i}^{T} \A^{j-i} \right) \E \left[ \thetv{i} \otimes \thetv{i} \right] \left( \sum_{j > i}^{T} \A^{j-i} \right)^{\top}. 
\end{align*}
Now the first term can be written as follows, 
\begin{align*}
    \sum_{i=1}^{T} \left( \Id  + \sum_{j > i}^{T} \A^{j-i} \right) \E \left[ \thetv{i} \otimes \thetv{i} \right] \left( \Id  + \sum_{j > i}^{T} \A^{j-i} \right)^{\top} &= \sum_{i=1}^{T} \left( \sum_{j \geq i}^{T} \A^{j-i} \right) \E \left[ \thetv{i} \otimes \thetv{i} \right] \left( \sum_{j > i}^{T} \A^{j-i} \right)^{\top}.
\end{align*}
For the second term note that at $i=T$ the summation will be $0$. So we directly consider the summation till $T-1$. 
\begin{align*}
    \sum_{i=1}^{T} \left( \sum_{j > i}^{T} \A^{j-i} \right) \E \left[ \thetv{i} \otimes \thetv{i} \right] \left( \sum_{j > i}^{T} \A^{j-i} \right)^{\top} &= \sum_{i=1}^{T-1} \left( \sum_{j > i}^{T} \A^{j-i} \right) \E \left[ \thetv{i} \otimes \thetv{i} \right] \left( \sum_{j > i}^{T} \A^{j-i} \right)^{\top}, \\
    &= \sum_{i=1}^{T-1} \left( \sum_{j \geq i+1 }^{T} \A^{j-i-1} \right) \A \E \left[ \thetv{i} \otimes \thetv{i} \right] \A^{\top} \left( \sum_{j \geq i+1 }^{T} \A^{j-i-1} \right)^{\top} .
\end{align*}
By definition of $\opTil$ and change of variable '$ i+1 \rightarrow i $' gives
\begin{align*}
    \sum_{i=1}^{T} \left( \sum_{j > i}^{T} \A^{j-i} \right) \E \left[ \thetv{i} \otimes \thetv{i} \right] \left( \sum_{j > i}^{T} \A^{j-i} \right)^{\top} &= \sum_{i=2}^{T} \left( \sum_{j \geq i }^{T} \A^{j-i} \right) \opTil \mycirc \E \left[ \thetv{i-1} \otimes \thetv{i-1} \right]  \left( \sum_{j \geq i }^{T} \A^{j-i} \right)^{\top}.
\end{align*}
Combining  both parts and noting that $\thetv{0} = 0$ we get, 
\begin{align*}
    \E \left[ \athetv{T} \otimes \athetv{T} \right] = \sum_{i=1}^{T} \left( \sum_{j \geq i}^{T} \A^{j-i} \right) \left\lbrace  \E \left[ \thetv{i} \otimes \thetv{i} \right] - \opTil \mycirc \E \left[ \thetv{i-1} \otimes \thetv{i-1} \right] \right\rbrace \left( \sum_{j \geq i}^{T} \A^{j-i} \right)^{\top}.
\end{align*}
From Lemma~\ref{lem:covariance:recursion},
\begin{align*}
    \Ex{\thetv{i} \otimes \thetv{i}}   &= \opT \mycirc \Ex{ \thetv{i-1} \otimes \thetv{i-1} } + \Ex{ \epsi{i} \otimes \epsi{i} }, \\
    \Ex{\thetv{i} \otimes \thetv{i}} &= \left( \opTil + \opM \right) \mycirc \Ex{ \thetv{i-1} \otimes \thetv{i-1} } + \Ex{ \epsi{i} \otimes \epsi{i} }, \\
    \Ex{\thetv{i} \otimes \thetv{i}} - \opTil  \mycirc \Ex{ \thetv{i-1} \otimes \thetv{i-1} } &=  \opM  \mycirc \Ex{ \thetv{i-1} \otimes \thetv{i-1} } + \Ex{ \epsi{i} \otimes \epsi{i} }. 
\end{align*}
\begin{align*}
    \E \left[ \athetv{T} \otimes \athetv{T} \right] &= \sum_{i=1}^{T} \left( \sum_{j \geq i}^{T} \A^{j-i} \right) \left\lbrace  \E \left[ \thetv{i} \otimes \thetv{i} \right] - \opTil \mycirc \E \left[ \thetv{i-1} \otimes \thetv{i-1} \right] \right\rbrace \left( \sum_{j \geq i}^{T} \A^{j-i} \right)^{\top}, \\
    &= \sum_{i=1}^{T} \left( \sum_{j \geq i}^{T} \A^{j-i} \right) \left\lbrace  \opM  \mycirc \Ex{ \thetv{i-1} \otimes \thetv{i-1} } + \Ex{ \epsi{i} \otimes \epsi{i} }  \right\rbrace \left( \sum_{j \geq i}^{T} \A^{j-i} \right)^{\top}. 
\end{align*}
This proves the lemma. 
\end{proof}

\begin{lemma} \label{lem:covariance-t}
    With the recursion defined by~\eqref{eq:variance_recursion} and  step sizes satisfying Condition~\ref{eq:asgd-step-size}, for  $ t \geq 0 $,  
    \begin{align*}
        \Ex{ \thetv{t} \otimes \thetv{t} } \preccurlyeq t^2 \sigma^2 \left( \Id - \opT \right)^{-1} \mycirc \noise.
    \end{align*}
\end{lemma}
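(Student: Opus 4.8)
The plan is to reduce the lemma to a one-line induction on $t$, fed by two ingredients: the exact covariance recursion of Lemma~\ref{lem:covariance:recursion}, and the fixed-point identity $\opT\mycirc N = N-\noise$ for $N \defeq (\Id-\opT)^{-1}\mycirc\noise$.

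\textbf{Per-step noise bound.} First I would establish $\Ex{\epsi{t+1}\otimes\epsi{t+1}}\preccurlyeq (t+1)^2\sigma^2\noise$, with $\noise$ as in \eqref{eq:noise_and_coeff}. Since $\epsi{t+1} = (t+1)\eta_t\begin{bmatrix}\beta a_t\\\alpha a_t\end{bmatrix}$ with $\eta_t=b_t-\scal{\xt{*}}{a_t}$, the outer product equals $(t+1)^2\eta_t^2\begin{bmatrix}\beta^2 a_ta_t^\top & \alpha\beta a_ta_t^\top\\ \alpha\beta a_ta_t^\top & \alpha^2 a_ta_t^\top\end{bmatrix}$; testing against a block vector $w=\begin{bmatrix}u_1\\u_2\end{bmatrix}$ gives $w^\top \Ex{\epsi{t+1}\otimes\epsi{t+1}}\,w = (t+1)^2(\beta u_1+\alpha u_2)^\top\Ex{\eta_t^2 a_ta_t^\top}(\beta u_1+\alpha u_2)$, which by Assumption~\ref{asmp:noise-covariance} in the form $\Ex{\eta_t^2 a_ta_t^\top}\preccurlyeq\sigma^2\cov$ is at most $(t+1)^2\sigma^2(\beta u_1+\alpha u_2)^\top\cov(\beta u_1+\alpha u_2) = w^\top\big((t+1)^2\sigma^2\noise\big)w$.

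\textbf{Properties of $N$ and the induction.} Under the step-size Condition~\eqref{eq:asgd-step-size} the object $(\Id-\opT)^{-1}\mycirc\noise$ exists by Lemma~\ref{lemma:alnost_eigen}, so $N = \sum_{k\ge 0}\opT^{k}\mycirc\noise$; because $\opT$ is a positive operator (Lemma~\ref{lem:operator}) and $\noise$ is PSD, every term $\opT^{k}\mycirc\noise$ is PSD, hence $N$ is PSD, $N\succcurlyeq\noise$, and $(\Id-\opT)\mycirc N=\noise$, i.e.\ $\opT\mycirc N = N-\noise$. Positivity of $\opT$ also makes it monotone for the PSD order: $A\preccurlyeq B$ implies $\opT\mycirc A\preccurlyeq\opT\mycirc B$. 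Now set $S_t \defeq \Ex{\thetv{t}\otimes\thetv{t}}$; by Lemma~\ref{lem:covariance:recursion}, $S_{t+1} = \opT\mycirc S_t + \Ex{\epsi{t+1}\otimes\epsi{t+1}}$ with $S_0=\bm 0$. I would prove $S_t\preccurlyeq t^2\sigma^2 N$ by induction: the case $t=0$ is trivial, and assuming $S_t\preccurlyeq t^2\sigma^2 N$, monotonicity of $\opT$ together with the per-step bound yield $S_{t+1}\preccurlyeq t^2\sigma^2(\opT\mycirc N) + (t+1)^2\sigma^2\noise = t^2\sigma^2 N + (2t+1)\sigma^2\noise \preccurlyeq (t^2+2t+1)\sigma^2 N = (t+1)^2\sigma^2 N$, where the last step uses $(2t+1)\ge 0$ and $\noise\preccurlyeq N$. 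This closes the induction and gives the claim.

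\textbf{Main obstacle.} The only genuinely delicate point is the claim that $(\Id-\opT)^{-1}$ maps PSD matrices to PSD matrices (so that $N$ is PSD with $N\succcurlyeq\noise$), which is equivalent to the convergence of the Neumann series $\sum_k\opT^{k}\mycirc\noise$; this convergence is precisely what the step-size restriction \eqref{eq:asgd-step-size} buys us via Lemma~\ref{lemma:alnost_eigen}. Everything else is routine bookkeeping with the PSD order.
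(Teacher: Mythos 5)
Your proof is correct and reaches exactly the paper's bound. The paper argues the same way in substance, just unrolled rather than by induction: it writes $\Ex{\thetv{t}\otimes\thetv{t}} = \sum_{k=0}^{t-1}\opT^{k}\mycirc\Ex{\epsi{t-k}\otimes\epsi{t-k}}$, applies the identical per-step bound $\Ex{\epsi{k}\otimes\epsi{k}}\preccurlyeq k^2\sigma^2\noise$, bounds $(t-k)^2\le t^2$, and then completes the finite sum to the infinite Neumann series $\sum_{k\ge 0}\opT^{k}\mycirc\noise=(\Id-\opT)^{-1}\mycirc\noise$, whereas you repackage the same three ingredients (noise bound, positivity of $\opT$, existence of $N$) into an induction via the fixed-point identity $\opT\mycirc N=N-\noise$ and $\noise\preccurlyeq N$.
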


\begin{proof} From Lemma~\ref{lem:covariance:recursion}, we have 
\begin{align*}
    \Ex{\thetv{t} \otimes \thetv{t}}   &= \opT \mycirc \Ex{ \thetv{t-1} \otimes \thetv{t-1} } + \Ex{ \epsi{t} \otimes \epsi{t} }, \\
    &= \opT^{2} \mycirc \Ex{ \thetv{t-2} \otimes \thetv{t-2} } + \opT \mycirc \Ex{ \epsi{t-1} \otimes \epsi{t-1} }  + \Ex{ \epsi{t} \otimes \epsi{t} } , \\
    &= \sum_{k=0}^{t-1} \opT^{k} \mycirc \Ex{ \epsi{t-k} \otimes \epsi{t-k} }.
\end{align*}
Recalling from the definition of $\epsi{k}$ and its covariance, 
\begin{align*}
   \epsi{k} &=  k \eta_k  \begin{bmatrix}  \beta a_t \\ \alpha a_t    \end{bmatrix}, \\
   \epsi{k} \otimes \epsi{k} &= k^2 \eta_k^2 \begin{bmatrix} \beta a_t \\ \alpha a_t    \end{bmatrix} \otimes \begin{bmatrix}  \beta a_t \\ \alpha a_t    \end{bmatrix}, \\
   &= k^2 \matalbe \otimes_{k} \left[ \eta_k^2 ~~ a_k \otimes a_k \right].
\end{align*}
where $ \otimes_{k} $ is the kronecker product. Taking the expectation, we have 
\begin{align*}
    \Ex{ \epsi{k} \otimes \epsi{k} } &= k^2 \matalbe \otimes_{k} \Ex{ \eta_k^2 ~~ a_k \otimes a_k }.
\end{align*}
From the Assumption~\ref{asmp:noise-covariance}, we have 
\begin{align*}
    \Ex{ \eta_k^2 ~~ a_k \otimes a_k } =  \Ex{ \left(b_k - \scal{\xt{*}}{a_k}\right)^2 a_k \otimes a_k } \preccurlyeq \sigma^2 \cov.
\end{align*}
Using the fact that kronecker product of two PSD matrices is a PSD and recalling $\noise$ from \eqref{eq:noise_and_coeff}, we get 
\begin{align*}
    \matalbe \otimes_{k} \left( \sigma^2 \cov  -  \Ex{ \eta_k^2 ~~ a_k \otimes a_k } \right) &\succcurlyeq 0, \\
    \matalbe \otimes_{k} \left( \Ex{ \eta_k^2 ~~ a_k \otimes a_k } \right) &\preccurlyeq
    \matalbe \otimes_{k} \left( \sigma^2 \cov \right), \\
&\preccurlyeq \sigma^2 {\begin{bmatrix}
    \beta^2 \cov & \alpha  \beta \cov  \\  \alpha  \beta \cov  &  \alpha^2 \cov
\end{bmatrix}} =  \sigma^2 \noise.
\end{align*}
Combining these we get the following, 
\begin{align*}
    \Ex{ \epsi{k} \otimes \epsi{k} } \preccurlyeq \sigma^2 k^2 \cdot \noise.
\end{align*}
Using this upper bound in the expansion of $ \Ex{ \thetv{t} \otimes \thetv{t} } $, 
\begin{align*}
    \Ex{ \thetv{t} \otimes \thetv{t} }  &= \sum_{k=0}^{t-1} \opT^{k} \mycirc \Ex{ \epsi{t-k} \otimes \epsi{t-k} }, \\
    &\preccurlyeq  \sigma^2 \sum_{k=0}^{t-1} \opT^{k} \mycirc (t-k)^2 ~ \noise.
\end{align*}
For $ 0 \leqslant k \leqslant T $, we have $(t-k)^2 \leqslant t^2$ and using the fact that $\opT, \noise $ are positive, 
\begin{align*}
    \sum_{k=0}^{t-1} \opT^{k} \mycirc (t-k)^2 ~ \noise &\preccurlyeq t^2  \sum_{k=0}^{t-1} \opT^{k} \mycirc \noise, \\
    &\preccurlyeq t^2  \sum_{k=0}^{\infty} \opT^{k} \mycirc \noise ~ = ~ t^2 \left( \Id - \opT \right)^{-1} \mycirc \noise . 
\end{align*}
Hence, we have 
\begin{align*}
    \Ex{ \thetv{t} \otimes \thetv{t} }  \preccurlyeq \sigma^2 t^2 \left( \Id - \opT \right)^{-1} \mycirc \noise  
\end{align*}
This completes the proof of the lemma. 
\end{proof}

\begin{lemma} \label{lem:var-main}
    With $\alpha$ and $\beta$ satisfying Condition~\ref{eq:asgd-step-size}, the excess error after $T$ iterations of the variance process, 
    \begin{align*}
        \scal{\HZero}{\E \left[ \athetv{T} \otimes \athetv{T} \right]} &\leq 18 \left( \sigma^2 d \right) T^3
    \end{align*}
\end{lemma}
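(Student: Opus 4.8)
The plan is to turn the exact covariance identity of Lemma~\ref{lem:var:cov-athet} into the claimed cubic bound by feeding in the per-iterate estimate of Lemma~\ref{lem:covariance-t}. Write $S_i \defeq \sum_{j=i}^{T}\A^{j-i}$, so that Lemma~\ref{lem:var:cov-athet} reads
\[
\E\left[\athetv{T}\otimes\athetv{T}\right] \;=\; \sum_{i=1}^{T} S_i\Big(\opM\mycirc\Ex{\thetv{i-1}\otimes\thetv{i-1}} + \Ex{\epsi{i}\otimes\epsi{i}}\Big)S_i^{\top}.
\]
Since $\opM$ is positive (Lemma~\ref{lem:operator}) and $\HZero$ is PSD, it suffices to (i) dominate the inner PSD matrix by a multiple of $\noise$ uniformly in $i$, and (ii) control $\scal{\HZero}{S_i\,\noise\,S_i^{\top}}$ uniformly in $i$ and $T$.

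\emph{Step (i): the inner term is $\preccurlyeq 3 i^2\sigma^2\noise$.} The additive part was already handled inside the proof of Lemma~\ref{lem:covariance-t}: from the form of $\epsi{i}$ and Assumption~\ref{asmp:noise-covariance}, $\Ex{\epsi{i}\otimes\epsi{i}}\preccurlyeq i^2\sigma^2\noise$. For the multiplicative part, Lemma~\ref{lem:covariance-t} gives $\Ex{\thetv{i-1}\otimes\thetv{i-1}}\preccurlyeq (i-1)^2\sigma^2(\Id-\opT)^{-1}\mycirc\noise$; applying the monotone operator $\opM$ and using the explicit bound on $(\Id-\opT)^{-1}\mycirc\noise$ from Lemma~\ref{lem:var-t-main} together with Assumptions~\ref{asmp:fourth-moment} and~\ref{asmp:stat-condition-number} and the step-size conditions $(\alpha+2\beta)R^2\le 1$, $\alpha\le\beta/(2\ki)$, one checks $\opM\mycirc\big((\Id-\opT)^{-1}\mycirc\noise\big)\preccurlyeq 2\noise$ (equivalently, this follows from a short fixed-point argument built on Lemma~\ref{lemma:alnost_eigen}(b), using $(\Id-\opTil)\mycirc(\Id-\opT)^{-1}\mycirc\noise = \noise + \opM\mycirc(\Id-\opT)^{-1}\mycirc\noise$). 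Hence the inner matrix is $\preccurlyeq (2(i-1)^2+i^2)\sigma^2\noise\preccurlyeq 3i^2\sigma^2\noise$, and therefore
\[
\scal{\HZero}{\E\left[\athetv{T}\otimes\athetv{T}\right]} \;\le\; 3\sigma^2\sum_{i=1}^{T} i^2\,\scal{\HZero}{S_i\,\noise\,S_i^{\top}}.
\]

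\emph{Step (ii): the hard part, $\scal{\HZero}{S_i\,\noise\,S_i^{\top}}\le 6d$.} This is where the dimensional factor enters and where crude bounds fail. I would diagonalise $\cov=\sum_{j=1}^{d}\lambda_j e_j e_j^{\top}$, which simultaneously block-diagonalises $\A$, $\noise$ and $\HZero$ into $2\times 2$ blocks $\A_j$, $\lambda_j v v^{\top}$ with $v=(\beta,\alpha)^{\top}$, and $\lambda_j\,(1,1)^{\top}(1,1)$. On block $j$, $S_i$ acts as the truncated geometric sum $\sum_{k=0}^{T-i}\A_j^{k}=(\Id-\A_j)^{-1}(\Id-\A_j^{\,T-i+1})$, and the key algebraic identity $(\Id-\A_j)^{-1}v=(\lambda_j^{-1},0)^{\top}$ collapses the $j$-th block's contribution to $\scal{\HZero}{S_i\,\noise\,S_i^{\top}}$ into a bounded scalar of the form $(1-g^{(j)}_{T-i+1})^{2}$, where $\alpha g^{(j)}_{k}$ denotes the second coordinate of $\A_j^{k}v$ (so $g^{(j)}_0=1$). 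It then remains to bound $|g^{(j)}_{k}|$ by a universal constant; this is a statement about the transient growth of the strictly contractive (but non-normal) $2\times 2$ matrices $\A_j$ and follows from the geometric-series estimates of Appendix~\ref{app:technical-lemmas} that also underlie Lemmas~\ref{eq:lem:inv:opt} and~\ref{lemma:alnost_eigen}. Summing the $d$ block contributions then yields $\scal{\HZero}{S_i\,\noise\,S_i^{\top}}\le 6d$.

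Combining the two steps, $\scal{\HZero}{\E[\athetv{T}\otimes\athetv{T}]}\le 3\sigma^2\cdot 6d\cdot\sum_{i=1}^{T}i^2\le 18\,\sigma^2 d\,T^{3}$, using $\sum_{i=1}^{T}i^2=\frac{T(T+1)(2T+1)}{6}\le T^{3}$ for $T\ge 1$. The main obstacle is Step (ii): a Jensen/operator-norm bound such as $S_i\,\noise\,S_i^{\top}\preccurlyeq (T-i+1)(\Id-\opTil)^{-1}\mycirc\noise$ introduces spurious $\tr{\cov}$ terms and an extra factor $T-i$, leading only to an $O(\sigma^2 T^{4})$ estimate; getting the right $O(\sigma^2 d\,T^{3})$ scaling genuinely requires exploiting the rank-one structure of the $2\times 2$ blocks of $\noise$, the identity $(\Id-\A_j)^{-1}v=(\lambda_j^{-1},0)^{\top}$, and a uniform bound on the partial sums of $\A_j^{k}v$.
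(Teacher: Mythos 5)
Your proposal is correct, and it follows essentially the same route as the paper's proof: start from the exact covariance identity of Lemma~\ref{lem:var:cov-athet}, dominate the inner PSD term by a multiple of $\noise$ via Lemmas~\ref{lem:covariance-t}, \ref{eq:lem:inv:opt} and \ref{lem:opm-optil-inv} (which give exactly $\opM\mycirc(\Id-\opT)^{-1}\mycirc\noise\preccurlyeq 2\noise$), and then control the truncated geometric sums through the simultaneous $2\times 2$ block-diagonalization of $\A$, $\noise$ and $\HZero$. The only differences are stylistic. First, you retain the per-iterate $i^2$ factor and use $\sum_{i=1}^T i^2\le T^3$, whereas the paper bounds the inner term uniformly by $3\sigma^2 T^2\noise$ and multiplies by $T$; both give $18\sigma^2 d\,T^3$. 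Second, in Step (ii) the paper evaluates $(\Id-\A^\top)^{-1}\HZero(\Id-\A)^{-1}$ explicitly as $\bigl[\begin{smallmatrix}0&0\\0&\alpha^{-2}\cov^{-1}\end{smallmatrix}\bigr]$, applies Cauchy--Schwarz to $(\Id-\A^{T-i+1})\noise(\Id-\A^{T-i+1})^\top$, and invokes Lemma~\ref{lem:nesterov-potential-H-inverse}; your direct block-wise computation via $(\Id-\A_j)^{-1}(\beta,\alpha)^\top=(\lambda_j^{-1},0)^\top$, which collapses the $j$-th contribution to $(1-g^{(j)}_{T-i+1})^2$, is an equivalent shortcut that even avoids the Cauchy--Schwarz loss, yielding $(3+2\sqrt{2})d<6d$ rather than exactly $6d$. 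Both ultimately rest on the same transient-growth estimate $(g^{(j)}_k)^2\le 1+\alpha\lambda_j/(1-\beta\lambda_j)^2\le 2$, which is precisely the content of Lemma~\ref{lem:nesterov:1D}; it would be worth citing that lemma explicitly in Step (ii), as the statement you need is not merely \emph{implicit} in Appendix~\ref{app:technical-lemmas} but available there verbatim.
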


\begin{proof}
    From Lemma~\ref{lem:var:cov-athet},
    \begin{align*}
        \E \left[ \athetv{T} \otimes \athetv{T} \right] &= \sum_{i=1}^{T} \left( \sum_{j \geq i}^{T} \A^{j-i} \right) \left\lbrace  \opM  \mycirc \Ex{ \thetv{i-1} \otimes \thetv{i-1} } + \Ex{ \epsi{i} \otimes \epsi{i} }  \right\rbrace \left( \sum_{j \geq i}^{T} \A^{j-i} \right)^{\top}. 
    \end{align*} First lets upperbound $ \opM  \mycirc \Ex{ \thetv{i-1} \otimes \thetv{i-1} } + \Ex{ \epsi{i} \otimes \epsi{i} } $. We have the following 
    \begin{itemize}[leftmargin=*]
        \item Invoking Lemma~\ref{lem:covariance-t},  $$ \Ex{ \thetv{i-1} \otimes \thetv{i-1} } \preccurlyeq  (i-1)^2 \sigma^2 \left( \Id - \opT \right)^{-1} \mycirc \noise. $$ 
        \item For the choice of stepsizes from Lemma~\ref{eq:lem:inv:opt}, $$ \left( \Id - \opT \right)^{-1} \mycirc \noise \preccurlyeq 3 \left( \Id - \opTil \right)^{-1} \mycirc \noise . $$ 
        \item Combining these to get  $$  \opM \mycirc \Ex{ \thetv{i-1} \otimes \thetv{i-1} } \preccurlyeq  3 (i-1)^2 \sigma^2 \opM \mycirc \left( \Id - \opTil \right)^{-1} \mycirc \noise. $$
        the step sizes chosen allows us to invoke Lemma~\ref{lem:opm-optil-inv}. Hence, 
        \begin{align*}
             \opM \mycirc \left( \Id - \opTil \right)^{-1} \mycirc \noise &\preccurlyeq \frac{2}{3} \noise, \\
             3 \sigma^2 (i-1)^2  \opM \mycirc \left( \Id - \opTil \right)^{-1} \mycirc \noise &\preccurlyeq 2 \sigma^2 (i-1)^2 \noise.
        \end{align*}
        \item The remaining $ \Ex{ \epsi{i} \otimes \epsi{i} } $ can be upperbounded by $\sigma^2 i^2 \noise $.
    \end{itemize}
Combining the above gives 
    \begin{align*}
        \opM  \mycirc \Ex{ \thetv{i-1} \otimes \thetv{i-1} } + \Ex{ \epsi{i} \otimes \epsi{i} }  &\preccurlyeq 2 \sigma^2 (i-1)^2 \noise + \sigma^2 i^2 \noise. \end{align*}
    For $0 \leq i \leq T$ this can be bounded as follows.
    \begin{align*}
        \opM  \mycirc \Ex{ \thetv{i-1} \otimes \thetv{i-1} } + \Ex{ \epsi{i} \otimes \epsi{i} }  &\preccurlyeq  3 \sigma^2 T^2 \noise.
    \end{align*}
 Note that this can be used in Lemma~\ref{lem:var:cov-athet}  to bound $  \E \left[ \athetv{T} \otimes \athetv{T} \right] $ because for any matrix P, $ P(.)P^{\top} $ is a positive operator. Hence   
\begin{align*}
        \left( \sum_{j \geq i}^{T} \A^{j-i} \right) \left\lbrace  \opM  \mycirc \Ex{ \thetv{i-1} \otimes \thetv{i-1} } + \Ex{ \epsi{i} \otimes \epsi{i} }  \right\rbrace \left( \sum_{j \geq i}^{T} \A^{j-i} \right)^{\top} &\preccurlyeq 
        3 \sigma^2 T^2 \left( \sum_{j \geq i}^{T} \A^{j-i} \right) \cdot  \noise  \cdot  \left( \sum_{j \geq i}^{T} \A^{j-i} \right)^{\top}.
    \end{align*}
    Adding this and using Lemma~\ref{lem:var:cov-athet},
    \begin{align*}
        \E \left[ \athetv{T} \otimes \athetv{T} \right] &\preccurlyeq 3 \sigma^2 T^2 \cdot  \sum_{i=1}^{T} \left( \sum_{j \geq i}^{T} \A^{j-i} \right) \noise  \left( \sum_{j \geq i}^{T} \A^{j-i} \right)^{\top}. 
    \end{align*}
    Using the following identity, 
    \begin{align*}
        \left( \sum_{j \geq i}^{T} \A^{j-i} \right)  &= \left( \Id - \A \right)^{-1} \left( \Id - \A^{\left(T-i+1\right)} \right),\\
        \E \left[ \athetv{T} \otimes \athetv{T} \right] &\preccurlyeq 3 \sigma^2 T^2 \cdot \left( \Id - \A \right)^{-1} \left[ \sum_{i=1}^{T} \left( \Id - \A^{\left(T-i+1\right)} \right) \noise \left( \Id - \A^{\left(T-i+1\right)} \right)^{\top} \right]    \left( \Id - \A^\top \right)^{-1}.
    \end{align*}
    Note that we are interested in $ \scal{\HZero}{\E \left[ \athetv{T} \otimes \athetv{T} \right]} .$
    \begin{align*}
        \scal{\HZero}{\E \left[ \athetv{T} \otimes \athetv{T} \right]} &\\  
        &\hspace{-2.5cm}\leq 3 \sigma^2 T^2 \scal{\HZero}{\left( \Id - \A \right)^{-1} \left[ \sum_{i=1}^{T} \left( \Id - \A^{\left(T-i+1\right)} \right) \noise \left( \Id - \A^{\left(T-i+1\right)} \right)^{\top} \right]    \left( \Id - \A^\top \right)^{-1}}, \\
        &\hspace{-2.5cm}= 3 \sigma^2 T^2 \scal{ \left( \Id - \A^\top \right)^{-1} \HZero \left( \Id - \A \right)^{-1} }{\sum_{i=1}^{T} \left( \Id - \A^{\left(T-i+1\right)} \right) \noise \left( \Id - \A^{\left(T-i+1\right)} \right)^{\top}}.
    \end{align*}
Note that \begin{align*}
        \left( \Id - \A \right)^{-1} &= \begin{bmatrix}
            \I & \left(\alpha \cov\right)^{-1} \left(\I - \beta \cov\right) \\ -\I & \left(\alpha \cov\right)^{-1} \left(\beta \cov\right) \\
        \end{bmatrix} ,\\
        \HZero \left( \Id - \A \right)^{-1} &= \HZero \begin{bmatrix}
            \I & \left(\alpha \cov\right)^{-1} \left(\I - \beta \cov\right) \\ -\I & \left(\alpha \cov\right)^{-1} \left(\beta \cov\right) \\
        \end{bmatrix} ,\\
        &= \begin{bmatrix}
            0 & \alpha^{\scriptscriptstyle -1} \I \\
            0 & \alpha^{\scriptscriptstyle -1} \I 
        \end{bmatrix}, \\
        \left( \Id - \A^\top \right)^{-1} \HZero \left( \Id - \A \right)^{-1} &= \begin{bmatrix}
            \I & -\I \\  \left(\alpha \cov\right)^{-1} \left(\I - \beta \cov\right) & \left(\alpha \cov\right)^{-1} \left(\beta \cov\right) \\
        \end{bmatrix}  \begin{bmatrix}
            0 & \alpha^{\scriptscriptstyle -1} \I \\
            0 & \alpha^{\scriptscriptstyle -1} \I 
        \end{bmatrix} =  \begin{bmatrix}
        0 & 0 \\
        0 & \alpha^{\scriptscriptstyle -2} \cov^{-1}
    \end{bmatrix}  .    \end{align*}
Substituting this, 
\begin{align*}
    \scal{\HZero}{\E \left[ \athetv{T} \otimes \athetv{T} \right]} &\leq 3 \sigma^2 T^2 \scal{ \begin{bmatrix}
        0 & 0 \\
        0 & \alpha^{\scriptscriptstyle -2} \cov^{-1}
    \end{bmatrix} }{\sum_{i=1}^{T} \left( \Id - \A^{\left(T-i+1\right)} \right) \noise \left( \Id - \A^{\left(T-i+1\right)} \right)^{\top}} ,\\
    &= 3 \sigma^2 T^2 \sum_{i=1}^{T} \scal{ \begin{bmatrix}
        0 & 0 \\
        0 & \alpha^{\scriptscriptstyle -2} \cov^{-1}
    \end{bmatrix} }{ \left( \Id - \A^{\left(i\right)} \right) \noise \left( \Id - \A^{\left(i\right)} \right)^{\top}}.
\end{align*}
From Cauchy Schwarz, we know
\begin{align*}
    \left( \Id - \A^{\left(i\right)} \right) \noise \left( \Id - \A^{\left(i\right)} \right)^{\top} &\preccurlyeq 2 \noise + 2 \A^{\left(i\right)} \noise  \left( \A^\top \right)^{\left(i\right)}
    = 2 \noise +  2 \opTil^{i} \mycirc \noise,
\end{align*} Using this,
\begin{align*}
    \scal{\HZero}{\E \left[ \athetv{T} \otimes \athetv{T} \right]} &\leq 3 \sigma^2 T^2 \sum_{i=1}^{T} \scal{ \begin{bmatrix}
        0 & 0 \\
        0 & \alpha^{\scriptscriptstyle -2} \cov^{-1}
    \end{bmatrix} }{ 2 \noise +  2 \opTil^{i} \mycirc \noise }.
\end{align*}
Using Lemma~\ref{lem:nesterov-potential-H-inverse} for the right part, we get the following, 
\begin{align*}
    \scal{\HZero}{\E \left[ \athetv{T} \otimes \athetv{T} \right]} &\leq 3 \sigma^2 T^2 \sum_{i=1}^{T} 6d = 18 \left( \sigma^2 d \right) T^3.
\end{align*}
\end{proof}

\subsection{Potentials for Nesterov Method} \label{app:nesterov-potential}

In this section, we will use the  potential functions used in the proof of nesterov accelerated method to bounding the terms in our recursion.  Consider the Algorithm~\ref{alg:nesterov} with exact gradients in that setting, 

\begin{subequations} \label{alg:nesterov:exac}
    \eqals{ \ytp{t+1} = \xtp{t} -  \beta \cov \left(  \xtp{t} - \xt{*} \right) \label{eq:yt:exact}, \\ \ztp{t+1} = \ztp{t} - \alpha (t+1) \cov \left(  \xtp{t} - \xt{*} \right)  \label{eq:nest_zt:exact} ,\\ ( t+2 )\xtp{t+1} = (t+1)\ytp{t+1} + \ztp{t+1}. \label{eq:xt:exact} }
\end{subequations}
By similar rescaling and the definition of the operator, it can be seen that 
\begin{align}\label{eq:optil-nesterov-equivalence}
    \begin{bmatrix}
        t ( \ytp{t} - \xt{*} ) \\
        \ztp{t} - \xt{*} 
    \end{bmatrix} = \A^{t} \mycirc      \begin{bmatrix}
        0 \\
        \xt{0} - \xt{*} .
    \end{bmatrix}
\end{align}

\begin{lemma} \label{lem:nesterov-potential-H}  For the step sizes satisfying $0 < \alpha \leq \beta  \leq 1/L $, 
    \begin{align*}
        \scal{\HZero}{\opTil^{~\left(t\right)} \mycirc \left[ \thet{0} \otimes \thet{0} \right] } \leq \min\left\lbrace \frac{1}{\alpha} , \frac{8 (t+1)}{\beta} \right\rbrace \nor{\xt{0} - \xt{*}}^2.
    \end{align*}
\end{lemma}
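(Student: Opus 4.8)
The plan is to reduce this deterministic statement to the classical convergence rate of Nesterov's accelerated method and then run a potential-function argument, keeping careful track of the two step sizes.

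\textbf{Step 1: reduction to the exact-gradient iterate.} Since $\thet{0}=\matvz{0}=\begin{bmatrix}0\\\xt{0}-\xt{*}\end{bmatrix}$ (because $\vt{0}=0$ and $\zt{0}=\xt{0}$), the operator acts as $\opTil^{~(t)}\mycirc[\thet{0}\otimes\thet{0}]=(\A^{t}\thet{0})\otimes(\A^{t}\thet{0})$, and by \eqref{eq:optil-nesterov-equivalence} we have $\A^{t}\thet{0}=\begin{bmatrix}t(\ytp{t}-\xt{*})\\\ztp{t}-\xt{*}\end{bmatrix}$, where $(\xtp{t},\ytp{t},\ztp{t})$ are the iterates of the exact-gradient recursion \eqref{eq:yt:exact}--\eqref{eq:xt:exact} started from $\xtp{0}=\xt{0}$. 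Because $\HZero=\begin{bmatrix}\cov&\cov\\\cov&\cov\end{bmatrix}$, pairing against $\HZero$ collapses the two blocks: the three-sequence identity $(t+1)\xtp{t}=t\ytp{t}+\ztp{t}$ (read off \eqref{eq:xt:exact}) gives $t(\ytp{t}-\xt{*})+(\ztp{t}-\xt{*})=(t+1)(\xtp{t}-\xt{*})$, hence
\[
\scal{\HZero}{\opTil^{~(t)}\mycirc[\thet{0}\otimes\thet{0}]}=(t+1)^{2}\nor{\xtp{t}-\xt{*}}_{\cov}^{2}=2(t+1)^{2}\bigl(\risk{\xtp{t}}-\risk{\xt{*}}\bigr).
\]
So it suffices to show $\risk{\xtp{t}}-\risk{\xt{*}}\le\min\bigl\{\tfrac{1}{2\alpha(t+1)^{2}},\tfrac{4}{\beta(t+1)}\bigr\}\nor{\xt{0}-\xt{*}}^{2}$.

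\textbf{Step 2: accelerated bound.} This is a convergence rate for the deterministic algorithm, which is Nesterov's method with a gradient step of size $\beta$ (legitimate since $\beta\le 1/L$ with $L=\lambda_{\max}(\cov)$) coupled to an aggressive step of size $\alpha(t+1)$ on $\ztp{}$. I would use a Lyapunov function of the form $\Phi_{t}=\alpha\,t(t+1)\bigl(\risk{\ytp{t}}-\risk{\xt{*}}\bigr)+\tfrac12\nor{\ztp{t}-\xt{*}}^{2}$, with ingredients: (i) the descent inequality $\risk{\ytp{t+1}}\le\risk{\xtp{t}}-\tfrac{\beta}{2}\nor{\nabla\risk{\xtp{t}}}^{2}$; (ii) convexity of $\risk$ at $\xtp{t}$ together with $\ztp{t}-\xtp{t}=t(\xtp{t}-\ytp{t})$ to lower-bound $\scal{\nabla\risk{\xtp{t}}}{\ztp{t}-\xt{*}}$; (iii) the mirror-descent identity for the $\ztp{}$-step, which produces the error term $\tfrac{\alpha^{2}(t+1)^{2}}{2}\nor{\nabla\risk{\xtp{t}}}^{2}$; and (iv) the hypothesis $\alpha\le\beta$, which is exactly what lets this error term be absorbed by the descent progress of the $\beta$-step (since $\alpha^2/\beta\le\alpha$). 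Telescoping the per-step inequality — and, because the interpolation weight $\tfrac1{t+2}$ in \eqref{eq:xt:exact} is slightly smaller than the textbook one, absorbing the resulting lower-order correction by an induction on the bound (or by using the leftover $(\beta-\alpha)\nor{\nabla\risk{\xtp{t}}}^{2}$ slack) — yields $\Phi_{t}=O(\nor{\xt{0}-\xt{*}}^{2})$, hence $\risk{\ytp{t}}-\risk{\xt{*}}=O\bigl(\tfrac{1}{\alpha t^{2}}\bigr)$. The convex combination $\xtp{t}=\tfrac{t}{t+1}\ytp{t}+\tfrac1{t+1}\ztp{t}$, with $\tfrac12\nor{\ztp{t}-\xt{*}}^2\le\Phi_t$, then transfers this to $\risk{\xtp{t}}-\risk{\xt{*}}\le\tfrac{\nor{\xt{0}-\xt{*}}^{2}}{2\alpha(t+1)^{2}}$.

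\textbf{Step 3: slow bound and conclusion.} The $O\bigl(\tfrac1{\beta t}\bigr)$ estimate follows from the same scheme with $\beta$-weighted coefficients (weight $\propto t$ rather than $t^{2}$ on the function-value term): even discarding the momentum, $\xtp{t}$ is a weighted average of $\beta$-gradient steps and therefore converges at the plain gradient-descent rate, and the condition $\alpha\le\beta$ again ensures the $\ztp{}$-recursion does not degrade this. Taking the minimum of the two estimates and multiplying by $2(t+1)^{2}$ gives $\min\bigl\{\tfrac1\alpha,\tfrac{8(t+1)}{\beta}\bigr\}\nor{\xt{0}-\xt{*}}^{2}$; the base case $t=0$ is immediate since $\alpha\le\beta\le 1/L$ makes both branches at least $L\nor{\xt{0}-\xt{*}}^2\ge\nor{\xt{0}-\xt{*}}_{\cov}^{2}$. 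The main obstacle is Step~2's bookkeeping: making the potential telescope exactly despite the non-standard momentum weight while carrying both step sizes and pinning down the numerical constants — everything else is the clean algebraic reduction of Step~1 or a routine variant for Step~3.
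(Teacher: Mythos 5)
Your Step 1 is exactly the paper's reduction, and it is correct. The paper's own proof, however, does not establish the sharp constant $\min\{\tfrac{1}{\alpha},\tfrac{8(t+1)}{\beta}\}$ by a potential argument. It runs the Lyapunov function $V_t = t(t+1)\nor{\ytp{t}-\xt{*}}^2_{\cov} + \tfrac{1}{\alpha}\nor{\ztp{t}-\xt{*}}^2$ (a rescaling of your $\Phi_t$) only to record a loose bound of the form $(\tfrac{2}{\alpha}+2L)\nor{\xt{0}-\xt{*}}^2$, and then observes that the exact-gradient iterates are governed by the second-order recursion $\eta_{t+1}=(\I-\alpha\cov)\eta_t+(\I-\beta\cov)(\eta_t-\eta_{t-1})$ with $\eta_t=(t+1)(\xtp{t}-\xt{*})$, after which the advertised constants are imported from the spectral analysis of \citet{flammarion2015averaging}.

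Where your Step 2 breaks down is the transfer from $\risk{\ytp{t}}$ to $\risk{\xtp{t}}$. From $\Phi_t\le\Phi_0$ you get $\risk{\ytp{t}}-\risk{\xt{*}}\le\tfrac{\nor{\xt{0}-\xt{*}}^2}{2\alpha t(t+1)}$ and $\tfrac12\nor{\ztp{t}-\xt{*}}^2\le\tfrac12\nor{\xt{0}-\xt{*}}^2$, but when you pass through $\xtp{t}=\tfrac{t}{t+1}\ytp{t}+\tfrac{1}{t+1}\ztp{t}$ by convexity, the $\ztp{t}$ leg contributes $\tfrac{1}{t+1}\bigl(\risk{\ztp{t}}-\risk{\xt{*}}\bigr)\le\tfrac{L}{2(t+1)}\nor{\xt{0}-\xt{*}}^2$, since the potential controls $\nor{\ztp{t}-\xt{*}}^2$ and not $\nor{\ztp{t}-\xt{*}}^2_{\cov}$. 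That term is $O(L/t)$, not $O(1/(\alpha t^2))$; it dominates once $t\gtrsim 1/(\alpha L)$, and under $\alpha\le 1/L$ the best you can extract this way is the factor-four-off bound $\tfrac{4}{\alpha(t+1)^2}\nor{\xt{0}-\xt{*}}^2$, which is precisely the looseness the paper acknowledges. The ``leftover $(\beta-\alpha)\nor{\nabla\risk{\xtp{t}}}^2$ slack'' cannot fix this: it is slack in the per-step descent, not a handle on the $\ztp{t}$ leg of the convex combination. Step 3 has the same gap, asserted rather than argued. To actually get $\min\{\tfrac{1}{\alpha},\tfrac{8(t+1)}{\beta}\}$ you would need to reproduce (or cite, as the paper does) the eigenvalue-by-eigenvalue analysis of the two-by-two companion recursion, which is also the machinery this paper deploys in its Lemma~\ref{lem:nesterov:1D} and its Appendix F computations.
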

\begin{proof}
    From the above equivalence \eqref{eq:optil-nesterov-equivalence}, we can see that
    \begin{align*}
        \scal{\HZero}{\opTil^{~\left(t\right)} \mycirc \left[ \thet{0} \otimes \thet{0} \right] } &= (t+1)^2\nor{\xtp{t} - \xt{0}}^2.
    \end{align*}
    Now using the potential function, 
    $ V_{t} = (t)(t+1)\nor{\ytp{t} - \xt{*}}^2_{\cov} + \frac{1}{a}\nor{\ztp{t} - \xt{*}}^2 $, for $\alpha \leq \beta$ we can see that $ V_{t} \leq V_{t-1} \leq \ldots V_0$. Using this \begin{align*}
        t^2 \nor{\ytp{t} - \xt{*}}^2 \leq \frac{1}{a} \nor{\ztp{0} - \xt{*}}^2 = \frac{1}{a}  \nor{\xt{0} - \xt{*}}^2, \\
         \nor{\ztp{t} - \xt{*}}^2 \leq \nor{\ztp{0} - \xt{*}}^2 = \nor{\xt{0} - \xt{*}}^2.
    \end{align*}
    Noting that $ ( t+1 )( \xtp{t} - \xt{*} ) = t( \ytp{t} - \xt{*} ) + ( \ztp{t} - \xt{*} ) $ and using Cauchy-Schwarz inequality, 
    \begin{align*}
        (t+1)^2\nor{\xtp{t} - \xt{0}}^2 \leq 2  t^2 \nor{\ytp{t} - \xt{*}}^2 + 2  \nor{\ztp{t} - \xt{*}}^2 =  \left( \frac{2}{\alpha} + 2 \right) \nor{\xt{0} - \xt{*}}^2.
    \end{align*} But doing exact computations gives better bounds. The above algorithm is exactly equivalent to the algorithm considered in \citet{flammarion2015averaging} as seen below
    \begin{align*}
        \eta_{t+1} = ( \I - \alpha \cov ) \eta_{t} + ( \I - \beta \cov ) ( \eta_t - \eta_{t-1} ). 
    \end{align*}
    where $\eta_{t} = (t+1) ( \xtp{t} - \xt{*} ) $. Hence we can apply their results giving the bound $\min\left\lbrace \frac{1}{\alpha} , \frac{8 (t+1)}{\beta} \right\rbrace \nor{\xt{0} - \xt{*}}^2$.
\end{proof}

\begin{lemma} \label{lem:nesterov-potential-H-inverse} 
    \begin{align*}
        \scal{\begin{bmatrix}
            0 & 0 \\
            0 & \alpha^{\scriptscriptstyle -2} \cov^{-1}
        \end{bmatrix}}{\opTil^{~\left(t\right)} \mycirc \noise } \leq 2.
    \end{align*}
\end{lemma}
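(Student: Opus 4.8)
The plan is to diagonalise $\cov$ and reduce the left-hand side of the lemma to a sum of one-dimensional, per-eigendirection contributions. Write $\cov = \sum_{i=1}^{d}\lambda_i e_i e_i^{\top}$ as in \eqref{eq:eigen-def-covariance}; then $\noise = \sum_{i=1}^{d}\lambda_i v_i v_i^{\top}$ with $v_i := \begin{bmatrix}\beta e_i\\ \alpha e_i\end{bmatrix}$, and $\A$ preserves each plane $\mathrm{span}\{(e_i,0),(0,e_i)\}$, acting on it as the $2\times 2$ matrix $A_i := \begin{bmatrix}1-\beta\lambda_i & 1-\beta\lambda_i\\ -\alpha\lambda_i & 1-\alpha\lambda_i\end{bmatrix}$. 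Since $\opTil^{~\left(t\right)}\mycirc\noise = \A^{t}\noise(\A^{\top})^{t} = \sum_{i}\lambda_i(\A^{t}v_i)(\A^{t}v_i)^{\top}$, and the matrix $\bigl[\begin{smallmatrix}0&0\\0&\alpha^{-2}\cov^{-1}\end{smallmatrix}\bigr]$ pairs (in the Frobenius inner product underlying $\scal{\cdot}{\cdot}$) only with the bottom-right block, writing $\A^{t}v_i = \begin{bmatrix}p_{i,t}e_i\\ q_{i,t}e_i\end{bmatrix}$ with $\begin{bmatrix}p_{i,t}\\ q_{i,t}\end{bmatrix} := A_i^{t}\begin{bmatrix}\beta\\ \alpha\end{bmatrix}$ shows that the left-hand side of the lemma equals $\alpha^{-2}\sum_{i=1}^{d} q_{i,t}^{2}$.

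Next I would identify $q_{i,t}$ with the velocity of an exact-gradient Nesterov trajectory. Decompose $\begin{bmatrix}\beta\\ \alpha\end{bmatrix} = \beta\begin{bmatrix}1\\ -1\end{bmatrix} + (\alpha+\beta)\begin{bmatrix}0\\ 1\end{bmatrix}$ and observe $A_i\begin{bmatrix}1\\ -1\end{bmatrix} = \begin{bmatrix}0\\ -1\end{bmatrix}$, so that for $t\ge 1$, $A_i^{t}\begin{bmatrix}\beta\\ \alpha\end{bmatrix} = (\alpha+\beta)A_i^{t}\begin{bmatrix}0\\ 1\end{bmatrix} - \beta A_i^{t-1}\begin{bmatrix}0\\ 1\end{bmatrix}$. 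By the equivalence \eqref{eq:optil-nesterov-equivalence}, $A_i^{s}\begin{bmatrix}0\\ 1\end{bmatrix} = \begin{bmatrix} s(\ytp{s}-\xt{*})\\ \ztp{s}-\xt{*}\end{bmatrix}$, the rescaled state of the exact-gradient scheme \eqref{alg:nesterov:exac} run on the one-dimensional quadratic with curvature $\lambda_i$ started at distance $1$ from its minimiser. Feeding in $\ztp{s}-\ztp{s-1} = -\alpha s\lambda_i(\xtp{s-1}-\xt{*})$ and $\ytp{s}-\xt{*} = (1-\beta\lambda_i)(\xtp{s-1}-\xt{*})$ (from \eqref{eq:nest_zt:exact} and \eqref{eq:yt:exact}), together with $(s+1)(\xtp{s}-\xt{*}) = s(\ytp{s}-\xt{*})+(\ztp{s}-\xt{*})$, a short calculation collapses the bottom coordinate to $\alpha^{-1}q_{i,t} = r_{i,t} - r_{i,t-1}$, where $r_{i,s} := (s+1)(\xtp{s}-\xt{*})$ for the $i$-th scalar run and $r_{i,-1}:=0$ (consistent with the $t=0$ value $\alpha^{-1}q_{i,0}=1=r_{i,0}$). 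Summing over $i$, and using that by linearity these scalar runs are the coordinates of the vector iterates $\xtp{s}$ of \eqref{alg:nesterov:exac} started at $\xtp{0}-\xt{*}=\sum_{i}e_i$, we obtain that the left-hand side of the lemma equals $\nor{\mathbf r_t - \mathbf r_{t-1}}^{2}$ with $\mathbf r_s := (s+1)(\xtp{s}-\xt{*})$, $\mathbf r_{-1}:=0$, and $\nor{\mathbf r_0}^{2} = \nor{\xt{0}-\xt{*}}^{2}$.

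It then remains to bound $\nor{\mathbf r_t - \mathbf r_{t-1}}^{2}$ by $2\nor{\mathbf r_0}^{2}$. The sequence $\mathbf r_s$ obeys the recursion identified in the proof of Lemma~\ref{lem:nesterov-potential-H}, $\mathbf r_{s+1} = (\I-\alpha\cov)\mathbf r_s + (\I-\beta\cov)(\mathbf r_s - \mathbf r_{s-1})$, so its velocity $\mathbf u_s := \mathbf r_s - \mathbf r_{s-1}$ satisfies $\mathbf u_{s+1} = (\I-\beta\cov)\mathbf u_s - \alpha\cov\mathbf r_s$. I would control $\nor{\mathbf u_t}$ using the non-increasing Nesterov potential invoked in that proof, $s(s+1)\nor{\ytp{s}-\xt{*}}^{2}_{\cov} + \alpha^{-1}\nor{\ztp{s}-\xt{*}}^{2} \le \alpha^{-1}\nor{\xt{0}-\xt{*}}^{2}$ (valid since $\alpha\le\beta\le 1/L$), which yields $\nor{\ztp{s}-\xt{*}}^{2}\le\nor{\xt{0}-\xt{*}}^{2}$ and, via Lemma~\ref{lem:nesterov-potential-H} together with $\beta\le 1/L$, a bound on $s^{2}\nor{\cov(\xtp{s-1}-\xt{*})}^{2}=\alpha^{-2}\nor{\ztp{s}-\ztp{s-1}}^{2}$; inserting these into the recursion for $\mathbf u_s$ gives $\nor{\mathbf u_t}^{2}\le 2\nor{\xt{0}-\xt{*}}^{2}$, i.e.\ the asserted bound.

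The main obstacle is this last step: the velocity bound has to be uniform in $t$ and carry the sharp constant even though the objective is only non-strongly convex. A term-by-term unrolling of $\mathbf u_{s+1} = (\I-\beta\cov)\mathbf u_s - \alpha\cov\mathbf r_s$ is too lossy — each correction $\alpha\cov\mathbf r_s$ has norm of order $\sqrt{\alpha L\,\nor{\xt{0}-\xt{*}}^{2}}$ and does not decay, so accumulating it over $t$ steps diverges — and the cancellations across steps must instead be captured by a dedicated Lyapunov function. This is exactly the kind of velocity estimate for the Nesterov recursion established by \citet{flammarion2015averaging}, whose argument I would adapt here.
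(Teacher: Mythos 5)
Your reduction in the first two paragraphs is correct, and it is a genuinely cleaner packaging of what the paper does: after diagonalising, the paper computes the relevant bottom-right entry of $\A_i^{t}\cdot\noise_i\cdot(\A_i^{t})^{\top}$ explicitly through the eigendecomposition of $\A_i$ (Lemma~\ref{lem:A-infinite-series}), whereas your telescoping identity $\alpha^{-1}q_{i,t}=r_{i,t}-r_{i,t-1}$ reaches the same per-coordinate quantity without ever touching $\rp,\rmi$. One side remark: what your argument would deliver is $2\nor{\xt{0}-\xt{*}}^{2}=2d$, not $2$; this actually matches the paper's own proof (a bound of $2$ \emph{per coordinate}, summed over $d$ directions) and the way the lemma is consumed in Lemma~\ref{lem:var-main}, so the discrepancy lies in the paper's statement rather than in your target.

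The genuine gap is the last step, and it is the entire quantitative content of the lemma. Per coordinate your identity reads $r_{i,t}-r_{i,t-1}=(\ztp{t}-\xt{*})-\beta\lambda_i r_{i,t-1}$, and the ingredients you propose to insert --- $|\ztp{t}-\xt{*}|\le|\xt{0}-\xt{*}|$ from the potential, and the risk bound $\lambda_i r_{i,t-1}^{2}\le 1/\alpha$ from Lemma~\ref{lem:nesterov-potential-H} --- only give $|\beta\lambda_i r_{i,t-1}|\le\beta\sqrt{\lambda_i/\alpha}\le\sqrt{\beta/\alpha}\,\sqrt{\beta\lambda_i}$, which under the required scaling $\alpha\le\beta/(2\ki)$ is of order $\sqrt{\ki}\gtrsim\sqrt{d}$ rather than $O(1)$; the unrolled recursion for $\mathbf{u}_s$ likewise accumulates $t$ non-decaying corrections, as you concede. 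So the proof does not close, and ``adapt the velocity estimate of \citet{flammarion2015averaging}'' is left unspecified at precisely the point where a concrete mechanism is needed: the bound requires the cancellation between $\ztp{t}-\xt{*}$ and $\beta\lambda_i r_{i,t-1}$, not separate control of each term. The paper's mechanism is the content of Lemma~\ref{lem:nesterov:1D}: the per-coordinate velocity equals $(1-\beta\lambda_i)^{-1}(t+1)y_{t+1}$ for an auxiliary one-dimensional Nesterov run initialised at $x_0=1$, $z_0=0$ (note $z_0\neq x_0$), for which the potential $V_s=s^{2}y_s^{2}+a^{-1}z_s^{2}$ is non-increasing for $s\ge 1$ with $V_1=(1-b)^{2}+a$, yielding $1+a/(1-b)^{2}\le 2$ under $(\alpha+2\beta)R^{2}\le 1$ and $\alpha\le\beta/2$. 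Some device of this kind --- re-expressing the velocity as a position of a shifted-initialisation run, or a Lyapunov function built directly for the pair $(r_{i,s},\,r_{i,s}-r_{i,s-1})$ --- is indispensable, and supplying it is what would complete your proof.
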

\begin{proof}
Both $\A$ and $\noise$ are diagonizable wrt to the eigen basis of $\cov$. We will now project these block  matrices onto their eigen basis and compute the summation of each component individually. Note that,
\begin{align}
    \A = \sum_{i=1}^d \A_i  \otimes_{k} e_ie_i^\top, \qquad
    \noise = \sum_{i=1}^d \noise_i \otimes_{k} e_ie_i^\top.
\end{align}
where $\A_i$ and $\noise_i$ are 
\begin{align}
    \A_i = \begin{bmatrix}
        1 - \beta \lambda_i & 1- \beta \lambda_i \\
        - \alpha \lambda_i & 1 - \alpha \lambda_i  
    \end{bmatrix}, \qquad 
    \noise_i = \begin{bmatrix}
        \beta^2 \lambda_i & \beta \alpha \lambda_i \\
        \beta \alpha \lambda_i & \alpha^2 \lambda_i  
    \end{bmatrix}.
\end{align}
Now, the scalar product using the properties of Kronecker product, 
\begin{align*}
    \scal{\begin{bmatrix}
        0 & 0 \\
        0 & \alpha^{\scriptscriptstyle -2} \cov^{-1}
    \end{bmatrix}}{\opTil^{~\left(t\right)} \mycirc \noise } = \sum_{i=1}^{d} \scal{\begin{bmatrix}
        0 & 0 \\
        0 & \alpha^{-2}\lambda_i^{-1}
    \end{bmatrix} }{\A_i^{t} \cdot \noise_{i} \cdot \left( \A_i^{t} \right)^{\top}}.
\end{align*}
To compute $\scal{\begin{bmatrix}
    0 & 0 \\
    0 & \alpha^{-2}\lambda_i^{-1}
\end{bmatrix} }{\A_i^{t} \cdot \noise_{i} \cdot \left( \A_i^{t} \right)^{\top}}$ we invoke Lemma~\ref{alg:nesterov:1D}, with \begin{align*}
    \Gamma = \A_i, &\qquad \aleph = \lambda_i \noise_{i},\\
    \bl = \beta \lambda_i, &\qquad
     \al = \alpha \lambda_i.
\end{align*} which gives, 
\begin{align*}
    \scal{\begin{bmatrix}
        0 & 0 \\
        0 & \alpha^{-2}\lambda_i^{-1}
    \end{bmatrix} }{\A_i^{t} \cdot \noise_{i} \cdot \left( \A_i^{t} \right)^{\top}} &\leq  1 + \frac{\alpha \lambda_i }{\left( 1 - \beta \lambda_i \right)^2}.
\end{align*}
Note from condition on step sizes \eqref{eq:asgd-step-size} that $ \alpha \leq \beta/2, \beta \lambda_i \leq \beta L \leq \beta R^2 \leq 1/2 $, 
\begin{align*}
    \scal{\begin{bmatrix}
        0 & 0 \\
        0 & \alpha^{-2}\lambda_i^{-1}
    \end{bmatrix} }{\A_i^{t} \cdot \noise_{i} \cdot \left( \A_i^{t} \right)^{\top}} &\leq  1 + \frac{\alpha \lambda_i }{\left( 1 - \beta \lambda_i \right)^2} \leq 2.
\end{align*}
Computing the sum across dimension, we get the desired bound.
\end{proof}

\section{Inverting operators}\label{app:inverse-of-operators}

In this section, we give proof for the almost eigenvalues of the operators $ \opM \mycirc ( 1 - \opTil)^{-1} $ ,   $\opM^{\top} \mycirc  \left( \Id - \opTil^{\top} \right)^{-1} $. As described earlier, although the calculations are a bit extensive, the underlying scheme remains the same. To compute  $( 1 - \opTil)^{-1}, \left( \Id - \opTil^{\top} \right)^{-1}  $, we formulate inverse as a summation of geometric series. Then we use the diagonalization of the $\cov$ and compute the geometric series.  In the last part, we use Property~\ref{prop:operator-M-upperbound}  and  Assumptions~\ref{asmp:fourth-moment},~\ref{asmp:stat-condition-number} on the data features to get the final bounds. 

\begin{property} \label{prop:operator-M-upperbound}
    Using $ \Ex{ aa^\top } = \cov  $, the following property holds for any PSD matrix $(\cdot)$,
    \begin{align*}
        \Ex{ \left( \cov - aa^{\top} \right) (\cdot) \left( \cov - aa^{\top} \right) } &=  \Ex{  aa^{\top} (\cdot) aa^{\top} } - \Ex{  \cov (\cdot) \cov }
        \preccurlyeq \Ex{  aa^{\top} (\cdot) aa^{\top} }.
    \end{align*}
\end{property}

\begin{lemma} \label{eq:lem:optil}
    With $ 0 < \alpha,\beta < 1/L$ and $ (\alpha +  2 \beta ) L < 1$,
    \begin{align}
        \left( \Id - \opTil \right)^{-1} \mycirc \noise &\preccurlyeq \frac{1}{3} \begin{bmatrix}
            2 \alpha  ( \beta \cov )^{-1}  +  (2\beta -3 \alpha) \I &  \alpha \beta^{-1}  (2\beta  - \alpha) \I   \\
            \alpha \beta^{-1}  (2\beta  - \alpha) \I  &  2  \alpha^2 \beta^{-1} \I 
          \end{bmatrix} .
    \end{align}
\end{lemma}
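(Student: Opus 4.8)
The plan is to expand $(\Id-\opTil)^{-1}$ as a Neumann series, diagonalise in the eigenbasis of $\cov$, and thereby reduce the assertion to a one‑parameter family of $2\times 2$ discrete Lyapunov inequalities. Since $0<\alpha,\beta<1/L$, Property~\ref{pro:eigen-decomposition-A} shows that $\A$ has spectral radius strictly less than one, so $(\Id-\opTil)^{-1}=\sum_{t\ge0}\opTil^{\,t}$ converges and
\[
(\Id-\opTil)^{-1}\mycirc\noise=\sum_{t\ge0}\A^{t}\,\noise\,(\A^{\top})^{t}.
\]
Diagonalising $\cov=\sum_{i=1}^{d}\lambda_i e_ie_i^{\top}$ as in~\eqref{eq:eigen-def-covariance}, both $\A$ and $\noise$ are block diagonal with respect to the $e_i$, namely $\A=\sum_i\A_i\otimes_{k}e_ie_i^{\top}$ and $\noise=\sum_i\noise_i\otimes_{k}e_ie_i^{\top}$ with
\[
\A_i=\begin{bmatrix}1-\bl & 1-\bl\\ -\al & 1-\al\end{bmatrix},\qquad \noise_i=\lambda_i\begin{bmatrix}\beta^{2}&\alpha\beta\\ \alpha\beta&\alpha^{2}\end{bmatrix},\qquad \al=\alpha\lambda_i,\quad \bl=\beta\lambda_i.
\]
Because the $e_ie_i^{\top}$ are mutually orthogonal rank‑one projectors, the claimed operator inequality is equivalent to the family of $2\times 2$ inequalities $S_i\preccurlyeq B_i$, where $S_i:=\sum_{t\ge0}\A_i^{t}\noise_i(\A_i^{\top})^{t}$ and $B_i$ is the $2\times 2$ block of the right‑hand side obtained by substituting $\cov^{-1}\mapsto\lambda_i^{-1}$ and $\I\mapsto1$.

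To control $S_i$ I would use that it is the unique solution of the discrete Lyapunov (Stein) equation $S_i=\A_i S_i\A_i^{\top}+\noise_i$, uniqueness following again from $\A_i$ having spectral radius $<1$. Rather than solving the associated $3\times 3$ scalar system, I would check that the candidate $B_i$ is an admissible super‑solution, i.e.\ $B_i\succcurlyeq 0$ and $B_i-\A_i B_i\A_i^{\top}\succcurlyeq\noise_i$. Granting this, the map $\Theta\mapsto\A_i\Theta\A_i^{\top}$ is positive (order preserving), so iterating $B_i\succcurlyeq\noise_i+\A_i B_i\A_i^{\top}$ gives
\[
B_i\ \succcurlyeq\ \sum_{t=0}^{T}\A_i^{t}\noise_i(\A_i^{\top})^{t}+\A_i^{T+1}B_i(\A_i^{\top})^{T+1}\ \succcurlyeq\ \sum_{t=0}^{T}\A_i^{t}\noise_i(\A_i^{\top})^{t},
\]
and letting $T\to\infty$ yields $B_i\succcurlyeq S_i$. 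Summing $S_i\preccurlyeq B_i$ over $i$ against the orthogonal blocks $e_ie_i^{\top}$ reassembles the stated inequality. Equivalently, one may solve the Stein equation in closed form for the three entries of $S_i$ and compare entrywise, using a summed $2\times 2$ geometric‑series identity in the spirit of Lemma~\ref{alg:nesterov:1D}; the two routes involve the same core computation.

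Both conditions on $B_i$ amount to checking that a symmetric $2\times 2$ matrix with entries polynomial in $(\al,\bl)$ has nonnegative trace and nonnegative determinant. Positivity of $B_i$ and the trace conditions follow from $0<\al,\bl$ together with $\al+2\bl\le(\alpha+2\beta)L<1$ (which in particular gives $1/\lambda_i>\alpha+2\beta$, forcing all diagonal entries of $B_i$ to be positive) by elementary estimates. \textbf{The main obstacle is the determinant inequality for $B_i-\A_i B_i\A_i^{\top}-\noise_i$:} writing $\A_i=N-\lambda_i K$ with $N=\left[\begin{smallmatrix}1&1\\0&1\end{smallmatrix}\right]$ and $K=v\mathbf{1}^{\top}$, $v=(\beta,\alpha)^{\top}$, one finds that the $\lambda_i^{0}$‑term of $B_i-\A_i B_i\A_i^{\top}$ already vanishes identically thanks to the exact choices $2\beta-3\alpha$, $2\beta-\alpha$ and the prefactor $\tfrac13$, so the inequality is really a comparison of the $O(\lambda_i)$ contributions with $\noise_i=\lambda_i v v^{\top}$. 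Establishing that the resulting coefficient matrix is positive semidefinite is a degree‑four polynomial inequality in $\al,\bl$; I would dispatch it by expanding the determinant, writing it as a quadratic in $\al$ with coefficients polynomial in $\bl$, and verifying nonnegativity on $0<\al<1-2\bl$, bounding the negative contributions via $\al<1-2\bl<1$.
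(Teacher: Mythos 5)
Your Neumann-series reduction, the diagonalisation into $2\times2$ blocks, and the idea of proving $S_i\preccurlyeq B_i$ by exhibiting $B_i$ as a super-solution of the discrete Lyapunov equation are all correct, and the route is genuinely different from the paper's. The paper's proof (via Lemma~\ref{lem:A-infinite-series}) computes the infinite sum $S_i=\sum_{t\ge0}\A_i^t\noise_i(\A_i^\top)^t$ in closed form through the eigendecomposition of $\A_i$ and then observes that $4-(\al+2\bl)\ge3$; your comparison-principle argument avoids the eigendecomposition of $\A_i$ entirely and trades the closed-form geometric series for a single algebraic identity, which is arguably cleaner and certainly shorter once written out.

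That said, the argument is not complete as written: the claimed super-solution inequality $B_i-\A_i B_i\A_i^\top\succcurlyeq\noise_i$ is asserted but never verified, and you misjudge the computation you would need to do. Carrying out $\A_i B_i\A_i^\top$ with $\A_i=N-\lambda_i K$ as you set up, one does find (as you conjecture) that the $\lambda_i^0$ terms cancel; but the remaining part is not a ``degree-four polynomial inequality'' requiring a case analysis in $\al$. The whole expression collapses to a rank-one form:
\begin{align*}
B_i-\A_i B_i\A_i^\top \;=\; \tfrac{1}{3}\bigl(4-(\al+2\bl)\bigr)\,\noise_i
\qquad\text{so}\qquad
B_i-\A_i B_i\A_i^\top-\noise_i \;=\; \tfrac{1}{3}\bigl(1-(\al+2\bl)\bigr)\,\noise_i ,
\end{align*}
where $\noise_i=\lambda_i\,vv^\top$ with $v=(\beta,\alpha)^\top$. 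The determinant of the gap is thus identically zero (it is rank one), and only the scalar prefactor matters; that prefactor is nonnegative precisely because $\al+2\bl=(\alpha+2\beta)\lambda_i\le(\alpha+2\beta)L<1$ by hypothesis. Your plan to ``expand the determinant and verify nonnegativity via $\al<1-2\bl$'' is therefore aimed at a harder target than what actually arises, and you should replace it with the explicit verification that $B_i-\A_i B_i\A_i^\top$ is proportional to $\noise_i$. This identity is also exactly what encodes the claim that $C_i:=3B_i$ solves the Stein equation up to the scalar $4-(\al+2\bl)$, which is the content of the paper's Lemma~\ref{lem:A-infinite-series}; it must be checked, not assumed. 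Finally, note that positivity of $B_i$ itself is not as immediate as ``elementary estimates'' suggests: the $(1,1)$ entry is $\tfrac{1}{3}\bigl(2\alpha/(\beta\lambda_i)+2\beta-3\alpha\bigr)$, whose second summand can be negative when $\alpha$ is comparable to $\beta$, and one genuinely needs $\beta\lambda_i<1$ (or an AM--GM bound) to rule out a negative total; once the Stein identity above is in hand, $B_i\succcurlyeq0$ also follows automatically since it is the limit of the partial sums of $\A_i^t\noise_i(\A_i^\top)^t\succcurlyeq0$.
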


\begin{proof} We will compute the inverse by evaluating the summation of the following infinite series, 
    \begin{align*}
        \left( \Id - \opTil \right)^{-1} \mycirc \noise = \sum_{t=0}^{\infty}  \opTil^{t} \mycirc \noise
        &= \sum_{t=0}^{\infty} \A^{t} \cdot \noise \cdot (\A^{t})^{\top} .
    \end{align*} 
Both $\A$ and $\noise$ are diagonizable wrt to the eigen basis of $\cov$. We will now project these block  matrices onto their eigen basis and compute the summation of each component individually. Note that,
\begin{align}
    \A = \sum_{i} \A_i  \otimes_{k} e_ie_i^\top, \qquad
    \noise = \sum_{i} \noise_i \otimes_{k} e_ie_i^\top.
\end{align}
where $\A_i$ and $\noise_i$ are 
\begin{align}
    \A_i = \begin{bmatrix}
        1 - \beta \lambda_i & 1- \beta \lambda_i \\
        - \alpha \lambda_i & 1 - \alpha \lambda_i  
    \end{bmatrix}, \qquad 
    \noise_i = \begin{bmatrix}
        \beta^2 \lambda_i & \beta \alpha \lambda_i \\
        \beta \alpha \lambda_i & \alpha^2 \lambda_i  
    \end{bmatrix}.
\end{align}
Using these projections, 
\begin{align}
    \sum_{t=0}^{\infty} \A^{t} \cdot \noise \cdot (\A^{t})^{\top}  &= \sum_{t=0}^{\infty} \sum_i \left( \A_i^{t} \cdot \noise_{i} \cdot \left( \A_i^{t} \right)^{\top} \right) \otimes_{k} e_i e_i^{\top}, \\
    &= \sum_i \left[  \sum_{t=0}^{\infty} \A_i^{t} \cdot \noise_{i} \cdot \left( \A_i^{t} \right)^{\top}   \right] \otimes_{k} e_i e_i^{\top}. 
\end{align}
We invoke Lemma \ref{lem:A-infinite-series} with 
\begin{align*}
    \Gamma = \A_i, &\qquad \aleph = \lambda_i \noise_{i}, \\
    \bl = \beta \lambda_i, &\qquad
     \al = \alpha \lambda_i .
\end{align*}
\begin{align*}
    \sum_{t=0}^{\infty} \A_i^{t} \cdot \lambda_i \noise_{i} \cdot \left( \A_i^{t} \right)^{\top}  &= \frac{1}{ \beta \lambda_i  (4 - (\alpha+ 2 \beta) \lambda_i )} \begin{bmatrix}
        2 \alpha \lambda_i  + \beta \lambda_i (2\beta \lambda_i- 3 \alpha \lambda_i) &  \alpha \lambda_i  (2\beta \lambda_i - \alpha \lambda_i  )   \\
        \alpha \lambda_i (2\beta \lambda_i-\alpha \lambda_i) &  2 ( \alpha \lambda_i ) ^2  
   \end{bmatrix},  \\
   \sum_{t=0}^{\infty} \A_i^{t} \cdot \noise_{i} \cdot \left( \A_i^{t} \right)^{\top}  &= \frac{1}{   (4 - (\alpha+ 2 \beta) \lambda_i )} \begin{bmatrix}
     2 \alpha  ( \beta \lambda_i )^{-1}  +  (2\beta -3 \alpha) &  \alpha \beta^{-1}  (2\beta  - \alpha)   \\
     \alpha \beta^{-1}  (2\beta  - \alpha)  &  2  \alpha^2 \beta^{-1}
   \end{bmatrix}.
\end{align*}
We have \begin{align*}
    (\alpha+ 2 \beta) \lambda_i \leq (\alpha+ 2 \beta) L\leq 1, \\
    \text{Hence, } 4 -((\alpha+ 2 \beta) \lambda_i) \geq 3 .
\end{align*} 
Also, $\sum\limits_{t=0}^{\infty} \A_i^{t} \cdot \noise_{i} \cdot \left( \A_i^{t} \right)^{\top}$ is PSD as $ \opTil, \noise $ are positive. Hence, the following holds
\begin{align*}
    3 \sum_{t=0}^{\infty} \A_i^{t} \cdot \noise_{i} \cdot \left( \A_i^{t} \right)^{\top} \preccurlyeq 4 -((\alpha+ 2 \beta) \lambda_i) \sum_{t=0}^{\infty} \A_i^{t} \cdot \noise_{i} \cdot \left( \A_i^{t} \right)^{\top}, \\
    =  \begin{bmatrix}
        2 \alpha  ( \beta \lambda_i )^{-1}  +  (2\beta -3 \alpha) &  \alpha \beta^{-1}  (2\beta  - \alpha)   \\
        \alpha \beta^{-1}  (2\beta  - \alpha)  &  2  \alpha^2 \beta^{-1}
      \end{bmatrix}. \\ 
\end{align*}
This given the following 
\begin{align} \label{eq:lem:optil:eigen}
     \sum_{t=0}^{\infty} \A_i^{t} \cdot \noise_{i} \cdot \left( \A_i^{t} \right)^{\top} \preccurlyeq \frac{1}{3} \begin{bmatrix}
        2 \alpha  ( \beta \lambda_i )^{-1}  + (2\beta -3 \alpha) &  \alpha \beta^{-1}  (2\beta  - \alpha)   \\
        \alpha \beta^{-1}  (2\beta  - \alpha)  &  2  \alpha^2 \beta^{-1}
      \end{bmatrix}.
\end{align}
Using the fact that kronecker product of two PSD matrices is positive, 
\begin{align*} 
    \sum_{t=0}^{\infty} \A_i^{t} \cdot \noise_{i} \cdot \left( \A_i^{t} \right)^{\top} \otimes_{k} e_i e_i^{\top}  \preccurlyeq \frac{1}{3} \begin{bmatrix}
       2 \alpha  ( \beta \lambda_i )^{-1}  +  (2\beta -3 \alpha) &  \alpha \beta^{-1}  (2\beta  - \alpha)   \\
       \alpha \beta^{-1}  (2\beta  - \alpha)  &  2  \alpha^2 \beta^{-1}  
     \end{bmatrix} \otimes_{k}  e_i e_i^{\top}.
\end{align*}
Now adding this result along all directions we get 
\begin{align*}
    \sum_i \sum_{t=0}^{\infty} \A_i^{t} \cdot \noise_{i} \cdot \left( \A_i^{t} \right)^{\top} \otimes_{k} e_i e_i^{\top}  &\preccurlyeq \frac{1}{3} \sum_i  \begin{bmatrix}
       2 \alpha  ( \beta \lambda_i )^{-1}  +  (2\beta -3 \alpha) &  \alpha \beta^{-1}  (2\beta  - \alpha)   \\
       \alpha \beta^{-1}  (2\beta  - \alpha)  &  2  \alpha^2 \beta^{-1}  
     \end{bmatrix} \otimes_{k}  e_i e_i^{\top}, \\
     \left( \Id - \opTil \right)^{-1} \mycirc \noise &\preccurlyeq \frac{1}{3} \begin{bmatrix}
        2 \alpha  ( \beta \cov )^{-1}  +  (2\beta -3 \alpha) \I &  \alpha \beta^{-1}  (2\beta  - \alpha) \I   \\
        \alpha \beta^{-1}  (2\beta  - \alpha) \I  &  2  \alpha^2 \beta^{-1} \I 
      \end{bmatrix},
\end{align*}
where the last inequation comes from the facts \begin{align*}
    \I = \sum_i e_i e_i^{\top} \qquad H^{-1} = \sum_{i} \lambda_i^{-1} e_i e_i^{\top}.
\end{align*}
\end{proof}

\begin{lemma} \label{eq:lem:inv:opt} With $ ( \alpha + 2 \beta ) R^2 \leq 1 ,  \alpha \leq  \frac{\beta}{2 \ki} $, \begin{align}
        \pinf \defeq \left( \Id - \opT \right)^{-1} \mycirc \noise &\preccurlyeq 3 \cdot \left(\Id - \opTil\right)^{-1} \mycirc  \noise = \begin{bmatrix}
            2 \alpha  ( \beta \cov )^{-1}  +  (2\beta -3 \alpha) \I &  \alpha \beta^{-1}  (2\beta  - \alpha) \I   \\
            \alpha \beta^{-1}  (2\beta  - \alpha) \I  &  2  \alpha^2 \beta^{-1} \I 
          \end{bmatrix}. 
    \end{align}
\end{lemma}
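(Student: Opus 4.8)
The plan is to expand $(\Id-\opT)^{-1}$ as a Neumann series around $(\Id-\opTil)^{-1}$, using the splitting $\opT=\opTil+\opM$ of Lemma~\ref{lem:operator}, and to control the resulting series term by term with the almost-eigenvector estimate of Lemma~\ref{lemma:alnost_eigen}(b). The step-size hypotheses $(\alpha+2\beta)R^2\le1$ and $\alpha\le\beta/(2\ki)$ are exactly those required by Lemmas~\ref{lemma:alnost_eigen} and~\ref{eq:lem:optil}: since $L\le R^2$ (test Assumption~\ref{asmp:fourth-moment} against a top eigenvector of $\cov$), the hypotheses give $0<\alpha,\beta\le1/L$, so $(\Id-\opTil)^{-1}=\sum_{t\ge0}\opTil^{t}$ is a well-defined positive, order-preserving operator, and Lemma~\ref{lemma:alnost_eigen}(b) moreover guarantees that $\pinf=(\Id-\opT)^{-1}\mycirc\noise$ is well defined.

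First I would record the operator identity $\Id-\opT=\bigl(\Id-\opM\mycirc(\Id-\opTil)^{-1}\bigr)\mycirc(\Id-\opTil)$, which is immediate on expanding the right-hand side and using $\opT=\opTil+\opM$. The key claim is then
\[
\bigl(\opM\mycirc(\Id-\opTil)^{-1}\bigr)^{k}\mycirc\noise\;\preccurlyeq\;\left(\frac{2}{3}\right)^{k}\noise\qquad\text{for all }k\ge0,
\]
proved by induction on $k$: the base case $k=0$ is trivial, and for the inductive step one applies the operator $\opM\mycirc(\Id-\opTil)^{-1}$ — which is positive, being a composition of the positive operators $\opM$ (Lemma~\ref{lem:operator}) and $(\Id-\opTil)^{-1}$ — to both sides of the induction hypothesis and then invokes $\opM\mycirc(\Id-\opTil)^{-1}\mycirc\noise\preccurlyeq\frac{2}{3}\,\noise$ from Lemma~\ref{lemma:alnost_eigen}(b). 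Summing the geometric series yields $S\defeq\sum_{k\ge0}\bigl(\opM\mycirc(\Id-\opTil)^{-1}\bigr)^{k}\mycirc\noise\preccurlyeq 3\,\noise$; in particular this series of PSD matrices converges and satisfies the fixed-point relation $S=\noise+\opM\mycirc(\Id-\opTil)^{-1}\mycirc S$.

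Next I would verify that $\pinf=(\Id-\opTil)^{-1}\mycirc S$. Setting $X\defeq(\Id-\opTil)^{-1}\mycirc S$ and using the fixed-point relation,
\[
(\Id-\opT)\mycirc X=(\Id-\opTil)\mycirc X-\opM\mycirc X=S-\opM\mycirc(\Id-\opTil)^{-1}\mycirc S=\noise,
\]
so $X$ solves the same linear system as $\pinf$, whence $\pinf=X=(\Id-\opTil)^{-1}\mycirc S$. Applying the order-preserving operator $(\Id-\opTil)^{-1}$ to $S\preccurlyeq 3\,\noise$ then gives $\pinf\preccurlyeq 3\,(\Id-\opTil)^{-1}\mycirc\noise$, and Lemma~\ref{eq:lem:optil} bounds $3\,(\Id-\opTil)^{-1}\mycirc\noise$ above by the explicit $2\times2$-block matrix displayed in the statement, which closes the argument (with the step-size conditions of Lemma~\ref{eq:lem:optil} again following from $(\alpha+2\beta)R^2\le1$ and $L\le R^2$).

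The hard part here is bookkeeping rather than new ideas: one must keep careful track that every operator in sight — $\opM$, $(\Id-\opTil)^{-1}$, their composition, and all partial sums — is positive, so that Loewner inequalities are preserved both under application and under summation, and one must treat $(\Id-\opT)^{-1}\mycirc\noise$ as \emph{the} solution of $(\Id-\opT)\mycirc\pinf=\noise$ (verified directly above) rather than manipulate operator inverses formally. Granting Lemma~\ref{lemma:alnost_eigen}(b) and Lemma~\ref{eq:lem:optil}, the remainder is routine.
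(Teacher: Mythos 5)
Your proof is correct and follows essentially the same route as the paper's: factor $\Id-\opT$ through $\opTil$ and $\opM$, bound the powers of $\opM\mycirc(\Id-\opTil)^{-1}$ acting on $\noise$ geometrically via the almost-eigenvector estimate $\opM\mycirc(\Id-\opTil)^{-1}\mycirc\noise\preccurlyeq\tfrac{2}{3}\noise$, sum the geometric series, push through the positive operator $(\Id-\opTil)^{-1}$, and finish with Lemma~\ref{eq:lem:optil}. The only organizational difference is that you verify the identity $\pinf=(\Id-\opTil)^{-1}\mycirc S$ by checking that the right-hand side solves $(\Id-\opT)\mycirc X=\noise$, whereas the paper expands $\sum_t\opT^t\mycirc\noise$ recursively, interchanges sums, and rearranges into the same Neumann series — two ways of saying the same thing.
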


\begin{proof}
    Writing the inverse as a sum of exponential series gives us 
    \begin{align*}
        \pinf \defeq \left( \Id - \opT \right)^{-1} \mycirc \noise = \sum_{t=0}^{\infty} \opT^{t} \mycirc \noise.
    \end{align*}
    Recursion for $\opT^{t} \mycirc \noise$ will be as follows,
    \begin{align*}
        \opT^{t} \mycirc \noise &= \opTil \mycirc \opT^{t-1} \mycirc \noise + \opM \mycirc  \opT^{t-1} \mycirc \noise, \\
        &= \opTil^2 \mycirc \opT^{t-2} \mycirc \noise +  \opTil \mycirc \opM \mycirc  \opT^{t-2} \mycirc \noise + \opM \mycirc  \opT^{t-1} \mycirc \noise ,\\
        &= \opTil^{t} \mycirc \noise + \sum_{k=0}^{t-1} \opTil^{t-k-1} \mycirc \opM \mycirc  \opT^{k} \mycirc \noise .
    \end{align*}
    Taking the sum of these terms from $0$ to $\infty$
    \begin{align*}
        \sum_{t=0}^{\infty} \opT^{t} \mycirc \noise
        &= \sum_{t=0}^{\infty}  \opTil^{t} \mycirc \noise + \sum_{t=0}^{\infty}  \sum_{k=0}^{t-1} \opTil^{t-k-1} \mycirc \opM \mycirc  \opT^{k} \mycirc \noise .
    \end{align*}
    Interchanging the summations in the second part, 
    \begin{align*}
        \sum_{t=0}^{\infty} \opT^{t} \mycirc \noise
        &=  \left( \sum_{t=0}^{\infty}  \opTil^{t} \right) \mycirc \noise + \sum_{k=0}^{\infty} \left(  \sum_{t=k+1}^{\infty} \opTil^{t-k-1} \right) \mycirc \opM \mycirc  \opT^{k} \mycirc \noise \\
        \text{Using  } \sum_{t=k+1}^{\infty} \opTil^{t-k-1} =  \sum_{t=0}^{\infty} \opTil^{t} &= (\Id - \opTil)^{-1} ,\\
        \sum_{t=0}^{\infty} \opT^{t} \mycirc \noise
        &=  \left(\Id - \opTil\right)^{-1} \mycirc \noise + \sum_{k=0}^{\infty} \left(\Id - \opTil\right)^{-1} \mycirc \opM \mycirc  \opT^{k} \mycirc \noise , \\
        &=  \left(\Id - \opTil\right)^{-1} \mycirc \noise +  \left(\Id - \opTil\right)^{-1} \mycirc \opM \mycirc  \sum_{k=0}^{\infty} \opT^{k} \mycirc \noise ,\\
        \pinf &=  \left(\Id - \opTil\right)^{-1} \mycirc \noise +  \left(\Id - \opTil\right)^{-1} \mycirc \opM \mycirc \pinf.
    \end{align*}
    From this we have, 
    \begin{align*}
        \pinf - \left(\Id - \opTil\right)^{-1} \mycirc \opM \mycirc \pinf  &=  \left(\Id - \opTil\right)^{-1} \mycirc \noise, \\
        \left( \Id - \left(\Id - \opTil\right)^{-1} \mycirc \opM \right) \mycirc \pinf &= \left(\Id - \opTil\right)^{-1} \mycirc \noise, \\
        \pinf &= \left( \Id - \left(\Id - \opTil\right)^{-1} \mycirc \opM \right)^{-1} \mycirc \left(\Id - \opTil\right)^{-1} \mycirc \noise. 
    \end{align*}
    Writing the inverse as a sum of exponential series gives us  
    \begin{align} \label{eq:lem:inv:opt:master}
        \pinf &= \sum_{t=0}^{\infty} \left( \left(\Id - \opTil\right)^{-1} \mycirc \opM \right)^{t} \mycirc \left(\Id - \opTil\right)^{-1} \mycirc \noise, 
    \end{align}
    Note $ ( \alpha +  2 \beta)R^2 \leq 1 \implies  ( \alpha +  2 \beta) \lambda_{\max} \leq 1 $. Hence we can invoke Lemma~\ref{lem:opm-optil-inv} here. 
    \begin{align*} 
        \opM \mycirc ( 1 - \opTil)^{-1} \mycirc \noise  &\preccurlyeq \left[ \frac{2 \alpha \ki }{3 \beta} + \frac{( \alpha + 2 \beta ) R^2}{3}  \right] \noise
    \end{align*}
    Using $$  ( \alpha + 2 \beta ) R^2 \leq 1 ,  \alpha \leq  \frac{\beta}{2 \ki}, $$
    \begin{align*} 
        \opM \mycirc ( 1 - \opTil)^{-1} \mycirc \noise  &\preccurlyeq \left[ \frac{1}{3} + \frac{1}{3}  \right] \noise \preccurlyeq \frac{2}{3} \noise.
    \end{align*}
    Using this in \eqref{eq:lem:inv:opt:master}, 
    \begin{align*} 
        \pinf &= \sum_{t=0}^{\infty} \left( \left(\Id - \opTil\right)^{-1} \mycirc \opM \right)^{t} \mycirc \left(\Id - \opTil\right)^{-1} \mycirc \noise, \\
        \text{Use } \left( \left(\Id - \opTil\right)^{-1} \mycirc \opM \right)^{t} & \mycirc \left(\Id - \opTil\right)^{-1}  = \left(\Id - \opTil\right)^{-1} \mycirc \left( \opM \mycirc \left(\Id - \opTil\right)^{-1} \right)^{t}, \\
        \pinf &= \left(\Id - \opTil\right)^{-1} \mycirc \sum_{t=0}^{\infty}  \left( \opM \mycirc \left(\Id - \opTil\right)^{-1} \right)^{t} \mycirc \noise ,\\
        \text{Using } \left( \opM \mycirc \left(\Id - \opTil\right)^{-1} \right)^{t} &\preccurlyeq \left[  \frac{2}{3} \right]^{t} \noise, \\
        \pinf &\preccurlyeq  \left(\Id - \opTil\right)^{-1} \mycirc \sum_{t=0}^{\infty} \left[  \frac{2}{3} \right]^{t} \cdot \noise, \\
        &\preccurlyeq \left(\Id - \opTil\right)^{-1} \mycirc 3\cdot \noise .
    \end{align*}
    This completes the proof. 
\end{proof}

\begin{lemma} \label{lem:pqrs}
    For any block matrix $\begin{bmatrix}
        P & Q \\
        R & S
    \end{bmatrix} $, 

    \begin{align*} 
        \opM \mycirc \begin{bmatrix}
            P & Q \\
            R & S
        \end{bmatrix} &= \begin{bmatrix}
            \beta^2  & \alpha \beta \\
            \alpha \beta &  \alpha^2
        \end{bmatrix} \otimes_{k} \Ex{ \left( \cov - aa^{\top} \right) ( P + Q + R + S) \left( \cov - aa^{\top} \right) }
    \end{align*}
    
\end{lemma}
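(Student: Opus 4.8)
The plan is to unfold the definition of $\opM$ from Definition~\ref{def:operator} and carry out the block-matrix arithmetic directly; the identity has no analytic content, only bookkeeping. Write $N \defeq \cov - a a^{\top}$, which is symmetric. Since $\J$ has the block form $\begin{bmatrix} \I - \beta a a^\top & \I - \beta a a^\top \\ -\alpha a a^\top & \I - \alpha a a^\top \end{bmatrix}$ and $\A = \Ex{\J}$ merely replaces each $a a^\top$ by $\cov$, the subtraction cancels the identity blocks and leaves $\J - \A = \begin{bmatrix} \beta N & \beta N \\ \alpha N & \alpha N \end{bmatrix}$, and hence $(\J - \A)^\top = \begin{bmatrix} \beta N & \alpha N \\ \beta N & \alpha N \end{bmatrix}$ using $N^\top = N$.

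Next I would compute $(\J - \A)\,\Theta$ for $\Theta = \begin{bmatrix} P & Q \\ R & S \end{bmatrix}$: the first block-column becomes $N(P+R)$ scaled by $\beta$ on top and $\alpha$ on the bottom, and the second block-column becomes $N(Q+S)$ scaled the same way. Right-multiplying by $(\J-\A)^\top$ and summing the contributions of the two block-columns makes the sums $P+R$ and $Q+S$ merge, so each of the four output blocks equals $c\, N(P+Q+R+S)\, N$ with $c = \beta^2$ (top-left), $c = \alpha\beta$ (both off-diagonal entries), and $c = \alpha^2$ (bottom-right). In other words $(\J-\A)\,\Theta\,(\J-\A)^\top = \begin{bmatrix} \beta^2 & \alpha\beta \\ \alpha\beta & \alpha^2 \end{bmatrix} \otimes_{k} N(P+Q+R+S)N$.

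Finally I would take the expectation over $a\sim\rho$. Since $P,Q,R,S$ are deterministic and the $2\times2$ scalar prefactor does not depend on $a$, the expectation commutes with the Kronecker product and acts only on $N(P+Q+R+S)N = (\cov - a a^\top)(P+Q+R+S)(\cov - a a^\top)$, giving exactly the claimed formula. There is no genuine obstacle here: the only points requiring care are that both block-columns of $(\J-\A)\Theta$ feed into every output block (so that one truly obtains $P+Q+R+S$, rather than $P+R$ or $Q+S$ alone), and that the scalars in $(\J-\A)^\top$ are arranged column-wise, which is precisely what produces the symmetric rank-two prefactor $\begin{bmatrix} \beta^2 & \alpha\beta \\ \alpha\beta & \alpha^2 \end{bmatrix}$ instead of a rank-one matrix.
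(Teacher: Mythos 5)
Your proof is correct and is essentially the same block-matrix computation the paper carries out: expand $\J-\A$ in block form, multiply through $\Theta$ to collapse each block-column to $N(P+R)$ or $N(Q+S)$, then right-multiply by $(\J-\A)^\top$ so the two sums merge into $P+Q+R+S$, factor out the $\begin{bmatrix}\beta^2 & \alpha\beta\\ \alpha\beta & \alpha^2\end{bmatrix}$ Kronecker prefactor, and take expectations. No substantive difference.
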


\begin{proof}
\begin{align*}
        \opM \mycirc \begin{bmatrix}
            P & Q \\
            R & S
        \end{bmatrix} &= \Ex{ \begin{bmatrix}  \beta \Ha & \beta \Ha \\  \alpha \Ha & \alpha \Ha \end{bmatrix}  \begin{bmatrix}
        P & Q \\
        R & S
    \end{bmatrix} \begin{bmatrix}  \beta \Ha & \alpha \Ha \\  \beta \Ha & \alpha \Ha \end{bmatrix} }
    \end{align*}
    where $ \Ha = \left( \cov - a a^\top \right) $
    \begin{align*}
        \begin{bmatrix}  \beta \Ha & \beta \Ha \\  \alpha \Ha & \alpha \Ha \end{bmatrix}  \begin{bmatrix}
            P & Q \\
            R & S
        \end{bmatrix} &= \begin{bmatrix}
            \beta \Ha ( P + R ) & \beta \Ha ( Q + S ) \\
            \alpha \Ha ( P + R ) & \alpha \Ha ( Q + S )
        \end{bmatrix}, \\
        \begin{bmatrix}
            \beta \Ha ( P + R ) & \beta \Ha ( Q + S ) \\
            \alpha \Ha ( P + R ) & \alpha \Ha ( Q + S )
        \end{bmatrix} \begin{bmatrix}  \beta \Ha & \alpha \Ha \\  \beta \Ha & \alpha \Ha \end{bmatrix}  &= \begin{bmatrix}
            \beta^2 \Ha ( P + Q + R + S) \Ha &  \alpha \beta \Ha ( P + Q + R + S) \Ha \\
            \alpha \beta \Ha ( P + Q + R + S) \Ha & \alpha^2 \Ha ( P + Q + R + S) \Ha
        \end{bmatrix} ,\\
        &= \begin{bmatrix}
            \beta^2  & \alpha \beta \\
            \alpha \beta &  \alpha^2
        \end{bmatrix} \otimes_{k} \Ha ( P + Q + R + S) \Ha.
    \end{align*} 
Taking expectation, 
    \begin{align*}
        \opM \mycirc \begin{bmatrix}
            P & Q \\
            R & S
        \end{bmatrix} &=  \begin{bmatrix}
            \beta^2  & \alpha \beta \\
            \alpha \beta &  \alpha^2
        \end{bmatrix} \otimes_{k} \Ex{ \Ha ( P + Q + R + S) \Ha }.
    \end{align*} This completes the proof.
\end{proof}

\begin{lemma} \label{lem:kurtosis-ut}
    Under Assumption \ref{asmp:uniform-kurtosis}, for any $ t \geq 0 $, 
    \begin{align*}
        \opM \mycirc \Ex{ \thet{t} \otimes \thet{t} } \preccurlyeq \kappa \scal{\coff}{\Ex{\thet{t} \otimes \thet{t}}} \noise.
    \end{align*}
\end{lemma}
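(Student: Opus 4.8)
The plan is to collapse the $2d\times 2d$ matrix $\Ex{\thet{t}\otimes\thet{t}}$ onto the single $d\times d$ PSD matrix $\Ex{\ut{t}\otimes\ut{t}}$, apply the uniform‑kurtosis assumption to it, and then re‑inflate using the Kronecker structure of $\opM$. First I would write
\[
\Ex{\thet{t}\otimes\thet{t}} = \begin{bmatrix} \Ex{\vt{t}\otimes\vt{t}} & \Ex{\vt{t}\otimes\wt{t}} \\ \Ex{\wt{t}\otimes\vt{t}} & \Ex{\wt{t}\otimes\wt{t}} \end{bmatrix}
\]
and invoke Lemma~\ref{lem:pqrs}. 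Since $\ut{t} = \vt{t}+\wt{t}$ by \eqref{eq:xt:asgd}, the sum of the four blocks is $P+Q+R+S = \Ex{(\vt{t}+\wt{t})\otimes(\vt{t}+\wt{t})} = \Ex{\ut{t}\otimes\ut{t}} =: M$, a deterministic PSD matrix. Lemma~\ref{lem:pqrs} then gives
\[
\opM\mycirc\Ex{\thet{t}\otimes\thet{t}} = \begin{bmatrix} \beta^2 & \alpha\beta \\ \alpha\beta & \alpha^2 \end{bmatrix} \otimes_{k} \Ex\big[(\cov - a\otimes a)\,M\,(\cov - a\otimes a)\big],
\]
where the remaining expectation is only over the fresh sample $a\sim\rho$, with $M$ held fixed.

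Next I would bound the inner $d\times d$ matrix. By Property~\ref{prop:operator-M-upperbound},
\[
\Ex\big[(\cov - a\otimes a)\,M\,(\cov - a\otimes a)\big] \preccurlyeq \Ex\big[(a\otimes a)\, M\, (a\otimes a)\big] = \Ex\big[\scal{a}{Ma}\, a\otimes a\big],
\]
using the identity $(a\otimes a) M (a\otimes a) = \scal{a}{Ma}\,(a\otimes a)$. Assumption~\ref{asmp:uniform-kurtosis}, applied to the PSD matrix $M$, then yields $\Ex[\scal{a}{Ma}\, a\otimes a] \preccurlyeq \kurt\,\tr{(M\cov)}\,\cov$. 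It remains to identify the scalar $\tr{(M\cov)}$: since $M = \Ex{\ut{t}\otimes\ut{t}}$ one has $\tr{(M\cov)} = \Ex{\scal{\ut{t}}{\cov\ut{t}}}$, and by the same identity relating $\HZero$ to the excess risk used in App.~\ref{app:rescaling} (now for a single iterate rather than the running sum), together with $\coff = \HZero$, this equals $\scal{\coff}{\Ex{\thet{t}\otimes\thet{t}}}$.

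Finally I would reassemble the bound. The scalar $2\times2$ matrix $\begin{bmatrix} \beta^2 & \alpha\beta \\ \alpha\beta & \alpha^2 \end{bmatrix} = \begin{bmatrix}\beta\\\alpha\end{bmatrix}\begin{bmatrix}\beta & \alpha\end{bmatrix}$ is PSD, so tensoring a Loewner inequality on the second factor preserves the order; combining the three displays yields
\[
\opM\mycirc\Ex{\thet{t}\otimes\thet{t}} \preccurlyeq \kurt\,\scal{\coff}{\Ex{\thet{t}\otimes\thet{t}}}\left(\begin{bmatrix} \beta^2 & \alpha\beta \\ \alpha\beta & \alpha^2 \end{bmatrix}\otimes_{k}\cov\right) = \kurt\,\scal{\coff}{\Ex{\thet{t}\otimes\thet{t}}}\,\noise,
\]
which is exactly the claim. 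I do not expect a genuine obstacle here; the two points that need care are keeping the expectation over the iterate history separate from the expectation over the fresh sample (so that Assumption~\ref{asmp:uniform-kurtosis} is applied to a deterministic matrix $M$), and correctly matching $\tr{(M\cov)}$ with $\scal{\coff}{\Ex{\thet{t}\otimes\thet{t}}}$ through the block structure of $\HZero$.
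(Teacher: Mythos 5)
Your proposal is correct and follows essentially the same route as the paper's proof: expand the block structure, collapse the four blocks to $\Ex{\ut{t}\ut{t}^\top}$ via Lemma~\ref{lem:pqrs}, drop the $-\cov(\cdot)\cov$ term via Property~\ref{prop:operator-M-upperbound}, apply Assumption~\ref{asmp:uniform-kurtosis} to the deterministic matrix $\Ex{\ut{t}\ut{t}^\top}$, and match $\tr{(\Ex{\ut{t}\ut{t}^\top}\cov)}$ with $\scal{\coff}{\Ex{\thet{t}\otimes\thet{t}}}$. The extra care you take about the rank-one PSD $2\times2$ factor in the Kronecker product and about conditioning on the iterate history is welcome but does not change the argument.
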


\begin{proof}
    \begin{align*}
        \opM \mycirc \Ex{ \thet{t} \otimes \thet{t} } &= \Ex{ \J \Ex{ \thet{t} \otimes \thet{t} } \J^{\top} }, \\
        \thet{t} \otimes \thet{t} &= \matvz{t} \otimes \matvz{t} = \begin{bmatrix}
            \vt{t} \vt{t}^\top & \vt{t} \wt{t}^\top \\ \wt{t} \vt{t}^\top & \wt{t} \wt{t}^\top
        \end{bmatrix}, \\
        \Ex{ \thet{t} \otimes \thet{t} } &= \begin{bmatrix}
            \Ex{ \vt{t} \vt{t}^\top } & \Ex{ \vt{t} \wt{t}^\top } \\ \Ex{\wt{t} \vt{t}^\top} & \Ex{ \wt{t} \wt{t}^\top }
        \end{bmatrix} .
    \end{align*}
    As $J$ and $\Ex{ \thet{t} \otimes \thet{t} }$ are independent, invoking Lemma~\ref{lem:pqrs} with 
    \begin{align*}
        \begin{bmatrix}
            P & Q \\ R & S
        \end{bmatrix} &= \begin{bmatrix}
            \Ex{ \vt{t} \vt{t}^\top } & \Ex{ \vt{t} \wt{t}^\top } \\ \Ex{\wt{t} \vt{t}^\top} & \Ex{ \wt{t} \wt{t}^\top }
        \end{bmatrix} .
    \end{align*}
    Now $P+Q+R+S$ in our case will be 
    \begin{align*}
        P+Q+R+S &= \Ex{ \vt{t} \vt{t}^\top + \vt{t} \wt{t}^\top + \wt{t} \vt{t}^\top + \wt{t} \wt{t}^\top }, \\
        &= \Ex{ \left(\vt{t} + \wt{t}\right) \left(\vt{t} + \wt{t}\right)^\top } = \Ex{ \ut{t} \ut{t} ^\top }, \text{ from \eqref{eq:xt} . }
    \end{align*}
    Using this we get 
    \begin{align*}
        \opM \mycirc \Ex{ \thet{t} \otimes \thet{t} } =  \begin{bmatrix}
            \beta^2  & \alpha \beta \\
            \alpha \beta &  \alpha^2
        \end{bmatrix} \otimes_{k} \Ex{ \left( \cov - aa^{\top} \right) \Ex{ \ut{t} \ut{t} ^\top } \left( \cov - aa^{\top} \right) }.
    \end{align*}
    Using Property~\ref{prop:operator-M-upperbound}, 
    \begin{align*}
        \Ex{ \left( \cov - aa^{\top} \right) \Ex{ \ut{t} \ut{t} ^\top } \left( \cov - aa^{\top} \right) } \preccurlyeq 
        \Ex{ aa^\top \Ex{ \ut{t} \ut{t} ^\top } aa^\top } = \Ex{ \scal{a}{\Ex{ \ut{t} \ut{t}^\top } a} aa^\top }.
    \end{align*}
    Using Assumption~\ref{asmp:uniform-kurtosis} with $M = \Ex{\ut{t} \ut{t}^\top} $, 
    \begin{align*}
        \Ex{ \scal{a}{\Ex{\ut{t} \ut{t}^\top} a} aa^\top } &\preccurlyeq \kappa \tr{\left( \Ex{ \cov \ut{t} \ut{t}^\top } \right)} \cov .
    \end{align*}
    As kronecker product of two PSD matrices is positive, 
    \begin{align*}
        \opM \mycirc \Ex{ \thet{t} \otimes \thet{t} } &\preccurlyeq \begin{bmatrix}
            \beta^2  & \alpha \beta \\
            \alpha \beta &  \alpha^2
        \end{bmatrix} \otimes_{k} \kappa \tr{\left( \Ex{ \cov \ut{t} \ut{t}^\top } \right)} \cov, \\
        &= \kappa \tr{\left( \Ex{ \cov \ut{t} \ut{t}^\top } \right)} \begin{bmatrix}
            \beta^2 \cov  & \alpha \beta \cov \\
            \alpha \beta \cov &  \alpha^2 \cov
        \end{bmatrix} = \kappa \tr{\left( \Ex{ \cov \ut{t} \ut{t}^\top } \right)} \noise.
    \end{align*}
Noting that $ \scal{\coff}{\Ex{\thet{t} \otimes \thet{t}}} = \tr{\left( \Ex{ \cov \ut{t} \ut{t}^\top } \right)} $ completes the proof. 
\end{proof}

\begin{lemma} \label{lem:opm-optil-inv} With 
$ ( \alpha + 2 \beta ) R^2 \leq 1 ,  \alpha \leq  \frac{\beta}{2 \ki}$ ,    \begin{align*} 
        \opM \mycirc ( 1 - \opTil)^{-1} \mycirc \noise  &\preccurlyeq \frac{2}{3} \noise
    \end{align*}
\end{lemma}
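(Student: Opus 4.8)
The plan is to reduce this operator inequality, by monotonicity of $\opM$, to two spectral inequalities on $\cov$ that are exactly controlled by Assumptions~\ref{asmp:fourth-moment} and~\ref{asmp:stat-condition-number}. First I would note that $\opM$ is a positive \emph{and} linear operator (Lemma~\ref{lem:operator}), hence monotone for the Loewner order: $\Theta_1\preccurlyeq\Theta_2$ implies $\opM\mycirc\Theta_1\preccurlyeq\opM\mycirc\Theta_2$. Under the present step-size conditions the hypotheses of Lemma~\ref{eq:lem:optil} hold (since $L\le R^2$), so
\[
(\Id-\opTil)^{-1}\mycirc\noise \;\preccurlyeq\; \frac13\begin{bmatrix} 2\alpha(\beta\cov)^{-1}+(2\beta-3\alpha)\I & \alpha\beta^{-1}(2\beta-\alpha)\I \\ \alpha\beta^{-1}(2\beta-\alpha)\I & 2\alpha^2\beta^{-1}\I\end{bmatrix}\;=:\;\begin{bmatrix}P&Q\\R&S\end{bmatrix}.
\]
It therefore suffices to bound $\opM\mycirc\begin{bmatrix}P&Q\\R&S\end{bmatrix}$.

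Next I would apply Lemma~\ref{lem:pqrs}, which says that $\opM$ acting on a block matrix only sees the sum $P+Q+R+S$, sandwiched by $\cov-aa^\top$, times the fixed $2\times2$ matrix $\begin{bmatrix}\beta^2&\alpha\beta\\\alpha\beta&\alpha^2\end{bmatrix}$ through the Kronecker product. The one computation worth doing carefully is $P+Q+R+S$: the $\I$-coefficients combine as $(2\beta-3\alpha)+2\alpha\beta^{-1}(2\beta-\alpha)+2\alpha^2\beta^{-1}=2\beta+\alpha$, the $\alpha^2\beta^{-1}$ terms cancelling, so $P+Q+R+S=\frac13\bigl(\tfrac{2\alpha}{\beta}\cov^{-1}+(\alpha+2\beta)\I\bigr)$. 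Then Property~\ref{prop:operator-M-upperbound} replaces $\Ex{(\cov-aa^\top)(\cdot)(\cov-aa^\top)}$ by $\Ex{aa^\top(\cdot)aa^\top}=\Ex{\scal{a}{(\cdot)a}\,a\otimes a}$, which turns the $\cov$-part of the bound into
\[
\frac13\left(\frac{2\alpha}{\beta}\,\E\!\left[\nor{a}^2_{\cov^{-1}}\,a\otimes a\right]+(\alpha+2\beta)\,\E\!\left[\nor{a}^2\,a\otimes a\right]\right).
\]

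Finally I would invoke the two moment assumptions: Assumption~\ref{asmp:stat-condition-number} gives $\E[\nor{a}^2_{\cov^{-1}}\,a\otimes a]\preccurlyeq\ki\cov$ and Assumption~\ref{asmp:fourth-moment} gives $\E[\nor{a}^2\,a\otimes a]\preccurlyeq R^2\cov$, so the displayed quantity is $\preccurlyeq\frac13\bigl(\tfrac{2\alpha\ki}{\beta}+(\alpha+2\beta)R^2\bigr)\cov$. The step-size hypotheses then finish it: $\alpha\le\frac{\beta}{2\ki}$ yields $\tfrac{2\alpha\ki}{\beta}\le1$ and $(\alpha+2\beta)R^2\le1$ is given, so the scalar factor is at most $\frac13(1+1)=\frac23$. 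Re-assembling through the Kronecker product, $\opM\mycirc(\Id-\opTil)^{-1}\mycirc\noise\preccurlyeq\begin{bmatrix}\beta^2&\alpha\beta\\\alpha\beta&\alpha^2\end{bmatrix}\otimes_{k}\frac23\cov=\frac23\noise$. The main obstacle is mostly bookkeeping — getting monotonicity of $\opM$ right and tracking the Kronecker structure so the scalar bound on the $\cov$-factor transfers to the $2\times2$-block level — together with the small algebraic cancellation in $P+Q+R+S$; no genuinely hard estimate is needed beyond what Lemmas~\ref{eq:lem:optil},~\ref{lem:pqrs} already provide.
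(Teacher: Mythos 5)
Your proposal matches the paper's proof essentially step for step: invoke Lemma~\ref{eq:lem:optil} under the given step-size bounds, apply $\opM$ to both sides (using its linearity and Loewner monotonicity), reduce to the scalar sum $P+Q+R+S$ via Lemma~\ref{lem:pqrs}, bound the resulting feature moments by Property~\ref{prop:operator-M-upperbound} and Assumptions~\ref{asmp:fourth-moment},~\ref{asmp:stat-condition-number}, and close with the step-size hypotheses. The only cosmetic difference is that you spell out explicitly the monotonicity of $\opM$ and the fact that $L\le R^2$, both of which the paper uses implicitly.
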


\begin{proof}
    Note $ ( \alpha +  2 \beta)R^2 \leq 1 \implies  ( \alpha +  2 \beta) \lambda_{\max} \leq 1 $. Hence we can invoke  Lemma~\ref{eq:lem:optil} here.  
    \begin{align*}
        \opM \mycirc ( 1 - \opTil)^{-1} \mycirc \noise \preccurlyeq \opM \mycirc \frac{1}{3} \begin{bmatrix}
            2 \alpha  ( \beta \cov )^{-1}  +  (2\beta -3 \alpha) \I &  \alpha \beta^{-1}  (2\beta  - \alpha) \I   \\
            \alpha \beta^{-1}  (2\beta  - \alpha) \I  &  2  \alpha^2 \beta^{-1} \I 
          \end{bmatrix} .
    \end{align*}
    Invoking the Lemma~\ref{lem:pqrs} for a block matrix, \begin{align*} 
        \begin{bmatrix}
            P & Q \\
            R & S
        \end{bmatrix} &= \frac{1}{3} \begin{bmatrix}
            2 \alpha  ( \beta \cov )^{-1}  +  (2\beta -3 \alpha) \I &  \alpha \beta^{-1}  (2\beta  - \alpha) \I   \\
            \alpha \beta^{-1}  (2\beta  - \alpha) \I  &  2  \alpha^2 \beta^{-1} \I 
          \end{bmatrix} .
    \end{align*}
Now $P+Q+R+S$ in our case is 
    \begin{align*}
       3*(P+Q+R+S) &=
        2 \alpha  ( \beta \cov )^{-1}  +  (2\beta -3 \alpha) \I + 2 \alpha \beta^{-1}  (2\beta  - \alpha) \I + 2  \alpha^2 \beta^{-1} \I  ,\\
        &= 2 \alpha  ( \beta \cov )^{-1} + (2\beta -3 \alpha) \I + 4 \alpha \I - 2 \alpha^2 \beta^{-1} \I +  + 2  \alpha^2 \beta^{-1} \I ,\\
        &=  2 \alpha  ( \beta \cov )^{-1} + ( \alpha + 2 \beta ) \I .
    \end{align*}
    \begin{align} \label{eq:opm-optil}
        \opM \mycirc ( 1 - \opTil)^{-1} \mycirc \noise  = \frac{1}{3} \begin{bmatrix}
            \beta^2  & \alpha \beta \\
            \alpha \beta &  \alpha^2
        \end{bmatrix}   \otimes_{k} \Ex{ \left( \cov - aa^{\top} \right) \left[ 2 \alpha  ( \beta \cov )^{-1} + ( \alpha + 2 \beta ) \I \right] \left( \cov - aa^{\top} \right) }. 
    \end{align}
Using Property~\ref{prop:operator-M-upperbound},
    \begin{align*}
        \Ex{ \left( \cov - aa^{\top} \right) \left[ 2 \alpha  ( \beta \cov )^{-1} + ( \alpha + 2 \beta ) \I \right] \left( \cov - aa^{\top} \right) } &\preccurlyeq \Ex{  aa^{\top} \left[ 2 \alpha  ( \beta \cov )^{-1} + ( \alpha + 2 \beta ) \I \right] aa^{\top} }, \\
        &= 2 \frac{\alpha}{\beta} \Ex{ \nor{a}_{\cov^{-1}}^2 ~ aa^\top } + ( \alpha + 2 \beta ) \Ex{ \nor{a}^2 ~ aa^\top }.
    \end{align*}
Using the Assumptions~\ref{asmp:fourth-moment},~\ref{asmp:stat-condition-number} of the feature distribution, we have
    \begin{align*}
        \Ex{ \left( \cov - aa^{\top} \right) \left[ 2 \alpha  ( \beta \cov )^{-1} + ( \alpha + 2 \beta ) \I \right] \left( \cov - aa^{\top} \right) } &\preccurlyeq \frac{2 \alpha \ki }{\beta} \cov + ( \alpha + 2 \beta ) R^2 \cov ,
    \end{align*}
    Using the above in~\eqref{eq:opm-optil} and that fact that kronecker product of two PSD matrices is positive we get, 
\begin{align*} 
        \opM \mycirc ( 1 - \opTil)^{-1} \mycirc \noise  &\preccurlyeq \frac{1}{3} \begin{bmatrix}
            \beta^2  & \alpha \beta \\
            \alpha \beta &  \alpha^2
        \end{bmatrix}   \otimes_{k} \left[ \frac{2 \alpha \ki }{\beta} \cov + ( \alpha + 2 \beta ) R^2 \cov \right] , \\
        &= \left[ \frac{2 \alpha \ki }{3 \beta} + \frac{( \alpha + 2 \beta ) R^2}{3}  \right] \left( \begin{bmatrix}
            \beta^2  & \alpha \beta \\
            \alpha \beta &  \alpha^2
        \end{bmatrix}   \otimes_{k} \cov \right), \\
        &= \left[ \frac{2 \alpha \ki }{3 \beta} + \frac{( \alpha + 2 \beta ) R^2}{3}  \right] \noise.
    \end{align*}
    where the last step is from the definition of $\noise$. 
    Using $$  ( \alpha + 2 \beta ) R^2 \leq 1 ,  \alpha \leq  \frac{\beta}{2 \ki}, $$ 
    \begin{align*} 
        \opM \mycirc ( 1 - \opTil)^{-1} \mycirc \noise  &\preccurlyeq \left[ \frac{1}{3} + \frac{1}{3}  \right] \noise \preccurlyeq \frac{2}{3} \noise.
    \end{align*}
\end{proof}


\begin{lemma} \label{lem:opM:opTil:transpose} With $  ( \alpha + 2 \beta ) R^2 \leq 1 ,  \alpha \leq  \frac{\beta}{2 \ki}, $ and $\coff$ from \eqref{eq:noise_and_coeff}, we have, 
    \begin{align}
        \opM^{\top} \mycirc  \left( \Id - \opTil^{\top} \right)^{-1} \mycirc \coff &\preccurlyeq \frac{2}{3} \coff.
    \end{align}
\end{lemma}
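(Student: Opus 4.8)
The plan is to mirror the proof of the untransposed statement, Lemma~\ref{lem:opm-optil-inv}, carrying the transpose through every step. First, the step-size conditions force $0<\alpha,\beta<1/L$ (since $(\alpha+2\beta)R^{2}\leq1$ and $R^{2}\geq L$), so by the same argument as in Lemma~\ref{lemma:alnost_eigen}(a) the spectral radius of $\A$ is strictly below $1$, $(\Id-\opTil^{\top})^{-1}$ exists, and
\[
  (\Id-\opTil^{\top})^{-1}\mycirc\coff \;=\; \sum_{t\geq0}\bigl(\A^{\top}\bigr)^{t}\,\coff\,\A^{t}.
\]
Writing $\coff=\HZero=\left[\begin{smallmatrix}1&1\\1&1\end{smallmatrix}\right]\otimes_{k}\cov$ and diagonalising $\A=\sum_{i}\A_{i}\otimes_{k}e_{i}e_{i}^{\top}$ in the eigenbasis of $\cov$, this decouples into the per-direction $2\times2$ Lyapunov sums $\sum_{t\geq0}(\A_{i}^{\top})^{t}\bigl(\lambda_{i}\left[\begin{smallmatrix}1&1\\1&1\end{smallmatrix}\right]\bigr)\A_{i}^{t}$, which I would evaluate in closed form using the $2\times2$ geometric-series computations of Appendix~\ref{app:technical-lemmas} (Lemma~\ref{lem:A-infinite-series}, applied with $\Gamma=\A_{i}$ and the scalars $\beta\lambda_{i},\alpha\lambda_{i}$). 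Exactly as in Lemma~\ref{eq:lem:optil}, I would then discard the denominator $4-(\alpha+2\beta)\lambda_{i}\geq3$ (using $(\alpha+2\beta)R^{2}\leq1$ and $\lambda_{i}\leq L\leq R^{2}$) to arrive at an upper bound of the form $\left[\begin{smallmatrix}P&Q\\R&S\end{smallmatrix}\right]$ whose blocks are explicit scalar multiples of $\cov^{-1}$ and $\I$; the $\cov^{-1}$ appears because the $i$-th sum scales like $1/(\beta\lambda_{i})$.

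Second, I would record the transpose analogue of Lemma~\ref{lem:pqrs}. Since $(\J-\A)^{\top}=\left[\begin{smallmatrix}\beta\Ha&\alpha\Ha\\\beta\Ha&\alpha\Ha\end{smallmatrix}\right]$ with $\Ha=\cov-aa^{\top}$, a direct block multiplication gives
\[
  \opM^{\top}\mycirc\begin{bmatrix}P&Q\\R&S\end{bmatrix}
  \;=\;\begin{bmatrix}1&1\\1&1\end{bmatrix}\otimes_{k}\Ex{\,\Ha\bigl(\beta^{2}P+\alpha\beta Q+\alpha\beta R+\alpha^{2}S\bigr)\Ha\,},
\]
so the output already has the block shape of $\coff$. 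Feeding the bound from the first step into this identity, the inner matrix $\beta^{2}P+\alpha\beta Q+\alpha\beta R+\alpha^{2}S$ collapses — just as $P+Q+R+S$ does in the proof of Lemma~\ref{lem:opm-optil-inv} — into a combination of $\cov^{-1}$ and $\I$ with coefficients rational in $\alpha,\beta$.

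Third, I would bound the remaining expectation termwise through Property~\ref{prop:operator-M-upperbound}, $\Ex{\Ha M\Ha}\preccurlyeq\Ex{aa^{\top}M\,aa^{\top}}$, and then invoke Assumption~\ref{asmp:stat-condition-number} ($\Ex{\nor{a}_{\cov^{-1}}^{2}aa^{\top}}\preccurlyeq\ki\cov$) on the $\cov^{-1}$ part and Assumption~\ref{asmp:fourth-moment} ($\Ex{\nor{a}^{2}aa^{\top}}\preccurlyeq R^{2}\cov$) on the $\I$ part. This turns the whole expression into a scalar multiple of $\left[\begin{smallmatrix}1&1\\1&1\end{smallmatrix}\right]\otimes_{k}\cov=\coff$, and the two step-size conditions $(\alpha+2\beta)R^{2}\leq1$ and $\alpha\leq\beta/(2\ki)$ then bound each scalar contribution by $1/3$, for a total of $\tfrac{2}{3}\coff$.

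I expect the main obstacle to be the first step: evaluating and then \emph{cleanly} upper-bounding the $2\times2$ Lyapunov sum $\sum_{t}(\A_{i}^{\top})^{t}\left[\begin{smallmatrix}1&1\\1&1\end{smallmatrix}\right]\A_{i}^{t}$. Because $\A_{i}$ is non-normal and degenerates to a Jordan block as $\lambda_{i}\to0$, its blocks carry $1/(\beta\lambda_{i})$ singularities that must be tracked so that, after applying $\opM^{\top}$ together with Property~\ref{prop:operator-M-upperbound}, they recombine into exactly the $\ki$-weighted term controlled by Assumption~\ref{asmp:stat-condition-number}; getting the final constant to be $2/3$ rather than something larger is the delicate point. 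Everything else is bookkeeping parallel to Lemmas~\ref{eq:lem:optil}, \ref{lem:pqrs}, and~\ref{lem:opm-optil-inv}.
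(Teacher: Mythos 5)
Your proposal is valid in outline and would close, but it takes a genuinely different route from the paper's. You mirror the \emph{untransposed} Lemma~\ref{lem:opm-optil-inv} two-step strategy: first compute a closed-form upper bound for $(\Id-\opTil^{\top})^{-1}\mycirc\coff$ as a $2\times2$ block matrix with $\cov^{-1}$- and $\I$-type entries, then feed that block matrix into a transpose analogue of Lemma~\ref{lem:pqrs}. The paper instead \emph{interleaves} the two operators: it pushes the Kronecker eigendecomposition through $\opM^{\top}$ before summing, obtaining in~\eqref{eq:bias-eigen-upperbound} a per-direction sandwiched series $\begin{bmatrix}\beta&\alpha\\\beta&\alpha\end{bmatrix}\sum_{t}(\A_{i}^{t})^{\top}\coff_{i}\A_{i}^{t}\begin{bmatrix}\beta&\beta\\\alpha&\alpha\end{bmatrix}\otimes_{k}\Ex{\Ha\,e_{i}e_{i}^{\top}\Ha}$. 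The sandwiching by the rank-one factor $\begin{bmatrix}\beta&\alpha\\\beta&\alpha\end{bmatrix}$ makes every term proportional to $\begin{bmatrix}1&1\\1&1\end{bmatrix}$, so only a single scalar series remains — this is exactly what Lemma~\ref{lem:bias-infinite-series} supplies. Your route never gets that collapse: you must evaluate the full $2\times2$ Lyapunov sum $\sum_{t}(\A_{i}^{t})^{\top}\begin{bmatrix}1&1\\1&1\end{bmatrix}\A_{i}^{t}$, which has three distinct entries, so you need three scalar geometric-series computations rather than one. I checked that the numbers do work out: after discarding the $4-(\alpha+2\beta)\lambda_{i}\geq3$ denominator one gets $P=\tfrac{2}{3\beta}\I$, $Q=R=\tfrac{1}{3\beta}\I$, $S=\tfrac{1}{3}\bigl(\tfrac{2}{\alpha\beta}\cov^{-1}-\tfrac{1}{\alpha}\I\bigr)$, and then $\beta^{2}P+\alpha\beta Q+\alpha\beta R+\alpha^{2}S=\tfrac{1}{3}\bigl((\alpha+2\beta)\I+\tfrac{2\alpha}{\beta}\cov^{-1}\bigr)$, exactly the inner matrix the paper produces, so Property~\ref{prop:operator-M-upperbound} and Assumptions~\ref{asmp:fourth-moment}, \ref{asmp:stat-condition-number} close the bound at $\tfrac{2}{3}\coff$.

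One concrete misattribution should be fixed: you cannot invoke Lemma~\ref{lem:A-infinite-series} for step~1. That lemma evaluates $\sum_{t}\Gamma^{t}\aleph(\Gamma^{t})^{\top}$ with $\aleph=\begin{bmatrix}\bl^{2}&\al\bl\\\al\bl&\al^{2}\end{bmatrix}$ — wrong conjugation order (since $\Gamma$ is not normal, $\Gamma^{t}(\cdot)(\Gamma^{t})^{\top}$ and $(\Gamma^{t})^{\top}(\cdot)\Gamma^{t}$ differ) and wrong weight matrix. The lemma the paper actually built for this transpose setting, Lemma~\ref{lem:bias-infinite-series}, also does not give what you need directly, because it already includes the $\begin{bmatrix}\bl&\al\\\bl&\al\end{bmatrix}$ sandwich that your two-step strategy defers to step~2. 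So a genuinely new $2\times2$ series computation would be required. You correctly flag the $1/(\beta\lambda_{i})$ singularity as the delicate point; it is handled exactly as you suspect — the $\lambda_{i}$ from $\coff_{i}=\lambda_{i}\begin{bmatrix}1&1\\1&1\end{bmatrix}$ cancels all but the $(2,2)$-block singularity, which survives as $\cov^{-1}$ and is then absorbed by Assumption~\ref{asmp:stat-condition-number}. The trade-off, then, is that your approach is more uniform (it is the literal transpose of the Lemma~\ref{lem:opm-optil-inv} argument) but requires a new three-entry series evaluation, while the paper's interleaving exploits a structural collapse and needs only the single scalar series of Lemma~\ref{lem:series-rplus-rminus}.
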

\begin{proof}
    Compute the inverse by evaluating the summation of the following infinite series, 
    \begin{align*}
        \left( \Id - \opTil^\top \right)^{-1} \mycirc \coff = \sum_{t=0}^{\infty} \left( \opTil^{t} \right)^{\top} \mycirc \noise
        &= \sum_{t=0}^{\infty} (\A^{t})^{\top}  \cdot \coff \cdot \A^{t}.
    \end{align*} 
    From this it follows that,
    \begin{align*}
        \opM^{\top} \mycirc  \left( \Id - \opTil^{\top} \right)^{-1} = \Ex{ \J^{\top} \cdot \left( (\A^{t})^{\top}  \cdot \coff \cdot \A^{t} \right) \cdot \J }.
    \end{align*}

\begin{itemize}
    \item Both $\A$ and $\coff$ are diagonizable wrt to the eigen basis of $\cov$. We will now project these block  matrices onto their eigen basis and compute the summation of each component individually. Note that,
\begin{align}
    \A = \sum_{i} \A_i  \otimes_{k} e_ie_i^\top, \qquad
    \coff = \sum_{i} \coff_i \otimes_{k} e_ie_i^\top .
\end{align}
where $\A_i$ and $\coff_i$ are 
\begin{align*}
    \A_i = \begin{bmatrix}
        1 - \beta \lambda_i & 1- \beta \lambda_i \\
        - \alpha \lambda_i & 1 - \alpha \lambda_i  
    \end{bmatrix} \qquad 
    \coff_i = \begin{bmatrix}
         \lambda_i &  \lambda_i \\
         \lambda_i & \lambda_i  
    \end{bmatrix}.
\end{align*} 
Using these projections, 
\begin{align*}
     (\A^{t})^{\top} \cdot \coff \cdot (\A^{t})  &= \sum_i \left( (\A_i^{t})^{\top}  \cdot \coff_i \cdot \A_i^{t} \right) \otimes_{k} e_i e_i^{\top} .
\end{align*}

\item Now the random matrix  \begin{align*} \J =  \begin{bmatrix}  \beta \left(H-aa^\top\right) & \beta \left(H-aa^\top\right) \\  \alpha \left(H-aa^\top\right) & \alpha \left(H-aa^\top\right) \end{bmatrix} =  \begin{bmatrix} \beta & \beta \\ \alpha & \alpha \end{bmatrix} \otimes_{k} \left(H-aa^\top\right).
 \end{align*}
From the mixed product property of kronecker product i.e for any matrices of appropriate dimension $P,Q,R,S$
\[  \left( P \otimes_{k} Q \right) \left( R \otimes_{k} S \right)  = PR \otimes_{k} QS.  \]\begin{align}    \label{eq:bias-eigen-decomposition}
    \begin{aligned}
        \Ex{\J^\top \cdot \left[ \left( (\A_i^{t})^{\top}  \cdot \coff_i \cdot \A_i^{t} \right) \otimes_{k} e_ie_i^\top \right]  \cdot \J}  &= \begin{bmatrix} \beta & \alpha \\ \beta & \alpha \end{bmatrix} \left( (\A_i^{t})^{\top}  \cdot \coff_i \cdot \A_i^{t} \right) \begin{bmatrix} \beta & \beta \\ \alpha & \alpha \end{bmatrix} \\ & \hspace{2.5cm} \otimes_{k} \Ex{\left(H-aa^\top\right) e_ie_i^\top \left(H-aa^\top\right)}.
    \end{aligned}
\end{align}
\end{itemize}
Using the above observations,  
\begin{align*}
    \sum_{t=0}^{\infty} (\A^{t})^{\top} \cdot \coff \cdot (\A^{t})   &= \sum_{t=0}^{\infty} \sum_i \left( \left( \A_i^{t} \right)^{\top}  \cdot \coff_i \cdot \left( \A_i^{t} \right) \right) \otimes_{k} e_i e_i^{\top} ,\\
    \Ex{ \sum_{t=0}^{\infty} \J^\top (\A^{t})^{\top} \cdot \coff \cdot (\A^{t}) \J } &= \sum_{t=0}^{\infty} \sum_{i} \Ex{\J^\top \cdot \left[ \left( (\A_i^{t})^{\top}  \cdot \coff_i \cdot \A_i^{t} \right) \otimes_{k} e_ie_i^\top \right]  \cdot \J } ,\\
    &= \sum_{i} \sum_{t=0}^{\infty} \begin{bmatrix} \beta & \alpha \\ \beta & \alpha \end{bmatrix} \left( (\A_i^{t})^{\top}  \cdot \coff_i \cdot \A_i^{t} \right) \begin{bmatrix} \beta & \beta \\ \alpha & \alpha \end{bmatrix}  \otimes_{k} \Ex{\left(H-aa^\top\right) e_ie_i^\top \left(H-aa^\top\right)}. 
\end{align*}
Hence, 
\begin{align} \label{eq:bias-eigen-upperbound}
\opM^{\top} \mycirc  \left( \Id - \opTil^{\top} \right)^{-1} \mycirc \Upsilon &= \sum_{i} \left( \sum_{t=0}^{\infty} \begin{bmatrix} \beta & \alpha \\ \beta & \alpha \end{bmatrix} \left( (\A_i^{t})^{\top}  \cdot \coff_i \cdot \A_i^{t} \right) \begin{bmatrix} \beta & \beta \\ \alpha & \alpha \end{bmatrix} \right)  \otimes_{k} \Ex{\left(H-aa^\top\right) e_ie_i^\top \left(H-aa^\top\right)}. 
\end{align}
From the definition of $\coff_i$, 
\begin{align*}
    \lambda_i \left( \sum_{t=0}^{\infty} \begin{bmatrix} \beta & \alpha \\ \beta & \alpha \end{bmatrix}  \left( (\A_i^{t})^{\top}  \cdot \coff_i \cdot \A_i^{t} \right) \begin{bmatrix} \beta & \alpha \\ \beta & \alpha \end{bmatrix}  \right) &=\sum_{t=0}^{\infty} \begin{bmatrix} \beta \lambda_i  & \alpha \lambda_i \\ \beta \lambda_i & \alpha \lambda_i \end{bmatrix} \left( (\A_i^{t})^{\top}  \cdot \begin{bmatrix}
        1 & 1 \\ 1 & 1
    \end{bmatrix} \cdot \A_i^{t} \right) \begin{bmatrix} \beta \lambda_i & \beta \lambda_i \\ \alpha \lambda_i & \alpha \lambda_i \end{bmatrix}.
\end{align*}
With $ \bl = \beta \lambda_i, \al = \alpha \lambda_i  $, 
\begin{align*}
    \A_i = \begin{bmatrix}
        1 - \bl & 1 - \bl \\ -\al & 1 - \al
    \end{bmatrix}.
\end{align*}
Hence to compute this series we can invoke Lemma~\ref{lem:bias-infinite-series} with $\Gamma = \A_i $, 
\begin{align*}
    \sum_{t=0}^{\infty} \begin{bmatrix} \beta \lambda_i  & \alpha \lambda_i \\ \beta \lambda_i & \alpha \lambda_i \end{bmatrix} \left( (\A_i^{t})^{\top}  \cdot \begin{bmatrix}
        1 & 1 \\ 1 & 1
    \end{bmatrix} \cdot \A_i^{t} \right) \begin{bmatrix} \beta \lambda_i & \beta \lambda_i \\ \alpha \lambda_i & \alpha \lambda_i \end{bmatrix} &= \left( \frac{2a}{b(4 - (a+2b))} + \frac{a+2b}{(4 - (a+2b))} \right) \begin{bmatrix}
        1 & 1 \\
        1 & 1 
    \end{bmatrix}, \\
    \lambda_i \left( \sum_{t=0}^{\infty} \begin{bmatrix} \beta & \alpha \\ \beta & \alpha \end{bmatrix}  \left( (\A_i^{t})^{\top}  \cdot \coff_i \cdot \A_i^{t} \right) \begin{bmatrix} \beta & \alpha \\ \beta & \alpha \end{bmatrix}  \right) &= \left( \frac{2\alpha \lambda_i }{\beta \lambda_i(4 - (\alpha \lambda_i+2\beta \lambda_i))} + \frac{\alpha \lambda_i+2\beta \lambda_i}{(4 - (\alpha \lambda_i+2\beta \lambda_i))} \right) \begin{bmatrix}
        1 & 1 \\
        1 & 1 
    \end{bmatrix}, \\
    \left( \sum_{t=0}^{\infty} \begin{bmatrix} \beta & \alpha \\ \beta & \alpha \end{bmatrix}  \left( (\A_i^{t})^{\top}  \cdot \coff_i \cdot \A_i^{t} \right) \begin{bmatrix} \beta & \alpha \\ \beta & \alpha \end{bmatrix}  \right) &= \left( \frac{2\alpha }{\beta \lambda_i(4 - (\alpha \lambda_i +2\beta \lambda_i))} + \frac{\alpha +2\beta }{(4 - (\alpha \lambda_i+2\beta \lambda_i))} \right) \begin{bmatrix}
        1 & 1 \\
        1 & 1 
    \end{bmatrix} .
\end{align*}
We have \begin{align*}
    (\alpha+ 2 \beta) \lambda_i \leq (\alpha+ 2 \beta) \lambda_{\max} \leq (\alpha+ 2 \beta) R^2 \leq 1, \\
    \text{Hence, } 4 -((\alpha+ 2 \beta) \lambda_i) \geq 3 ,\\
    \frac{1}{4 -((\alpha+ 2 \beta) \lambda_i)} \leq \frac{1}{3}, \\
    \frac{2\alpha }{\beta \lambda_i(4 - (\alpha \lambda_i +2\beta \lambda_i))} + \frac{\alpha+2\beta }{(4 - (\alpha \lambda_i+2\beta \lambda_i))} \leq \frac{1}{3} \left( \frac{2 \alpha}{\beta \lambda_i} + (\alpha + 2 \beta)  \right).
\end{align*} 
As the matrix $ \begin{bmatrix}
    1 & 1 \\
    1 & 1 
\end{bmatrix} $ is PSD,  
\begin{align*}
    \left( \sum_{t=0}^{\infty} \begin{bmatrix} \beta & \alpha \\ \beta & \alpha \end{bmatrix}  \left( (\A_i^{t})^{\top}  \cdot \coff_i \cdot \A_i^{t} \right) \begin{bmatrix} \beta & \alpha \\ \beta & \alpha \end{bmatrix}  \right) &\preccurlyeq  \frac{1}{3} \left( \frac{2 \alpha}{\beta \lambda_i} + (\alpha + 2 \beta)  \right)  \begin{bmatrix}
        1 & 1 \\
        1 & 1 
    \end{bmatrix} .
\end{align*}
Using the Property~\ref{prop:operator-M-upperbound}, 
\begin{align*}
    \Ex{\left(H-aa^\top\right) e_ie_i^\top \left(H-aa^\top\right)}  \preccurlyeq \Ex{ aa^\top \cdot  e_ie_i^\top \cdot aa^\top }. 
\end{align*}
Using the above two results and the fact that for any PSD matrices $P,Q,R,S$, $ P \preccurlyeq Q $ and $ R \preccurlyeq S $ then $ P \otimes_k R \preccurlyeq Q \otimes_k S $. Hence from \eqref{eq:bias-eigen-upperbound} we can get the bound as follows
\begin{align*} 
    \opM^{\top} \mycirc  \left( \Id - \opTil^{\top} \right)^{-1} \mycirc \Upsilon &= \sum_{i} \left( \sum_{t=0}^{\infty} \begin{bmatrix} \beta & \alpha \\ \beta & \alpha \end{bmatrix} \left( (\A_i^{t})^{\top}  \cdot \coff_i \cdot \A_i^{t} \right) \begin{bmatrix} \beta & \beta \\ \alpha & \alpha \end{bmatrix} \right)  \otimes_{k} \Ex{\left(H-aa^\top\right) e_ie_i^\top \left(H-aa^\top\right)} ,\\
    &\preccurlyeq \sum_i \frac{1}{3} \left( \frac{2 \alpha}{\beta \lambda_i} + (\alpha + 2 \beta)  \right)  \begin{bmatrix}
        1 & 1 \\
        1 & 1 
    \end{bmatrix}  \otimes_k \Ex{ aa^\top \cdot  e_ie_i^\top \cdot aa^\top },  \\
    &= \begin{bmatrix}
        1 & 1 \\
        1 & 1 
    \end{bmatrix} \otimes_k \Ex{ aa^\top \cdot \sum_i \left[ \left( \frac{2 \alpha}{3\beta \lambda_i} + \frac{(\alpha + 2 \beta)}{3} \right) e_ie_i^\top \right] \cdot aa^\top } ,\\
    &= \begin{bmatrix}
        1 & 1 \\
        1 & 1 
    \end{bmatrix} \otimes_k \Ex{ aa^\top \cdot \left[ \frac{2 \alpha}{3\beta} \cov^{-1} + \frac{(\alpha + 2 \beta)}{3} \I  \right] \cdot aa^\top } ,\\
    &= \begin{bmatrix}
        1 & 1 \\
        1 & 1 
    \end{bmatrix} \otimes_k \left[ \frac{2 \alpha}{3\beta} \Ex{ \nor{a}^2_{\cov^{-1}} aa^\top } + \frac{(\alpha + 2 \beta)}{3} \Ex{ \nor{a}^2 aa^\top }  \right] .
    \end{align*} 
    Using the Assumptions~\ref{asmp:fourth-moment},~\ref{asmp:stat-condition-number} of the feature distribution, we have
    \begin{align*}
        \frac{2 \alpha}{3\beta} \Ex{ \nor{a}^2_{\cov^{-1}} aa^\top } + \frac{(\alpha + 2 \beta)}{3} \Ex{ \nor{a}^2 aa^\top } \preccurlyeq \left( \frac{2 \alpha \ki}{3\beta}  + \frac{(\alpha + 2 \beta)R^2}{3} \right) \cov .
    \end{align*}
    Using $$  ( \alpha + 2 \beta ) R^2 \leq 1 ,  \alpha \leq  \frac{\beta}{2 \ki}, $$ 
    \begin{align*}
        \frac{2 \alpha}{3\beta} \Ex{ \nor{a}^2_{\cov^{-1}} aa^\top } + \frac{(\alpha + 2 \beta)}{3} \Ex{ \nor{a}^2 aa^\top } \preccurlyeq \left( \frac{1}{3} + \frac{1}{3} \right) \cov = \frac{2}{3} \cov .
    \end{align*}
    Using this bound, kronecker product of two PSD matrices is positive completes the proof. 
    \begin{align*}
        \opM^{\top} \mycirc  \left( \Id - \opTil^{\top} \right)^{-1} \mycirc \Upsilon \preccurlyeq \begin{bmatrix}
            1 & 1 \\
            1 & 1 
        \end{bmatrix} \otimes_k \frac{2}{3} \cov = \frac{2}{3} \begin{bmatrix}
            \cov & \cov \\
            \cov & \cov 
        \end{bmatrix}. 
    \end{align*}
\end{proof}

\section{Technical Lemmas}\label{app:technical-lemmas}
\begin{property}[Eigen Decomposition of $\Gamma$] \label{pro:eigen-decomposition-A}
    For the matrix 
    \begin{align}
        \Gamma = \begin{bmatrix}
            1 - \bl & 1- \bl \\
            - \al & 1 - \al 
        \end{bmatrix}
    \end{align}
    The eigen values of $~\Gamma$ are given by 
    \begin{align}
        \rp = 1 - \frac{(\al+\bl)}{2} + \sqrt{ \left(\frac{\al+\bl}{2}\right)^2 - \al }  \qquad \rmi = 1 - \frac{(\al+\bl)}{2} - \sqrt{ \left(\frac{\al+\bl}{2}\right)^2 - \al }
    \end{align}
    The eigen decomposition $ \Gamma = \U \Eigenab \U^{-1} $ where 
    \begin{align}
        \U = \frac{1}{\diff} \begin{bmatrix}
            \frac{\rmi}{(1-\rmi)} & 1 \\ 
            1 & \frac{1-\rp}{\rp}
        \end{bmatrix} &\qquad 
        \U^{-1} =  \begin{bmatrix}
            -\al & (1-\rmi)\rp \\ 
            (1-\rmi)\rp & -\rp\rmi
        \end{bmatrix} \\
        \Eigenab &=  \begin{bmatrix}
            \rp & 0 \\ 
            0 & \rmi
        \end{bmatrix} \\
        \diff &= \rp - \rmi 
    \end{align}
    The following observations hold
    \begin{itemize}
        \item $\U$ and $\U^{-1}$ are symmetric and $\rp,\rmi$ can be complex as $\Gamma$ is not symmetric.
        \item For $ 0 < a,b < 1 $ ,  $|\rp| , |\rmi| < 1 $
    \end{itemize}

\end{property}

\begin{lemma}\label{lem:nesterov:1D}
    For $\rp,\rmi$ given in the Property~\ref{pro:eigen-decomposition-A},  the following bound holds
    \begin{align}
        \left[ \frac{ \left(1 - \rp \right) \rp^{t} -  \left(1 - \rmi\right) \rmi^{t} } {\diff} \right]^2 \leq 1  + \frac{a}{\left( 1 - b \right)^2}.
    \end{align}
\end{lemma}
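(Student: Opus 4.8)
The plan is to diagonalise $\Gamma$ via Property~\ref{pro:eigen-decomposition-A} and study the scalar sequence $\delta_t := \bigl((1-\rp)\rp^{t} - (1-\rmi)\rmi^{t}\bigr)/\diff$, whose square is exactly the quantity to be bounded. First I would record the Vieta relations for the characteristic polynomial $x^2 - (2-(a+b))x + (1-b)$ of $\Gamma$, namely $\rp+\rmi = 2-(a+b)$ and $\rp\rmi = 1-b$, together with $(1-\rp)(1-\rmi) = 1-(\rp+\rmi)+\rp\rmi = a$. Since $(1-\rp)\rp^{t}$ and $(1-\rmi)\rmi^{t}$ both solve $u_{t+1} = (2-(a+b))u_t-(1-b)u_{t-1}$, so does $\delta_t$, with $\delta_0 = -1$ and $\delta_1 = a+b-1$. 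A one-line induction on $t$ using this recurrence then yields the energy identity
\[
\delta_{t+1}^2 - (2-(a+b))\,\delta_{t+1}\delta_t + (1-b)\,\delta_t^2 \;=\; a\,(1-b)^{t},
\]
the base case $t=0$ being an explicit computation.

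Next I would split on the sign of the discriminant $\diff^2 = (a+b)^2-4a$. In the \textbf{real-root case} $\diff^2\ge 0$ both roots lie in $(0,1)$ for $0<a,b<1$; writing $f(x) = x^{t}-x^{t+1}$ one gets $\delta_t = \frac{f(\rp)-f(\rmi)}{\rp-\rmi} = f'(\xi)$ for some $\xi\in[\rmi,\rp]\subset(0,1)$ by the mean value theorem, and since $f'(x)=x^{t-1}\!\left(t-(t+1)x\right)$ an elementary calculus check of the interior critical point $x=\frac{t-1}{t+1}$ and of the endpoints $x\in\{0,1\}$ gives $|f'(x)|\le1$ on $[0,1]$. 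Hence $\delta_t^2\le 1\le 1+\frac{a}{(1-b)^2}$.

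In the \textbf{complex-root case} $\diff^2<0$, set $r=\sqrt{1-b}$, $\rp=re^{i\phi}$, $\rmi=re^{-i\phi}$ (so $2r\cos\phi=2-(a+b)$) and $1-\rp=\sqrt a\,e^{i\chi}$ (valid since $|1-\rp|^2=(1-\rp)(1-\rmi)=a$). Then $\delta_t = \frac{\Im\!\big((1-\rp)\rp^{t}\big)}{r\sin\phi} = \sqrt{\tfrac{a}{D}}\,(1-b)^{t/2}\sin(\chi+t\phi)$, where $D := a-\tfrac{(a+b)^2}{4} = -\tfrac14\diff^2 > 0$; one also checks $\sin^2\phi=\frac{D}{1-b}$ and $\sin^2\chi=\frac{D}{a}$ (the latter is why $\delta_0^2 = \frac aD\sin^2\chi = 1$). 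The energy identity now says that a quadratic form in $(\delta_{t+1},\delta_t)$, positive definite precisely because $D>0$, equals $a(1-b)^{t}$, and completing the square gives the crude bound $\delta_t^2\le\frac{a(1-b)^{t}}{D}\le\frac aD$; this already closes the regime $\frac aD\le 1+\frac a{(1-b)^2}$.

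The remaining regime — roughly $b$ of order $\sqrt a$, where $D$ is small and $\frac aD$ large — is the crux, and here one must exploit that the oscillation factor $\sin^2(\chi+t\phi)$ is small for exactly those $t$ at which the geometric factor $(1-b)^{t}$ has not yet decayed. Concretely I would bound $(1-b)^{t}\sin^2(\chi+t\phi)$ by playing the decay rate $\ln\frac1{1-b}\ge b$ against the frequency $\phi$, using $\phi\le\frac\pi2\sin\phi = \frac\pi2\sqrt{D/(1-b)}$ (valid as $\cos\phi\ge0$), so that by the time $\chi+t\phi$ first reaches a value where $\sin^2$ is near $1$ the prefactor $(1-b)^{t}$ has already paid for it; combined with $\frac aD\sin^2\chi=1$ this is designed to yield exactly the constant $1+\frac a{(1-b)^2}$. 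I expect this balancing estimate — and tracking the constants tightly enough that the bound comes out $1+\frac a{(1-b)^2}$ rather than something larger — to be where essentially all the work lies.
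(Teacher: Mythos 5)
Your energy identity (the quadratic form $q(\delta_{t+1},\delta_t) := \delta_{t+1}^2-(2-(a+b))\delta_{t+1}\delta_t+(1-b)\delta_t^2$ equals $a(1-b)^t$, because $q$ is contracted by the factor $1-b$ under the two-step recurrence and $q(\delta_1,\delta_0)=a$) is correct, and the real-root argument via the mean value theorem applied to $f(x)=x^t-x^{t+1}$, together with the elementary bound $\sup_{[0,1]}|f'|\le 1$, is clean and not in the paper. But you have correctly identified that the complex-root case is the crux, and your proposal leaves it unfinished. In that regime the crude bound $\delta_t^2\le (a/D)(1-b)^t\le a/D$ with $D=a-(a+b)^2/4>0$ really can exceed the target: for $a=0.1$, $b=0.5$ one has $a/D=10$ while $1+a/(1-b)^2=1.4$, so the ``balancing'' of the geometric decay $(1-b)^t$ against the phase drift $\phi$ is genuinely necessary and not at all automatic. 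You have only sketched it and yourself acknowledge that ``essentially all the work lies'' there; as written the proposal therefore has a gap precisely at the point on which the constant depends.

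The paper avoids the case split entirely with a Lyapunov argument. It recognizes $\delta_t=(t+1)y_{t+1}/(\rp\rmi)=(t+1)y_{t+1}/(1-b)$, where $(x_t,y_t,z_t)$ is the scalar Nesterov iteration \eqref{alg:nesterov:1D} with parameters $(a,b)$ started at $x_0=1$, $z_0=0$, so that $\bigl((t+1)y_{t+1},\,z_{t+1}\bigr)^\top=\Gamma^{t+1}(1,0)^\top$; it then shows the potential $V_t=t^2y_t^2+z_t^2/a$ is nonincreasing for $t\ge 1$ \emph{provided $a\le b$}, whence $(t+1)^2y_{t+1}^2\le V_1=(1-b)^2+a$ and $\delta_t^2\le 1+a/(1-b)^2$ uniformly in both root regimes. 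Note the hypothesis $a\le b$ (inherited from the step-size constraint $\alpha\le\beta$, and implicitly part of the lemma's intended scope) is used crucially in the potential decrease, whereas your proposal never invokes it; this partly explains why you find yourself fighting for the constant. The quickest way to close your argument is probably to reinstate $a\le b$ and derive the potential decrease directly from your energy identity, or simply to adopt the Lyapunov computation.
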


\begin{proof}
    To prove this conside the one-dimensional nesterov sequences starting form $x_0 = 1, z_0 = 0$  
        \begin{subequations} \label{alg:nesterov:1D}
            \eqals{ y_{t+1} = x_{t} -  b x_{t} \label{eq:yt:1D}, \\ 
            z_{t+1} = \zt{t} - a (t+1) x_{t}  \label{eq:zt:1D} ,\\ 
            ( t+2 )x_{t+1} = (t+1)y_{t+1} + z_{t+1} \label{eq:xt:1D} }
        \end{subequations}
We can easily check that, for $t \geq 0$, 
    \begin{align}
        \begin{bmatrix}
            (t+1) y_{t+1} \\
            z_{t+1}
        \end{bmatrix} = \Gamma^{t+1} \begin{bmatrix}
            1 \\ 0
        \end{bmatrix}.
    \end{align}
Using the eigen decomposition of $\Gamma = U \Lambda U^{-1}$ we can check that 
    \begin{align*}
        U^{-1}  \begin{bmatrix}
            1 \\ 0
        \end{bmatrix} &= \begin{bmatrix}
            -\al & (1-\rmi)\rp \\ 
            (1-\rmi)\rp & -\rp\rmi
        \end{bmatrix} \begin{bmatrix}
            1 \\ 0
        \end{bmatrix} = \begin{bmatrix}
            -\al \\ (1-\rmi)\rp
        \end{bmatrix} ,\\
        U \Lambda^{t+1} U^{-1}  \begin{bmatrix}
            1 \\ 0
        \end{bmatrix} &= \frac{1}{\diff} \begin{bmatrix}
            \frac{\rmi}{(1-\rmi)} & 1 \\ 
            1 & \frac{1-\rp}{\rp}
        \end{bmatrix} \begin{bmatrix}
            -\al \rp^{t+1} \\ (1-\rmi)\rp \rmi^{t+1}
        \end{bmatrix} = \frac{1}{\diff} \begin{bmatrix}
            - \rp \rmi \left[ (1 - \rp) \rp^{t} - (1-\rmi) \rmi^{t} \right] \\ -a \left[ \rp^{t+1} - \rmi^{t+1} \right]
        \end{bmatrix}, \\
        \Gamma^{t+1} \begin{bmatrix}
            1 \\ 0
        \end{bmatrix} &= \frac{1}{\diff}\begin{bmatrix}
            - \rp \rmi \left[ (1 - \rp) \rp^{t} - (1-\rmi) \rmi^{t} \right] \\ -a \left[ \rp^{t+1} - \rmi^{t+1} \right].
        \end{bmatrix}
    \end{align*}
    Hence,    
    \begin{align}
        \begin{bmatrix}
            (t+1) y_{t+1} \\
            z_{t+1}
        \end{bmatrix} = \frac{1}{\diff}\begin{bmatrix}
            - \rp \rmi \left[ (1 - \rp) \rp^{t} - (1-\rmi) \rmi^{t} \right] \\ -a \left[ \rp^{t+1} - \rmi^{t+1} \right].
        \end{bmatrix}
    \end{align}

    Now use the potential function defined by $ V_t = t^2 y_{t}^2 + \frac{1}{a} z_{t}^2$, for $t \geq 1$.  If  $ 0 < a \leq b < 1$ then for $t \geq 1$ we can show that $ V_{t+1} \leq V_{t}$. Hence, for any $t \geq 1$, $ V_{t} \leq V_{1} $. Note that $V_1 < V_0$ does not hold due to different initialization. From this, 
    \begin{align}
       (t+1)^2 y_{t+1}^2 \leq (t+1)^2 y_{t+1}^{2} + \frac{1}{a} z_{t+1}^2 \leq V_{1}, \\
       V_1 =  (1 - b)^2 + \frac{1}{a} a^2 =  ( 1 - b)^2 + a .  
    \end{align}
    Using the expression of $ (t+1)^2 y_{t+1}^2 $, we get 

    \begin{align*}
        \left[ \frac{ \left(1 - \rp \right) \rp^{t} -  \left(1 - \rmi\right) \rmi^{t} } {\diff} \right]^2 &\leq \frac{1}{(\rp\rmi)^2} \left[ ( 1 - b)^2 + a \right], \\
        &= 1 + \frac{a}{\left(1-b \right)^2}.
    \end{align*}
This proves the lemma. 
\end{proof}

\begin{lemma}
    For $ 0 < a \leq b < 1$, for $\Gamma, \aleph$ in Lemma~\ref{lem:A-infinite-series}, 
    \begin{align}
        \scal{\begin{bmatrix}
            0 & 0 \\ 0 & \displaystyle \frac{1}{a^2}
        \end{bmatrix}}{ \Gamma^{T} \aleph \left(\Gamma^{T}\right)^{\top}} \leq 1 + \frac{a}{\left(1-b \right)^2}. 
    \end{align}
\end{lemma}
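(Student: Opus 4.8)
The plan is to reduce the statement to the one-dimensional bound already established in Lemma~\ref{lem:nesterov:1D}. The first step is to unpack $\aleph$: recalling its definition from Lemma~\ref{lem:A-infinite-series}, it equals $\begin{bmatrix} b^2 & ab \\ ab & a^2 \end{bmatrix}$, which is the rank-one matrix $vv^{\top}$ with $v \defeq \begin{bmatrix} b \\ a \end{bmatrix}$. Consequently $\Gamma^{T}\aleph(\Gamma^{T})^{\top} = (\Gamma^{T}v)(\Gamma^{T}v)^{\top}$, and pairing this against $\begin{bmatrix} 0 & 0 \\ 0 & 1/a^2 \end{bmatrix}$ in the trace inner product extracts exactly $\tfrac{1}{a^{2}}$ times the square of the second coordinate of $\Gamma^{T}v$. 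So the whole claim reduces to computing that single coordinate and squaring it.

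Next I would observe that $v$ arises by telescoping. Since $\Gamma\begin{bmatrix}1\\0\end{bmatrix} = \begin{bmatrix}1-b\\-a\end{bmatrix}$, we have $v = \begin{bmatrix} b \\ a \end{bmatrix} = (\Id - \Gamma)\begin{bmatrix}1\\0\end{bmatrix}$, hence $\Gamma^{T}v = \Gamma^{T}\begin{bmatrix}1\\0\end{bmatrix} - \Gamma^{T+1}\begin{bmatrix}1\\0\end{bmatrix}$. The closed form for the second coordinate of $\Gamma^{k}\begin{bmatrix}1\\0\end{bmatrix}$, namely $-a\,(\rp^{k} - \rmi^{k})/\diff$, is exactly what is derived (via the eigendecomposition of Property~\ref{pro:eigen-decomposition-A}) inside the proof of Lemma~\ref{lem:nesterov:1D}; differencing the cases $k = T$ and $k = T+1$ then gives $(\Gamma^{T}v)_2 = -a\big[(1-\rp)\rp^{T} - (1-\rmi)\rmi^{T}\big]/\diff$. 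If one prefers not to quote the interior of that proof, one can instead diagonalize $\Gamma$ afresh here and read off the coordinate; this is routine and only uses $\rp\rmi = 1-b$, $\rp+\rmi = 2-a-b$, and $(1-\rp)(1-\rmi)=a$.

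Finally, squaring $(\Gamma^{T}v)_2$ and multiplying by $1/a^{2}$ cancels the prefactor $a^{2}$, leaving precisely $\big[(1-\rp)\rp^{T} - (1-\rmi)\rmi^{T}\big]^{2}/\diff^{2}$, which Lemma~\ref{lem:nesterov:1D} (applied with $t=T$) bounds by $1 + a/(1-b)^{2}$, completing the argument. There is no genuine obstacle here beyond careful bookkeeping; the only points needing attention are matching the power convention used in Lemma~\ref{lem:nesterov:1D}'s proof and the degenerate case $T=0$, where $\Gamma^{0}v = v$ has second coordinate $a$, so the left-hand side equals $1$ and the bound holds trivially.
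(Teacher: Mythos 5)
Your argument is correct, and it reaches the target via a cleaner, more self-contained route than the paper's. The paper simply reads off $\nu_{22}(t)$ from the interior of the (rather elaborate) eigendecomposition computation carried out in Lemma~\ref{lem:A-infinite-series} --- where $\Gamma^{t}\aleph(\Gamma^{t})^{\top}$ is assembled via $\U^{-1}\aleph\U^{-1}$ and the diagonal powers --- and then invokes Lemma~\ref{lem:nesterov:1D}. You instead exploit that $\aleph$ is the rank-one outer product $vv^{\top}$ with $v=(b,a)^{\top}$, so the pairing with $\begin{bmatrix}0 & 0\\ 0 & 1/a^{2}\end{bmatrix}$ isolates $\tfrac{1}{a^{2}}\big[(\Gamma^{T}v)_{2}\big]^{2}$, and then use the telescoping observation that $v$ is obtained by applying $\bigl(\begin{smallmatrix}1&0\\0&1\end{smallmatrix}\bigr)-\Gamma$ to $(1,0)^{\top}$, so that $(\Gamma^{T}v)_{2}$ is a difference of the known second coordinates $-a(\rp^{k}-\rmi^{k})/\diff$ of $\Gamma^{k}(1,0)^{\top}$. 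The algebra $(\rp^{T}-\rmi^{T})-(\rp^{T+1}-\rmi^{T+1})=(1-\rp)\rp^{T}-(1-\rmi)\rmi^{T}$ then recovers exactly the paper's expression
\begin{align*}
\frac{\nu_{22}(T)}{a^{2}} \;=\; \left[\frac{(1-\rp)\rp^{T}-(1-\rmi)\rmi^{T}}{\diff}\right]^{2},
\end{align*}
and both proofs finish identically by Lemma~\ref{lem:nesterov:1D}. What your version buys is independence from the byproducts $\nu_{ij}$ of the infinite-series lemma, a transparent reason for the cancellation of $a^{2}$, and a shorter path to the key closed form; the paper's version costs nothing extra only because it has to compute $\nu_{22}(t)$ anyway for the series lemma. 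One small remark: you should cite the second-coordinate formula for $\Gamma^{k}(1,0)^{\top}$ as coming from the explicit computation in the proof of Lemma~\ref{lem:nesterov:1D} (it is not in the lemma statement), or rederive it in a line from Property~\ref{pro:eigen-decomposition-A} as you indicate; either is fine, and your remark about the trivial $T=0$ case closes the last edge.
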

\begin{proof}
    In the following Lemma~\ref{lem:A-infinite-series}, we compute the closed form for $\Gamma^{T} \aleph \left(\Gamma^{T}\right)^{\top}$ = $ \begin{bmatrix}
        \nu_{11}(t) & \nu_{12}(t) \\ \nu_{21}(t) & \nu_{22}(t)
    \end{bmatrix} $. Using this 
    \begin{align*}
        \scal{\begin{bmatrix}
            0 & 0 \\ 0 & \displaystyle \frac{1}{a^2}
        \end{bmatrix}}{ \Gamma^{T} \aleph \left(\Gamma^{T}\right)^{\top}} = \frac{\nu_{22}(t)}{a^2}.
    \end{align*}
    From \eqref{lem:inv:v22}, 
    \begin{align}
        \scal{\begin{bmatrix}
            0 & 0 \\ 0 & \displaystyle \frac{1}{a^2}
        \end{bmatrix}}{ \Gamma^{T} \aleph \left(\Gamma^{T}\right)^{\top}} =         \left[ \frac{ \left(1 - \rp \right) \rp^{t} -  \left(1 - \rmi\right) \rmi^{t} } {\diff} \right]^2. 
    \end{align}
    From Lemma~\ref{lem:nesterov:1D}, the lemma holds. 
\end{proof}

\begin{lemma} \label{lem:A-infinite-series}
For $ 0 < a,b < 1 $, with $\Gamma$ and $\aleph$ of form  
    \begin{align*}
        \Gamma = \begin{bmatrix}
            1 - \bl & 1- \bl \\
            - \al & 1 - \al 
        \end{bmatrix}, \qquad 
        \aleph = \begin{bmatrix}
            \bl^2  & \bl \al \\
            \bl \al & \al^2  
        \end{bmatrix}.
    \end{align*}
    The series
    \begin{align}
        \sum_{t=0}^{\infty} \Gamma^{t} \aleph \left(\Gamma^{t}\right)^{\top} &= \frac{1}{b (4 - (a+2b))} \begin{bmatrix}
            2a + b(2b-3a) &  a(2b-a)   \\
            a(2b-a) &  2 a^2  
       \end{bmatrix}. 
    \end{align}

\end{lemma}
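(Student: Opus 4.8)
The plan is to exploit the rank-one structure of $\aleph$. Setting $v \defeq \begin{bmatrix} b \\ a \end{bmatrix}$ we have $\aleph = v v^\top$, so that $\Gamma^t \aleph (\Gamma^t)^\top = (\Gamma^t v)(\Gamma^t v)^\top$ for every $t \geq 0$. By Property~\ref{pro:eigen-decomposition-A}, for $0 < a, b < 1$ the two eigenvalues $\rp, \rmi$ of $\Gamma$ obey $|\rp|, |\rmi| < 1$; hence $\Gamma^t$ decays geometrically and the series $S \defeq \sum_{t \geq 0} \Gamma^t v v^\top (\Gamma^t)^\top$ converges absolutely. Summing the identity $\Gamma^{t+1} v v^\top (\Gamma^{t+1})^\top = \Gamma\bigl(\Gamma^t v v^\top (\Gamma^t)^\top\bigr)\Gamma^\top$ over $t \geq 0$ shows that $S$ solves the discrete Lyapunov equation $S - \Gamma S \Gamma^\top = v v^\top$, and since $1 - \rp^2,\ 1 - \rp\rmi,\ 1 - \rmi^2$ are all nonzero this symmetric solution is unique.

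The second step is to turn this operator identity into a small linear system. Writing $S = \begin{bmatrix} p & q \\ q & r \end{bmatrix}$ and expanding $\Gamma S \Gamma^\top$ with $\Gamma = \begin{bmatrix} 1-b & 1-b \\ -a & 1-a \end{bmatrix}$, the $(1,1)$, $(1,2)$ and $(2,2)$ entries of $S - \Gamma S \Gamma^\top = v v^\top$ yield three linear equations in $(p,q,r)$: the $(1,1)$ entry reads $p\,b(2-b) = b^2 + (1-b)^2(2q+r)$, the $(2,2)$ entry reads $r\,a(2-a) = a^2 + a^2 p - 2a(1-a)q$, and the $(1,2)$ entry gives the remaining relation $q - (1-b)\bigl[-ap + (1-2a)q + (1-a)r\bigr] = ab$. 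The determinant of this $3\times3$ system factors with $b\,(4 - a - 2b)$ as its relevant nonvanishing part — note $4 - a - 2b > 1 > 0$ on the domain — which is precisely the common denominator appearing in the statement.

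Finally I would solve the system; equivalently, one may substitute the claimed matrix $\tfrac{1}{b(4-a-2b)}\begin{bmatrix} 2a + b(2b-3a) & a(2b-a) \\ a(2b-a) & 2a^2 \end{bmatrix}$ and check the three equations directly (using $2q + r = 4a/(4-a-2b)$ this reduces to verifying two polynomial identities in $a,b$). An alternative that avoids the elimination is to diagonalize $\Gamma = \U \Eigenab \U^{-1}$ as in Property~\ref{pro:eigen-decomposition-A} and write $S = \U\bigl(\sum_{t\geq0}\Eigenab^t\,(\U^{-1} v v^\top (\U^{-1})^\top)\,\Eigenab^t\bigr)\U^\top$, evaluating the inner sum entrywise via the scalar geometric series $\sum_{t\geq0}(\lambda_i\lambda_j)^t = (1-\lambda_i\lambda_j)^{-1}$ and conjugating back using $\rp + \rmi = 2 - a - b$, $\rp\rmi = 1-b$, $(1-\rp)(1-\rmi) = a$, $(1+\rp)(1+\rmi) = 4 - a - 2b$. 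Either way there is no conceptual obstacle: the only real work, and the only place to slip up, is keeping the $3\times3$ elimination (or the back-conjugation in the eigenbasis route) straight.
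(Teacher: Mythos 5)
Your Lyapunov-equation route is correct and genuinely different from the paper's. The paper diagonalizes $\Gamma = \U \Eigenab \U^{-1}$ (Property~\ref{pro:eigen-decomposition-A}), computes $\Gamma^t v$ in closed form for $v=(b,a)^\top$ with $\aleph = vv^\top$, and sums each entry's geometric series in the (possibly complex) eigenvalues $\rp,\rmi$, threading identities such as $\rp\rmi=1-b$ and $(1+\rp)(1+\rmi)=4-a-2b$ throughout. You instead identify $S=\sum_{t\geq0}\Gamma^t vv^\top(\Gamma^t)^\top$ as the unique symmetric solution of the Stein/Lyapunov equation $S-\Gamma S\Gamma^\top=vv^\top$; the uniqueness claim is right because the operator $S\mapsto S-\Gamma S\Gamma^\top$ on symmetric $2\times2$ matrices has eigenvalues $1-\rp^2$, $1-\rp\rmi$, $1-\rmi^2$, all nonzero when $|\rp|,|\rmi|<1$. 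This collapses the computation to a $3\times3$ linear system directly in $a,b$, with no eigenvectors and no complex arithmetic. Your three scalar equations are the correct $(1,1)$, $(1,2)$, $(2,2)$ entries of $S-\Gamma S\Gamma^\top=vv^\top$, and substituting the claimed matrix does verify all three; for instance the $(1,2)$ entry reduces, using $-ap+(1-2a)q+(1-a)r=a(2b-a-2b^2-ab)/D$ with $D=b(4-a-2b)$, to the polynomial identity $a\bigl[(2b-a)-(1-b)(2b-a-2b^2-ab)\bigr]=ab^{2}(4-a-2b)$, which holds. What the paper's longer diagonalization buys is reuse: the finite-$t$ quantities $\nu_{11}(t),\nu_{12}(t),\nu_{22}(t)$ it produces along the way feed Lemma~\ref{lem:nesterov:1D} and Lemma~\ref{lem:nesterov-potential-H-inverse}, so the eigenbasis work is amortized across results. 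For this lemma in isolation, your route is shorter and sidesteps the eigenvalue bookkeeping entirely.
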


\begin{proof}
    To calculate the exponents of $\Gamma$ we use the eigen decomposition in Property~\ref{pro:eigen-decomposition-A}, 
    \begin{align*}
        \Gamma &=  \U \Eigenab \U^{-1}, \\
        \Gamma^{t} &=  \U \Eigenab^{t} \U^{-1}, \\
        \Gamma^{t} \cdot \aleph \cdot \left(\Gamma^{t}\right)^{\top} & = \U \Eigenab^{t} \U^{-1} \cdot  \aleph \cdot  \left(\U \Eigenab^{t} \U^{-1}\right)^{\top}, 
    \end{align*}
    From Property~\ref{pro:eigen-decomposition-A} that $\U,\U^{-1}$ are symmetric.
    \begin{align*}
        \Gamma^{t} \cdot \aleph \cdot \left(\Gamma^{t}\right)^{\top} & = \U \Eigenab^{t} \left[ \U^{-1}   \aleph  \U^{-1} \right] \Eigenab^{t} \U .
    \end{align*}
    \paragraph{Computing ${U^{-1}\aleph\U^{-1}}$:} 
    From Property~\ref{pro:eigen-decomposition-A}, 
    \begin{align*}
        \U &= \frac{1}{\diff} \begin{bmatrix}
            \frac{\rmi}{(1-\rmi)} & 1 \\ 
            1 & \frac{1-\rp}{\rp}
        \end{bmatrix}, \qquad 
        \U^{-1} =  \begin{bmatrix}
            -\al & (1-\rmi)\rp \\ 
            (1-\rmi)\rp & -\rp\rmi
        \end{bmatrix}, \\
        \U^{-1} \aleph \U^{-1} &= \U^{-1} \begin{bmatrix}
            \bl^2 & \al\bl \\
            \al\bl & \al^2 
        \end{bmatrix} \U^{-1} =  \U^{-1}  \left( \begin{bmatrix}
            \bl \\ \al
        \end{bmatrix} \right) \otimes \left(  \begin{bmatrix}
            \bl \\ \al
        \end{bmatrix} \right)  \U^{-1}, \\ &= \left(  \U^{-1} \begin{bmatrix}
            \bl \\ \al
        \end{bmatrix} \right) \otimes \left(  \U^{-1} \begin{bmatrix}
            \bl \\ \al
        \end{bmatrix} \right) , \\
        \U \Eigenab^{t} \left[ \U^{-1}   \aleph  \U^{-1} \right] \Eigenab^{t} \U &= \left( \U \Eigenab^{t} \U^{-1} \begin{bmatrix}
            \bl \\ \al
        \end{bmatrix} \right) \otimes \left(  \U \Eigenab^{t} \U^{-1} \begin{bmatrix}
            \bl \\ \al
        \end{bmatrix} \right) , \\
        \U^{-1} \begin{bmatrix}
            \bl \\ \al
        \end{bmatrix} &=  \begin{bmatrix}
            -\al & (1-\rmi)\rp \\ 
            (1-\rmi)\rp & -\rp\rmi
        \end{bmatrix} \begin{bmatrix}
            \bl \\ \al
        \end{bmatrix} = \begin{bmatrix}
            - \al \bl + (1-\rmi)\rp \al \\ (1-\rmi)\rp \bl - \rp \rmi \al   
        \end{bmatrix}, \\
        \intertext{using $\al = \left( 1 - \rp \right) \left( 1 - \rmi \right) , \bl = \left(   1 - \rp \rmi \right) $ ,} 
        &= \begin{bmatrix}
            - \al (1 - \rp \rmi ) + (1-\rmi)\rp \al \\ (1-\rmi)\rp (1 - \rp \rmi ) - \rp \rmi \left( 1 - \rp \right) \left( 1 - \rmi \right)
        \end{bmatrix}, \\ &=  \begin{bmatrix}
            \al \left( - \left(1 - \rp \rmi \right) + \left(1-\rmi\right) \rp \right) \\ \rp (1-\rmi) \left( 1 - \rmi \rp  - \rmi \left( 1 - \rp \right) \right)  
        \end{bmatrix} = \begin{bmatrix}
            \al \left( \rp - 1 \right) \\ \rp (1-\rmi)^2   
        \end{bmatrix}. 
    \end{align*}
    \begin{align*}
        \Eigenab^{t} \U^{-1} \begin{bmatrix}
            \bl \\ \al
        \end{bmatrix}   &= \begin{bmatrix}
            \rp^{t} & 0 \\ 
            0 & \rmi^{t}
        \end{bmatrix} \begin{bmatrix}
            \al \left( \rp - 1 \right) \\ \rp (1-\rmi)^2   
        \end{bmatrix} = \begin{bmatrix}
            \al  \rp^{t} \left( \rp - 1 \right) \\ \rp  \rmi^{t} (1-\rmi)^2   
        \end{bmatrix}. 
    \end{align*}

\begin{align*}
        U \Eigenab^{t} \U^{-1} \begin{bmatrix}
            \bl \\ \al 
        \end{bmatrix} &= \frac{1}{\diff} \begin{bmatrix}
            \frac{\rmi}{(1-\rmi)} & 1 \\ 
            1 & \frac{1-\rp}{\rp}
        \end{bmatrix}  \begin{bmatrix}
            \al  \rp^{t} \left( \rp - 1 \right) \\ \rp  \rmi^{t} (1-\rmi)^2   
        \end{bmatrix} , \\
        &= \frac{1}{\diff}  \begin{bmatrix}
            \frac{\rmi}{(1-\rmi)} \al  \rp^{t} \left( \rp - 1 \right) +  \rp  \rmi^{t} (1-\rmi)^2   \\
            \al  \rp^{t} \left( \rp - 1 \right) +  \left(1-\rp \right)  \rmi^{t} (1-\rmi)^2  
        \end{bmatrix}. \\
        \intertext{Using $\al = \left( 1 - \rp \right) \left( 1 - \rmi \right) , \bl = \left(   1 - \rp \rmi \right) ,$ } 
        &= \frac{1}{\diff}  \begin{bmatrix}
             - \rmi \rp^{t} \left( \rp - 1 \right)^2 +  \rp \rmi^{t} (1-\rmi)^2   \\
            \al  \rp^{t} \left( \rp - 1 \right) + a \rmi^{t} (1-\rmi)  
        \end{bmatrix}, \\
        &= \frac{1}{\diff}  \begin{bmatrix}
            - \left[ \rmi \left( \rp - 1 \right)^2  \rp^{t}  - \rp \left(1-\rmi\right)^2   \rmi^{t} \right]  \\
           - \al \left[ \left(1 - \rp \right) \rp^{t} -  \left(1 - \rmi\right) \rmi^{t}  \right]
       \end{bmatrix}.
    \end{align*}
    \begin{align*}
        \Gamma^{t} \cdot \aleph \cdot \left(\Gamma^{t}\right)^{\top} &= \left(  U \Eigenab^{t} \U^{-1} \begin{bmatrix}
            \bl \\ \al 
        \end{bmatrix} \right) \otimes \left(  U \Eigenab^{t} \U^{-1} \begin{bmatrix}
            \bl \\ \al 
        \end{bmatrix} \right), \\
        &= \frac{1}{\diff}  \begin{bmatrix}
            - \left[ \rmi \left( 1 - \rp \right)^2  \rp^{t}  - \rp \left(1-\rmi\right)^2   \rmi^{t} \right]  \\
           - \al \left[ \left(1 - \rp \right) \rp^{t} -  \left(1 - \rmi\right) \rmi^{t}  \right]
       \end{bmatrix} \otimes \frac{1}{\diff}  \begin{bmatrix}
        - \left[ \rmi \left( \rp - 1 \right)^2  \rp^{t}  - \rp \left(1-\rmi\right)^2   \rmi^{t} \right]  \\
       - \al \left[ \left(1 - \rp \right) \rp^{t} -  \left(1 - \rmi\right) \rmi^{t}  \right]
   \end{bmatrix} ,\\
   &= \begin{bmatrix}
       \nu_{11}(t) & \nu_{12}(t) \\ \nu_{12}(t) & \nu_{22}(t)
   \end{bmatrix}.
    \end{align*}
where 
\begin{align}
    \diff^2 ~ \nu_{11}(t) &\defeq \left[ \rmi \left( 1 - \rp \right)^2  \rp^{t}  - \rp \left(1-\rmi\right)^2   \rmi^{t} \right]^2 \label{lem:inv:v11}, \\
    \diff^2 ~ \nu_{22}(t) &\defeq  \al^2 \left[ \left(1 - \rp \right) \rp^{t} -  \left(1 - \rmi\right) \rmi^{t}  \right]^2 \label{lem:inv:v22},  \\
    \diff^2 ~ \nu_{12}(t) &\defeq \al \left( \rmi \left( 1 - \rp \right)^2  \rp^{t}  - \rp \left(1-\rmi\right)^2   \rmi^{t} \right) \left( \left(1 - \rp \right) \rp^{t} -  \left(1 - \rmi\right) \rmi^{t}  \right) \label{lem:inv:v12} . 
\end{align}
Using these, 
\begin{align*}
   \sum\limits_{t=0}^{\infty}  \Gamma^{t} \cdot \aleph \cdot \left(\Gamma^{t}\right)^{\top} = \begin{bmatrix}
    \sum\limits_{t=0}^{\infty}  \nu_{11}(t) &  \sum\limits_{t=0}^{\infty}  \nu_{12}(t)  \\ \sum\limits_{t=0}^{\infty}  \nu_{12}(t) &  \sum\limits_{t=0}^{\infty}  \nu_{22}(t)
   \end{bmatrix}.  
\end{align*}
\paragraph{Evaluating  $ \sum\limits_{t=0}^{\infty}  \nu_{11}(t)$ :} 
\begin{align*}
    \diff^2 ~ \sum_{t=0}^{\infty} ~ \nu_{11}(t) &= \sum_{t=0}^{\infty} \left[ \rmi \left( 1 - \rp \right)^2  \rp^{t}  - \rp \left(1-\rmi\right)^2   \rmi^{t} \right]^2, \\
    &=  \left[ \rmi \left( 1 - \rp \right)^2  - \rp \left(1-\rmi\right)^2 \right]^2 \\ &\hspace{1cm}+ \sum_{t=1}^{\infty} \left[ \rmi^2 \left( 1 - \rp \right)^4  \rp^{2t}  + \rp^2 \left(1-\rmi\right)^4   \rmi^{2t} - 2  \rmi \left( 1 - \rp \right)^2 \rp \left(1-\rmi\right)^2  \rp^{t} \rmi^{t} \right]. 
\end{align*}
From Property~\ref{pro:eigen-decomposition-A}, when  $ 0 < a,b < 1$ then  $ |\rp| , |\rmi| < 1 $. Hence, the following holds, 
\begin{align*}
    \sum_{t=1}^{\infty} \rp^{2t} &= \frac{\rp^2}{1 - \rp^2}, \quad \sum_{t=1}^{\infty} \rmi^{2t} = \frac{\rmi^2}{1 - \rmi^2}, \quad \sum_{t=1}^{\infty} \rp^{t}\rmi^{t} = \frac{\rp\rmi}{1 - \rp\rmi}, \\
    \rmi \left( 1 - \rp \right)^2  - \rp \left(1-\rmi\right)^2 &= \left(1 - (1 - \rmi) \right) \left( 1 - \rp \right)^2  - \left(1 - (1 - \rp) \right) \left(1-\rmi\right)^2 \\&= \left( 1 - \rp \right)^2 - \left(1-\rmi\right)^2 - (1 - \rmi) \left( 1 - \rp \right)^2 + (1 - \rp) \left(1-\rmi\right)^2, \\
    &= \left(\rmi - \rp\right) \left(2 - \rmi - \rp\right) - (1 - \rmi) \left( 1 - \rp \right) \left(\rmi - \rp\right) ,\\
    &= \left(\rmi - \rp\right) \left[ \left(2 - \rmi - \rp\right) - (1 - \rmi) \left( 1 - \rp \right) \right] ,\\
    &=  \left(\rmi - \rp\right) \left[ 1 - \rp \rmi \right] = - \diff \bl.
\end{align*}
From here, the first term of the sumation is as follows,
\begin{align} \label{lem:inv:delta-b}
    \rmi \left( 1 - \rp \right)^2  - \rp \left(1-\rmi\right)^2  &= - \diff \bl.
\end{align}
To calculate the sum of the remaining terms,
\begin{align*}
    \diff^2 ~ \left(  \sum_{t=0}^{\infty} \nu_{11}(t) - b^2 \right) &= \sum_{t=1}^{\infty} \left[ \rmi^2 \left( 1 - \rp \right)^4  \rp^{2t}  + \rp^2 \left(1-\rmi\right)^4   \rmi^{2t} - 2  \rmi \left( 1 - \rp \right)^2 \rp \left(1-\rmi\right)^2  \rp^{t} \rmi^{t} \right], \\
    &= \rp^2\rmi^2 \sum_{t=0}^{\infty} \left[  \left( 1 - \rp \right)^4  \rp^{2t}  + \left(1-\rmi\right)^4   \rmi^{2t} - 2  \rmi \left( 1 - \rp \right)^2 \rp \left(1-\rmi\right)^2  \rp^{t} \rmi^{t} \right], \\
    &= \rp^2\rmi^2 \sum_{t=0}^{\infty} \left[ \rmi^2 \left( 1 - \rp \right)^2  \rp^{t}  - \left(1-\rmi\right)^2   \rmi^{t} \right]^2.
\end{align*} Invoking Lemma~\ref{lem:series-rplus-rminus}, \begin{align*}
    \diff^2 ~ \left(  \sum_{t=0}^{\infty} \nu_{11}(t) - b^2 \right) &=  \diff^2  \rp^2\rmi^2  \left[ \frac{  a (4 - (a+2b)) + (a+2b)^2 }{ 2 b (4 - (a+2b))  } \right], \\
    \left(  \sum_{t=0}^{\infty} \nu_{11}(t) - b^2 \right) &= (1-b)^2 \left[ \frac{  a (4 - (a+2b)) + (a+2b)^2 }{ 2 b (4 - (a+2b))  } \right] .
\end{align*}
Using simple algebraic manipulations summation of $\nu_{11}(t)$'s can be compactly written as follows,
\begin{align*}
    \sum_{t=0}^{\infty} \nu_{11}(t) &= b^2 + (1-b)^2 \left[ \frac{  a (4 - (a+2b)) + (a+2b)^2 }{ 2 b (4 - (a+2b))  } \right] ,\\
    &= b^2 + (1-b)^2 \left[ \frac{  4a + (a+2b) \left((a+2b) - a \right) }{ 2 b (4 - (a+2b))  }  \right] ,\\
    &= b^2 + (1-b)^2 \left[ \frac{  4a + 2b(a+2b) }{ 2b(4 - (a+2b))  }  \right] ,\\
    &= b^2  + \frac{ (a+2b) (1-b)^2  }{(4 - (a+2b))} +\left[ \frac{  4a (1-b)^2  }{ 2b(4 - (a+2b))  }    \right], \\
    &= \frac{ b^2 (4 - (a+2b)) + (a+2b) (1-b)^2  }{(4 - (a+2b))} +\left[ \frac{  4a (1-b)^2  }{ 2b(4 - (a+2b))  }    \right]. \\
    b^2 (4 - (a+2b)) + (a+2b) (1-b)^2  &= b^2 (4 - (a+2b)) + (a+2b) \left( 1 - 2b + b^2 \right) \\
    &= 4 b^2 + (a+2b) \left( 1 - 2b  \right) = a + 2b - 2ab,  \\
    \sum_{t=0}^{\infty} \nu_{11}(t) &= \frac{  a + 2b - 2ab  }{(4 - (a+2b))} +\left[ \frac{  2 a (1-b)^2  }{ b(4 - (a+2b))  }    \right], \\
    &= \frac{  b(a + 2b - 2ab) +   2 a (1-b)^2  }{ b (4 - (a+2b))} \\
    &= \frac{  ab + 2b^2 - 2ab^2 + 2a - 4ab + 2ab^2 }{ b(4 - (a+2b))} = \frac{2a + b(2b-3a) }{b(4 - (a+2b))}. \\
\end{align*}
Hence,\begin{align}
    \label{lem:inv:v11:series}
    \sum_{t=0}^{\infty} \nu_{11}(t) &= \frac{2a}{b(4 - (a+2b))} + \frac{2b - 3a}{4 - (a+2b)}.
\end{align}

\paragraph{Evaluating  $ \sum\limits_{t=0}^{\infty}  \nu_{22}(t)$ :} From~\eqref{lem:inv:v12},
\begin{align*}
    \diff^2 ~ \nu_{22}(t) &=  \al^2 \left[ \left(1 - \rp \right) \rp^{t} -  \left(1 - \rmi\right) \rmi^{t}  \right]^2, \\
    \diff^2 ~ \sum_{t=0}^{\infty}  \ \nu_{22}(t) &= \al^2 \sum_{t=0}^{\infty} \left(1 - \rp \right)^2 \rp^{2t} + \left(1 - \rmi\right)^2 \rmi^{2t} - 2 \left(1 - \rp \right) \left(1 - \rmi\right) \rp^{t} \rmi^{t} .
\end{align*}
From Property~\ref{pro:eigen-decomposition-A}, when  $ 0 < a,b < 1$ then  $ |\rp| , |\rmi| < 1 $. Hence, the following holds, 
\begin{align*}
    \sum_{t=0}^{\infty} \rp^{2t} &= \frac{1}{1 - \rp^2}, \quad \sum_{t=0}^{\infty} \rmi^{2t} = \frac{1}{1 - \rmi^2}, \quad \sum_{t=0}^{\infty} \rp^{t}\rmi^{t} = \frac{1}{1 - \rp\rmi}, \\
    \diff^2 ~ \sum_{t=0}^{\infty}  \ \nu_{22}(t) &= \al^2 \left[  \left(1 - \rp \right)^2 \frac{1}{1 - \rp^2}  + \left(1 - \rmi\right)^2 \frac{1}{1 - \rmi^2} - 2 \frac{\left(1 - \rp \right) \left(1 - \rmi\right)}{1 - \rp\rmi} \right] .\\
    &= \al^2 \left[  \frac{1 - \rp}{1 + \rp}  +  \frac{1 - \rmi}{1 + \rmi} - 2 \frac{\left(1 - \rp \right) \left(1 - \rmi\right)}{1 - \rp\rmi} \right] .
\end{align*}

\begin{align} \label{lem:inv:v22-p1}
    \diff^2 ~ \sum_{t=0}^{\infty}\nu_{22}(t) &= \al^2 \left[  \frac{1 - \rp}{1 + \rp}  +  \frac{1 - \rmi}{1 + \rmi} - 2 \frac{\left(1 - \rp \right) \left(1 - \rmi\right)}{1 - \rp\rmi} \right] .
\end{align}
Considering the computation in the right part, 
\begin{align*}
    \frac{1 - \rp}{1 + \rp}  +  \frac{1 - \rmi}{1 + \rmi} &= \frac{ \left(1 - \rp \right) \left(1 + \rmi \right) + \left(1 - \rmi \right) \left(1 + \rp \right) }{\left(1 + \rp \right) \left(1 + \rmi \right)} = \frac{2\left( 1 - \rp \rmi \right)}{\left(1 + \rp \right) \left(1 + \rmi \right)}, \\
    \frac{1 - \rp}{1 + \rp}  +  \frac{1 - \rmi}{1 + \rmi} - 2 \frac{\left(1 - \rp \right) \left(1 - \rmi\right)}{1 - \rp\rmi} &= \frac{2\left( 1 - \rp \rmi \right)}{\left(1 + \rp \right) \left(1 + \rmi \right)} - 2 \frac{\left(1 - \rp \right) \left(1 - \rmi\right)}{1 - \rp\rmi} \\
    &= 2 \frac{ \left( 1 - \rp \rmi \right)^2 - \left(1 - \rp^2 \right) \left(1 - \rmi^2 \right) }{ \left(1 + \rp \right) \left(1 + \rmi \right) \left( 1 - \rp \rmi \right) }.
\end{align*}
Computing the numerator, we get the following, 
\begin{align*}
    \left( 1 - \rp \rmi \right)^2 - \left(1 - \rp^2 \right) \left(1 - \rmi^2 \right) &=  1 - 2 \rp \rmi  + \rp^2 \rmi^2 - \left( 1 - \rp^2 - \rmi^2 + \rp^2 \rmi^2 \right),  \\
    &= \rp^2 + \rmi^2 - 2 \rp \rmi = \diff^2. \\
\end{align*}
The denominator from~\eqref{lem:inv:denominator},
\begin{align*}
    \left( 1 + \rmi \right) \left( 1 + \rp \right)  \left( 1 - \rp \rmi \right) &= b (4 - (a+2b)) . 
\end{align*}
Substituting these back in~\eqref{lem:inv:v22-p1}, we get 
\begin{align*}
    \diff^2 ~ \sum_{t=0}^{\infty} \nu_{22}(t) &= \frac{2 a^2 \diff^2 }{b (4 - (a+2b))  }.
\end{align*}
Hence,
\begin{align}
    \label{lem:inv:v22:series}
    \sum_{t=0}^{\infty} \nu_{22}(t) &= \frac{2 a^2}{b (4 - (a+2b))  }.
\end{align}

\paragraph{Evaluating  $ \sum\limits_{t=0}^{\infty}  \nu_{12}(t)$ :} From~\eqref{lem:inv:v12}, 
\begin{align*}
    \diff^2 ~ \nu_{12}(t) &= \al \left( \rmi \left( 1 - \rp \right)^2  \rp^{t}  - \rp \left(1-\rmi\right)^2   \rmi^{t} \right) \left( \left(1 - \rp \right) \rp^{t} -  \left(1 - \rmi\right) \rmi^{t}  \right), \\
    &=  a \left[  \rmi \left( 1 - \rp \right)^3  \rp^{2t} + \rp \left(1-\rmi\right)^3   \rmi^{2t}  - \left\lbrace \rmi \left( 1 - \rp \right) + \rp \left(1-\rmi\right) \right\rbrace \left( 1 - \rp \right) \left(1-\rmi\right) \rp^{t} \rmi^{t} \right], \\
    \diff^2 ~ \sum_{t=0}^{\infty} \nu_{12}(t) &= a  \sum_{t=0}^{\infty} \left[  \rmi \left( 1 - \rp \right)^3  \rp^{2t} + \rp \left(1-\rmi\right)^3   \rmi^{2t}  - \left\lbrace \rmi \left( 1 - \rp \right) + \rp \left(1-\rmi\right) \right\rbrace \left( 1 - \rp \right) \left(1-\rmi\right) \rp^{t} \rmi^{t} \right].
\end{align*}

For $ 0 < a,b < 1 $ , we have $ |\rp| , |\rmi| < 1 $ from Property~\ref{pro:eigen-decomposition-A}. Hence, the following holds, 
\begin{align*}
    \sum_{t=0}^{\infty} \rp^{2t}  &= \frac{1}{1 - \rp^2}, \quad \sum_{t=0}^{\infty} \rmi^{2t} = \frac{1}{1 - \rmi^2}, \quad \sum_{t=0}^{\infty} \rp^{t}\rmi^{t} = \frac{1}{1 - \rp\rmi}. \\
    &= a \left[  \frac{\rmi \left( 1 - \rp \right)^3}{1 - \rp^2} + \frac{\rp \left(1-\rmi\right)^3}{1 - \rmi^2}  - \left\lbrace \rmi \left( 1 - \rp \right) + \rp \left(1-\rmi\right) \right\rbrace \frac{\left( 1 - \rp \right) \left(1-\rmi\right)}{1 - \rp\rmi} \right]. \\
    &= a \left[  \frac{\rmi \left( 1 - \rp \right)^2}{1 + \rp} + \frac{\rp \left(1-\rmi\right)^2}{1 + \rmi}  - \left\lbrace \rmi \left( 1 - \rp \right) + \rp \left(1-\rmi\right) \right\rbrace \frac{\left( 1 - \rp \right) \left(1-\rmi\right)}{1 - \rp\rmi} \right].
\end{align*}
Using 
\begin{align*}
    2 \left( 1 - \rp\rmi \right) &= \left( 1 - \rmi\right) \left( 1 + \rp\right)  + \left( 1 - \rp\right) \left( 1 + \rmi\right), \\
\end{align*}
\begin{align} \label{lem:inv:12-p1}
    2 \rmi \left( 1 + \rmi\right) \left( 1 - \rp \right)^2 \left( 1 - \rp\rmi \right) &= \rmi \left( 1 - \rp\right) \left( 1 - \rmi^2 \right) \left( 1 - \rp^2 \right)    + \rmi \left( 1 + \rmi\right)^2 \left( 1 - \rp \right)^3.  
\end{align}
Similarly by symmetry
\begin{align}  \label{lem:inv:12-p2}
    2 \rp \left( 1 + \rp\right) \left( 1 - \rmi \right)^2 \left( 1 - \rp\rmi \right) &= \rp \left( 1 - \rmi \right) \left( 1 - \rmi^2 \right) \left( 1 - \rp^2 \right)    + \rp \left( 1 + \rp\right)^2 \left( 1 - \rmi \right)^3.  
\end{align}
\begin{align}  \label{lem:inv:12-p3}
    \begin{aligned}
        \left[ \rmi \left( 1 - \rp \right) + \rp \left(1-\rmi\right) \right]& \left( 1 - \rp^2 \right) \left(1-\rmi^2\right) ,\\ 
        &\hspace{-1.5cm}=  \rmi \left( 1 - \rp \right) \left( 1 - \rp^2 \right) \left(1-\rmi^2\right) +  \rp \left(1-\rmi\right) \left( 1 - \rp^2 \right) \left(1-\rmi^2\right).
    \end{aligned}
\end{align}
Combining them, 
\begin{align*}
    \eqref{lem:inv:12-p1} + \eqref{lem:inv:12-p2} - 2*\eqref{lem:inv:12-p3} &= \rmi \left( 1 + \rmi\right)^2 \left( 1 - \rp \right)^3 + \rp \left( 1 + \rp\right)^2 \left( 1 - \rmi \right)^3 \\ &\hspace{1cm} - \rmi \left( 1 - \rp \right) \left( 1 - \rp^2 \right) \left(1-\rmi^2\right) -  \rp \left(1-\rmi\right) \left( 1 - \rp^2 \right) \left(1-\rmi^2\right) , \\
    &=  \rmi \left( 1 + \rmi\right)^2 \left( 1 - \rp \right)^3 - \rmi \left( 1 - \rp \right) \left( 1 - \rp^2 \right) \left(1-\rmi^2\right)   ,\\
    &\hspace{1cm} +  \rp \left( 1 + \rp\right)^2 \left( 1 - \rmi \right)^3  - \rp \left(1-\rmi\right) \left( 1 - \rp^2 \right) \left(1-\rmi^2\right) ,\\
    &= \rmi \left( 1 + \rmi\right)  \left( 1 - \rp \right)^2 \left[  \left( 1 + \rmi\right)  \left( 1 - \rp \right) -  \left( 1 + \rp \right) \left(1-\rmi\right) \right]   ,\\
    &\hspace{1cm} +  \rp \left( 1 + \rp\right) \left( 1 - \rmi \right)^2 \left[ \left( 1 + \rp\right) \left( 1 - \rmi \right) -  \left( 1 - \rp \right) \left(1+\rmi \right) \right] ,\\
    &= 2 \rmi \left( 1 + \rmi\right)  \left( 1 - \rp \right)^2 \left[  \rmi - \rp \right]   + 2 \rp \left( 1 + \rp\right) \left( 1 - \rmi \right)^2 \left[ \rp - \rmi \right] ,\\
    &= - 2 \diff \left[ \rmi \left( 1 + \rmi\right)  \left( 1 - \rp \right)^2 - \rp \left( 1 + \rp\right) \left( 1 - \rmi \right)^2  \right].
\end{align*}
Evaluating
\begin{align*}
    \rmi \left( 1 + \rmi\right)  \left( 1 - \rp \right)^2 - \rp \left( 1 + \rp\right) \left( 1 - \rmi \right)^2 &= \rmi  \left( 1 - \rp \right)^2 - \rp \left( 1 - \rmi \right)^2 + \rmi^2  \left( 1 - \rp \right)^2 - \rp^2 \left( 1 - \rmi \right)^2.
\end{align*} 
From~\eqref{lem:inv:delta-b}, we have the following,
\begin{align*}
    \rmi \left( 1 - \rp \right)^2  - \rp \left(1-\rmi\right)^2  &= - \diff \bl ,\\
    \rmi^2  \left( 1 - \rp \right)^2 - \rp^2 \left( 1 - \rmi \right)^2 &= \left[ \rmi  \left( 1 - \rp \right) - \rp \left( 1 - \rmi \right) \right] \left[ \rmi  \left( 1 - \rp \right) + \rp \left( 1 - \rmi \right) \right] , \\
    &= - \diff \left[ \rp + \rmi - 2 \rp \rmi \right]  =  - \diff \left[ 2 - (a+b) - 2 (1-b) \right], \\
    &=  - \diff \left[ b - a \right]. 
\end{align*}
\begin{align*}
    \rmi \left( 1 + \rmi\right)  \left( 1 - \rp \right)^2 - \rp \left( 1 + \rp\right) \left( 1 - \rmi \right)^2 &= - \diff \bl  - \diff \left[ \bl - \al \right] = - \diff \left(2\bl - \al\right).
\end{align*}
So the numerator of $ \diff^2 \sum\limits_{t=0}^{\infty} \nu_{12}(t)$
\begin{align*}
    \frac{a}{2} \left( \eqref{lem:inv:12-p1} + \eqref{lem:inv:12-p2} - 2*\eqref{lem:inv:12-p3} \right) &= \frac{a}{2} \left( - 2 \diff (  - \diff \left(2\bl - \al\right) ) \right), \\
    &=  \diff^2 a \left(2\bl - \al\right).
\end{align*}
the denominator is 
\begin{align*}
    \left( 1 + \rmi \right) \left( 1 + \rp \right)  \left( 1 - \rp \rmi \right) &= b (4 - (a+2b)).  
\end{align*}
Finally, we have 
\begin{align} \label{lem:inv:v12:series}
    \sum\limits_{t=0}^{\infty} \nu_{12}(t) = \frac{a(2b-a)}{b(4-(a+2b))}.
\end{align}
From \eqref{lem:inv:v12:series}, \eqref{lem:inv:v22:series}, \eqref{lem:inv:v11:series}
\begin{align*}
    \sum_{t=0}^{\infty} \Gamma^{t} \aleph \left(\Gamma^{t}\right)^{\top} &= \begin{bmatrix}
        \displaystyle \frac{2a}{b(4 - (a+2b))} + \frac{2b-3a}{4 - (a+2b)} & \displaystyle \frac{a(2b-a)}{b(4-(a+2b))}   \\
        \displaystyle \frac{a(2b-a)}{b(4-(a+2b))} & \displaystyle \frac{2 a^2}{b (4 - (a+2b))  } 
    \end{bmatrix} \\
    &= \frac{1}{b (4 - (a+2b))} \begin{bmatrix}
         2a + b(2b-3a) &  a(2b-a)   \\
         a(2b-a) &  2 a^2  
    \end{bmatrix}. 
\end{align*}
This proves the lemma. 
\end{proof}

\begin{lemma} \label{lem:bias-infinite-series} For $ 0 < a,b < 1 $, with $\Gamma$ and $\aleph$ of form  \begin{align*}
        \Gamma = \begin{bmatrix}
            1 - \bl & 1- \bl \\
            - \al & 1 - \al 
        \end{bmatrix}, \qquad 
        \aleph = \begin{bmatrix}
            1  & 1 \\
            1 & 1  
        \end{bmatrix}.
    \end{align*}
    The series
    \begin{align}
        \begin{bmatrix} \bl & \al \\ \bl & \al \end{bmatrix} \sum_{t=0}^{\infty} \left(\Gamma^{t}\right)^{\top}  \aleph \Gamma^{t} \begin{bmatrix} \bl & \bl \\ \al & \al \end{bmatrix}  &= \left( \frac{2a}{b(4 - (a+2b))} + \frac{a+2b}{(4 - (a+2b))} \right) \begin{bmatrix}
            1 & 1 \\
            1 & 1 
        \end{bmatrix}.
    \end{align}

\end{lemma}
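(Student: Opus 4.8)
Here the key structural observation is that every matrix in the product is rank one in a compatible way. Writing $\mathbf{1}=\begin{bmatrix}1\\1\end{bmatrix}$ and $v=\begin{bmatrix}b\\a\end{bmatrix}$, we have $\aleph=\mathbf{1}\mathbf{1}^\top$, the left flanking matrix $\begin{bmatrix}b&a\\b&a\end{bmatrix}=\mathbf{1}v^\top$, and the right flanking matrix $\begin{bmatrix}b&b\\a&a\end{bmatrix}=v\mathbf{1}^\top$. So the plan is first to collapse the whole left-hand side to a scalar series times $\mathbf{1}\mathbf{1}^\top$:
\[
\begin{bmatrix}b&a\\b&a\end{bmatrix}\Bigl(\sum_{t\ge 0}(\Gamma^t)^\top\aleph\,\Gamma^t\Bigr)\begin{bmatrix}b&b\\a&a\end{bmatrix}
=\mathbf{1}v^\top\Bigl(\sum_{t\ge 0}(\Gamma^t)^\top\mathbf{1}\mathbf{1}^\top\Gamma^t\Bigr)v\mathbf{1}^\top
=\Bigl(\sum_{t\ge 0}\bigl(\mathbf{1}^\top\Gamma^t v\bigr)^2\Bigr)\mathbf{1}\mathbf{1}^\top ,
\]
using that $v^\top(\Gamma^t)^\top\mathbf{1}=\mathbf{1}^\top\Gamma^t v$ is a scalar. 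It therefore suffices to evaluate $S:=\sum_{t\ge0}(\mathbf{1}^\top\Gamma^t v)^2$ and verify $S=\frac{2a}{b(4-(a+2b))}+\frac{a+2b}{4-(a+2b)}$; multiplying by $\mathbf{1}\mathbf{1}^\top=\begin{bmatrix}1&1\\1&1\end{bmatrix}$ then gives the claim.

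The second step is to recognize $S$ as the total mass of an already-computed matrix. Rewrite each term as $(\mathbf{1}^\top\Gamma^t v)^2=\mathbf{1}^\top\bigl(\Gamma^t\,vv^\top\,(\Gamma^t)^\top\bigr)\mathbf{1}$, i.e.\ the sum of the four entries of $\Gamma^t\aleph'(\Gamma^t)^\top$ with $\aleph':=vv^\top=\begin{bmatrix}b^2&ab\\ab&a^2\end{bmatrix}$. Since $0<a,b<1$, Property~\ref{pro:eigen-decomposition-A} gives $|\rp|,|\rmi|<1$, so $\Gamma^t\to 0$ geometrically, the matrix series $\sum_t\Gamma^t\aleph'(\Gamma^t)^\top$ converges absolutely, and it commutes with the continuous linear map $M\mapsto\mathbf{1}^\top M\mathbf{1}$. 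But $\aleph'$ is exactly the $\aleph$ of Lemma~\ref{lem:A-infinite-series}, which evaluates $\sum_{t\ge0}\Gamma^t\aleph'(\Gamma^t)^\top=\frac{1}{b(4-(a+2b))}\begin{bmatrix}2a+b(2b-3a)&a(2b-a)\\a(2b-a)&2a^2\end{bmatrix}$. Hence $S$ equals $\tfrac{1}{b(4-(a+2b))}$ times the sum of all four entries, namely $2a+b(2b-3a)+2a(2b-a)+2a^2=2a+ab+2b^2$, so $S=\frac{2a+ab+2b^2}{b(4-(a+2b))}=\frac{2a}{b(4-(a+2b))}+\frac{a+2b}{4-(a+2b)}$, which is exactly the required coefficient.

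The only genuine content is the first paragraph's rank-one bookkeeping: once one sees that the flanking matrices push $v$ outside the sum and the whole thing is the total mass of the matrix computed in Lemma~\ref{lem:A-infinite-series}, the rest is a one-line algebraic simplification. I do not anticipate any real obstacle. If one prefers a self-contained argument not citing Lemma~\ref{lem:A-infinite-series}, the fallback is to diagonalise $\Gamma=\U\Eigenab\U^{-1}$ via Property~\ref{pro:eigen-decomposition-A}, use the identities $(1-\rp)(1-\rmi)=a$, $1-\rp\rmi=b$, $(1+\rp)(1+\rmi)=4-(a+2b)$ to show $\mathbf{1}^\top\Gamma^t v=\diff^{-1}\bigl((1-\rmi)^2\rmi^{\,t}-(1-\rp)^2\rp^{\,t}\bigr)$, and then sum the resulting geometric series exactly as is done for $\nu_{11},\nu_{12},\nu_{22}$ in the proof of Lemma~\ref{lem:A-infinite-series}; this is slightly more laborious but entirely routine.
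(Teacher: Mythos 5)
Your proof is correct, and it takes a genuinely shorter route than the paper's. Both you and the paper begin by exploiting the rank-one structure (the paper writes $\U \aleph \U = (\U\mathbf{1})\otimes(\U\mathbf{1})$ and then explicitly multiplies $\begin{bmatrix}b & a\\ b & a\end{bmatrix}\U^{-1}\Eigenab^{t}\U\begin{bmatrix}1\\1\end{bmatrix}$ through the eigendecomposition to obtain a scalar times $\begin{bmatrix}1\\1\end{bmatrix}$, finally invoking Lemma~\ref{lem:series-rplus-rminus} to sum the scalar geometric series). Where you diverge is your second paragraph: instead of rediscovering the eigenvector identities, you flip the transposes via $(\mathbf{1}^\top\Gamma^t v)^2 = \mathbf{1}^\top\Gamma^t v v^\top(\Gamma^t)^\top\mathbf{1}$ and recognize the resulting scalar series as the total entry-sum of the matrix series that Lemma~\ref{lem:A-infinite-series} has already evaluated in closed form (its $\aleph$ is exactly your $vv^\top$). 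This turns the whole verification into one line of arithmetic, $2a + b(2b-3a) + 2a(2b-a) + 2a^2 = 2a + ab + 2b^2 = 2a + b(a+2b)$, matching the target. The payoff is that you avoid repeating the explicit diagonalization and the cancellations of the paper's proof; the cost is that your argument is no longer self-contained within the technical-lemmas section but depends on Lemma~\ref{lem:A-infinite-series} as a black box. Since that lemma is proved just above in the same section and is needed elsewhere anyway, the trade is clearly favorable, and your version would be a legitimate simplification of the paper's exposition.
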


\begin{proof}
    To calculate the exponents of $\Gamma$ we use the eigendecomposition from Property~\ref{pro:eigen-decomposition-A}, 
    \begin{align*}
        \Gamma &=  \U \Eigenab \U^{-1}, \\
        \Gamma^{t} &=  \U \Eigenab^{t} \U^{-1}, \\
        \left(\Gamma^{t}\right)^{\top}  \cdot \aleph \cdot \Gamma^{t} & = \U^{-1} \Eigenab^{t} \U \cdot  \aleph \cdot  \left(\U^{-1} \Eigenab^{t} \U\right)^{\top}. 
    \end{align*}
    Using the fact in Property~\ref{pro:eigen-decomposition-A}, that $\U,\U^{-1}$ are symmetric. 
    \begin{align*}
        \begin{bmatrix} \bl & \al \\ \bl & \al \end{bmatrix} \left(\Gamma^{t}\right)^{\top}  \cdot \aleph \cdot \Gamma^{t} \begin{bmatrix} \bl & \al \\ \bl & \al \end{bmatrix}^{\top} & = \begin{bmatrix} \bl & \al \\ \bl & \al \end{bmatrix} \U^{-1} \Eigenab^{t} \left[ \U   \aleph  \U \right] \Eigenab^{t} \U^{-1} \begin{bmatrix} \bl & \al \\ \bl & \al \end{bmatrix}^{\top}.
    \end{align*}

    \begin{align*}
        \U \aleph \U &= \U \begin{bmatrix}
            1 & 1 \\
            1 & 1 
        \end{bmatrix} \U =  \U \left( \begin{bmatrix}
            1 \\ 1
        \end{bmatrix} \right) \otimes \left(  \begin{bmatrix}
            1 \\ 1
        \end{bmatrix} \right)  \U ,\\ 
        &= \left(  \U \begin{bmatrix}
            1 \\ 1
        \end{bmatrix} \right) \otimes \left(  \U \begin{bmatrix}
            1 \\ 1
        \end{bmatrix} \right) , \\
        \U^{-1} \Eigenab^{t} \left[ \U   \aleph  \U \right] \Eigenab^{t} \U^{-1} &= \left( \U^{-1} \Eigenab^{t} \U \begin{bmatrix}
            1 \\ 1
        \end{bmatrix} \right) \otimes \left( \U^{-1} \Eigenab^{t} \U \begin{bmatrix}
            1 \\ 1
        \end{bmatrix} \right), \\
        \begin{bmatrix} \bl & \al \\ \bl & \al \end{bmatrix} \left(\Gamma^{t}\right)^{\top}  \cdot \aleph \cdot \Gamma^{t} \begin{bmatrix} \bl & \al \\ \bl & \al \end{bmatrix}^{\top} &=   \left( \begin{bmatrix} \bl & \al \\ \bl & \al \end{bmatrix} \U^{-1} \Eigenab^{t} \U \begin{bmatrix}
            1 \\ 1
        \end{bmatrix} \right) \otimes \left( \begin{bmatrix} \bl & \al \\ \bl & \al \end{bmatrix} \U^{-1} \Eigenab^{t} \U \begin{bmatrix}
            1 \\ 1
        \end{bmatrix} \right).
    \end{align*}
    From eigendecomposition given in Property~\ref{pro:eigen-decomposition-A}, 
    \begin{align*}
        \diff \U &=  \begin{bmatrix}
            \frac{\rmi}{(1-\rmi)} & 1 \\ 
            1 & \frac{1-\rp}{\rp}
        \end{bmatrix}, \\
        \diff \U \begin{bmatrix}
            1 \\ 1
        \end{bmatrix} &=  \begin{bmatrix}
            \frac{\rmi}{(1-\rmi)} & 1 \\ 
            1 & \frac{1-\rp}{\rp}
        \end{bmatrix} \begin{bmatrix}
            \bl \\ \al
        \end{bmatrix} = \begin{bmatrix}
            \displaystyle \frac{1}{(1-\rmi)} \\ \displaystyle \frac{1}{\rp}
        \end{bmatrix} ,\\
        \diff \Eigenab^{t} \U \begin{bmatrix}
            1 \\ 1
        \end{bmatrix}   &= \begin{bmatrix}
            \rp^{t} & 0 \\ 
            0 & \rmi^{t}
        \end{bmatrix} \begin{bmatrix}
            \displaystyle \frac{1}{(1-\rmi)} \\ \displaystyle \frac{1}{\rp}
        \end{bmatrix}  = \begin{bmatrix}
            \displaystyle \frac{\rp^{t}}{(1-\rmi)} \\ \displaystyle \frac{\rmi^{t}}{\rp}
        \end{bmatrix} .
    \end{align*}
    Again from Property \ref{pro:eigen-decomposition-A} using $\U^{-1}$
    \begin{align*}
        \U^{-1} &=  \begin{bmatrix}
            -\al & (1-\rmi)\rp \\ 
            (1-\rmi)\rp & -\rp\rmi
        \end{bmatrix},\\
        \diff \U^{-1} \Eigenab^{t} \U \begin{bmatrix} 1 \\ 1 \end{bmatrix} &= \begin{bmatrix}
            -\al & (1-\rmi)\rp \\ 
            (1-\rmi)\rp & -\rp\rmi
        \end{bmatrix}  \begin{bmatrix}
             \frac{\rp^{t}}{(1-\rmi)} \\  \frac{\rmi^{t}}{\rp}
        \end{bmatrix} , \\
        &= \begin{bmatrix}
            \frac{-a \rp^{t}}{(1-\rmi)} + (1-\rmi) \rmi^{t} \\
            \rp^{t+1}  - \rmi^{t+1}
       \end{bmatrix}.\\
        \intertext{Using $\al = \left( 1 - \rp \right) \left( 1 - \rmi \right)$ , } 
        \diff \U^{-1} \Eigenab^{t} \U \begin{bmatrix} 1 \\ 1 \end{bmatrix} &= \begin{bmatrix}
            -(1-\rp) \rp^{t} + (1-\rmi) \rmi^{t} \\
            \rp^{t+1}  - \rmi^{t+1}
       \end{bmatrix} = \begin{bmatrix}
        -\left[ (1-\rp) \rp^{t} - (1-\rmi) \rmi^{t} \right] \\
        \rp^{t+1}  - \rmi^{t+1}
   \end{bmatrix} . 
\end{align*}
\begin{align*}
   \diff \begin{bmatrix} \bl & \al \\ \bl & \al \end{bmatrix} \U^{-1} \Eigenab^{t} \U \begin{bmatrix} 1 \\ 1 \end{bmatrix}   &=  \begin{bmatrix} \bl & \al \\ \bl & \al \end{bmatrix} \begin{bmatrix}
    -\left[ (1-\rp) \rp^{t} - (1-\rmi) \rmi^{t} \right] \\
    \rp^{t+1}  - \rmi^{t+1}
\end{bmatrix},
\\& = \begin{bmatrix}
    - \bl \left( (1-\rp) \rp^{t} - (1-\rmi) \rmi^{t} \right) +  \al \left( \rp^{t+1}  - \rmi^{t+1} \right) \\
    - \bl \left( (1-\rp) \rp^{t} - (1-\rmi) \rmi^{t} \right) +  \al \left( \rp^{t+1}  - \rmi^{t+1} \right) 
\end{bmatrix}, \\ &= \begin{bmatrix}
    - \left[ \left( \bl (1-\rp) - \al (\rp)  \right) \rp^{t} - \left( \bl (1-\rmi) - \al (\rmi) \right) \rmi^{t} \right] \\
    - \left[ \left( \bl (1-\rp) - \al (\rp)  \right) \rp^{t} - \left( \bl (1-\rmi) - \al (\rmi) \right) \rmi^{t} \right].
\end{bmatrix}
    \end{align*} 
    Using $\al = \left( 1 - \rp \right) \left( 1 - \rmi \right),  \bl  = 1 - \rmi \rp $, 
    \begin{align*}
        \bl (1-\rp) - \al (\rp) &= ( 1 - \rp \rmi ) (1-\rp) - \left( 1 - \rp \right) \left( 1 - \rmi \right) \rp, \\
        &=  \left( 1 - \rp \right) \left( ( 1 - \rp \rmi ) - \rp \left( 1 - \rmi \right) \right), \\
        &= \left( 1 - \rp \right) \left( 1 - \rp \rmi - \rp + \rp \rmi  \right) = \left(1-\rp\right)^2.
    \end{align*} 
    By symmetry, 
    \begin{align*}
        \bl (1-\rmi) - \al (\rmi) &= \left(1-\rmi\right)^2.
    \end{align*}
    Substituting this back we get 
    \begin{align*}
        \diff \begin{bmatrix} \bl & \al \\ \bl & \al \end{bmatrix} \U^{-1} \Eigenab^{t} \U \begin{bmatrix} 1 \\ 1 \end{bmatrix}   &= \begin{bmatrix}
            - \left( \left( 1 - \rp  \right)^2 \rp^{t} -  \left( 1 - \rmi  \right)^2 \rmi^{t} \right) \\
            - \left( \left( 1 - \rp  \right)^2 \rp^{t} -  \left( 1 - \rmi  \right)^2 \rmi^{t} \right) 
        \end{bmatrix}
        &= - \left( \left( 1 - \rp  \right)^2 \rp^{t} -  \left( 1 - \rmi  \right)^2 \rmi^{t} \right) \begin{bmatrix}
            1 \\ 1
        \end{bmatrix}.
    \end{align*} 
    
    \begin{align*}
        \diff \begin{bmatrix} \bl & \al \\ \bl & \al \end{bmatrix} \U^{-1} \Eigenab^{t} \U \begin{bmatrix} 1 \\ 1 \end{bmatrix} \otimes \diff \begin{bmatrix} \bl & \al \\ \bl & \al \end{bmatrix} \U^{-1} \Eigenab^{t} \U \begin{bmatrix} 1 \\ 1 \end{bmatrix} &=  \left( \left( 1 - \rp  \right)^2 \rp^{t} -  \left( 1 - \rmi  \right)^2 \rmi^{t} \right)^2 \begin{bmatrix}
            1 & 1 \\
            1 & 1 
        \end{bmatrix} ,\\
        \diff^2 \begin{bmatrix} \bl & \al \\ \bl & \al \end{bmatrix} \left(\Gamma^{t}\right)^{\top}  \cdot \aleph \cdot \Gamma^{t} \begin{bmatrix} \bl & \al \\ \bl & \al \end{bmatrix}^{\top} &=  \left( \left( 1 - \rp  \right)^2 \rp^{t} -  \left( 1 - \rmi  \right)^2 \rmi^{t} \right)^2 \begin{bmatrix}
            1 & 1 \\
            1 & 1 
        \end{bmatrix} . \\
         \begin{bmatrix} \bl & \al \\ \bl & \al \end{bmatrix} \sum_{t=0}^{\infty} \left(\Gamma^{t}\right)^{\top}  \cdot \aleph \cdot \Gamma^{t} \begin{bmatrix} \bl & \al \\ \bl & \al \end{bmatrix}^{\top} &=  \left[ \sum_{t=0}^{\infty} \left( \frac{ \left( 1 - \rp  \right)^2 \rp^{t} -  \left( 1 - \rmi  \right)^2 \rmi^{t} } { \diff } \right)^2 \right] \begin{bmatrix}
            1 & 1 \\
            1 & 1 
        \end{bmatrix}.
    \end{align*}
    Using Lemma~\ref{lem:series-rplus-rminus}, 
    \begin{align*}
        \begin{bmatrix} \bl & \al \\ \bl & \al \end{bmatrix} \sum_{t=0}^{\infty} \left(\Gamma^{t}\right)^{\top}  \cdot \aleph \cdot \Gamma^{t} \begin{bmatrix} \bl & \al \\ \bl & \al \end{bmatrix}^{\top} = \left( \frac{2a}{b(4 - (a+2b))} + \frac{a+2b}{(4 - (a+2b))} \right) \begin{bmatrix}
            1 & 1 \\
            1 & 1 
        \end{bmatrix}.
    \end{align*}

\end{proof}

\begin{lemma} \label{lem:series-rplus-rminus}
    With $\rp, \rmi$ defined by Property \ref{pro:eigen-decomposition-A}, and $\nu(t)$ defined by 
    \begin{align*}
       \nu(t) \defeq \left[ \frac{ \left( 1 - \rp  \right)^2 \rp^{t} -  \left( 1 - \rmi  \right)^2 \rmi^{t} } {\diff} \right]^2. 
    \end{align*}
    the series 
    \begin{align*}
         \sum_{t=0}^{\infty} \nu(t)  =  \frac{2a}{b(4 - (a+2b))} + \frac{a+2b}{(4 - (a+2b))}.
    \end{align*}
\end{lemma}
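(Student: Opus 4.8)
The plan is to expand the square defining $\nu(t)$, sum the three resulting geometric series, and then collapse everything to the elementary symmetric functions of $\rp$ and $\rmi$. First I would write
$$\diff^{2}\,\nu(t)=(1-\rp)^{4}\rp^{2t}+(1-\rmi)^{4}\rmi^{2t}-2(1-\rp)^{2}(1-\rmi)^{2}(\rp\rmi)^{t}.$$
By Property~\ref{pro:eigen-decomposition-A}, for $0<a,b<1$ we have $|\rp|,|\rmi|<1$, hence $|\rp^{2}|,|\rmi^{2}|,|\rp\rmi|<1$ and the series $\sum_{t\ge 0}\rp^{2t}$, $\sum_{t\ge 0}\rmi^{2t}$, $\sum_{t\ge 0}(\rp\rmi)^{t}$ all converge; summing term by term and simplifying with $1-\rp^{2}=(1-\rp)(1+\rp)$ gives
$$\diff^{2}\sum_{t\ge 0}\nu(t)=\frac{(1-\rp)^{3}}{1+\rp}+\frac{(1-\rmi)^{3}}{1+\rmi}-\frac{2\big((1-\rp)(1-\rmi)\big)^{2}}{1-\rp\rmi}.$$
(If $\rp=\rmi$ the quantity $\nu(t)$ is read as its obvious polynomial limit, and since both sides of the asserted identity are continuous in $(a,b)$ it is enough to argue on the open set $\diff\neq 0$.)

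Next I would invoke the Vieta relations coming from Property~\ref{pro:eigen-decomposition-A}: $\rp+\rmi=2-(a+b)$ and $\rp\rmi=1-b$, which yield $(1-\rp)(1-\rmi)=a$, $\;1-\rp\rmi=b$, $\;(1+\rp)(1+\rmi)=4-(a+2b)$, and $\diff^{2}=(\rp+\rmi)^{2}-4\rp\rmi=(a+b)^{2}-4a$. The subtracted term is then exactly $2a^{2}/b$, and placing the first two terms over the common denominator $4-(a+2b)$ reduces the claim to the scalar polynomial identity
$$b\Big[(1-\rp)^{3}(1+\rmi)+(1-\rmi)^{3}(1+\rp)\Big]-2a^{2}\big(4-(a+2b)\big)=\big((a+b)^{2}-4a\big)\big(2a+ab+2b^{2}\big).$$
The bracketed term on the left is symmetric in $\rp,\rmi$, so I would substitute $u=1-\rp$, $v=1-\rmi$ (with $u+v=a+b$ and $uv=a$), write it as $2(u^{3}+v^{3})-uv(u^{2}+v^{2})$, and expand using $u^{3}+v^{3}=(u+v)^{3}-3uv(u+v)$ and $u^{2}+v^{2}=(u+v)^{2}-2uv$; both sides then become explicit degree-four polynomials in $a,b$.

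\textbf{Main obstacle.} There is no conceptual content left after the reduction: the whole identity is forced by the two symmetric functions $\rp+\rmi$ and $\rp\rmi$. The only real work is the bookkeeping in the final display — expanding $b\big[(1-\rp)^{3}(1+\rmi)+(1-\rmi)^{3}(1+\rp)\big]$ and $\big((a+b)^{2}-4a\big)\big(2a+ab+2b^{2}\big)$ and matching coefficients of the resulting polynomials in $a,b$. Alongside this I would record the harmless facts that the denominators never vanish ($b>0$, and $a+2b<3<4$ when $0<a,b<1$, so $4-(a+2b)>0$), which also shows all intermediate steps are legitimate.
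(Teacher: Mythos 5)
Your proof is correct and follows the same overall structure as the paper's: expand the square in $\nu(t)$, sum the three geometric series term-by-term (valid since $|\rp|,|\rmi|<1$), arriving at $\diff^{2}\sum_{t\ge 0}\nu(t)=\frac{(1-\rp)^{3}}{1+\rp}+\frac{(1-\rmi)^{3}}{1+\rmi}-\frac{2(1-\rp)^{2}(1-\rmi)^{2}}{1-\rp\rmi}$, and then simplify to a rational function of $a,b$. The only genuine difference is bookkeeping in the final simplification: the paper keeps everything in $\rp,\rmi$ and manipulates the numerator over the common denominator $(1+\rp)(1+\rmi)(1-\rp\rmi)$ via a sum-of-squares factorization (its equations~\eqref{lem:series:p1}--\eqref{lem:series:p4}) so that $\diff^{2}$ visibly factors out, whereas you pass immediately to elementary symmetric polynomials via Vieta, $(1-\rp)(1-\rmi)=a$, $1-\rp\rmi=b$, $(1+\rp)(1+\rmi)=4-(a+2b)$, $\diff^{2}=(a+b)^{2}-4a$, reducing the last step to a concrete degree-four polynomial identity in $a,b$. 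That identity is correct (the coefficient match is a routine expansion of $2(u^{3}+v^{3})-uv(u^{2}+v^{2})$ with $u+v=a+b$, $uv=a$), your convergence justification and the continuity argument handling $\rp=\rmi$ are sound, so the argument is complete once the expansion is carried out.
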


\begin{proof}
    \begin{align*}
        \diff^2 ~ \left(  \sum_{t=0}^{\infty} \nu(t)  \right) &= \sum_{t=0}^{\infty} \left[ \left( 1 - \rp \right)^4  \rp^{2t}  +  \left(1-\rmi\right)^4   \rmi^{2t} - 2 \left( 1 - \rp \right)^2 \left(1-\rmi\right)^2  \rp^{t} \rmi^{t} \right]. 
    \end{align*}
    From Property~\ref{pro:eigen-decomposition-A} when $ 0 < a,b < 1$ then $ |\rp| , |\rmi| < 1 $ . Hence, the following holds, 
    \begin{align*}
        \sum_{t=0}^{\infty} \rp^{2t}  &= \frac{1}{1 - \rp^2}, \quad \sum_{t=0}^{\infty} \rmi^{2t} = \frac{1}{1 - \rmi^2}, \quad \sum_{t=0}^{\infty} \rp^{t}\rmi^{t} = \frac{1}{1 - \rp\rmi}. \\
        \diff^2 ~ \left(  \sum_{t=0}^{\infty} \nu(t)  \right)  &=  \left( 1 - \rp \right)^4 \frac{1}{1 - \rp^2} +  \left(1-\rmi\right)^4 \frac{1}{1 - \rmi^2} -  2 \left( 1 - \rp \right)^2 \left(1-\rmi\right)^2  \frac{1}{1 - \rp\rmi}, \\
        &= \left[ \frac{\left( 1 - \rp \right)^3}{1 + \rp} + \frac{\left(1-\rmi\right)^3}{1+ \rmi} - 2 \frac{\left( 1 - \rp \right)^2 \left(1-\rmi\right)^2}{1 - \rp \rmi } \right].
    \end{align*}
    
    \begin{align} \label{lem:series:p7}
        \diff^2 ~ \left(  \sum_{t=0}^{\infty} \nu(t)  \right) = \left[ \frac{\left( 1 - \rp \right)^3}{1 + \rp} + \frac{\left(1-\rmi\right)^3}{1+ \rmi} - 2 \frac{\left( 1 - \rp \right)^2 \left(1-\rmi\right)^2}{1 - \rp \rmi } \right].
    \end{align}
    
    \begin{align} \label{lem:series:p6}
        \begin{aligned}
        \diff^2 ~ \left(  \sum_{t=0}^{\infty} \nu(t)  \right)
        &= \frac{ \left[ \left( 1 - \rp \right)^3 \left(1+ \rmi \right)  + \left(1-\rmi\right)^3 \left(1+ \rp \right) \right] \left[ 1 - \rp \rmi  \right] }{ \left(1+ \rp \right) \left(1+ \rmi \right) \left[ 1 - \rp \rmi  \right] } \\ &\hspace{2cm}- \frac{2 \left( 1 - \rp \right)^2 \left(1-\rmi\right)^2 \left(1+ \rmi \right) \left(1+ \rp \right)  }{ \left(1+ \rp \right) \left(1+ \rmi \right) \left[ 1 - \rp \rmi  \right] }.
        \end{aligned}
    \end{align}
    Note, 
        \[ \left( 1 + \rmi \right) \left( 1 + \rp \right) = 1 + \rmi + \rp + \rmi \rp. \] 
        Using $\rmi + \rp = 2 - (a+b)$, $\rmi \rp = 1-b$,
    \begin{align} \label{lem:series:p5}
        \left( 1 + \rmi \right) \left( 1 + \rp \right) = 4 - (a + 2b).
    \end{align}
    Using  
    \begin{align*} 
         2 \left[ 1 - \rp \rmi  \right] &= \left( 1 - \rp \right) \left( 1 + \rmi \right) + \left( 1 + \rp \right) \left( 1 - \rmi \right), \\ 
        2 \left[ 1 - \rp \rmi  \right] \left( 1 - \rp \right)^3 \left(1+ \rmi \right)   &=  \left[ \left( 1 - \rp \right) \left( 1 + \rmi \right) + \left( 1 + \rp \right) \left( 1 - \rmi \right) \right] \left( 1 - \rp \right)^3 \left(1+ \rmi \right). 
    \end{align*} \begin{align} \label{lem:series:p1}
        2 \left[ 1 - \rp \rmi  \right] \left( 1 - \rp \right)^3 \left(1+ \rmi \right)  &= \left( 1 - \rp \right)^{4} \left(1+ \rmi \right)^2 + \left( 1 - \rmi^2 \right) \left( 1 - \rp^2 \right) \left( 1 - \rp \right)^2,  
    \end{align}
    Symetrically,
    \begin{align} \label{lem:series:p2}
        2 \left[ 1 - \rp \rmi  \right] \left( 1 - \rmi \right)^3 \left(1+ \rp \right)   &= 
        \left( 1 - \rmi \right)^{4} \left(1+ \rp \right)^2 + \left( 1 - \rp^2 \right) \left( 1 - \rmi^2 \right) \left( 1 - \rmi \right)^2.
    \end{align} 
    \begin{align} \label{lem:series:p3}
        \begin{aligned}
            4 \left( 1 - \rp \right)^2 \left(1-\rmi\right)^2 \left(1+ \rmi \right) \left(1+ \rp \right) &= 2 \left(1+ \rmi \right)  \left( 1 - \rp \right)^2  \left(1+ \rp \right) \left(1-\rmi\right)^2  \\ & \hspace{1 cm} + 2  \left( 1 - \rmi^2 \right) \left( 1 - \rp^2 \right) \left( 1 - \rmi \right) \left( 1 - \rp \right).
        \end{aligned}
    \end{align}
    Combining the above calculations,
    \begin{align} \label{lem:series:p4}
     \eqref{lem:series:p1} + \eqref{lem:series:p2} - \eqref{lem:series:p3} &= \left[ \left( 1 - \rp \right)^{2} \left(1+ \rmi \right) -  \left( 1 - \rmi \right)^{2} \left(1+ \rp \right)  \right]^2 + \left( 1 - \rmi^2 \right) \left( 1 - \rp^2 \right) \left[ (1-\rp) - (1 - \rmi) \right]^2 ,
    \end{align}
    Computing the two terms,
    \begin{align*}
        \left( 1 - \rp \right)^{2} \left(1+ \rmi \right) -  \left( 1 - \rmi \right)^{2} \left(1+ \rp \right) &= \left( 1 - \rp \right)^{2} \left(2 - ( 1 - \rmi)  \right) -  \left( 1 - \rmi \right)^{2} \left(2 - ( 1 - \rp ) \right), \\
        & = 2 \left[ \left( 1 - \rp \right)^{2} - \left( 1 - \rmi \right)^{2} \right]    - \left[ \left( 1 - \rp \right)^{2}  \left( 1 - \rmi \right) - \left( 1 - \rmi \right)^{2} \left( 1 - \rp \right) \right] ,\\
        &= 2 \left( \rmi - \rp \right) \left( 2 - \rmi - \rp \right) - \left( 1 - \rmi \right) \left( 1 - \rp \right) \left( \rmi - \rp \right), \\
        &= \left( \rmi - \rp \right) \left( 4 - 2 \rmi - 2 \rp - 1 + \rmi + \rp - \rmi \rp \right) \\
        &= - \diff  \left( 4 - \left( 1 + \rmi \right) \left( 1 + \rp \right) \right) = - \diff (a +2b), \quad \textrm{ from \eqref{lem:series:p5} } , \\
    \left( 1 - \rmi^2 \right) \left( 1 - \rp^2 \right) \left[ (1-\rp) - (1 - \rmi) \right]^2 &= \diff^2 \left( 1 - \rmi \right) \left( 1 - \rp \right) \left( 1 + \rmi \right) \left( 1 + \rp \right) ,\\
    &=  \diff^2 a (4 - (a+2b)).
    \end{align*}
    The numerator of \eqref{lem:series:p6} as per \eqref{lem:series:p4} is $ \frac{1}{2} \left( \diff^2 a (4 - (a+2b)) + \diff^2 (a+2b)^2 \right) $. From \eqref{lem:series:p5} , the denominator is 
    \begin{align} \label{lem:inv:denominator}
        \left( 1 + \rmi \right) \left( 1 + \rp \right)  \left( 1 - \rp \rmi \right) &= b (4 - (a+2b)).  
    \end{align}
    Now from \eqref{lem:series:p6}, we have 
    \begin{align*} 
        \diff^2 ~ \left(  \sum_{t=0}^{\infty} \nu(t)  \right)
        &= \diff^2 \left[ \frac{  a (4 - (a+2b)) + (a+2b)^2 }{ 2 b (4 - (a+2b))  } \right].
    \end{align*}
    Hence, 
    \begin{align*}
        \left(  \sum_{t=0}^{\infty} \nu(t)  \right) &=  \left[ \frac{  a (4 - (a+2b)) + (a+2b)^2 }{ 2 b (4 - (a+2b))  } \right], \\
 &= \frac{ 4a + (a+2b) ( a + 2b - a) }{ 2 b (4 - (a+2b)) } = \frac{4a}{2 b (4 - (a+2b))} + \frac{ 2b(a+2b)}{2 b (4 - (a+2b))}, \\
&= \frac{2a}{b(4 - (a+2b))} + \frac{a+2b}{(4 - (a+2b))}.
    \end{align*}
    This completes the proof. 
\end{proof}








\end{document}